\definecolor{darkblue}{rgb}{0.0,0.0,0.65}
\definecolor{darkred}{rgb}{0.65,0.0,0.0}
\definecolor{darkgreen}{rgb}{0.0,0.5,0.0}
\definecolor{tab:blue}{RGB}{31,119,180}  
\definecolor{tab:red}{RGB}{214,39,40}  
\definecolor{tab:green}{RGB}{44,160,44}  
\definecolor{tab:orange}{RGB}{255,127,14}  
\newcommand{\explain}[1]{\text{\textcolor{gray}{#1}}}
\theoremstyle{plain}
\newtheorem{theorem}{Theorem}[section]
\newtheorem{proposition}[theorem]{Proposition}
\newtheorem{lemma}[theorem]{Lemma}
\theoremstyle{definition}
\newtheorem{definition}{Definition}[section]
\newtheorem{assumption}[definition]{Assumption}
\newtheorem{remark}[definition]{Remark}
\crefname{assumption}{Assumption}{Assumptions}
\Crefname{assumption}{Assumption}{Assumptions}
\def\norm#1{\left\lVert#1\right\rVert}
\def\inner#1{\left\langle#1\right\rangle}
\def\abs#1{\left|#1\right|}
\def\bigopen#1{\left(#1\right)}
\def\bigset#1{\left\{#1\right\}}
\def\bigclosed#1{\left[#1\right]}
\def\logistic{{\ell_{\rm log}}}
\def\1{\mathbbm{1}}
\def\vzero{{\bm{0}}}
\def\va{{\bm{a}}}
\def\vb{{\bm{b}}}
\def\vm{{\bm{m}}}
\def\vr{{\bm{r}}}
\def\vu{{\bm{u}}}
\def\vv{{\bm{v}}}
\def\vw{{\bm{w}}}
\def\vx{{\bm{x}}}
\def\vy{{\bm{y}}}
\def\vz{{\bm{z}}}
\def\vtheta{{\bm{\theta}}}
\def\vrho{{\bm{\rho}}}
\def\mA{{\bm{A}}}
\def\mI{{\bm{I}}}
\def\mW{{\bm{W}}}
\def\mX{{\bm{X}}}
\DeclareMathAlphabet{\mathsfit}{\encodingdefault}{\sfdefault}{m}{sl}
\SetMathAlphabet{\mathsfit}{bold}{\encodingdefault}{\sfdefault}{bx}{n}
\def\gD{{\mathcal{D}}}
\def\gF{{\mathcal{F}}}
\def\gL{{\mathcal{L}}}
\def\gO{{\mathcal{O}}}
\def\gS{{\mathcal{S}}}
\def\gT{{\mathcal{T}}}
\def\gW{{\mathcal{W}}}
\def\sP{{\mathbb{P}}}
\def\sR{{\mathbb{R}}}
\newcommand{\E}{\mathbb{E}}
\newcommand{\R}{\mathbb{R}}
\newcommand{\Var}{\mathrm{Var}}
\DeclareMathOperator*{\argmin}{arg\,min}
\DeclareMathOperator{\rank}{rank}
\title{Convergence and Implicit Bias of Gradient\\Descent on Continual Linear Classification}
\author{Hyunji Jung\thanks{Authors contributed equally to this paper.} \\
Graduate School of Artificial Intelligence \\
POSTECH \\
\texttt{jeonghj2001@postech.ac.kr} 
\And
Hanseul Cho$^*$, Chulhee Yun \\
Kim Jaechul Graduate School of AI \\
KAIST \\
\texttt{\{jhs4015, chulhee.yun\}@kaist.ac.kr}
}
\begin{document}

\maketitle
\vspace{-10pt}
\begin{abstract}
    We study continual learning on multiple linear classification tasks by sequentially running gradient descent (GD) for a fixed budget of iterations per task.
    When all tasks are jointly linearly separable and are presented in a cyclic/random order, we show the directional convergence of the trained linear classifier to the \emph{joint (offline) max-margin} solution.
    This is surprising because GD training on a single task is implicitly biased towards the individual max-margin solution for the task, and the direction of the joint max-margin solution can be largely different from these individual solutions.
    Additionally, when tasks are given in a cyclic order, we present a non-asymptotic analysis on \emph{cycle-averaged forgetting}, revealing that (1)~alignment between tasks is indeed closely tied to catastrophic forgetting and backward knowledge transfer and (2)~the amount of forgetting vanishes to zero as the cycle repeats.
    Lastly, we analyze the case where the tasks are no longer jointly separable and show that the model trained in a cyclic order converges to the unique minimum of the joint loss function.
\end{abstract}

\vspace{-10pt}
\section{Introduction}
\vspace{-5pt}
\label{sec:introduction}

Continual learning (CL) aims to sequentially learn a model from a stream of tasks or datasets, to extend its knowledge continuously. 
The main challenge in CL is \emph{catastrophic forgetting}, meaning that their performance on previous tasks degrades after learning new ones \citep{mccloskey1989catastrophic,goodfellow2013empirical}.
It has led to a growing body of works focusing on heuristic methods of mitigating forgetting, including regularization-based methods \citep{kirkpatrick2017overcoming,aljundi2018memory,li2017learning}, replay-based methods \citep{chaudhry2019tiny,lopez2017gradient,shin2017continual}, and optimization-based methods \citep{farajtabar2020orthogonal,javed2019meta,mirzadeh2020understanding}.

As CL is receiving significant attention in practice, it is also important to theoretically understand the mechanism of continual learning. 
A vast amount of the theoretical works on CL so far has focused on regression problems \citep{bennani2020generalisation,doan2021theoretical,asanuma2021statistical,lee2021continual,evron2022catastrophic,goldfarb2023analysis,li2023fixed}, whereas most of the practical application of deep learning is based on classification.
Thus, theoretical analysis of continual classification methods and their learning dynamics is of significant interest and importance.
Indeed, a few results study continual classification \citep{raghavan2021formalizing,kim2022theoretical,kim2023learnability,shi2023unified}, albeit focusing on theoretical perspectives that are different from ours; we review these related works in \cref{sec:related_works}.

This paper is mainly motivated by a recent result studying continual linear classification on a collection of jointly separable datasets \citep{evron2023continual}.
The authors consider continual training of a linear classifier under weak regularization, where the linear classifier is trained until convergence at every given task. By taking the limit of the regularization coefficient $\lambda \to 0$, this training procedure is shown to be equivalent (in terms of the parameter \emph{direction} as $\lambda \to 0$) to a projection-based scheme called Sequential Max-Margin (SMM): every time we encounter a new binary classification task, we project the current model parameter vector to a convex set defined by the margin conditions of the given dataset.
Then, under this framework of projection onto convex sets, the authors show linear convergence of the iterates of SMM to an \emph{offline solution} (i.e., a classifier that solves all tasks at once) under cyclic/random ordering of the tasks. More details can be found in \cref{sec:Difference_with_evron_SMM}.

In light of the insightful analyses by \citet{evron2023continual}, we now highlight some aspects of their work that motivate the setup of our interest.
First of all, \citet{evron2023continual} consider minimizing the regularized training loss of each task \emph{until convergence}; however, it is far more common to spend a finite budget of iterations per task in practice (i.e., online one-pass setting, or fixed-epoch setting).
Training until convergence, combined with sending the regularization coefficient $\lambda \to 0$, also raises an issue on the claimed equivalence of weakly regularized training and the projection-based scheme. As $\lambda \to 0$, the solution of the training objective diverges to infinity, which does not match the fact that the iterate of the SMM travels only for a finite distance at every stage.\footnote{Recall that \citet{evron2023continual} show their equivalence in terms of parameter \emph{direction}.}
Another noteworthy characteristic of the considered SMM scheme is that it does not always converge to the \emph{offline max-margin solution}, i.e., the hard-margin support vector machine solution that solves \emph{all} tasks jointly, which is known to be beneficial in terms of generalization~\citep{vapnik2013nature}.
Lastly, in their concluding section, \citet{evron2023continual} also suggest studying \emph{unregularized} continual training with \emph{early stopping} and highlight that the behavior may be different.
These observations triggered our investigation into a gradient-based algorithm for continual linear classification and its convergence and algorithmic bias.

\begin{wrapfigure}{r}{0.45\linewidth}
    \vspace{-20pt}
    \begin{center}
        \includegraphics[width=\linewidth]{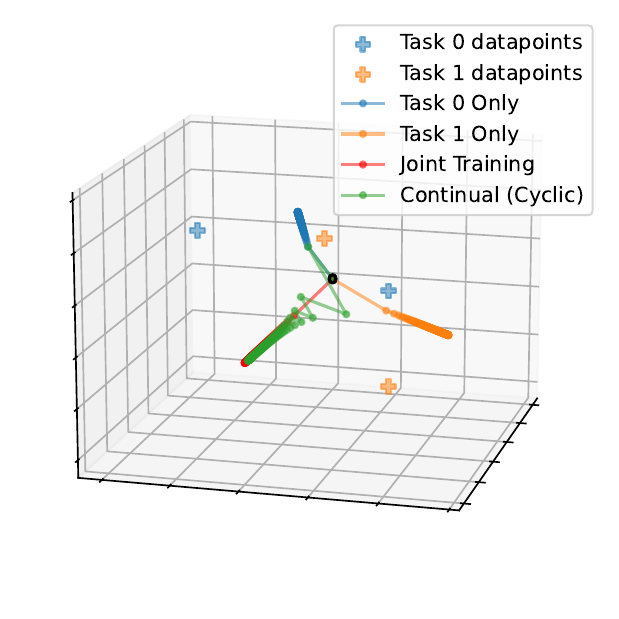}
    \end{center}
    \vspace{-25pt}
    \caption{Trajectory of sequential GD on a two-task toy example (\cref{subsec:experiment_detail_figintro}) in which the offline max-margin direction is not on the subspace spanned by individual task max-margin solutions.
    Sequential GD iterates initially oscillate but quickly start to evolve along the same direction as the offline max-margin direction.}
    \label{fig:intro}
    \vspace{-25pt}
\end{wrapfigure}

In this work, we theoretically study continual linear classification via sequentially running gradient descent (GD) on the \emph{unregularized} logistic loss for a fixed budget of iterations at every stage.\footnote{We focus on this setup instead of early stopping because it is closer to common practice in deep learning.}
When all tasks are jointly separable and revealed in the cyclic order (as studied by \citet{evron2023continual}), we show that sequential GD converges in the direction of the offline max-margin solution, unlike SMM.
We highlight that this is an interesting result for at least two reasons: 
\begin{itemize}[leftmargin=10pt]
    \item It reveals a clear difference between sequential GD and the projection-based SMM algorithm in terms of algorithmic bias. 
    \item It is well-known that GD applied to an individual task has its implicit bias towards the task's own max-margin direction \citep{soudry2018implicit}. However, the direction of the offline max-margin solution can largely differ from the max-margin directions of individual tasks, not even lying on the subspace spanned by the individual directions (see \cref{fig:intro} and \cref{subsec:experiment_detail_figintro}).
\end{itemize}
Therefore, the convergence of sequential GD to the \emph{offline max-margin solution} highlights that repeated continual training eventually drives the model to learn all tasks well, overcoming the biases towards individual tasks.
In addition to the implicit bias result, we also characterize the convergence rate in terms of total loss and the vanishing rate of the per-cycle forgetting. 
Our analysis reveals a surprising but intuitive link between positive/negative task alignments and forgetting.
Furthermore, we broaden the scope of our analysis to the random task ordering case and a jointly non-separable case. We summarize our main contributions below.

\vspace{-5pt}
\subsection{Summary of Contributions}
\label{subsec:contributions}
\vspace{-5pt}
We study continual linear classification using \emph{sequential GD}, where the model is updated by $K$ iterations of GD on the unregularized training loss of each given task. 
\begin{itemize}[leftmargin=15pt]
    \item 
    In \cref{sec:cyclic_jointly_separable}, we study the scenario where the tasks are jointly separable and are given in a cyclic order. 
    We prove that the joint (full) training loss asymptotically converges to zero (\cref{thm:cyclic_loss_convergence_asymptotic}) and the sequential GD iterates in fact align with the \emph{joint (offline) max-margin} solution (\cref{thm:cyclic_direction_convergence}).
    We also provide a non-asymptotic analysis of the \emph{cycle-averaged forgetting}, diminishing at the rate of $\gO(\frac{\ln^4 J}{J^2})$ (\cref{thm:cyclic_forgetting_general}), based on a non-asymptotic loss convergence bound of rate $\gO(\frac{\ln^2 J}{J})$ which holds for all number of cycles $J>1$.
    Our analysis delivers a clear intuition on how positive or negative task alignment has an impact on forgetting.
    \item In \cref{sec:random_jointly_separable}, we again consider the jointly separable setup but with the tasks given in a uniformly random order. 
    We show that the asymptotic loss convergence and directional convergence to the joint max-margin solution still happen, albeit almost surely~(\cref{thm:random_loss_convergence_asymptotic,thm:random_direction_convergence}).
    \item Lastly, in \cref{sec:beyond_jointly_separable} we consider the case where the tasks are no longer jointly separable, in which the global minimum of the joint training loss uniquely exists. We derive a fast non-asymptotic convergence rate of $\gO(\tfrac{\ln^2 J}{J^2})$ towards the global minimum when the tasks are presented cyclically.
\end{itemize}

\section{Problem Setup}
\label{sec:problem_setup}

In this section, we outline the problem setup considered throughout the paper.

\noindent\textbf{Notation.~~} We denote the joint data matrix as $\mX\in\R^{d\times N}$, whose columns are the $d$-dimensional data points $\vx_i$'s. 
For a square matrix $\mA$, we denote the maximum/minimum eigenvalue of it by $\lambda_{\max}(\mA)$ and $\lambda_{\min}(\mA)$, respectively.
In particular, we write $\sigma_{\max} = \sqrt{\lambda_{\max}(\mX\mX^\top)}$ as the maximum singular value of $\mX$. The Euclidean ($\ell_2$) norm of a vector $\vv$ is denoted as $\norm{\vv}$. Let $\sR^N_{\ge0}$ be the set of $N$-dimensional vectors whose elements are greater than or equal to zero. Also, for a pair of integers $K_1 \le K_2$, we write $[K_1:K_2]$ to denote a set of consecutive integers $\{K_1, K_1+1, \ldots, K_2\}$.

\subsection{Continual Linear Binary Classification}
\label{subsec:continual_linear_binary_classification}

We consider binary classification, where each input data $\vx \in \mathbb{R}^d$ has its own label $y \in \{-1,+1\}$.
We assume that our learning algorithm encounters $M$ different binary classification \textbf{tasks} in a sequential manner, and our goal is to find an \textbf{offline solution} that jointly solves all the tasks.
Each task consists of a set of data pairs $\{(\vx_i, y_i)\}_{i\in I_m}$, where $I_m \subset [0:N-1]$ is the set of data indices in task $m\in [0:M-1]$.
Let us use the notation that the total index set $I:=[0:N-1]$ is partitioned into $I = \biguplus_{m=0}^{M-1} I_m$; thus, we use disjoint index sets for distinct tasks.

We consider a simple linear model $f(\vx;\vw) = \vx^\top \vw $ parameterized by a weight vector $\vw \in \mathbb{R}^d$. With a differentiable loss function $\ell(u)$, we aim to minimize \textbf{offline (joint) training loss} defined as
\begin{align*}
    \gL(\vw):=\sum_{i\in I} \ell\left(y_i f(\vx_i;\vw)\right) = \sum_{i\in I} \ell \left(y_i\vx_i^{\top} \vw\right).
\end{align*}
Likewise, the training loss of task $m \in [0:M-1]$ is defined as
\begin{align*}
    \gL_m(\vw) := \sum_{i\in I_m} \ell \left(y_i\vx_i^{\top} \vw\right).
\end{align*}
Consider a CL setup run in multiple stages. At each stage $t\ge 0$, the only available dataset for training is taken from a single task $m_t\in [0:M-1]$: we denote the corresponding index set at stage $t$ as $I^{(t)}=I_{m_t}$.
Note that the learning algorithm does \emph{not} have the freedom to choose the next task; we assume that the task is presented to the algorithm by the ``environment.''
We study two common scenarios that dictate the order of the tasks to be learned:

\noindent\textbf{(1) Cyclic task ordering.~~} The tasks are revealed in a predefined cyclic order, i.e., $m_t = t\bmod M$.%

\noindent\textbf{(2) Random task ordering.~~} Every task is independently sampled uniformly at random, i.e., $m_t\overset{\rm iid}{\sim}\mathrm{Unif}([0:M-1])$.

Both ordering schemes have been studied theoretically and empirically \citep{evron2022catastrophic, evron2023continual, cossu2022class, houyon2023onlinedistillationcontinuallearning}. Indeed, such schemes naturally occur in real-world scenarios. For instance, cyclic task ordering covers search engines influenced by periodic events%
\footnote{A trend of interest about a keyword, e.g., \href{https://trends.google.com/trends/explore?date=now 7-d&q=dinner}{\tt trends.google.com/trends/}}
and seasonal financial data~\citep{gultekin1983stock,yang2022fourier}.
Random task ordering bears a resemblance to autonomous driving in randomly recurring environments~\citep{verwimp2023clad}.

\vspace{-5pt}
\subsection{Algorithm: Sequential Gradient Descent}
\vspace{-5pt}
\label{subsec:sequential_gradient_descent}

At every stage $t$, we run GD with a fixed learning rate $\eta>0$ on the corresponding training loss
\begin{align}
    \gL^{(t)}(\vw):=\gL_{m_t}(\vw) =\sum_{i\in I^{(t)}} \ell \left(y_i \vx_i^{\top} \vw\right)\quad (\because I^{(t)}=I_{m_t})
\end{align}
for a fixed budget ($K\ge1$).
We call this algorithm \textbf{sequential GD} and its update rule is written as
\begin{align}
\begin{aligned}
    \vw^{(t)}_{k+1} &= \vw^{(t)}_{k} - \eta \nabla \gL^{(t)}(\vw^{(t)}_{k})\quad \text{for $k \in [0:K-1]$}; \qquad \vw^{(t+1)}_{0} =\vw^{(t)}_{K}.
\end{aligned} \label{eq:CL_GD}
\end{align}
That is, for the task $m_t$ given at stage $t$, we run $K$ steps of GD updates and move on to the next task by setting the initial iterate of the next stage $\vw^{(t+1)}_{0}$ as the last iterate of the current stage $\vw^{(t)}_{K}$.

\vspace{-5pt}
\section{Cyclic Learning of Jointly Separable Tasks}
\vspace{-5pt}
\label{sec:cyclic_jointly_separable}

In this section, we focus on the \emph{jointly linearly separable} datasets~\citep{evron2023continual}.
We dive deep into the case of cyclic task ordering and prove that sequential GD on separable linear classification tasks converges in a direction to the offline max-margin solution of the total dataset. 
Additionally, through a non-asymptotic analysis of the loss convergence, we also characterize the average forgetting within cycles and show that the forgetting vanishes to zero faster than the loss convergence.

\vspace{-5pt}
\subsection{Definitions and Assumptions}
\vspace{-5pt}
\label{subsec:cyclic_jointly_assumptions}
The first assumption is that the total dataset is linearly separable.
\begin{assumption}[Joint Separability]
\label{assum:seperable}
There exists $\vw \in \R^d$ such that $y_i\vx_i^{\top} \vw > 0$ for $\forall i \in I$.
\end{assumption}
Under \cref{assum:seperable}, we can state an important definition central to our analysis.
We define the \textbf{joint (offline) $\ell_2$ max-margin solution} (where we usually omit ``$\ell_2$'' for convenience)
\begin{equation}
\label{eq:joint_max_margin}
    \hat{\vw} := 
    \argmin_{\vw \in \R^d} \norm{\vw}^2 \quad \text{subject to}~~y_i\vx_i^{\top} \vw \ge 1, ~\forall i \in I.
\end{equation}
It can be shown that the optimization problem in \cref{eq:joint_max_margin} has a unique solution $\hat{\vw}$~\citep{mohri2018foundations}.
The max-margin solution is of key interest in the study of linear classification, because it is well known that it has a good generalization guarantee~\citep{vapnik2013nature} and GD applied to a single separable binary classification problem has an implicit bias towards its $\ell_2$ max-margin solution~\citep{soudry2018implicit}.
To be more specific, it is shown in \citet{soudry2018implicit} that the norm of GD iterates diverges to infinity, but their direction converges to $\frac{\hat \vw}{\norm{\hat \vw}}$.
In our CL setting, we consider running multiple steps of GD on a single task before switching to a different task, and still aim to find the joint max-margin solution that solves all tasks at once.

Given the definition of joint max-margin solution, we now define several key quantities. The \textbf{maximum margin} of (normalized) $\hat{\vw}$ is defined as
\begin{equation}
    \phi := \min_{i\in I} \frac{y_i\vx_i^\top\hat{\vw}}{\norm{\hat{\vw}}}. \label{eq:maximum_margin}
\end{equation}
In fact, it can be shown that $\phi=1/\norm{\hat{\vw}}$.
A \textbf{support vector} is a data point $\vx_i$ that attains this minimum $\phi$; we define the index set of support vectors as $S := \{i\in I : y_i\vx_i^\top \frac{\hat{\vw}}{\norm{\hat{\vw}}} = \phi \}$, and define the index sets of support vectors of each task $S_m:= S \cap I_m$ for all $m\in[0:M-1]$.
Let the support vector matrix be $\mX_S\in\sR^{d\times \abs{S}}$, a submatrix of the data matrix $\mX$ that only contains columns corresponding to support vectors. 
Lastly, we define the \textbf{second margin} $\theta := \min_{i \in I\setminus S} y_i \vx_i^{\top} \hat{\vw} > 1$, which will appear in our techincal analyses.

To show directional convergence to the joint max-margin solution (\cref{thm:cyclic_direction_convergence}), we additionally pose a mild assumption on the support vectors.
\begin{assumption}[Non-Degeneracy of Support Vectors, e.g., \citealp{soudry2018implicit}]\label{assum:non_degenerate_data}
For each $i \in S$, there exists a unique $\alpha_i >0$ satisfying $\hat{\vw} = \sum_{i\in S}\alpha_i \cdot y_i \vx_i$: i.e., no support vectors are degenerate.
\end{assumption}
It is worth mentioning that \cref{assum:non_degenerate_data} is valid for almost all linearly separable datasets sampled from a continuous distribution where no more than $d$ support vectors can be on the same hyperplane. 

In the upcoming sections, we present several results on convergence, implicit bias, and forgetting of sequential GD. They rely on different assumptions on $\ell(u)$; we collect them here. It is noteworthy that the renowned \emph{logistic loss} $\ell(u) = \ln(1+e^{-u})$ satisfies all the assumptions listed below.

\begin{assumption}\label{assum:loss_shape}
    The loss function $\ell(u)$ is positive, $\beta$-smooth, monotonically decreasing to zero (i.e., $\lim_{u\to+\infty} \ell(u)=0$), and satisfies $\limsup_{u\to -\infty} \ell'(u) <0$.
\end{assumption}

\begin{assumption}[Tight Exponential Tail, \citealp{soudry2018implicit}]
\label{assum:tight_exponential_tail}
    The negative loss derivative $-\ell'(u)$ has a tight exponential tail, i.e., there exist positive constants $\mu_+, \mu_-$, and $\Bar{u}$ such that for all $u > \bar{u}$:
    \begin{align*}
        (1-\exp(-\mu_- u))e^{-u} \le -\ell'(u) \le (1+\exp(-\mu_+ u))e^{-u}.
    \end{align*}
\end{assumption}
\begin{assumption}[Convexity]
\label{assum:loss_convexity}
    The loss $\ell(u)$ is convex, i.e., $\ell(u) \ge \ell(v) +\ell'(v)\cdot (u-v)$ ($u,v\in \R$).
\end{assumption}

\vspace{-5pt}
\subsection{Asymptotic Results: Loss Convergence \& Implicit Bias to Joint Max-Margin}
\vspace{-5pt}
\label{subsec:loss_convergence_asymptotic}

Now, we analyze the asymptotic convergence of offline training loss and characterize the directional convergence of sequential GD~\eqref{eq:CL_GD} on jointly separable cyclic tasks.
We start by understanding the asymptotic behavior of the joint task loss $\gL(\vw)$.

\begin{restatable}{theorem}{cycliclossconvergenceasymptotic}\label{thm:cyclic_loss_convergence_asymptotic}
Let $\{\vw^{(t)}_k\}_{k \in [0:K-1], t \geq 0}$ be the sequence of GD iterates~\eqref{eq:CL_GD} from any starting point $\vw^{(0)}_0$, where tasks are given cyclically. 
Under Assumptions~\ref{assum:seperable} and \ref{assum:loss_shape},
if the learning rate satisfies $\eta < \frac{\phi^2}{2K\beta\sigma^3_{\max}(M\phi+\sigma_{\max})}$,
then 
\begin{enumerate}[itemsep=-3pt,leftmargin=15pt]
    \item Loss converges to zero:
        $\lim\limits_{t\to \infty} \gL(\vw_k^{(t)}) = 0, \forall k\in [0:K-1]$.
    \item All data points are eventually classified correctly:
        $\lim\limits_{t\to \infty} y_i\vx_i^\top {\vw_k^{(t)}} = \infty, \forall k\in [0:K-1], i\in I$.
    \item Square sum of the change of weight is finite:
        $\sum_{t=0}^{\infty} \sum_{k=0}^{K-1} \| \vw_{k+1}^{(t)} - \vw_{k}^{(t)}\|^2 < \infty$.
\end{enumerate}
\end{restatable}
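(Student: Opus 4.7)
The plan is to combine a per-stage descent lemma with a Lyapunov analysis against a scaled joint max-margin direction, and then bootstrap the resulting per-task convergence to all three stated claims.

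\textbf{Per-stage descent.} Since $|\ell''|\leq\beta$ and the operator norm of $\sum_{i\in I^{(t)}}\vx_i\vx_i^\top$ is at most $\sigma_{\max}^2$, each $\gL^{(t)}$ is $(\beta\sigma_{\max}^2)$-smooth. The assumed step size $\eta < \tfrac{\phi^2}{2K\beta\sigma_{\max}^3(M\phi+\sigma_{\max})}$ comfortably implies $\eta\beta\sigma_{\max}^2 \leq 1$ (using $\phi\leq\sigma_{\max}$), so the standard descent lemma gives $\gL^{(t)}(\vw^{(t)}_{k+1}) \leq \gL^{(t)}(\vw^{(t)}_k) - \tfrac{\eta}{2}\|\nabla\gL^{(t)}(\vw^{(t)}_k)\|^2$. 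Telescoping over $k\in[0:K-1]$ and using $\gL^{(t)}\geq 0$ yields the two tools $\sum_k\|\nabla\gL^{(t)}(\vw^{(t)}_k)\|^2 \leq \tfrac{2}{\eta}\gL^{(t)}(\vw^{(t)}_0)$ and $\|\vw^{(t)}_K-\vw^{(t)}_0\|^2 \leq 2\eta K\,\gL^{(t)}(\vw^{(t)}_0)$. Moreover, $\beta$-smoothness combined with $\ell\geq 0$ implies the self-bounding inequality $\|\nabla\gL_m(\vw)\|^2 \leq 2\beta\sigma_{\max}^2\,\gL_m(\vw)$ for every task $m$, which will control between-stage perturbations.

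\textbf{Lyapunov drift against $c\hat\vw$.} Define $V_t := \|\vw^{(t)}_0 - c\hat\vw\|^2$ for a free scalar $c>0$. Expanding
\[
V_{t+1}-V_t = -2\eta\sum_{k=0}^{K-1}\langle\nabla\gL^{(t)}(\vw^{(t)}_k),\vw^{(t)}_k - c\hat\vw\rangle + \eta^2\sum_{k=0}^{K-1}\|\nabla\gL^{(t)}(\vw^{(t)}_k)\|^2,
\]
and using the separability condition $y_i\vx_i^\top\hat\vw\geq 1$ together with a one-sided tangent bound on $\ell$ coming from $\beta$-smoothness, I would lower-bound the inner-product term by $\gL^{(t)}(\vw^{(t)}_k) - \gL^{(t)}(c\hat\vw)$ up to vanishing slack, and absorb the $\eta^2\|\nabla\|^2$ piece into a constant fraction of the descent using Step~1. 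The outcome is a drift inequality $V_{t+1}-V_t \leq -c_1\eta\,\gL^{(t)}(\vw^{(t)}_0) + c_2 K\ell(c)$ with positive constants $c_1,c_2$ depending only on $\eta,K,\beta,\sigma_{\max},M,N$. Since $V_t\geq 0$ and $\ell(c)\to 0$ as $c\to\infty$, a trap argument forces $\gL^{(t)}(\vw^{(t)}_0)\to 0$: for any $\epsilon>0$, picking $c$ with $c_2K\ell(c) < c_1\eta\epsilon/2$ means every stage with $\gL^{(t)}(\vw^{(t)}_0) > \epsilon$ drops $V_t$ by a fixed positive amount, which can happen only finitely often.

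\textbf{Bootstrap to all three claims.} Under cyclic ordering $\gL^{(nM+m)}=\gL_m$, so the previous step yields $\gL_m(\vw^{(nM+m)}_0)\to 0$ for each $m$. Because the drift bound gives $\|\vw^{(t)}_K-\vw^{(t)}_0\|\to 0$ at every stage, smoothness of $\gL_m$ together with $\|\nabla\gL_m(\vw)\|\leq\sigma_{\max}\sqrt{2\beta\gL_m(\vw)}$ propagates the vanishing to $\gL_m(\vw^{(t)}_k)\to 0$ for every $(t,k)$ via triangle inequality, yielding Claim~1. Claim~2 then follows because $\ell$ is strictly decreasing with its only zero limit at $+\infty$ (a consequence of Assumption~\ref{assum:loss_shape}), so $\gL(\vw^{(t)}_k)\to 0$ forces $y_i\vx_i^\top\vw^{(t)}_k\to+\infty$ for every $i$. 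For Claim~3, summing the per-stage descent gives $\eta^2\sum_{t,k}\|\nabla\gL^{(t)}(\vw^{(t)}_k)\|^2 \leq 2\eta\sum_t[\gL^{(t)}(\vw^{(t)}_0)-\gL^{(t)}(\vw^{(t)}_K)]$; rearranging the right-hand side as a task-indexed telescope across cycles leaves only the per-task initial losses plus a ``forgetting'' remainder, each term of which is controlled by smoothness and the summable iterate drifts, giving $\sum_{t,k}\|\vw^{(t)}_{k+1}-\vw^{(t)}_k\|^2<\infty$.

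\textbf{Main obstacle.} The central difficulty is the Lyapunov step: establishing the drift inequality without explicitly invoking convexity of $\ell$. The condition $\limsup_{u\to-\infty}\ell'(u)<0$ keeps gradients informative on wrongly-classified points while $\beta$-smoothness provides the one-sided tangent bound, but balancing the two so that $\langle\nabla\gL^{(t)}(\vw),\vw-c\hat\vw\rangle$ cleanly dominates $\gL^{(t)}(\vw)-\gL^{(t)}(c\hat\vw)$ up to a benign $\ell(c)$-term is the most delicate piece of the argument. Convexity (Assumption~\ref{assum:loss_convexity}) would deliver this bound for free but is not hypothesized in this theorem. A secondary challenge is making the ``forgetting'' remainder in Claim~3's telescope genuinely summable rather than just vanishing, which plausibly requires quantitative control on the gradient-norm sum $\sum_{t,k}\|\nabla\gL^{(t)}\|$ via the monotonic growth of $\langle\vw^{(t)}_0,\hat\vw\rangle$ established from separability.
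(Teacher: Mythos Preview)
Your Lyapunov step is where the argument breaks. You want
\[
\langle\nabla\gL^{(t)}(\vw),\vw-c\hat\vw\rangle \;\gtrsim\; \gL^{(t)}(\vw)-\gL^{(t)}(c\hat\vw),
\]
and you propose to get it from the one-sided tangent bound supplied by $\beta$-smoothness. But smoothness gives
\[
\langle\nabla\gL^{(t)}(\vw),\vw-c\hat\vw\rangle \;\ge\; \gL^{(t)}(\vw)-\gL^{(t)}(c\hat\vw)-\tfrac{\beta\sigma_{\max}^2}{2}\|\vw-c\hat\vw\|^2,
\]
and the extra $\|\vw-c\hat\vw\|^2=V_t$ term, once multiplied by $2\eta K$, produces a recursion of the shape $V_{t+1}\le(1+\eta K\beta\sigma_{\max}^2)V_t-\cdots$, i.e.\ a \emph{growing} potential. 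Your trap argument then fails because the $c_2K\ell(c)$ slack can no longer be absorbed. This is exactly the place where convexity (Assumption~\ref{assum:loss_convexity}) would rescue you, and it is \emph{not} hypothesized here. The condition $\limsup_{u\to-\infty}\ell'(u)<0$ only controls the gradient on misclassified points; it does not recover the tangent inequality on the well-classified side. Your Claim~3 bootstrap is also circular: the ``forgetting'' remainder $\sum_j[\gL_m(\vw^{(M(j+1)+m)}_0)-\gL_m(\vw^{(Mj+m)}_K)]$ is bounded via smoothness by quantities involving the iterate drifts $\|\vw^{(t)}_K-\vw^{(t)}_0\|$, whose summability is precisely Claim~3.

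The paper avoids both difficulties by never touching a Lyapunov potential. Its key observation is that separability yields the margin lower bound
\[
\|\nabla\gL(\vw)\|\;\ge\;\phi\sqrt{\textstyle\sum_{i\in I}\ell'(y_i\vx_i^\top\vw)^2},
\]
which in particular dominates every individual-task gradient norm. This lets one show (their Lemma~E.2) that the cumulative update over one full cycle satisfies $\|\vw^{(Mj+M)}_0-\vw^{(Mj)}_0+\eta K\nabla\gL(\vw^{(Mj)}_0)\|\le C\eta^2\|\nabla\gL(\vw^{(Mj)}_0)\|$, i.e.\ one cycle of sequential GD approximates one step of full-batch GD on the \emph{joint} loss. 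Plugging this into the ordinary smoothness descent inequality for $\gL$ (no convexity needed) gives $\gL(\vw^{(Mj+M)}_0)-\gL(\vw^{(Mj)}_0)\le -c\eta K\|\nabla\gL(\vw^{(Mj)}_0)\|^2$, hence $\sum_j\|\nabla\gL(\vw^{(Mj)}_0)\|^2<\infty$ outright. The margin lower bound then forces $\ell'(y_i\vx_i^\top\vw^{(Mj)}_0)\to0$ for every $i$, which together with Assumption~\ref{assum:loss_shape} yields Claims~1 and~2; Claim~3 follows because each per-step update is also controlled by $\|\nabla\gL\|$ through the same margin inequality, with no telescoping or forgetting bookkeeping required.
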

The theorem above shows that cyclic continual learning on the jointly separable data will eventually learn all tasks, or equivalently, find an offline solution without any additional techniques such as regularization.
This result matches the recent empirical findings that DNN can mitigate catastrophic forgetting when tasks are given repetitively~\citep{lesort2023challenging}. The last part on the sum of squared changes is used for proving the upcoming \cref{thm:cyclic_direction_convergence}. We note that \cref{thm:cyclic_loss_convergence_asymptotic} does not require convexity of $\ell$. The proof can be found in \cref{subsec:proof_cyclic_loss_convergence_asymptotic}.

Although \cref{thm:cyclic_loss_convergence_asymptotic} shows that the joint training loss converges to zero, due to the joint separability (\cref{assum:seperable}), there are multiple directions in which $\vw_k^{(t)}$ could evolve to make the offline training loss decay to zero. That is, the loss convergence only guarantees finding \emph{an} offline solution, but does not characterize \emph{which}.
Under additional assumptions of non-degeneracy and tight exponential tails, we characterize \emph{which} direction $\vw_k^{(t)}$ diverges to, and show that the model parameter in fact aligns with the joint $\ell_2$ max-margin solution $\hat\vw$~\eqref{eq:joint_max_margin}.

\begin{restatable}{theorem}{cyclicdirectionconvergence}
\label{thm:cyclic_direction_convergence}
    Let $\{\vw^{(t)}_k\}_{k \in [0:K-1], t \geq 0}$ be the sequence of GD iterates~\eqref{eq:CL_GD} from any initial point $\vw^{(0)}_0$, where tasks are given cyclically. Suppose that \cref{assum:seperable,assum:non_degenerate_data,assum:loss_shape,assum:tight_exponential_tail} hold. Then, under the same learning rate condition as in \cref{thm:cyclic_loss_convergence_asymptotic}, $\vw^{(t)}_k$ will behave as
    \begin{align*}
        \vw^{(t)}_k = \ln (t) \hat{\vw} + \vrho^{(t)}_k,
    \end{align*}
    for all sufficiently large $t$, where $\|\vrho^{(t)}_k\|$ stays bounded as $t\to\infty$.
\end{restatable}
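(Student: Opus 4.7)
The strategy is to adapt the implicit-bias argument of \citet{soudry2018implicit} for GD on a single separable task to the cyclic continual setting. I take the ansatz $\vw^{(t)}_k = \ln(t)\hat{\vw} + \vrho^{(t)}_k$ suggested by the theorem statement, derive a recursion for the residual $\vrho^{(t)}_k$, and aim to show it stays bounded for all sufficiently large $t$.

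The first ingredient is a sharp asymptotic form of the gradient step. Combining $y_i\vx_i^\top \vw^{(t)}_k \to \infty$ from \cref{thm:cyclic_loss_convergence_asymptotic} with the tight exponential tail (\cref{assum:tight_exponential_tail}), one gets, for large enough $t$, $-\ell'(y_i\vx_i^\top\vw^{(t)}_k) = t^{-1} e^{-y_i\vx_i^\top \vrho^{(t)}_k}(1+O(t^{-\mu_-}))$ for every support vector $i \in S_{m_t}$ of the active task, while for a non-support vector $i \in I_{m_t}\setminus S_{m_t}$ the contribution is $O(t^{-\theta})$ with $\theta>1$, hence absolutely summable in $t$ and contributing only a bounded total perturbation to $\vrho$.

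Next I aggregate one full cycle. Using the update rule \eqref{eq:CL_GD} together with $\vw^{(t+1)}_0 = \vw^{(t)}_K$ and $\ln((j{+}1)M) - \ln(jM) = j^{-1} + O(j^{-2})$, the residual drift over cycle $j$ satisfies
\[
    \vrho^{((j+1)M)}_0 - \vrho^{(jM)}_0 \;\approx\; \frac{1}{j}\sum_{i\in S}\biggl(\frac{\eta K}{M}\, e^{-y_i\vx_i^\top \bar{\vrho}^{(j)}_i} - \alpha_i\biggr) y_i\vx_i,
\]
where $\bar{\vrho}^{(j)}_i$ denotes the value of $\vrho$ at the unique stage of cycle $j$ in which task $m_t$ contains $i$, and $\hat{\vw}=\sum_{i\in S}\alpha_i y_i\vx_i$ by KKT (with $\alpha_i>0$ unique by \cref{assum:non_degenerate_data}). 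The per-cycle drift vanishes exactly when $\eta K\, e^{-y_i\vx_i^\top \vrho^*}/M = \alpha_i$ for all $i\in S$, which uniquely pins down the projection of a target residual $\vrho^*$ onto $\mathrm{span}\{y_i\vx_i:i\in S\}$, since non-degeneracy implies linear independence of $\{y_i\vx_i\}_{i\in S}$.

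The third and main technical step is turning this heuristic fixed-point equation into a genuine boundedness statement. In the spirit of \citet{soudry2018implicit}, I would construct a Lyapunov-type potential built from the KKT multipliers---for instance, a weighted combination of $\sum_{i\in S}\alpha_i y_i\vx_i^\top(\vrho^{(t)}_k - \vrho^*)$ and $\sum_{i\in S} e^{-y_i\vx_i^\top(\vrho^{(t)}_k - \vrho^*)}$---and show that it decreases per cycle up to summable correction terms. Both the within-cycle oscillation of $\vrho^{(t)}_k$ (needed to justify replacing the various $\bar{\vrho}^{(j)}_i$ by a common reference in the zero-drift equation) and the component of $\vrho^{(t)}_k$ orthogonal to $\mathrm{span}\{y_i\vx_i:i\in S\}$ (which does not feed back through the exponential gradient term) are controlled directly from $\sum_{t,k}\|\vw^{(t)}_{k+1}-\vw^{(t)}_k\|^2 < \infty$ supplied by \cref{thm:cyclic_loss_convergence_asymptotic}(3). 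The hard part will be precisely the cyclic structure: in the single-task setting every support vector contributes to every gradient step, so the instantaneous gradient already points toward the max-margin direction, whereas here only $S_{m_t}$ contributes at stage $t$ and the alignment with $\hat{\vw}$ emerges only after summing over a full cycle; \cref{assum:non_degenerate_data} is what makes the cross-task fixed point well-defined, and bookkeeping the $O(t^{-\mu_-})$ exponential-tail corrections and the $O(t^{-\theta})$ non-support-vector contributions so that they remain absolutely summable across cycles is where most of the technical work resides.
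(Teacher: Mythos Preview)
Your skeleton---exponential-tail asymptotics for the gradient, the residual fixed point $\eta K e^{-y_i\vx_i^\top\vrho^*}/M = \alpha_i$, and summability of the $O(t^{-\theta})$ non-support-vector terms---matches the paper. The execution diverges at the boundedness step. Rather than aggregating over cycles and building a nonlinear potential in $\{y_i\vx_i^\top\vrho\}_{i\in S}$, the paper first subtracts from $\vw^{(t)}_k$ both the fixed point $\tilde{\vw}$ \emph{and} an explicit cyclic drift term $\tfrac{M}{K}\vm_{t,k}$: a separate lemma shows $K\sum_{u<t}\tfrac{1}{u}\sum_{s\in S^{(u)}}\alpha_s\vx_s = \tfrac{K}{M}\ln(t/M)\,\hat{\vw}+\text{const}+\vm_{t,k}$ with $\|\vm_{t,k}\|\to 0$, which absorbs the cyclic bookkeeping you flag as ``the hard part''. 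The Lyapunov is then simply $\|\vr^{(t)}_k\|^2$, and the key per-step bound $(\vr^{(t)}_{k+1}-\vr^{(t)}_k)^\top\vr^{(t)}_k \le C_1 t^{-\theta} + C_2 t^{-1-\tilde\mu/2}$ follows from a four-way case split on the sign and size of $\vx_s^\top\vr^{(t)}_k$ for each $s\in S^{(t)}$. This step-by-step route avoids the circularity latent in your cycle-aggregated plan: bounding within-cycle oscillations of $\vrho$ to $O(j^{-1})$ already requires $e^{-y_i\vx_i^\top\vrho}$ bounded, which is part of what you are proving.

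One technical slip: square-summability $\sum_{t,k}\|\vw^{(t)}_{k+1}-\vw^{(t)}_k\|^2<\infty$ does \emph{not} bound the component of $\vrho$ orthogonal to $\mathrm{span}\{y_i\vx_i:i\in S\}$---increments can be square-summable without being summable. The correct argument is that this component's drift comes only from non-support-vector gradients, each $O(t^{-\theta})$ with $\theta>1$, hence absolutely summable (and the part orthogonal to all data never moves). Relatedly, neither candidate potential you list is coercive on its own; you would need something like $\sum_{i\in S}\alpha_i\bigl[e^{-u_i}+u_i-1\bigr]$ with $u_i=y_i\vx_i^\top(\vrho-\vrho^*)$, and even that only controls the $u_i$, leaving the orthogonal piece to be handled separately and correctly.
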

The proof is in \cref{subsec:proof_cyclic_direction_convergence}.
A key implication of \cref{thm:cyclic_direction_convergence} is that the weight vector converges in the direction of the joint max-margin solution while diverging in magnitude at a rate $\gO(\ln t)$:
\begin{align}
\label{eq:direc_conv}
    \lim_{t \to \infty} {\frac{\vw^{(t)}_k}{\|\vw^{(t)}_k\|}} = {\frac{\hat{\vw}}{\norm{\hat{\vw}}}}, \quad\forall k\in [0:K-1].
\end{align}
It implies that sequential GD without any regularization not only learns every task, but also \emph{implicitly biased} towards the joint max-margin direction whose corresponding linear classifier separates the total dataset with the largest margin, even without full access to it at all update steps of the classifier weight.
This implication is visually verified in~\cref{fig:thm3132}.

\begin{figure}[htb]
    \centering
    \subfigure[Data points, 
    training trajectory (beginning-of-stage iterates), and the decision boundaries (dashed).]{\label{fig:thm3132_a}\includegraphics[width=0.5\linewidth]{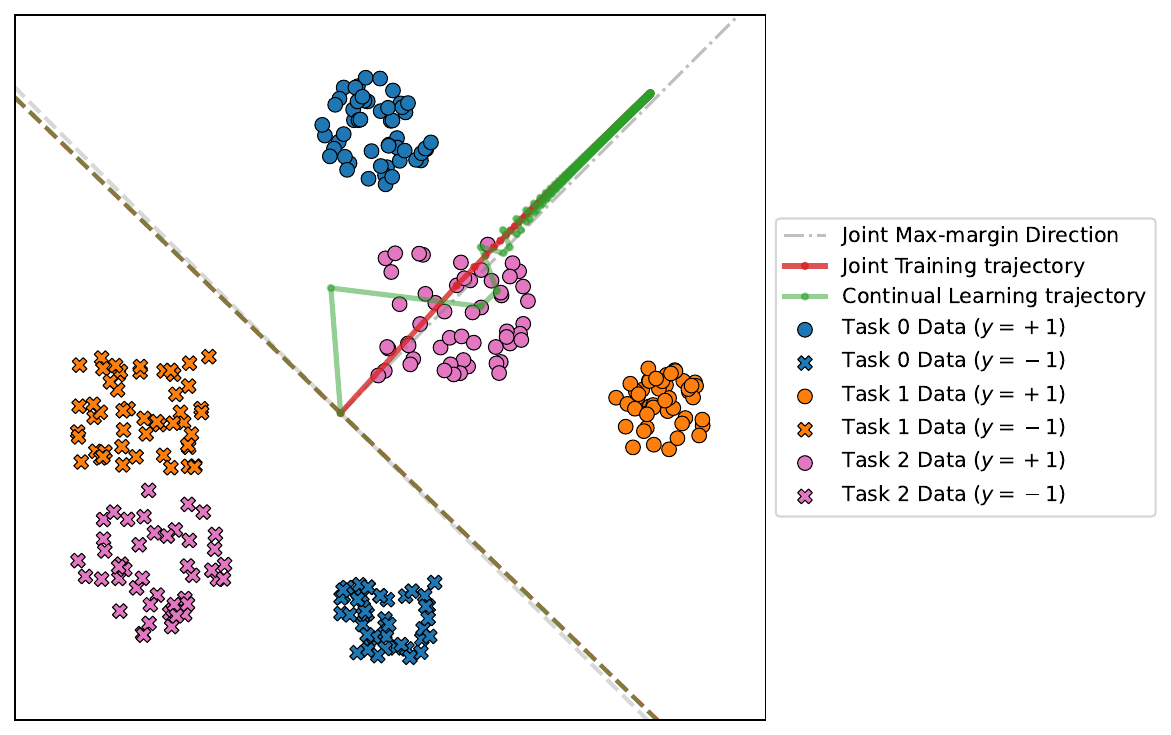}}
    ~~
    \subfigure[Sine angles, implying the implicit bias toward joint max-margin direction.]{\label{fig:thm3132_b}\includegraphics[width=0.45\linewidth]{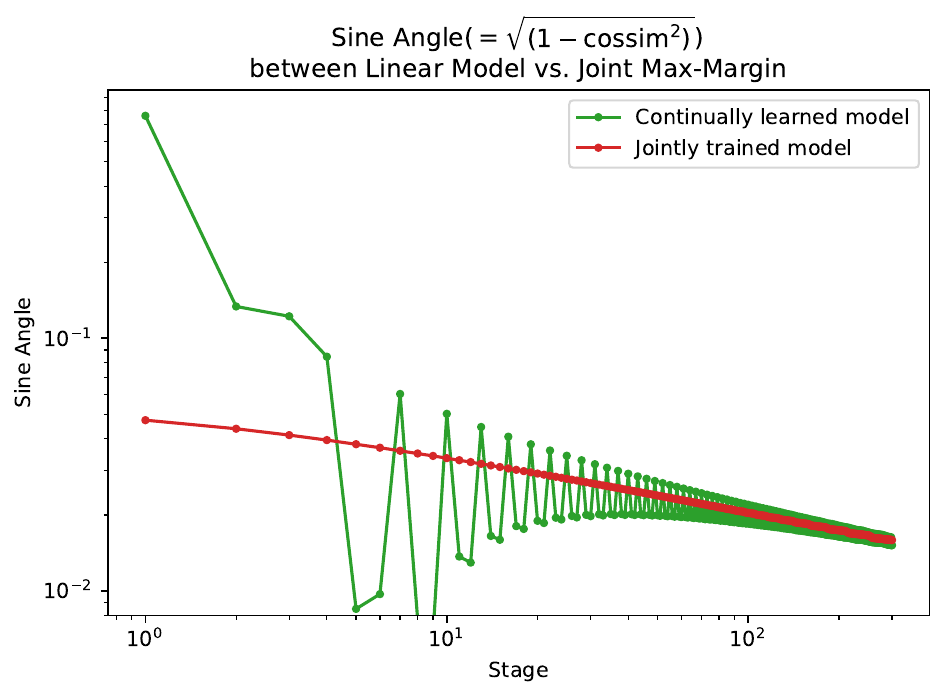}}
    \vspace{-10pt}
    
    \caption{\textbf{Comparison between continually learned and jointly trained linear classifier.}~~We generate three jointly separable binary classification tasks (with 2D inputs) and run (1) sequential GD in a cyclic task ordering and (2) full-batch GD. 
    It is well-known that the offline full-batch GD converges to the offline $\ell_2$ max-margin solution~\citep{soudry2018implicit}. 
    We verify a similar implicit bias of sequential GD iterates (which we proved in \cref{thm:cyclic_direction_convergence}) by observing the decrease in angle between the model weight and the joint max-margin direction (set as $(1,1)$). We also observe similar phenomena for a more general experimental setup (e.g., random task ordering): see \cref{subsec:experiment_detail_figthm3132}.}
    \label{fig:thm3132}
    \vspace{-10pt}
\end{figure}

\begin{remark}[Asymptotic Convergence Rate of Joint Training Loss]
    \label{rmk:asymptotic_loss_rate}
    Recall that \cref{thm:cyclic_direction_convergence} relies on \cref{assum:non_degenerate_data} as well as the assumptions about the shape of the loss function $\ell(\cdot)$, namely, \cref{assum:loss_shape,assum:tight_exponential_tail}.
    Combining these assumptions again with the theorem, we can show that the $\gL(\vw_k^{(t)})$ will eventually converge to zero at a rate $\gO(1/t)$, given that $t$ is \emph{sufficiently} large.
    We prove this claim in~\cref{subsec:proof_cyclic_loss_convergence_from_implicit_bias}.
    Moreover, we will compare it with a forthcoming non-asymptotic loss convergence rate (\cref{thm:cyclic_loss_convergence}) later in~\cref{subsec:loss_convergence_nonasymptotic}.
\end{remark}

\textbf{Remark on \cref{assum:non_degenerate_data}.}~~%
As noted earlier, the non-degeneracy assumption (\cref{assum:non_degenerate_data}) is borrowed from \citet{soudry2018implicit}; the purpose of adopting this assumption is to facilitate a more complete analysis of the residual $\vrho_k^{(t)}$. In fact, in \citet{soudry2018implicit}, the conclusion on the directional convergence (similar to \cref{eq:direc_conv}, but for single-task GD training) still holds even under the absence of \cref{assum:non_degenerate_data}. In light of this, we also believe that directional convergence of sequential GD~\eqref{eq:direc_conv} will hold even without \cref{assum:non_degenerate_data}, but we did not pursue removing the assumption because it does not offer substantial additional insights.

\textbf{Additional Experiments.}~~%
Although we analyze continual learning in a setting where each task has a fixed dataset, the insight of our analysis extends to general setups. To show this, we conduct experiments in a setting where each task has its own (separable) data distribution and a dataset is freshly sampled at every new stage. 
We observe the same directional convergence behavior of sequential GD toward the true joint max-margin direction. The detailed results are in~\cref{subsubsec:toward_online}.
In addition, we provide experiments with shallow ReLU networks, verifying analogous insights on implicit bias and loss convergence of continually learned models: see~\cref{subsec:relu_2d}.

\vspace{-5pt}
\subsection{Non-Asymptotic Results: Loss Convergence and Forgetting Bounds}
\vspace{-5pt}
\label{subsec:loss_convergence_nonasymptotic}
In \cref{subsec:loss_convergence_asymptotic}, we presented asymptotic results characterizing the convergence of total training loss to zero and the directional convergence of sequential GD iterates to the max-margin solutions. We now supplement these results with an additional \emph{non-asymptotic} convergence analysis on total training loss, which we can use to obtain a non-asymptotic analysis of \emph{cycle-averaged forgetting} as well.

As aforementioned, the main challenge in CL is mitigating catastrophic forgetting. Analyses of continual learning methods aim to show that methods decrease forgetting, theoretically or empirically. In this paper, we are interested in how strong forgetting is in our continual linear classification setup.

We start by stating a common definition of forgetting, which quantifies the amount of loss increase at the end of stage $t$ compared to the end of $K$ steps of GD on $\gL^{(s)}$ executed in stage $s \leq t$.
\begin{definition}[Forgetting]
The \textbf{forgetting} at stage $t$ of the task learned in stage $s$ ($\leq t$) is the change of the task loss $\gL^{(s)}$ from the moment the $K$ GD steps were finished in stage $s$. That is,
    \begin{align*}
        \gF^{(s)}(t) := \gL^{(s)}(\vw^{(t)}_K) - \gL^{(s)}(\vw^{(s)}_K).
    \end{align*}
\end{definition}
Notice that forgetting is zero by definition when $t = s$. While it is usually expected that forgetting is a positive quantity, it could also be negative by definition. Such a case can happen when the tasks seen in stages between $s$ and $t$ are well-aligned with $\gL^{(s)}$, so that the model improves on the task previously seen in stage $s$. This phenomenon is called \emph{backward knowledge transfer}.

When CL tasks do not necessarily repeat, it is common to evaluate the average forgetting that occurred during all past stages, namely $\frac{1}{t} \sum_{s=0}^{t-1} \gF^{(s)}(t)$.
However, since we consider the case where tasks are given cyclically, it is natural to define our quantity of interest as below:
\begin{definition}[Cycle-Averaged Forgetting]
\label{def:cyclc_averaged_forgetting}
The \textbf{cycle-averaged forgetting} at cycle $j$ is the average loss change of previous tasks from the stage in which it was learned. That is,
\begin{align*}
    \gF_{\rm cyc}(j) &:= \frac{1}{M} \sum_{m=0}^{M-1} \gF^{(Mj+m)}(Mj+M-1) \\&=\frac{1}{M} \sum_{m=0}^{M-1} \gL_m(\vw^{(Mj+M)}_0) - \gL_m(\vw^{(Mj+m)}_K).
\end{align*}
\end{definition}
By studying cycle-averaged forgetting, we would like to understand how much forgetting happens during the cyclic learning process, and how the amount of forgetting changes as we repeat the cycles.

Although the asymptotic convergence to joint max-margin solution (\cref{thm:cyclic_direction_convergence}) suggests that the model will suffer a diminishing level of forgetting in the long run, characterizing the amount of forgetting for a given cycle count $J$ necessitates a more careful non-asymptotic analysis of the loss convergence. For this purpose, we present an additional theorem characterizing the non-asymptotic convergence of offline training loss $\gL$; we then build on this theorem to prove upper and lower bounds on cycle-averaged forgetting.
The new convergence theorem requires the same set of assumptions as \cref{thm:cyclic_loss_convergence_asymptotic}, except for an additional assumption of convex $\ell(u)$.

\begin{restatable}{theorem}{cycliclossconvergencenonasymptotic}
\label{thm:cyclic_loss_convergence}
   Suppose that \cref{assum:seperable,assum:loss_shape,assum:loss_convexity} hold under the cyclic task ordering setup. Then, if we choose a learning rate satisfying $\eta < \frac{\phi^2}{4K\beta\sigma^3_{\max}(M\phi+\sigma_{\max})}$,
    we have for any $m \in [0:M-1]$ and $k \in [0:K-1]$ that
    \begin{align*}
        \gL(\vw^{(MJ+m)}_k) &\le \left( \abs{S} + \frac{\sum_{i=0}^{m-1}\abs{S_i} + \frac{k}{K} \abs{S_m} }{J} \right) \ell(\ln MJ) + \frac{\lVert{\vw^{(0)}_0 - \hat{\vw} \ln MJ}\rVert^2}{2\eta K J} + \frac{D_1}{J}\\
        &\quad~~+ \left( \abs{I}-\abs{S} + \frac{\sum_{i=0}^{m-1}(\abs{I_i} -\abs{S_i}) + \frac{k}{K} (\abs{I_m} - \abs{S_m}) }{J} \right) \ell(\theta \ln MJ),
    \end{align*}
    where $\theta > 1$ is the second margin defined in \cref{subsec:cyclic_jointly_assumptions}, and $D_1$ is a constant independent to $J$.
\end{restatable}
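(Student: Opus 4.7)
The plan is to apply a regret-style analysis for gradient descent against a carefully scaled reference point, and then convert the resulting cumulative loss bound into a pointwise bound on the joint loss at the specific last iterate $\vw^{(MJ+m)}_k$. The comparator I would pick is $\vw^\star := \hat{\vw}\ln(MJ)$. The scaling $\ln(MJ)$ is chosen so that $y_i\vx_i^\top\vw^\star = \ln(MJ)$ for any support vector $i\in S$ (making $\ell(y_i\vx_i^\top\vw^\star) = \ell(\ln MJ)$), while non-support vectors satisfy $y_i\vx_i^\top\vw^\star \ge \theta\ln(MJ)$ (so $\ell(y_i\vx_i^\top\vw^\star) \le \ell(\theta\ln MJ)$ by monotonicity of $\ell$). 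This immediately yields $\gL_i(\vw^\star) \le \abs{S_i}\ell(\ln MJ) + (\abs{I_i}-\abs{S_i})\ell(\theta\ln MJ)$ for each task $i$, which is exactly the shape of the leading support-vector and second-margin terms appearing in the theorem.

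Next I would derive a one-step descent inequality on the potential $\|\vw^{(t)}_k-\vw^\star\|^2$. Combining convexity of $\gL^{(t)}$ (\cref{assum:loss_convexity}) to lower-bound $\nabla\gL^{(t)}(\vw^{(t)}_k)^\top(\vw^{(t)}_k-\vw^\star)$, the self-bounding property $\|\nabla\gL^{(t)}(\vw)\|^2 \le 2\beta\sigma_{\max}^2\,\gL^{(t)}(\vw)$ (which follows from $\beta$-smoothness of $\ell$ together with $\inf\gL^{(t)}=0$ under \cref{assum:seperable,assum:loss_shape}), and the prescribed step size to absorb the quadratic gradient term, I would obtain an inequality of the form
\[
\|\vw^{(t)}_{k+1}-\vw^\star\|^2 \le \|\vw^{(t)}_k-\vw^\star\|^2 - 2\eta\,\gL^{(t)}(\vw^{(t)}_k) + 2\eta\,\gL^{(t)}(\vw^\star).
\]
Telescoping across the $K$ inner steps of each stage and across $t\in[0:MJ+m-1]$ plus the extra $k$ inner steps of stage $MJ+m$, and using the cyclic identity $\sum_{t=0}^{MJ-1}\gL^{(t)}(\vw^\star) = J\gL(\vw^\star)$ with a partial-cycle remainder $\sum_{i=0}^{m-1}\gL_i(\vw^\star) + \tfrac{k}{K}\gL_m(\vw^\star)$, produces the cumulative bound
\[
\sum_{t,k'}\gL^{(t)}(\vw^{(t)}_{k'}) \le \frac{\|\vw^{(0)}_0-\vw^\star\|^2}{2\eta} + K\Bigl[J\gL(\vw^\star) + \sum_{i=0}^{m-1}\gL_i(\vw^\star) + \tfrac{k}{K}\gL_m(\vw^\star)\Bigr].
\]
Bounding each $\gL_i(\vw^\star)$ as in the first paragraph and dividing by $2\eta KJ$ already reproduces the initial-distance term $\|\vw^{(0)}_0-\hat\vw\ln MJ\|^2/(2\eta K J)$, the main $|S|\ell(\ln MJ) + (|I|-|S|)\ell(\theta\ln MJ)$ term, and the $1/J$-scaled partial-cycle corrections.

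The main obstacle is converting the cumulative lower bound $\sum_{t,k'}\gL^{(t)}(\vw^{(t)}_{k'})$ into a pointwise bound proportional to $KJ\,\gL(\vw^{(MJ+m)}_k)$. A naive Jensen argument would only yield a bound at the average iterate, not the specific last iterate we want. I plan to handle this by combining three ingredients: (i) monotonicity of $\gL^{(t)}$ along the $K$ inner GD steps, which holds under the prescribed step size for a smooth convex loss and allows per-step task losses inside a stage to dominate the end-of-stage loss; (ii) the cyclic structure, which guarantees every task $m$ is visited exactly $J$ times over $J$ cycles so that summing the end-of-stage task losses recovers $J$ copies of the joint loss up to cross-cycle discrepancies; and (iii) smoothness-based control of the inter-stage discrepancy $|\gL_i(\vw^{(t)}_{k'})-\gL_i(\vw^{(MJ+m)}_k)|$, quantified via the summable weight-increment bound $\sum_{t,k}\|\vw^{(t)}_{k+1}-\vw^{(t)}_k\|^2 < \infty$ guaranteed by part~3 of \cref{thm:cyclic_loss_convergence_asymptotic}. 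Since this total oscillation is uniformly bounded in $J$, dividing by $J$ produces the $J$-independent $D_1/J$ correction in the theorem. Rearranging then completes the proof.
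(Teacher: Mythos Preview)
Your choice of comparator $\vw^\star=\hat\vw\ln(MJ)$ and the per-step telescoping argument are sound and match the paper's setup. The gap is in step~(iii), which is where the real difficulty lies and where your plan does not go through.

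The quantity you need to lower-bound is $\sum_{t,k'}\gL^{(t)}(\vw^{(t)}_{k'})$, a sum of \emph{task} losses at different iterates, by (essentially) $KJ\,\gL(\vw^{(MJ+m)}_k)$, the \emph{joint} loss at the final iterate. After using per-stage monotonicity~(i) and the cyclic count~(ii), you are left with controlling the accumulated discrepancies $\sum_{j,m'}\bigl[\gL_{m'}(\vw^{(MJ+m)}_k)-\gL_{m'}(\vw^{(Mj+m'+1)}_0)\bigr]$. But this is precisely a sum of \emph{forgetting} terms, and the paper bounds forgetting only \emph{after} establishing the present theorem (see \cref{thm:cyclic_forgetting_general}); invoking it here would be circular. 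Your proposed tool, the $\ell_2$-summability $\sum_{t,k}\|\vw^{(t)}_{k+1}-\vw^{(t)}_k\|^2<\infty$ from \cref{thm:cyclic_loss_convergence_asymptotic}, does not control these discrepancies: a smoothness bound on $|\gL_i(\vw)-\gL_i(\vw')|$ involves $\|\vw-\vw'\|$ (and a gradient norm), not the squared one-step increments, and since the iterates diverge like $\ln t$ the pairwise distances $\|\vw^{(t)}_{k'}-\vw^{(MJ+m)}_k\|$ are not uniformly bounded in $J$. Summing $\Theta(MJK)$ such discrepancies will not yield an $O(1)$ correction.

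The paper sidesteps this entirely by working at the \emph{cycle} level rather than the step level. Using the separability-specific control of \cref{lem:separable_gradient_sum_lemma} (which bounds $\|\vw^{(Mj+M+m)}_k-\vw^{(Mj+m)}_k+\eta K\nabla\gL(\vw^{(Mj+m)}_k)\|$ by a small multiple of $\|\nabla\gL(\vw^{(Mj+m)}_k)\|$, crucially via the margin $\phi$), one proves a cycle-level descent inequality for the \emph{joint} loss:
\[
\gL(\vw^{(M(j+1)+m)}_k)\le \gL(\vw^{(Mj+m)}_k)-\eta K(1-\eta K\beta')\|\nabla\gL(\vw^{(Mj+m)}_k)\|^2.
\]
This monotonicity of $j\mapsto\gL(\vw^{(Mj+m)}_k)$ at a \emph{fixed} position $(m,k)$ within the cycle is the missing ingredient: it immediately gives $\sum_{j=0}^{J-1}\gL(\vw^{(M(j+1)+m)}_k)\ge J\,\gL(\vw^{(MJ+m)}_k)$, converting the telescoped regret bound (now written in terms of joint losses, via \cref{lem:smoothness_distance_bound2}) into the desired last-iterate bound. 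Your per-step telescoping never isolates the joint loss at a fixed in-cycle position, so you cannot exploit this monotonicity.
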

The proof, as well as the detailed expression of $D_1$, can be found in \cref{subsec:proof_cyclic_loss_convergence}. 
One can revisit \cref{subsec:cyclic_jointly_assumptions} to recall the definitions of symbols such as $\sigma_{\max}$, $\phi$, and $\beta$. 
The bound in \cref{thm:cyclic_loss_convergence} may be a bit difficult to parse. 
First of all, notice that whenever $\ell(u) \leq e^{-u}$, which is true for logistic loss $\ell(u) = \ln(1+e^{-u})$, we have $\ell(\ln MJ) \leq \frac{1}{MJ}$ and $\ell(\theta \ln MJ) \leq \frac{1}{(MJ)^\theta}$. 
Combined with other terms, this implies an overall $\gO(\frac{\ln^2 J}{J})$ upper bound for the offline training loss. 

\begin{figure}[htb]
    \centering
    \subfigure[Contradicting case]{\label{fig:forgetting_task1_data}\includegraphics[width=0.2\linewidth]{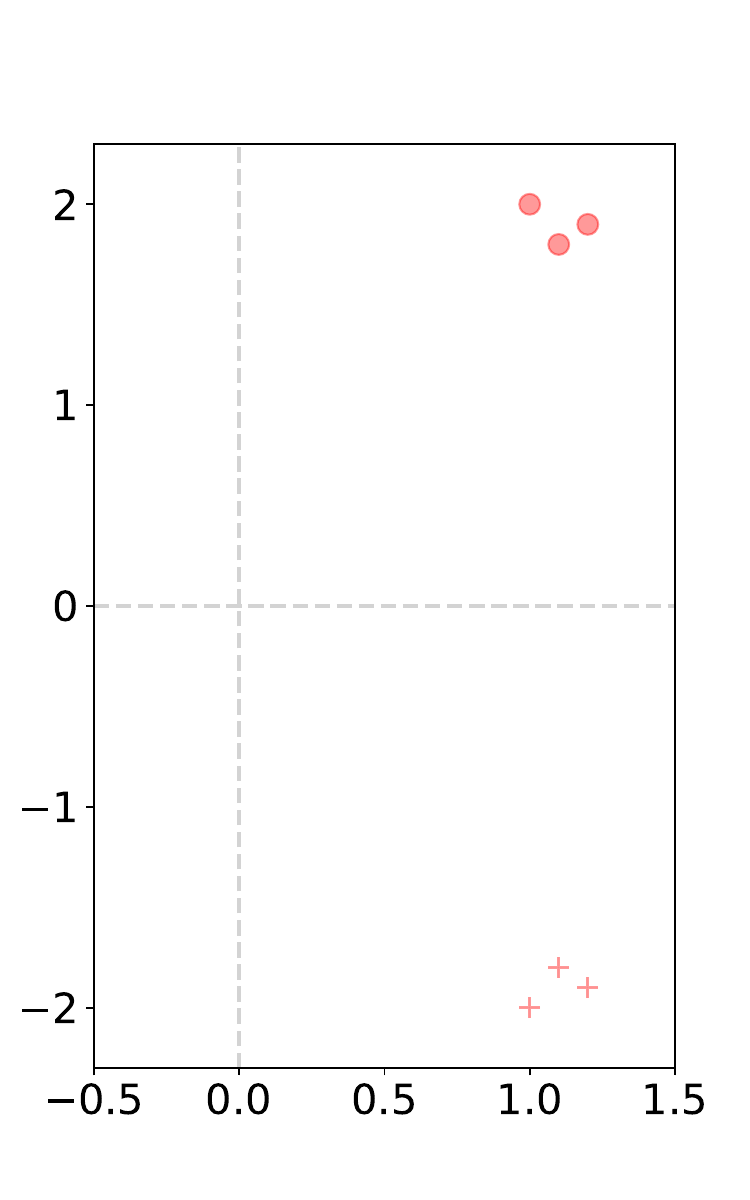}}
    ~~
    \subfigure[Aligned case]{\label{fig:forgetting_task2_data}\includegraphics[width=0.2\linewidth]{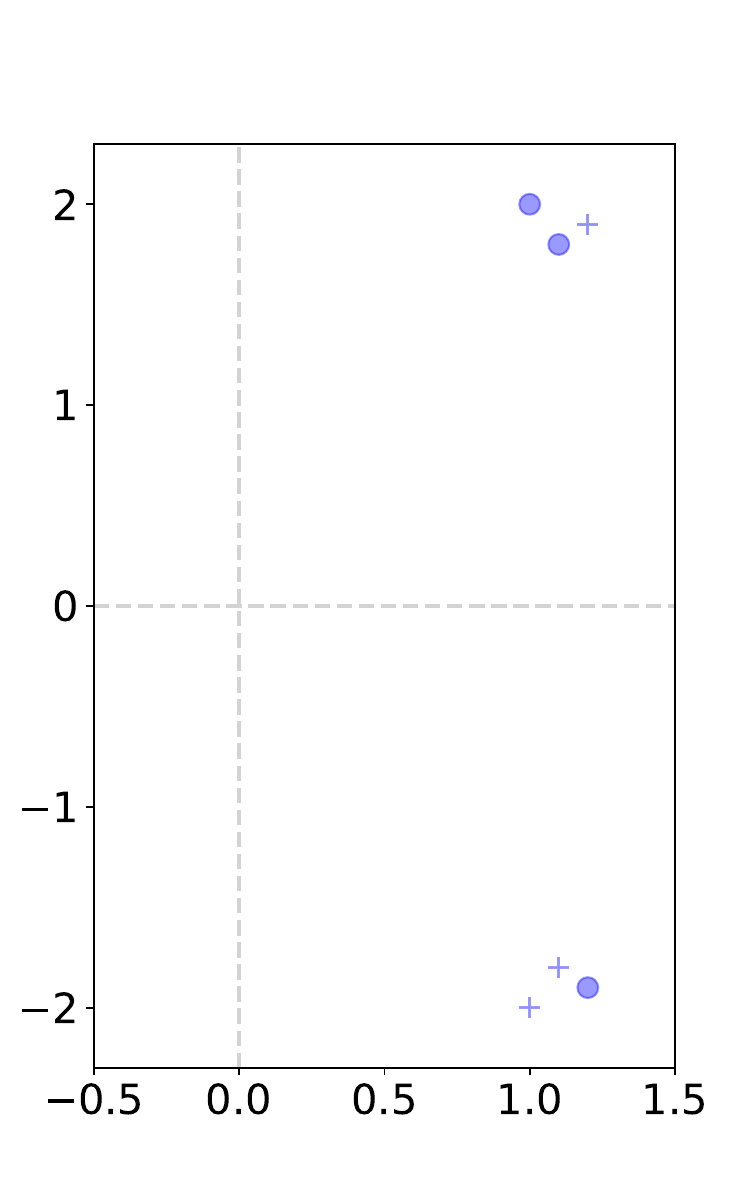}}
    ~~
    \subfigure[Cycle-averaged Forgetting]{\label{fig:outline_forgetting_dist(b)}\includegraphics[width=0.4\linewidth]{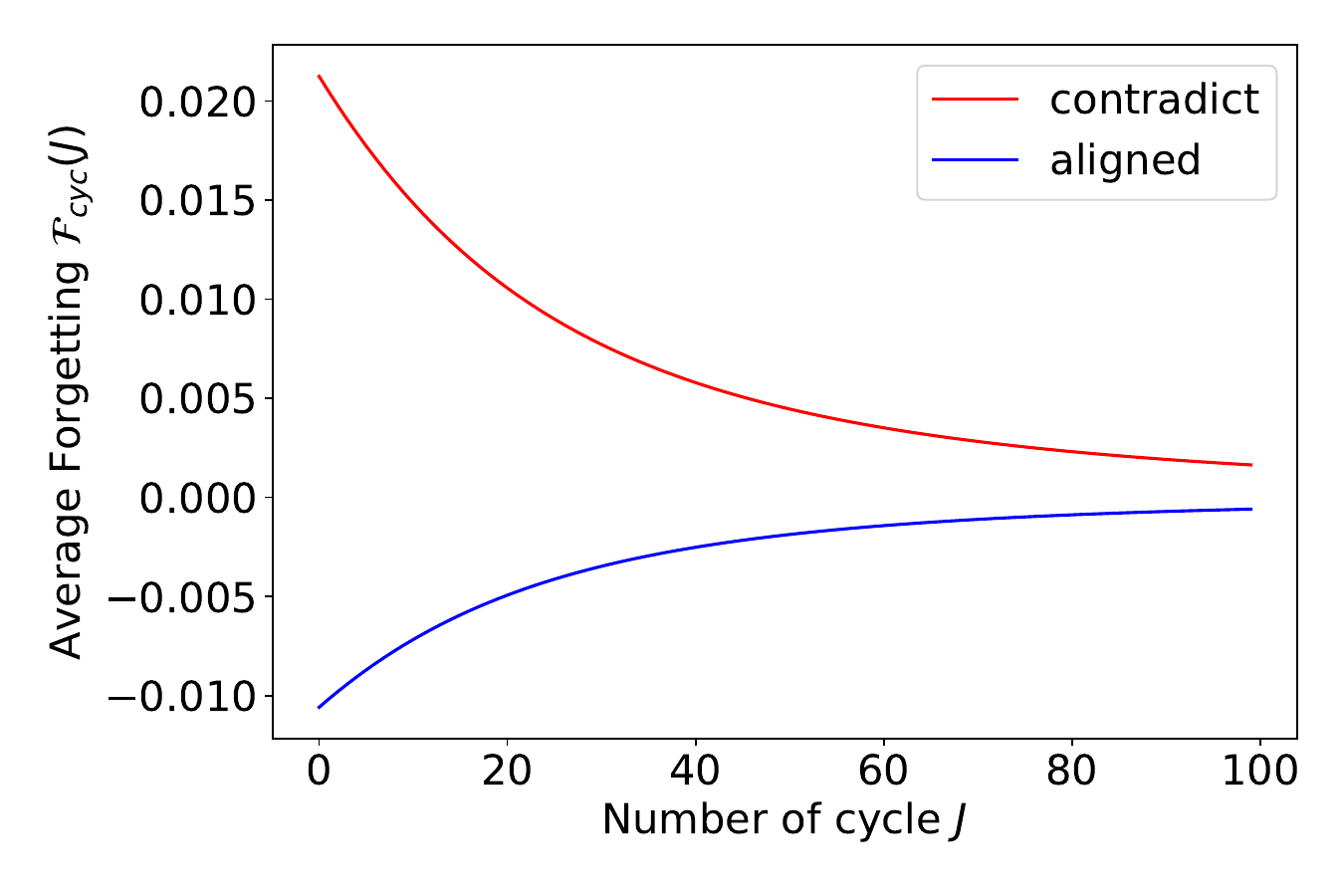}}
    \vspace{-5pt}
    \caption{We compare two continual learning scenarios with the same joint dataset $\gD = \{(1,2), (1.1,1.8), (1.2,1.9), (1,-2), (1.1,-1.8), (1.2,-1.9)\}$, where labels are all $+1$ and hence omitted.
    We mark Task 1's data as `\texttt{o}' and Task 2's data as `\texttt{+}'.
    We used $M=2$ and $K=10$.
    \cref{fig:forgetting_task1_data} displays a data composition that makes large $A^-_{1,2}$, whereas \cref{fig:forgetting_task2_data} displays a data composition that makes relatively small $A^-_{1,2}$ and large $A^+_{1,2}$.
    \cref{fig:outline_forgetting_dist(b)} is a plot of cycle-averaged forgetting ($\gF_{\rm cyc}$), evolving over cycles. For the ``contradict'' scenario (red), $\gF_{\rm cyc}$ is always positive and diminishing to 0. In contrast, for the ``aligned'' scenario (blue), $\gF_{\rm cyc}$ is always negative and rising to 0.}
    \label{fig:outline_forgetting_dist}
\end{figure}

Unlike the asymptotic result, \cref{thm:cyclic_loss_convergence} enables capturing the behavior of training for any small $J$. we can notice for any fixed $J$, the upper bound in fact \emph{grows} with $k$ and $m$. This unusual growth of the upper bound reflects the effect of forgetting that can happen within cycles. We demonstrate this mid-cycle increase of joint loss using a toy example in \cref{subsec:toy_example_loss_increase}. Moreover, despite the possible forgetting, \cref{thm:cyclic_loss_convergence} indicates that if tasks are given cyclically, then the loss bound is guaranteed to decrease at the end of every cycle.

We can now use \cref{thm:cyclic_loss_convergence} to derive bounds on cycle-averaged forgetting we defined in \cref{def:cyclc_averaged_forgetting}.
We characterize how fast the cycle-averaged forgetting $\gF_{\rm cyc}(J)$ converges to zero as the cycles replay. For this theorem, we specifically consider logistic loss, which satisfies all the assumptions about loss in the paper.
\begin{restatable} {theorem}{cyclicforgettinggeneral}
    \label{thm:cyclic_forgetting_general}
    Suppose \cref{assum:seperable} and let $\ell(u)= \ln(1+e^{-u})$ be the logistic loss, which satisfies \cref{assum:loss_shape,assum:tight_exponential_tail,assum:loss_convexity}. If the learning rate satisfies $\eta~<~\frac{\phi^2}{4K\beta\sigma^3_{\max}(M\phi+\sigma_{\max})}$, then the cycle-averaged forgetting $\gF_{\rm cyc}(J)$ for cycle $J$ satisfies the following upper and lower bounds:
    \begin{align*}
        -\eta K \cdot \{L(J)\}^2 \cdot \frac{\sum_{p\neq q} A^+_{p,q}}{M} 
        \le 
        \gF_{\rm cyc}(J)
        \le \eta K \cdot \{L(J)\}^2 \cdot \frac{\sum_{p\neq q} A^-_{p,q}}{M},
    \end{align*}
    where $L(J) = \gO\left( \frac{\ln^2 J}{J}\right)$ and
    \(
        A^+_{p,q} := \sum\limits_{\substack{(i,j) \in I_p \times I_q\\ \vx_i^\top \vx_j >0}} \vx_i^\top \vx_j,\ 
        A^-_{p,q} := \sum\limits_{\substack{(i,j) \in I_p \times I_q\\ \vx_i^\top \vx_j <0}}\abs{\vx_i^\top \vx_j}.
    \)
\end{restatable}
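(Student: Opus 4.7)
The plan is to express $\gF_{\rm cyc}(J)$ as a double telescoping sum over intra-cycle stages and GD iterates, bound each single-step contribution using convexity, decompose by the sign of data inner products to reveal the $A^{\pm}$ structure, and finally plug in the uniform non-asymptotic loss bound from \cref{thm:cyclic_loss_convergence}. Concretely, using $\vw^{(MJ+s)}_0 = \vw^{(MJ+s-1)}_K$, I would write for each $m\in[0:M-1]$:
\begin{align*}
\gL_m(\vw^{(MJ+M)}_0) - \gL_m(\vw^{(MJ+m)}_K) = \sum_{s=m+1}^{M-1}\sum_{k=0}^{K-1}\bigl[\gL_m(\vw^{(MJ+s)}_{k+1}) - \gL_m(\vw^{(MJ+s)}_k)\bigr],
\end{align*}
so that each summand isolates the change of task $m$'s loss during one GD step executed on a \emph{different} task $s\neq m$ (when $m = M-1$ the outer sum is empty, consistent with $\gF^{(MJ+M-1)}(MJ+M-1) = 0$).

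For the upper bound, I would apply convexity of $\gL_m$ (which holds under \cref{assum:loss_convexity}) together with the GD update $\vw^{(MJ+s)}_{k+1} - \vw^{(MJ+s)}_k = -\eta\nabla\gL_s(\vw^{(MJ+s)}_k)$, giving:
\begin{align*}
\gL_m(\vw^{(MJ+s)}_{k+1}) - \gL_m(\vw^{(MJ+s)}_k) \le -\eta\!\!\sum_{i\in I_m,\,j\in I_s}\!\!\ell'(y_i\vx_i^\top\vw^{(MJ+s)}_{k+1})\,\ell'(y_j\vx_j^\top\vw^{(MJ+s)}_k)\,y_iy_j\vx_i^\top\vx_j.
\end{align*}
Since the logistic derivative satisfies $\ell'(u) < 0$, the product of the two $\ell'$ factors is always positive, and the sign of each term is that of $y_iy_j\vx_i^\top\vx_j$ (equivalently $\vx_i^\top\vx_j$ after the standard relabeling $\vx_i \leftarrow y_i\vx_i$ permitted by joint separability). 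Splitting pairs $(i,j)$ accordingly and discarding the non-positive ``aligned'' contributions yields a clean upper bound involving only $A^-_{m,s}$. The lower bound follows by the mirror argument: use the other side of convexity to evaluate $\nabla\gL_m$ at $\vw^{(MJ+s)}_k$ and discard the non-negative ``contradicting'' contributions, retaining only $A^+_{m,s}$.

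To control each $|\ell'|$ factor I would use the logistic-specific inequality $|\ell'(u)| = \tfrac{1}{1+e^u} \le \ln(1+e^{-u}) = \ell(u)$, together with $\ell(y_i\vx_i^\top\vw) \le \gL(\vw)$; then \cref{thm:cyclic_loss_convergence} provides $\gL(\vw) \le L(J) = \gO(\ln^2 J/J)$ at every iterate of cycle $J$ (since $\vw^{(MJ+s)}_K = \vw^{(MJ+s+1)}_0$, both gradient-evaluation points $\vw^{(MJ+s)}_k$ and $\vw^{(MJ+s)}_{k+1}$ fall within the theorem's uniform bound after an index shift). Consequently each single-step change is bounded by $\pm\eta\{L(J)\}^2 A^{\mp}_{m,s}$. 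Summing over $k \in [0:K-1]$ and $s \in [m+1:M-1]$, averaging over $m$, and using the symmetry $A^\pm_{p,q} = A^\pm_{q,p}$ to relax the ordered upper-triangle sum $\sum_{m<s}$ into $\tfrac{1}{2}\sum_{p \neq q}$ (the factor $1/2$ being absorbed into the asymptotic constant hidden in $L(J)^2$) recovers the claimed bounds.

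The hardest step is the sign-based decomposition of the gradient inner product that cleanly yields the $A^{\pm}$ structure: it requires simultaneously tracking the sign of $\ell'$ (always negative, so the $\ell'\ell'$ factor is positive), the sign of $y_iy_j\vx_i^\top\vx_j$, and the prefactor $-\eta$, and then arguing that discarding one sign class only loosens the relevant inequality. A second delicacy is that the argument leans on the logistic-specific comparison $|\ell'| \le \ell$, which does not follow from \cref{assum:loss_shape,assum:loss_convexity} in general; this is precisely why the theorem is specialized to logistic loss. All the heavier machinery (the $\gO(\ln^2 J/J)$ loss convergence) is already packaged in \cref{thm:cyclic_loss_convergence} and enters only as a plug-in bound on $\gL$.
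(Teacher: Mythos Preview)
Your proposal is correct and follows essentially the same route as the paper: convexity to bound the loss change by a gradient inner product, a sign split on $\vx_i^\top\vx_j$ to isolate the $A^\pm$ terms, and the non-asymptotic loss bound from \cref{thm:cyclic_loss_convergence} to cap each $|\ell'|$ by $L(J)$. The only cosmetic differences are that the paper applies convexity once over the whole segment $\vw^{(MJ+m)}_K\to\vw^{(MJ+M)}_0$ (with $\nabla\gL_m$ evaluated at the endpoint) rather than telescoping step-by-step, and it reaches $|\ell'|\le L(J)$ via $|\ell'(x)|\le e^{-x}$ together with $\ell^{-1}(x)\ge -\ln x$ instead of your cleaner $|\ell'(u)|\le\ell(u)$; also, the paper treats $\sum_{p\neq q}$ as the sum over unordered pairs so that $\sum_{m<s}A^\pm_{m,s}=\sum_{p\neq q}A^\pm_{p,q}$ exactly, making your ``absorb the $1/2$'' remark unnecessary.
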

The proof and the detailed expressions are in \cref{subsec:proof_cyclic_forgetting_general}. \cref{thm:cyclic_forgetting_general} shows a nonnegative upper bound and a nonpositive lower bound on the cycle-averaged forgetting at cycle $J$. Note that both upper and lower bounds decay to zero as $J$ grows. Convergence of $\gF_{\rm cyc}(J)$ is of rate $\gO(\frac{\ln^4 J}{J^2})$, which is faster than the convergence rate $\gO(\frac{\ln^2 J}{J})$ of joint training loss shown in \cref{thm:cyclic_loss_convergence}.

The bounds in \cref{thm:cyclic_forgetting_general} reflect how positive/negative data alignment between different tasks impact forgetting.
The quantities $A^+_{p,q}$ and $A^-_{p,q}$ capture show how similar and different (respectively) data points are, for a pair of tasks $(p,q)$.
In particular, if $\sum_{p\neq q} A^-_{p,q} = 0$, it is guaranteed that (non-negative) cycle-averaged forgetting will not happen regardless of $J$. Rather, training on a task will decrease the loss for all previously learned tasks, which can be considered an extreme form of \textit{backward knowledge transfer}.
On the other hand, if $\sum_{p\neq q} A^+_{p,q} = 0$, it is guaranteed that the model will suffer forgetting at every cycle; however, even in this case, \cref{thm:cyclic_forgetting_general} implies that repeating tasks over cycles mitigates catastrophic forgetting.

Even when the joint dataset $\gD$ is the same, forgetting behavior can differ depending on how the data points are distributed over different tasks. 
This matches the former theoretical explanation of how distribution affects forgetting. For instance, \citet{lin2023theory} show that a larger distance between each task's optimal solution leads to larger forgetting.
For a straightforward interpretation, consider the following example of two tasks: their cycle-averaged forgetting for two different decompositions of $\gD$ is plotted in \cref{fig:outline_forgetting_dist}. We can observe that two tasks contradicting each other (i.e., large $A^-_{1,2}$) result in positive forgetting, whereas two tasks aligning better (i.e., large $A^+_{1,2}$) exhibit negative forgetting. Nevertheless, cycle-averaged forgetting converges to zero in both cases.
\begin{remark}
    When $\ell(u)$ is the logistic loss, a direct application of an asymptotic loss convergence rate mentioned in \cref{rmk:asymptotic_loss_rate} implies the forgetting convergence of rate $\gO(\frac{1}{J^2})$ without any additional logarithmic factors, which can be shown by almost an identical proof as \cref{thm:cyclic_forgetting_general}. However, it is guaranteed for a sufficiently large number of cycles $J$. See~\cref{subsec:proof_cyclic_loss_convergence_from_implicit_bias} for discussion about it.
\end{remark}

\vspace{-5pt}
\section{Random-Order Learning of Jointly Separable Tasks}
\vspace{-5pt}
\label{sec:random_jointly_separable}

In this section, we consider the scenario where tasks are given in a random order, while still assuming that the tasks are jointly separable. Formally, at the end of the $K$-th GD iteration of stage $t$, the next task is sampled independently and uniformly at random. Even in this case, our analysis reveals that the asymptotic results shown in \cref{subsec:loss_convergence_asymptotic} continue to hold \emph{almost surely}.

We first show that the offline training loss converges to zero almost surely, which is a random-order counterpart of \cref{thm:cyclic_loss_convergence_asymptotic}. The proof is in \cref{subsec:proof_random_loss_convergence_asymptotic}.
\begin{restatable}{theorem}{randomlossconvergenceasymptotic}\label{thm:random_loss_convergence_asymptotic}
    Let $\{\vw^{(t)}_k\}_{k \in [0:K-1], t \geq 0}$ be the sequence of GD iterates~\eqref{eq:CL_GD} from any starting point $\vw^{(0)}_0$, where tasks are given randomly. Under Assumptions~\ref{assum:seperable} and \ref{assum:loss_shape}, if the learning rate satisfies $\eta < \frac{2\phi^2}{\beta\sigma^4_{\max}}$, then the following statements hold with probability 1:
    \begin{enumerate}[itemsep=-3pt,leftmargin=15pt]
        \item Loss converges to zero:
            $\lim\limits_{t\to \infty} \gL(\vw_k^{(t)}) = 0, \forall k\in [0:K-1]$.
        \item All data points are eventually classified correctly:
            $\lim\limits_{t\to \infty} y_i \vx_i^\top {\vw_k^{(t)}} = 0, \forall k\in [0:K-1], i\in I$.
        \item Square sum of the change of weight is finite:
            $\sum_{t=0}^{\infty} \sum_{k=0}^{K-1} \|\vw_{k+1}^{(t)} - \vw_{k}^{(t)} \|^2 < \infty$.
    \end{enumerate}

\end{restatable}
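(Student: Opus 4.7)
My plan is to follow the strategy of the cyclic proof of \cref{thm:cyclic_loss_convergence_asymptotic}, but replace cycle-wise deterministic accounting with an \emph{almost supermartingale} argument (Robbins-Siegmund) that exploits the uniform random sampling of tasks. Claims 2 and 3 reduce easily to Claim 1: if one proves the stronger statement $\sum_{t}\gL(\vw^{(t)}_0)<\infty$ almost surely, then Claim 3 follows from
\begin{align*}
    \|\vw^{(t)}_{k+1}-\vw^{(t)}_k\|^2 = \eta^2\|\nabla\gL^{(t)}(\vw^{(t)}_k)\|^2 \le 2\eta^2\beta\sigma_{\max}^2\,\gL^{(t)}(\vw^{(t)}_k) \le 2\eta^2\beta\sigma_{\max}^2\,\gL(\vw^{(t)}_0),
\end{align*}
(using within-stage monotonicity of the task loss and non-negativity), and Claim 2 follows because $\gL(\vw^{(t)}_0)\to 0$ with monotonicity of $\ell$ forces $y_i\vx_i^\top\vw^{(t)}_0\to\infty$, while the summability of squared steps gives $\|\vw^{(t)}_k-\vw^{(t)}_0\|\to 0$ along each stage. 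Thus the whole theorem reduces to establishing $\sum_t\gL(\vw^{(t)}_0)<\infty$ a.s.

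The two deterministic per-step ingredients I would use are: \textbf{(i)}~\emph{Descent}. Each $\gL^{(t)}$ is $\beta\sigma_{\max}^2$-smooth and nonnegative, and the learning rate satisfies $\eta<2\phi^2/(\beta\sigma_{\max}^4)\le 2/(\beta\sigma_{\max}^2)$ (here $\phi\le\sigma_{\max}$ because $y_i\vx_i^\top\hat\vw=1$ on any support vector gives $\phi=1/\|\hat\vw\|\le\|\vx_i\|\le\sigma_{\max}$), so the standard descent lemma yields $\gL^{(t)}(\vw^{(t)}_{k+1})\le\gL^{(t)}(\vw^{(t)}_k)-c_0\|\nabla\gL^{(t)}(\vw^{(t)}_k)\|^2$ for some $c_0>0$, along with the Polyak-type bound $\|\nabla\gL_m(\vw)\|^2\le 2\beta\sigma_{\max}^2\gL_m(\vw)$. \textbf{(ii)}~\emph{Anchor from separability}. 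Since $y_i\vx_i^\top\hat\vw\ge 1$ and $\ell'<0$,
\begin{align*}
    -\langle\nabla\gL_m(\vw),\hat\vw\rangle = \sum_{i\in I_m}|\ell'(y_i\vx_i^\top\vw)|\,y_i\vx_i^\top\hat\vw \ge \sum_{i\in I_m}|\ell'(y_i\vx_i^\top\vw)|,
\end{align*}
providing a strictly positive signal in the $\hat\vw$ direction for every task.

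I would then introduce the nonnegative potential $\Phi^{(t)}:=\|\vw^{(t)}_0-a\hat\vw\|^2$ for a suitably large constant $a>0$ and expand $\Phi^{(t+1)}-\Phi^{(t)}$ across one stage of $K$ GD steps. Taking conditional expectation over uniformly random $m_t$, the linear gradient terms produce $\frac{1}{M}\nabla\gL(\vw^{(t)}_0)$, and the anchor inequality (ii) forces a strictly negative drift whose size is proportional to $\frac{a}{M}\sum_{i\in I}|\ell'(y_i\vx_i^\top\vw^{(t)}_0)|$ (and hence, up to constants, to $\gL(\vw^{(t)}_0)$). The remainder comprises (a) the squared step $\|\vw^{(t)}_K-\vw^{(t)}_0\|^2$, which Cauchy-Schwarz plus the Polyak bound turns into a multiple of $\gL(\vw^{(t)}_0)$ scaling like $K\eta^2\beta\sigma_{\max}^2$, and (b) the drift between gradients evaluated at inner iterates $\vw^{(t)}_k$ and at $\vw^{(t)}_0$, which the smoothness of $\nabla\gL_m$ absorbs into the same order. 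The sharp learning-rate condition $\eta<2\phi^2/(\beta\sigma_{\max}^4)$ is what makes the negative drift dominate this remainder. Robbins-Siegmund then gives $\Phi^{(t)}$ converges a.s.\ and $\sum_t\gL(\vw^{(t)}_0)<\infty$ a.s., from which Claims 1-3 all follow as explained above.

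The main obstacle is precisely this drift analysis. In the cyclic case one can sum per-step inequalities over a full cycle so that inconvenient cross terms telescope cleanly; in the random-order setting that cancellation must instead take place in \emph{conditional expectation} over $m_t$, which means carefully handling (a)~the inner iterates $\vw^{(t)}_k$ for $k\ge 1$, where the gradients differ from those at $\vw^{(t)}_0$ by second-order smoothness corrections that one cannot discard for free, and (b)~the mismatch between the task gradient $\nabla\gL_{m_t}$ and its conditional mean $\frac{1}{M}\nabla\gL$. Matching the stated threshold $\eta<2\phi^2/(\beta\sigma_{\max}^4)$ requires tracking these constants tightly and avoiding slack Cauchy-Schwarz estimates, so that the $\phi^2$ gain from the anchor inequality is not wasted against the $\sigma_{\max}^2$ factors from smoothness.
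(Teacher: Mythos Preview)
Your reductions of Claims 2 and 3 to almost-sure summability of $\gL(\vw^{(t)}_0)$ are fine, but the Robbins--Siegmund argument you sketch does not go through: the potential $\Phi^{(t)}=\|\vw^{(t)}_0-a\hat\vw\|^2$ is \emph{not} an almost supermartingale. Expanding one stage and taking conditional expectation (and, for clarity, approximating the inner iterates by $\vw^{(t)}_0$),
\begin{align*}
\E\bigl[\Phi^{(t+1)}-\Phi^{(t)}\,\big|\,\gF_{t}\bigr] &\approx -\tfrac{2\eta K}{M}\bigl\langle \vw^{(t)}_0 - a\hat\vw,\ \nabla\gL(\vw^{(t)}_0)\bigr\rangle + (\text{second order})\\
&= \underbrace{\tfrac{2\eta K}{M}\sum_{i\in I} |\ell'(u_i)|\,u_i}_{\text{from }\langle\vw^{(t)}_0,\ \nabla\gL\rangle}
\;-\; \tfrac{2\eta K a}{M}\sum_{i\in I}|\ell'(u_i)|\,y_i\vx_i^\top\hat\vw + (\text{second order}),
\end{align*}
where $u_i:=y_i\vx_i^\top\vw^{(t)}_0$. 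Your anchor inequality handles only the second sum. The first sum is nonnegative once the margins $u_i$ are positive and, since (by the very claim you are trying to prove) $u_i\to\infty$ while $a$ is fixed, it eventually dominates the anchor term; the drift is then \emph{positive}, so $\Phi^{(t)}$ ultimately increases and Robbins--Siegmund does not apply. This is structural, not a matter of constant tracking --- it is why distance-to-comparator arguments for separable data need a \emph{moving} comparator (as in the non-asymptotic \cref{thm:cyclic_loss_convergence}, with $\vz=\hat\vw\ln(MJ)$), which is incompatible with a fixed-potential supermartingale. A secondary issue: even the anchor term alone is proportional to $\sum_i|\ell'(u_i)|$, not to $\gL(\vw^{(t)}_0)=\sum_i\ell(u_i)$, and under \cref{assum:loss_shape} alone these are not comparable when some margins are very negative.

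The paper avoids this by using $\gL$ itself as the Lyapunov function. The key device is \cref{lem:totalloss_lowerbound_eachloss}: $\|\nabla\gL(\vw)\|^2\ge\phi^2\sum_{i\in I}[\ell'(y_i\vx_i^\top\vw)]^2$. Since trivially $\|\nabla\gL_m(\vw)\|^2\le\sigma_{\max}^2\sum_{i\in I}[\ell']^2$, this yields the ``variance'' control $\|\nabla\gL_m(\vw)\|^2\le(\sigma_{\max}^2/\phi^2)\|\nabla\gL(\vw)\|^2$. A per-step smoothness expansion then gives
\[
\E\bigl[\gL(\vw^{(t)}_{k+1})\bigr]\ \le\ \E\bigl[\gL(\vw^{(t)}_k)\bigr]-\tfrac{\eta}{M}\Bigl(1-\tfrac{\eta\beta\sigma_{\max}^4}{2\phi^2}\Bigr)\,\E\bigl\|\nabla\gL(\vw^{(t)}_k)\bigr\|^2,
\]
which is exactly where the threshold $\eta<2\phi^2/(\beta\sigma_{\max}^4)$ enters. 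Summing and applying Fubini--Tonelli gives $\sum_{t,k}\|\nabla\gL(\vw^{(t)}_k)\|^2<\infty$ almost surely, from which all three claims follow.
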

We derive the same asymptotic loss convergence result, with a minor difference that the learning rate can be chosen independently of the number of tasks $M$ and the iteration count $K$.

We now state the random-order counterpart of \cref{thm:cyclic_direction_convergence}, which implies that the sequential GD iterates converge to the joint $\ell_2$ max-margin solution almost surely. The proof is in \cref{subsec:proof_random_direction_convergence_asymptotic}.
\begin{restatable} {theorem}{randomdirectionconvergence}\label{thm:random_direction_convergence}
    Let $\{\vw^{(t)}_k\}_{k \in [0:K-1], t \geq 0}$ be the sequence of GD iterates~\eqref{eq:CL_GD} from any starting point $\vw^{(0)}_0$, where tasks are given randomly. Suppose that~\cref{assum:seperable,assum:non_degenerate_data,assum:loss_shape,assum:tight_exponential_tail} hold. Then, under the same learning rate condition as in \cref{thm:random_loss_convergence_asymptotic}, $\vw^{(t)}_k$ will behave as:
    \begin{align*}
        \vw^{(t)}_k = \ln \left (t \right ) \hat{\vw} + \vrho^{(t)}_k,
    \end{align*}
    where $\|\vrho^{(t)}_k\|$ stays bounded as $t$ grows.
\end{restatable}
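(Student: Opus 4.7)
The plan is to adapt the proof of Theorem~\ref{thm:cyclic_direction_convergence} to the random-order setting by leveraging the almost-sure versions of the loss-convergence, margin-divergence, and square-summability conclusions already established in Theorem~\ref{thm:random_loss_convergence_asymptotic}, then inserting a martingale argument at the one place where cyclic structure was previously used. The target ansatz is $\vw^{(t)}_k = \ln(t)\hat\vw + \vrho^{(t)}_k$, and the goal is to show $\|\vrho^{(t)}_k\|$ stays bounded almost surely. All work is done on the probability-$1$ event on which the conclusions of Theorem~\ref{thm:random_loss_convergence_asymptotic} hold.

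First I would reduce to the large-$t$ regime. By Theorem~\ref{thm:random_loss_convergence_asymptotic}, margins diverge, so there is an a.s.\ finite random time $T_0$ after which $y_i\vx_i^\top \vw^{(t)}_k > \bar u$ for all $t\ge T_0$, $k\in[0{:}K{-}1]$, and $i\in I$, which activates the tight-exponential-tail estimate of Assumption~\ref{assum:tight_exponential_tail}. Substituting the ansatz gives, for support vectors $i\in S$ (using $y_i\vx_i^\top\hat\vw = 1$),
\begin{equation*}
-\ell'(y_i\vx_i^\top \vw^{(t)}_k) \;=\; \frac{e^{-y_i\vx_i^\top \vrho^{(t)}_k}}{t}\bigl(1 + O(t^{-\mu_-})\bigr),
\end{equation*}
while for $i\notin S$ the second-margin inequality $y_i\vx_i^\top\hat\vw\ge\theta>1$ yields $-\ell'(y_i\vx_i^\top\vw^{(t)}_k) = O(t^{-\theta})$, which is summable in $t$ and hence contributes only a finite (random but bounded) displacement to $\vrho^{(t)}_k$ from time $T_0$ onward.

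The central step is to control the residual in the face of random task ordering. Define the conditional expectation $\overline{\nabla}^{(t)}_k := \mathbb{E}[\nabla\gL^{(t)}(\vw^{(t)}_k)\mid \vw^{(t)}_k] = \tfrac{1}{M}\nabla\gL(\vw^{(t)}_k)$ and the martingale difference $\vxi^{(t)}_k := \nabla\gL^{(t)}(\vw^{(t)}_k) - \overline{\nabla}^{(t)}_k$. On average the iterates follow an $M$-scaled full-batch GD on the jointly separable loss $\gL$, for which Soudry-style bookkeeping (decomposing into support and non-support contributions exactly as in the cyclic proof of Theorem~\ref{thm:cyclic_direction_convergence}) gives boundedness of the residual along the mean trajectory. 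The stochastic deviation is handled by observing that $\|\vxi^{(t)}_k\|$ is bounded by the total gradient norm and, for $t\ge T_0$, Step~2 implies $\|\vxi^{(t)}_k\|=O(1/t)$; this makes $\{\vxi^{(t)}_k\}$ an $\ell^2$-summable martingale difference sequence, so by the martingale convergence theorem (or Doob--Kolmogorov) the cumulative noise $\sum_{s\le t}\sum_{k}\eta\,\vxi^{(s)}_k$ converges a.s.\ to a finite random vector. Together with the a.s.\ square-summability of increments in Theorem~\ref{thm:random_loss_convergence_asymptotic}(3), this closes the recursion and yields $\sup_{t,k}\|\vrho^{(t)}_k\|<\infty$ a.s.

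The main obstacle is the feedback loop between the residual and the randomness: the martingale differences $\vxi^{(t)}_k$ depend on $\vw^{(t)}_k$, which in turn depends on the full history of random task choices, and the bound $\|\vxi^{(t)}_k\|=O(1/t)$ itself only holds once $\|\vrho^{(t)}_k\|$ is known to be $O(1)$. I would break this circularity via a stopping-time argument: define $\tau_R := \inf\{t\ge T_0 : \|\vrho^{(t)}_k\|>R\}$ for a large $R$, run the martingale/summability estimates up to $t\wedge\tau_R$ (where they hold unconditionally), and then choose $R$ large enough in terms of the a.s.\ finite random quantities ($T_0$, initial residual, non-support-vector tail sum, and martingale limit) to force $\tau_R=\infty$ a.s. This is analogous to how the cyclic proof controls the residual via a deterministic inductive bound, with the inductive step replaced by an a.s.\ concentration argument whenever the random order would otherwise obstruct the cyclic gradient bookkeeping.
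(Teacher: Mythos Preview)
Your stopping-time argument does not close. On $\{t<\tau_R\}$ the bound $\|\vxi^{(t)}_k\|=O(1/t)$ hides a constant of order $e^{cR}$: when $\|\vrho^{(t)}_k\|\approx R$, some margin $y_i\vx_i^\top\vw^{(t)}_k$ can be as small as $\ln t - cR$, so $-\ell'(y_i\vx_i^\top\vw^{(t)}_k)\approx e^{cR}/t$ for support vectors and $e^{cR}/t^\theta$ for non-support vectors. Hence both the stopped-martingale supremum and the non-support tail sum are $O(e^{cR})$, and the closing inequality ``(initial residual) $+$ (mean drift) $+$ (martingale limit) $\le R$'' cannot be satisfied by taking $R$ large. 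The self-correcting drift you allude to is real, but splitting into ``mean trajectory'' plus ``martingale noise'' discards precisely the cancellation $(e^{-x}-1)x\le 0$ that makes the large-residual regime controllable; the mean trajectory is also not a well-defined object here, since $\overline\nabla^{(t)}_k$ is evaluated at the random $\vw^{(t)}_k$, not along a deterministic path.

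The paper avoids this circularity by handling the randomness at the \emph{schedule} level rather than the gradient level. It proves (\cref{lem:random_direction_convergence_lemma1}) that the task-indicator sum $K\sum_{u<t}\tfrac{1}{u}\sum_{s\in S^{(u)}}\alpha_s\vx_s$ equals $\tfrac{K}{M}\ln(t/M)\hat\vw+\tfrac{K}{M}\check\vw+\vm_{t,k}$ with $\|\vm_{t,k}\|=o(t^{-0.5+\epsilon})$ a.s., via Kolmogorov's theorem for $\sum_u(z_s^{(u)}-\tfrac{1}{M})/u$ and Borel--Cantelli plus Hoeffding for the tail rate. This statement depends only on the i.i.d.\ task draws, not on the iterates, so there is no feedback loop. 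With $\vm_{t,k}$ in hand, the residual $\vr^{(t)}_k$ is defined exactly as in the cyclic proof, and the same case analysis (\cref{lem:cyclic_direction_convergence_lemma2}) gives $(\vr^{(t)}_{k+1}-\vr^{(t)}_k)^\top\vr^{(t)}_k\le C_1 t^{-\theta}+C_2 t^{-1-0.5\tilde\mu}$ \emph{without} any a~priori bound on $\vr$, because when $|\vx_s^\top\vr^{(t)}_k|$ is large the contribution is nonpositive. Together with $\sum_{t,k}\|\vw^{(t)}_{k+1}-\vw^{(t)}_k\|^2<\infty$ from \cref{thm:random_loss_convergence_asymptotic} and $\|\vm_{t,k+1}-\vm_{t,k}\|=O(1/t)$, this bounds $\|\vr^{(t)}_k\|^2$ directly.
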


\vspace{-5pt}
\section{Beyond Jointly Separable Tasks}
\vspace{-5pt}
\label{sec:beyond_jointly_separable}

Now we turn our attention to the CL on a strictly \emph{non-separable} set of $M$ tasks, where the tasks are presented in a \emph{cyclic} manner.
In this section, we assume that the set of all data points spans the whole space $\R^d$ without loss of generality.
This is a mild assumption because every gradient update happens in the span of data points.
In this case, if we assume the strict non-separability on the full dataset (see \cref{assum:nonsep}), the offline training loss $\gL (\vw) = \sum_{m=0}^{M-1} \gL_m (\vw)$ defined with logistic losses becomes strictly convex and coercive (i.e., $\lim_{\norm{\vw}\rightarrow \infty} \gL(\vw) = +\infty$); thus, it has a unique minimum $\vw_\star \in \R^d$.
We show that, under cyclic task ordering, the iterates of sequential GD converge to $\vw_\star$ with a rate $\gO(\tfrac{\ln^2 J}{J^2})$, which is faster than the loss convergence rate of the separable case.

The core idea of the analysis is to identify the local strong convexity of the offline training loss on a compact set on which every end-of-cycle iterates lie \citep{freund2018condition}.
To this end, we introduce a strict non-separability of the joint dataset as defined below.

\begin{assumption}[Joint Strict Non-Separability Condition \citep{freund2018condition}] \label{assum:nonsep}
    Assume that the whole collection of data points is of full rank: $\operatorname{span}(\{\vx_i : i=0, \ldots, N-1\}) = \R^d$. Additionally, assume that there exists $b>0$ defined as
    \begin{align*}
        b := \min_{\vv\in \R^d : \norm{\vv}=1} \sum\nolimits_{i=0}^{N-1} [y_i \vx_i^\top \vv]^-,
    \end{align*}
    where $[a]^- := \max\{0,-a\}$. 
\end{assumption}

Note that a large $b$ means that the joint data points are highly non-separable: for any classifier vector $\vv$, there exist some data points with the incorrect prediction of the label with a large margin.
We also remark that individual tasks are not necessarily strictly non-separable. Hence, our analysis covers the case where all individual tasks are separable but the full dataset is not separable.

We additionally assume some mild properties of the loss function $\ell(\cdot)$.
\begin{assumption} \label{assum:nonsep_loss}
    The loss function $\ell: \R \rightarrow \R_+$ is a strictly convex, $\beta$-smooth function with a positive second derivative such that $\ell(u) \ge G\cdot [u]^-$ for some $G>0$.
\end{assumption}
Note that the logistic loss $\ell(u)=\ln(1+e^{-u})$ satisfies the assumption above with $\beta=1/4$ and $G=1$. 
From the assumptions, we have that (1) the training loss of $m$-th task $\gL_m (\vw) = \sum_{i\in I_m} \ell (y_i\vx_i^\top \vw)$ is convex and $\beta_m$-smooth for $\beta_m := \beta \lambda_{\max} \left(\mX_m\mX_m^\top\right)$, where $\mX_m \in \R^{d\times |I_m|}$ is a data matrix of task $m$ consisting of columns $\{\vx_i : i\in I_m\}$; (2) due to the strict non-separability, the offline training loss $\gL(\vw) = \sum_{m=0}^{M-1} \gL_m (\vw)$ has a unique minimum $\vw_\star$.
Furthermore, we can prove that the end-of-cycle iterates of the sequential GD stay bounded in a compact set $\gW$ around $\vw_\star$. Consequently, we obtain local strong convexity of the offline training loss on $\gW$.
The proof is in \cref{subsec:proof_lemlocalstrongconvexity}.

\begin{restatable}{lemma}{lemlocalstrongconvexity}
    \label{lem:localstrongconvexity}
    Consider learning $M$ linear classification tasks cyclically. Suppose that \cref{assum:nonsep,assum:nonsep_loss} hold.  Let $B:=\sum_{m=0}^{M-1} \beta_m$ and $V_\star := \sum_{m=0}^{M-1} \frac{1}{\beta_m} \norm{\nabla \gL_m (\vw_\star)}^2$. Take a step size $\eta \le \frac{1}{2\sqrt{2}KB}$. 
    Then, there exists a compact set $\gW \subset \R^d$ containing $\vw_\star$ and every $\vw_0^{(jM)}$ ($j=0, 1, 2, \ldots$), whose radius is independent of $J$ (the number of cycles) but depends on other parameters like $b$, $G$, $B$, and $V_\star$.
    Also, the offline training loss $\gL$ is $\mu$-strongly convex on $\gW$, where 
    \begin{align}
        \mu := \left(\min\nolimits_{i \in [0:N-1],\vw \in \gW} \ell'' \left(y_i \vx_i ^\top \vw\right)\right) \cdot \lambda_{\min}\left(\mX\mX^\top\right) > 0.
    \end{align}
\end{restatable}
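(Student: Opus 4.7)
The plan is to split the proof into two nearly independent parts: (i) produce a compact set $\gW$ containing $\vw_\star$ and every $\vw_0^{(jM)}$, via a one-cycle descent estimate combined with coercivity of $\gL$; and (ii) verify $\mu$-strong convexity of $\gL$ on compact subsets of $\R^d$. Part (ii) is essentially immediate, so the work sits in part (i).

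For (i), I would first establish a one-cycle descent inequality of the form
\begin{align*}
\|\vw_0^{((j+1)M)} - \vw_\star\|^2 \le \|\vw_0^{(jM)} - \vw_\star\|^2 - c_1\, \eta K\, \bigl(\gL(\vw_0^{(jM)}) - \gL(\vw_\star)\bigr) + c_2\, \eta^2 K\, V_\star,
\end{align*}
for absolute constants $c_1, c_2 > 0$. The derivation is stage-by-stage: within stage $jM+m$, each inner GD step on the convex, $\beta_m$-smooth $\gL_m$ obeys the standard expansion of $\|\vw - \eta\nabla\gL_m(\vw) - \vw_\star\|^2$; bounding $\|\nabla\gL_m(\vw)\|^2$ through co-coercivity decomposes the $\eta^2$ error into a Bregman-divergence piece $D_{\gL_m}(\vw, \vw_\star)$ (absorbable into the descent term because $\eta\beta_m \le \eta B \le \tfrac{1}{2\sqrt 2 K}$) plus a residual in $\|\nabla\gL_m(\vw_\star)\|^2$; pairing this residual with the $1/\beta_m$ coming from co-coercivity and summing over $m$ produces $V_\star$. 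Telescoping the $K$ inner steps and using the intra-stage $\ell_2$-stability $\|\vw_k^{(t)} - \vw_0^{(jM)}\| = O(\eta K)$ to replace $\gL_m(\vw_0^{(jM+m)})$ with $\gL_m(\vw_0^{(jM)})$ yields the aggregated descent $\gL(\vw_0^{(jM)}) - \gL(\vw_\star)$ with cross-task drift absorbed using the same step-size budget.

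To pass from this recursion to boundedness, I would invoke coercivity of $\gL$: by \cref{assum:nonsep_loss} and the definition of $b$ in \cref{assum:nonsep}, $\gL(\vw) \ge G \sum_i [y_i \vx_i^\top \vw]^- \ge G b \|\vw\|$ for every $\vw \in \R^d$, so $\gL(\vw) - \gL(\vw_\star)$ grows at least linearly in $\|\vw - \vw_\star\|$ once the latter is large. Hence there is a radius $R = R(b, G, B, V_\star, \eta, K)$, independent of $J$, beyond which $\|\vw_0^{(jM)} - \vw_\star\| > R$ forces strict decrease in the recursion. A straightforward induction then gives $\|\vw_0^{(jM)} - \vw_\star\| \le \max\{\|\vw_0^{(0)} - \vw_\star\|, R\}$ uniformly in $j$, and I take $\gW$ to be the closed ball of that radius about $\vw_\star$. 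For (ii), observe that $\nabla^2 \gL(\vw) = \mX\,\mathrm{diag}\bigl(\ell''(y_i\vx_i^\top\vw)\bigr)_i\,\mX^\top$; on the compact $\gW$ all scalars $y_i\vx_i^\top\vw$ are bounded, and $\ell''$ is strictly positive and continuous by \cref{assum:nonsep_loss}, so $c := \min_{i,\,\vw\in\gW}\ell''(y_i\vx_i^\top\vw) > 0$. Combined with $\lambda_{\min}(\mX\mX^\top) > 0$ from the spanning assumption in \cref{assum:nonsep}, this yields $\nabla^2\gL(\vw) \succeq c\,\lambda_{\min}(\mX\mX^\top)\,\mI = \mu\,\mI$ on $\gW$, matching the stated formula.

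The main obstacle is the cycle-descent inequality itself. Unlike standard GD analyses where the reference point minimizes the function whose gradient is used, here each stage steps along a single $\nabla\gL_m$ and $\vw_\star$ is non-stationary for every individual task. Keeping simultaneously (a) the residual as tight as $V_\star$ (rather than loose surrogates like $\sum_m \|\nabla\gL_m(\vw_\star)\|^2$, which would inflate $\gW$ and weaken downstream convergence rates), and (b) the coefficient of $\gL(\vw_0^{(jM)}) - \gL(\vw_\star)$ strictly positive, demands careful telescoping through both the $K$ inner iterations and the $M$ outer stages while exactly absorbing cross-task drift using the step-size budget $\eta \le 1/(2\sqrt 2 KB)$.
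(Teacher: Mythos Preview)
Your proposal is essentially the same approach as the paper's: establish a one-cycle ``backward'' descent inequality with a $-c_1\eta K[\gL(\vw_0^{(jM)})-\gL(\vw_\star)]$ term and an additive $O(\eta^2)V_\star$ error (the paper's \cref{lem:backwardrecineq}, proved via the Bregman three-point identity and co-coercivity exactly as you sketch), then combine it with the coercivity bound $\gL(\vw)\ge Gb\|\vw\|$ to run an induction on $j$ (the paper's \cref{lem:iteratebound}), and finally read off strong convexity from the Hessian on the resulting compact ball. One small slip: your stated inductive bound $\max\{\|\vw_0^{(0)}-\vw_\star\|,R\}$ is not quite invariant, since from distance at most $R$ the recursion can still push you to $\sqrt{R^2+\text{noise}}$; the paper handles this by building the additive error directly into the definition of $\gW$ (splitting into a high-risk case where the distance strictly decreases and a low-risk case where the \emph{current} iterate already lies in a smaller ball, so the next one lands in $\gW$), and you should do the same.
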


We remark that the radius of the set $\gW$ largely depends on the non-separability $b$ (\cref{assum:nonsep}): loosely speaking, $\gW$ can be arbitrarily large if $b$ goes to zero since $\norm{\vw - \vw_\star} = \gO(1/b)$ for any $\vw \in \gW$. In particular, for the logistic loss $\ell$, the local strong convexity coefficient $\mu$ can get small if $b$ is small, because of the (possibly) large radius of $\gW$. With the local strong convexity, we finally have a fast non-asymptotic convergence rate of $\tilde{\gO}(J^{-2})$ towards the global minimum. The proof can be found in \cref{subsec:proof_thmnonseparable}.

\begin{restatable}{theorem}{thmnonseparable}
    \label{thm:nonseparable_paramconvergence}
    Suppose we learn $M$ tasks cyclically for $J>1$ cycles. We adopt the notation from \cref{lem:localstrongconvexity}. Then, with a step size $\eta = \gO\bigopen{\frac1K\cdot\min\bigset{\frac1B, \frac{\ln(J)}{J}}}$,
    sequential GD satisfies 
    \begin{align}
        \norm{\vw_0^{(MJ)}-\vw_\star}^2 \le {\gO}\left(\exp\left(-\frac{\mu J}{(1+2\sqrt{2})B}\right)\cdot \norm{\vw_0^{(0)}-\vw_\star}^2 + \frac{B^2 V_\star \ln^2 J}{\mu^3 J^2}\right). \label{eq:nonsep_convergence_rate}
    \end{align}
\end{restatable}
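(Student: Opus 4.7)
The plan is to reduce the analysis to a standard strongly convex optimization argument in the spirit of SGD-with-cyclic-reshuffling (think: Mishchenko et al.\ 2020, Ahn et al.\ 2020), where one cycle of sequential GD plays the role of a single approximate full-batch step on $\gL$ with effective step size $\eta K$. By \cref{lem:localstrongconvexity}, every end-of-cycle iterate $\vw_0^{(Mj)}$ lies in the compact set $\gW$ on which $\gL$ is $\mu$-strongly convex and (globally) $B$-smooth, and $\vw_\star \in \gW$. This is exactly the structure needed to drive a per-cycle contraction of $E_j := \lVert \vw_0^{(Mj)} - \vw_\star\rVert^2$.

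The core decomposition is to write
\begin{align*}
    \vw_0^{(M(j+1))} - \vw_\star
    = \bigl(\vw_0^{(Mj)} - \eta K \nabla\gL(\vw_0^{(Mj)}) - \vw_\star\bigr) - \eta R_j,
\end{align*}
with drift
\begin{align*}
    R_j := \sum_{m=0}^{M-1}\sum_{k=0}^{K-1}\bigl[\nabla\gL_m(\vw_k^{(Mj+m)}) - \nabla\gL_m(\vw_0^{(Mj)})\bigr].
\end{align*}
A standard strong-convexity-plus-smoothness calculation applied to the first parenthesis yields, for $\eta K B \le \mu/B$ (which is implied by $\eta \le 1/(2\sqrt 2 KB)$ from \cref{lem:localstrongconvexity}), a contraction $\lVert \vw_0^{(Mj)} - \eta K \nabla\gL(\vw_0^{(Mj)}) - \vw_\star\rVert^2 \le (1-\mu\eta K) E_j$. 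Then the Young-type inequality $\lVert a - b\rVert^2 \le (1+\alpha)\lVert a\rVert^2 + (1+\alpha^{-1})\lVert b\rVert^2$ with $\alpha = \Theta(\mu\eta K)$ gives
\begin{align*}
    E_{j+1} \le \bigl(1 - c_1 \mu \eta K\bigr) E_j + \frac{c_2}{\mu \eta K}\,\eta^2 \lVert R_j\rVert^2.
\end{align*}

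The next and main technical step is bounding $\lVert R_j\rVert^2$. Using $\beta_m$-smoothness, $\lVert R_j\rVert \le \sum_{m,k}\beta_m \lVert\vw_k^{(Mj+m)} - \vw_0^{(Mj)}\rVert$, so I need a within-cycle drift bound. Unrolling the $MK$ GD updates of cycle $j$ and using $\lVert \nabla\gL_m(\vw)\rVert \le \lVert\nabla\gL_m(\vw_\star)\rVert + \beta_m \lVert \vw - \vw_\star\rVert$, I can set up a discrete Grönwall-type inequality in $\lVert\vw_k^{(Mj+m)} - \vw_0^{(Mj)}\rVert$ whose solution, for $\eta K B = O(1)$, is $O(\eta K)$ times $\max_m(\lVert\nabla\gL_m(\vw_\star)\rVert + \beta_m\lVert\vw_0^{(Mj)} - \vw_\star\rVert)$. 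Summing and squaring then produces $\lVert R_j\rVert^2 \le c_3\, \eta^2 K^3 B^2\,(V_\star + B \cdot E_j)$, where the $V_\star$ piece uses $\sum_m \frac{1}{\beta_m}\lVert\nabla\gL_m(\vw_\star)\rVert^2$ exactly as defined. This is the step I expect to be most delicate: the drift bound must be tight enough that the $E_j$-dependent part of $\lVert R_j\rVert^2$ can be absorbed into the contraction $c_1\mu\eta K E_j$, which forces a step-size cap of the form $\eta = O(1/(KB))$ and explains the $B$-dependence in the exponent of the stated bound.

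Once absorbed, the recursion simplifies to $E_{j+1} \le (1 - c_4 \mu\eta K) E_j + c_5 \eta^3 K^3 B^2 V_\star / (\mu\eta K)$, i.e., $E_{j+1} \le (1 - c_4\mu\eta K)E_j + c_5 \eta^2 K^2 B^2 V_\star/\mu$. Unrolling from $j=0$ to $J$ gives
\begin{align*}
    E_J \le \exp\bigl(-c_4 \mu \eta K J\bigr) E_0 + \frac{c_5 \eta^2 K^2 B^2 V_\star}{c_4\mu^2}.
\end{align*}
Choosing $\eta K = \min\bigl\{\tfrac{1}{2\sqrt 2 B},\, \tfrac{2\ln J}{c_4 \mu J}\bigr\}$, which matches the hypothesis $\eta = \gO\bigl(\tfrac 1 K \min\{1/B,\,\ln J/J\}\bigr)$, I would handle the two regimes separately: when $\tfrac 1 B \le \tfrac{\ln J}{J}$ the first term contributes $\exp(-\Theta(\mu J/B)) E_0$; when $\tfrac{\ln J}{J} \le \tfrac 1 B$ the first term is at most $J^{-2}$ and is dominated by the noise term $\Theta(B^2 V_\star \ln^2 J/(\mu^3 J^2))$. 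Combining the two regimes yields exactly the bound in \eqref{eq:nonsep_convergence_rate}, up to absorbing constants into the $\gO$-notation.
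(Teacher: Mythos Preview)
Your strategy (view one cycle as an inexact full-gradient step, split via Young's inequality, bound the drift $R_j$ by Gr\"onwall) is a legitimate alternative route and is indeed the standard ``SGD with reshuffling'' template. It differs from the paper's proof, which instead establishes a \emph{forward} recurrent inequality via a Bregman three-point identity:
\[
\norm{\vw_0^{((j+1)M)}-\vw_\star}^2 \le \norm{\vw_0^{(jM)}-\vw_\star}^2 - 2\eta K\bigl[\gL(\vw_0^{((j+1)M)})-\gL(\vw_\star)\bigr] + 2(\eta K)^3 B^2 V_\star,
\]
and then applies strong convexity at the \emph{next} iterate, yielding $(1+\mu\eta K)E_{j+1}\le E_j + 2(\eta K)^3 B^2 V_\star$ directly, with no Young step. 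Unrolling this gives the stated $B^2 V_\star \ln^2 J/(\mu^3 J^2)$ term. The main advantage of the paper's route is that it avoids the extra $1/(\mu\eta K)$ factor your Young inequality introduces; consequently your approach, carried out correctly, produces $\mu^{-4}$ rather than $\mu^{-3}$ in the variance term, and the exponent constant in the first term is also affected.

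There are also two concrete slips in your sketch. First, your unrolling is off by a factor of $\eta K$: from $E_{j+1}\le(1-c_4\mu\eta K)E_j + N$ with $N=c_5(\eta K)^2 B^2 V_\star/\mu$, the geometric sum gives $N/(c_4\mu\eta K)=c_5\,\eta K\, B^2 V_\star/(c_4\mu^2)$, not $c_5(\eta K)^2 B^2 V_\star/(c_4\mu^2)$ as you wrote; with your stated noise this would only yield an $O(\ln J/J)$ rate. (The $O(\ln^2 J/J^2)$ rate is recoverable because the noise in the recursion is actually $O((\eta K)^3/\mu)$, not $O((\eta K)^2/\mu)$, once the $\eta^2$ from $\lVert\eta R_j\rVert^2$ is tracked correctly.) Second, the absorption of the $E_j$-dependent part of $\lVert R_j\rVert^2$ into the contraction requires roughly $(\eta K)^2 B^3 \lesssim \mu^2$, which is strictly stronger than $\eta K = O(1/B)$; so the step-size window you claim does not suffice for that step as written. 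You can sidestep absorption by invoking the uniform bound on $E_j$ from \cref{lem:localstrongconvexity}, at the cost of an extra constant depending on the radius of $\gW$.
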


\noindent\textbf{Remark on the Loss Convergence Rate.}~~%
Since the $\gL(\vw)$ is $B$-smooth, it satisfies that
\begin{align}
    \gL(\vw) - \gL(\vw_\star) \le \inner{\nabla \mathcal{L}(\vw_\star), \vw-\vw_\star} + \frac{B}{2}\|\vw-\vw_\star\|^2 = \frac{B}{2}\|\vw-\vw_\star\|^2.
\end{align}
Thus, our \cref{thm:nonseparable_paramconvergence} naturally implies the loss convergence at the same rate (in terms of $J$).

\noindent\textbf{Experiments on a Real-World Dataset.}~~%
For those interested, we also provide an experiment on a real-world dataset CIFAR-10~\citep{krizhevsky2009learning}, which is not guaranteed to be linearly separable: see~\cref{subsec:cifar10_lin}.
\vspace{-5pt}
\section{Conclusion}
\vspace{-5pt}
\label{sec:conclusion}
We considered continual linear classification by running gradient descent for a fixed number of iterations per task.
When there exist solutions that can solve every task, we found that even without any regularization or CL methods, the classifier eventually converges to the joint max-margin direction.
This implicit bias was shown to exist in both cyclic/random task ordering.
We further presented a non-asymptotic analysis on cycle-averaged forgetting with respect to positive/negative alignments of tasks and the number of cycles.
Lastly, we showed that if no linear classifier solves all tasks simultaneously, the model converges to the unique minimum of the offline training loss.

\noindent\textbf{Future Works.}~~We believe the convergence on continual classification can be extended to other model structures, bridging the gap between empirical findings and theoretical understanding of the impact of task repetition.
Also, our results are restricted to the ``small learning rate'' regime, and do not cover larger learning rates or even the ``edge of stability'' regime~\citep{wu2024implicit}; relaxing this restriction is left for future work.

\subsubsection*{Acknowledgments}
This work was partly supported by the Institute for Information \& communications Technology Planning \& Evaluation (IITP) grants funded by the Korean government (MSIT) 
(%
No.\ RS-2019-II190075, Artificial Intelligence Graduate School Program (KAIST); 
No.\ RS-2022-II220184, Development and Study of AI Technologies to Inexpensively Conform to Evolving Policy on Ethics; 
No.\ RS-2019-II191906, Artificial Intelligence Graduate School Program (POSTECH)
).
This research was supported in part by the NAVER-Intel Co-Lab. The work was conducted by KAIST and reviewed by both NAVER and Intel. 

\bibliography{iclr2025_conference}
\bibliographystyle{iclr2025_conference}
\newpage
{\sc \tableofcontents}
\newpage
\appendix
\allowdisplaybreaks
\section{Other Related Works}
\label{sec:related_works}

\paragraph{Theoretical Results on Continual Learning.}
Continual regression has been studied under various regimes and algorithms. The work by \citet{lin2023theory} analyzes empirical and population risks by drawing Gaussian samples for each task.
It also investigates the impact of overparameterization and task similarity on forgetting. \citet{bennani2020generalisation, doan2021theoretical, karakida2022learning} study forgetting in NTK regime. Specifically, \citet{bennani2020generalisation, doan2021theoretical} analyze forgetting of orthogonal GD (OGD, \citealp{farajtabar2020orthogonal}), while \citet{karakida2022learning} studies continual transfer learning. Other settings, such as teacher-student setup \citep{lee2021continual}, and feature extraction \citep{peng2022continual} have been considered in task-incremental learning.

On the other hand, most of the works on continual classification focus on analyzing the performance of the model without assuming its architecture. Instead, they focus on the specific types of continual learning \citep{kim2022theoretical, shi2023unified}. In class-incremental learning (CIL), \citet{kim2022theoretical} prove that attaining strong within-task prediction (WP) and strong task-id prediction (TP) are necessary and sufficient for achieving strong CIL. Furthermore, they relate TP to the out-of-distribution detection problem. On the other hand, \citet{shi2023unified} consider the domain-incremental learning setup. They especially suggest a framework with a memory buffer that unifies earlier methods. The work by \citet{raghavan2021formalizing} studies the generalization-forgetting trade-off by viewing it as a two-player sequential game. However, to the best of our knowledge, no other work---apart from \citet{evron2023continual}---considers specific model architectures and analyzes the behavior of specific algorithms as in continual regression. 

\paragraph{Implicit Bias of GD for Linear Classification.}
As far as we know, the work by \citet{soudry2018implicit} is the first to prove the implicit bias of GD towards the max-margin direction for a separable linear classification problem.
Another work by \citet{ji2018risk} also considers the data that are not necessarily (strictly) separable, yet the GD iterate may diverge to infinity, although the convergence rate is slower due to the absence of the non-degeneracy condition. 
The work by \citet{nacson2019stochasticgradientdescentseparable} shows a similar implicit bias result for the \emph{stochastic} GD. 
\citet{ji2021characterizing} show a faster convergence rate under decreasing learning rate via a primal-dual analysis. 
Very recently, it has been proved by \citet{wu2024implicit} that GD on logistic loss eventually converges to the max-margin direction even when the learning rate is large, which contrasts with the existing findings requiring small enough learning rates.

\newpage
\section{\texorpdfstring{Brief Overview of \citet{evron2023continual} and Comparisons}{Brief Overview of Evron et al. (2023) and Comparisons}}
\label{sec:Difference_with_evron_SMM}

To highlight how our sequential GD algorithm differs from \citet{evron2023continual}, we briefly summarize the Sequential Max-Margin (SMM) framework considered in the existing paper and its theoretical results.

\citet{evron2023continual} consider minimizing the regularized training loss of each task until convergence, where the loss function is chosen to be the exponential loss $\ell(u) = \exp(-u)$.
Let $\left \{\vw^{(t)}_{\lambda} \right \}_t$ be the iterates trained by regularized continual learning with regularization coefficient $\lambda$. The algorithm can be written as follows: 
\begin{align}
\label{eq:evron1}
\begin{aligned}
    \vw^{(t+1)}_{\lambda} = \arg\min_{\vw\in \sR^d} \sum_{i\in I^{(t)}} \exp \left(-y_i \vx_i^\top \vw\right) + \frac{\lambda}{2} \norm{\vw - \vw^{(t)}_{\lambda}}^2.
\end{aligned}
\end{align}
Also, let $\vw^{(t)}_{{\rm SMM}}$ be the weight trained by the Sequential Max-Margin algorithm. The update rule is
\begin{align}
\label{eq:evron2}
\begin{aligned}
    \vw^{(t+1)}_{{\rm SMM}} &= P^{(t)}(\vw^{(t)}_{{\rm SMM}}):= \arg\min_{\vw\in \sR^d} \norm{\vw - \vw^{(t)}_{{\rm SMM}}}^2 ~~ \text{subject to }~ y_i \vx_i^\top \vw \ge 1, \forall i\in I^{(t)},
\end{aligned}
\end{align}
where the operator $P^{(t)}$ is the orthogonal projection onto a convex polyhedral set
\begin{equation}
    \left\{ \vw \in \R^d : y_i \vx_i^\top \vw \ge 1, \forall i\in I^{(t)} \right\}
\end{equation}
defined by the margin conditions on the data points in $I^{(t)}$.
That is, $\vw^{(t)}_{{\rm SMM}}$ is the same as the sequential projection onto such convex sets.
\citet{evron2023continual} showed the relation of $\vw^{(t)}_{\lambda}$ and $\vw^{(t)}_{{\rm SMM}}$, when the regularization coefficient $\lambda \to 0$:
\begin{theorem}
[Theorem~3.1 of \citet{evron2023continual}]
    \label{thm:RCL_converges_to_SMM}
    For almost all jointly separable dataset, in the limit of $\lambda \to 0$, it holds that $\vw^{(t)}_{\lambda} \to \vw^{(t)}_{{\rm SMM}}$ with a residual of $O(t \log \log \left(\frac{1}{\lambda}\right))$. Hence, at any $t = o\left(\tfrac{\log \left(1/\lambda\right)}{\log \log \left(1/\lambda\right)}\right)$, we get
    \begin{align*}
        \lim_{\lambda\to0} \frac{\vw^{(t)}_{\lambda}}{\lVert\vw^{(t)}_{\lambda}\rVert} = \frac{\vw^{(t)}_{{\rm SMM}}}{\lVert\vw^{(t)}_{{\rm SMM}}\rVert}.
    \end{align*}
\end{theorem}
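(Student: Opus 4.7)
The plan is to prove the theorem by induction on the stage count $t$, leveraging the single-task implicit-bias analysis of \citet{soudry2018implicit} for $\ell_2$-regularized exponential-loss minimization. The overall strategy is to show that each single stage of regularized continual learning \eqref{eq:evron1} behaves, up to a low-order residual, like a scaled projection step of SMM \eqref{eq:evron2}. The natural correspondence is not that $\vw^{(t)}_\lambda$ equals $\vw^{(t)}_{\rm SMM}$ in magnitude (indeed $\|\vw^{(t)}_\lambda\|\to\infty$ as $\lambda\to 0$), but rather that the former is approximately $\log(1/\lambda)$ times the latter up to an $O(t\log\log(1/\lambda))$ drift.

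The main technical step is an auxiliary single-stage comparison lemma. Given a warm start $\vw_0$ and an active task with index set $I$, consider the two updates
\begin{equation*}
    \vw^+_\lambda := \arg\min_\vw \sum_{i \in I} e^{-y_i\vx_i^\top \vw} + \tfrac{\lambda}{2}\|\vw - \vw_0\|^2, \qquad \vw^+_{\rm SMM} := P_I(\vw_0).
\end{equation*}
The lemma, proved by writing the KKT stationarity condition for $\vw^+_\lambda$ and expanding the exponential terms via the tight-tail argument of \citet{soudry2018implicit}, would establish that as $\lambda\to 0$,
\begin{equation*}
    \vw^+_\lambda - \vw_0 \;=\; \log(1/\lambda)\cdot(\vw^+_{\rm SMM}-\vw_0) + O(\log\log(1/\lambda)).
\end{equation*}
The role of the ``almost all jointly separable datasets'' clause is to guarantee a non-degeneracy of the support (no more than $d$ active constraints per projection), so that the set of points whose exponential weight dominates in the regularized problem coincides with the support set of the SMM projection.

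With this lemma, the induction is straightforward. The base case is $\vw^{(0)}_\lambda=\vw^{(0)}_{\rm SMM}$ at initialization. For the inductive step, assume $\vw^{(t)}_\lambda = \log(1/\lambda)\,\vw^{(t)}_{\rm SMM}+\vrho^{(t)}_\lambda$ with $\|\vrho^{(t)}_\lambda\|=O(t\log\log(1/\lambda))$. Plugging this warm start into the lemma gives
\begin{equation*}
    \vw^{(t+1)}_\lambda = \vw^{(t)}_\lambda + \log(1/\lambda)\bigopen{\vw^{(t+1)}_{\rm SMM}-\vw^{(t)}_{\rm SMM}} + O(\log\log(1/\lambda)),
\end{equation*}
which rearranges to the inductive hypothesis at stage $t+1$ with the residual accumulating by an additive $O(\log\log(1/\lambda))$. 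The directional convergence claim then follows because for $t=o\bigopen{\log(1/\lambda)/\log\log(1/\lambda)}$, the residual is $o(\log(1/\lambda))$ and is dominated by $\log(1/\lambda)\|\vw^{(t)}_{\rm SMM}\|$, so normalizing $\vw^{(t)}_\lambda$ by its Euclidean norm yields the same unit vector as $\vw^{(t)}_{\rm SMM}/\|\vw^{(t)}_{\rm SMM}\|$ in the limit.

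The main obstacle lies in the single-stage lemma. Two subtleties require care: first, one must argue that the active constraints of the regularized subproblem (i.e., data points whose re-weighted exponential loss is of order $\lambda$) match exactly the support of the SMM projection from the shifted start $\vw_0$; this is where the genericity ``almost all datasets'' is used to rule out degenerate configurations. Second, the inductive residual $\vrho^{(t)}_\lambda$ perturbs the exponents $y_i\vx_i^\top\vw^{(t)}_\lambda$ by an $O(t\log\log(1/\lambda))$ amount, and one must verify that this perturbation does not alter the identification of the active set or the leading-order direction of the update---this works as long as $t\log\log(1/\lambda)\ll\log(1/\lambda)$, which is precisely the range on $t$ stipulated in the theorem.
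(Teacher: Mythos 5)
This theorem is not proved in the paper you were given. It is a verbatim restatement of Theorem~3.1 of \citet{evron2023continual}, included in Appendix~B purely as background to explain the Sequential Max-Margin framework and to motivate the differences with the present paper's sequential-GD setting. There is therefore no ``paper's own proof'' here for your proposal to be compared against; the present paper never attempts a proof and simply cites the result.

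That said, a couple of comments on the sketch itself. The overall shape---a single-stage comparison lemma of the form $\vw^+_\lambda - \vw_0 = \log(1/\lambda)(\vw^+_{\rm SMM}-\vw_0) + O(\log\log(1/\lambda))$ composed by induction over stages---is the natural way to approach such a statement. But two things you write do not quite cohere. First, your base case ``$\vw^{(0)}_\lambda=\vw^{(0)}_{\rm SMM}$'' is inconsistent with your inductive hypothesis $\vw^{(t)}_\lambda = \log(1/\lambda)\,\vw^{(t)}_{\rm SMM}+\vrho^{(t)}_\lambda$ unless $\vw^{(0)}=\vzero$ or you allow $\vrho^{(0)}_\lambda$ to be of size $\Theta(\log(1/\lambda))$, which would break the $O(t\log\log(1/\lambda))$ bound at $t=1$. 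Second, the inductive step as written silently replaces ``the SMM projection of $\vw^{(t)}_\lambda$'' by ``the SMM projection of $\vw^{(t)}_{\rm SMM}$'' when you rearrange, but these projections are taken from warm starts differing by an $O(t\log\log(1/\lambda))$ perturbation \emph{after rescaling by $\log(1/\lambda)$}; one needs a Lipschitz stability argument for the projection map $P_I$ (plus the genericity condition to keep the active set fixed under small perturbations) to justify swapping them. Those are exactly the places where the full argument in \citet{evron2023continual} does the heavy lifting, so if you want to actually carry this through you should consult their proof rather than treat the single-stage lemma as self-evident.
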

Based on this equivalence in terms of parameter \emph{direction}, \citet{evron2023continual} expect that the behavior of $\vw^{(t)}_{\lambda}$ can be analyzed through the lens of $\vw^{(t)}_{{\rm SMM}}$ as long as $\lambda$ is close to 0, since \cref{thm:RCL_converges_to_SMM} holds for all $t = o\left(\tfrac{\log \left(1/\lambda\right)}{\log \log \left(1/\lambda\right)}\right)$.

Given this background, we now highlight some differences between \citet{evron2023continual} and our analysis. First of all, as seen in \eqref{eq:evron1}, \citet{evron2023continual} study regularized exponential loss trained until convergence, whereas we study unregularized logistic loss trained for a fixed number of iterations. 
Training the weakly regularized loss until convergence, in conjunction with the limit $\lambda \to 0$, sends each $\vw^{(t)}_{\lambda}$ to infinity. Hence, each stage requires a growing number of iterations, and the grounds for the equivalence between \eqref{eq:evron1} and \eqref{eq:evron2} become weaker, since the solutions become vastly different in terms of magnitude.

Second, thanks to the connection between weakly-regularized continual learning and SMM, \citet{evron2023continual} could obtain the exact trajectory of every stage via the projection method.
On the other hand, in our sequential GD setting, it is very difficult to keep track of the exact location of the iterate after a task is trained, since the iterates are updated multiple times, but training stops before convergence. This makes it challenging to analyze implicit bias and forgetting by tracking the exact trajectory stage by stage. To overcome this challenge, we use different proof techniques from \citet{evron2023continual}. Rather than pinpointing the exact position of the iterate after each stage, we focus on the direction in which the sequential GD eventually converges.

On top of that, importantly, our analysis of sequential GD reveals that training on unregularized loss using a fixed number of GD iterations results in the joint/offline max-margin solution. In contrast, although the convergence to \emph{some} offline solutions is already shown for SMM \citep{evron2023continual}, the converged offline solution can be different from the offline \emph{max-margin} solution. In fact, in the next section (\cref{subsec:experiment_detail_figintro}), we demonstrate by a toy example that SMM can indeed converge to a point other than the joint max-margin solution.

\newpage
\section{Experiment Details \& Omitted Experimental Results}
\label{sec:experiment}
\subsection{\texorpdfstring{Experiment Details of \cref{fig:intro}}{Experiment Details of Figure 1}}
\label{subsec:experiment_detail_figintro}

In this section, we present a simple toy example that demonstrates interesting facts about max-margin solutions in continual linear classification:
\begin{itemize}[leftmargin=20pt]
    \item The joint max-margin direction of the joint dataset can be quite different from the max-margin solutions of individual tasks. Specifically, the joint solution may \emph{not} be on the subspace spanned by the individual solutions.
    \item The limit of Sequential Max-Margin (SMM) iterations can be different from the joint max-margin solution, whereas the limit direction of sequential GD does align with it.
\end{itemize}
    
We consider the case of $M = 2$ tasks, where the input points come from $\R^3$. Without loss of generality, we assume that all the labels are $+1$, and hence omit them. We let $\{(1,1,0) , (1,-2,1)\}$ be the dataset of task 1, and $\{(1,0,1), (1,1,-2)\}$ be the data of task 2. One can verify that:
\begin{itemize}[leftmargin=15pt]
    \item Their joint max-margin direction is $(1,0,0)$.
    \item The max-margin direction for tasks 1 and 2 are $(\frac{10}{11},\frac{1}{11},\frac{3}{11})$ and $(\frac{10}{11},\frac{3}{11},\frac{1}{11})$, respectively.
\end{itemize}
Therefore, the joint max-margin solution does not belong to the span of individual max-margin solutions.

We ran numerical experiments running the SMM iterations, which are done by solving the constrained minimization problems using \texttt{fmincon} in MATLAB Optimization Toolbox.
The code is provided in our supplementary material.
We find that SMM converges to $(\frac{12}{11},\frac{1}{11},\frac{1}{11})$; the trajectory for 10 cycles can be seen in \cref{fig:SMM}.

\begin{figure}[htb!]
    \centering
    \includegraphics[width=0.5\linewidth]{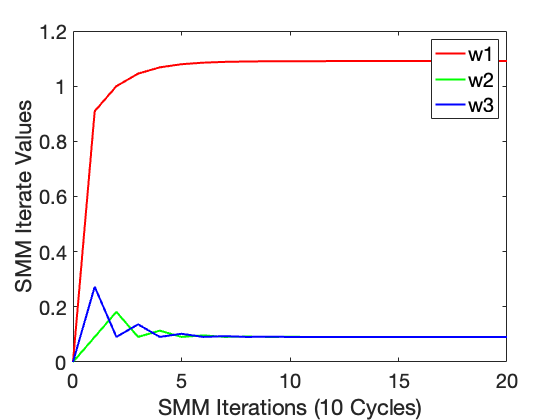}
    \caption{We run SMM iterations on the toy example by solving the projection problems using an optimization solver.}
    \label{fig:SMM}
\end{figure}

\subsection{\texorpdfstring{Experiment Details of \cref{fig:thm3132} \& More Results}{Experiment Details of Figure 2 \& More Results}}
\label{subsec:experiment_detail_figthm3132}

Here we present the experimental details of \cref{fig:thm3132}. 
We also provide omitted results related to it. 
Then, more importantly, we extend our experimental setups beyond the cyclic task ordering and the fixed total offline dataset.

\subsubsection{Experimental Detail}
\label{subsubsec:experiment_detail_figthm3132_detail}

\noindent\textbf{Data Generation.}~~We carefully design three 2D synthetic datasets.
Each dataset (of size 100) is randomly sampled from a bounded support. 
Below, we describe the data distribution from which we draw samples.
Note that the label $y\in \{\pm 1\}$ is uniformly randomly sampled before sampling the 2D input points.

\begin{itemize}[leftmargin=15pt]
    \item Task 0, $\vx|y=+1$: Uniform distribution on a round disk (i.e., inside of a circle) with radius $0.9$ and centered at $(0.6, 4.5)$.
    \item Task 0, $\vx|y=-1$: Uniform distribution on a rectangle $[0,1.5]\times[-3.9,-2.7]$.
    \item Task 1, $\vx|y=+1$: Uniform distribution on a round disk with radius $0.75$ and centered at $(5.1, 0)$.
    \item Task 1, $\vx|y=-1$: Uniform distribution on a rectangle $[-4.2,-2.1]\times[-0.9,0.9]$.
    \item Task 2, $\vx|y=+1$: Uniform distribution on a rectangle $[0.6,3]\times[0.6,2.7]$.
    \item Task 2, $\vx|y=-1$: Uniform distribution on a disk with radius $1.2$ and centered at $(-3, -2.4)$.
\end{itemize}

Among all 300 data points, we randomly choose 3 points (one for each task) and replace them by $(\vx=(1.5, -2.7), y=-1)$ (for task 0), $(\vx=(-2.1, 0.9), y=-1)$ (for task 1), and $(\vx=(0.6, 0.6), y=+1)$ (for task 2), which are the points included in the support of the data distribution(s).
These three points play the role of supporting vectors so that the joint max-margin direction becomes $\frac{\hat{\vw}}{\norm{\hat{\vw}}}=(\frac{1}{\sqrt{2}},\frac{1}{\sqrt{2}})$, where the size of maximum margin (\cref{eq:maximum_margin}) is $\phi = 0.6\sqrt{2}>0$ (thus, jointly separable).

\noindent\textbf{Optimization.}~~We run sequential GD for 300 stages in total. 
Since there are three tasks, for the cyclic ordering case, it is equivalent to $J=100$. 
The step size we used is $\eta=0.1$. 
Also, we allow and conduct $K=1,\!000$ updates per stage. 
For the joint training case, we run full-batch GD on the union of all datasets for $MJK=300,\!000$ steps.

\subsubsection{\texorpdfstring{Omitted Loss Convergence Result in \cref{fig:thm3132}}{Omitted Loss Convergence Result in Figure 2}}

Although we only displayed the directional convergence in the main text, we also observed the loss convergence to zero, which we proved in \cref{thm:cyclic_loss_convergence_asymptotic,thm:cyclic_loss_convergence}: see~\cref{fig:thm3132_supplementary}.
Note that we depict the loss values for a jointly trained model (with full-batch GD) every $K=1,\!000$ gradient updates, for a fair comparison with a continually learned model (with sequential GD).
It is omitted due to space limitations and being relatively more obvious than directional convergence.

\begin{figure}[htb!]
    \centering
    \subfigure[Losses of continually learned model]{\includegraphics[width=0.45\linewidth]{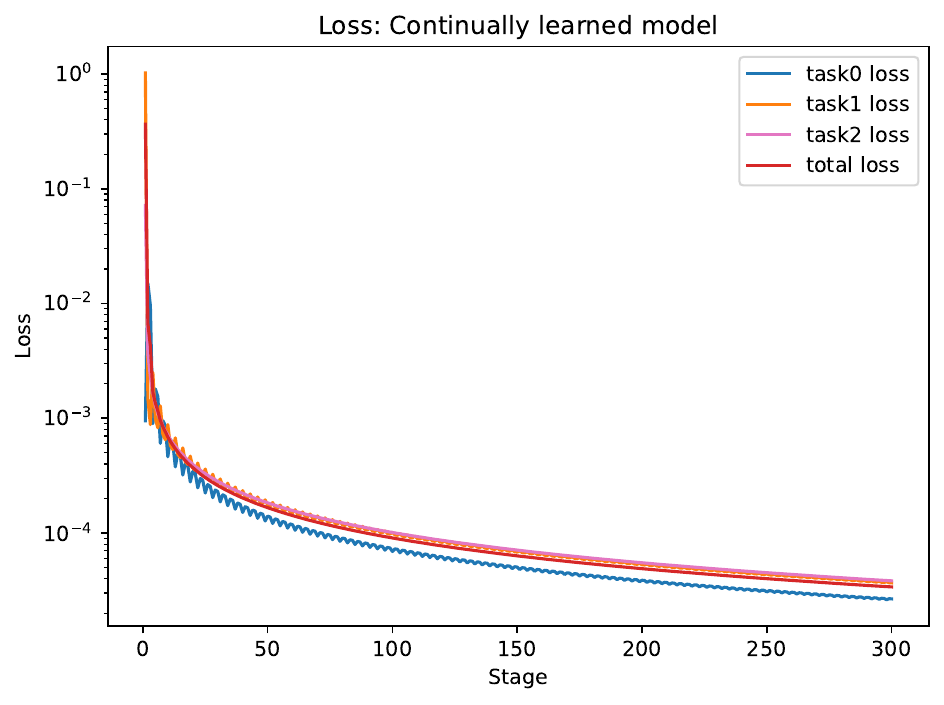}}
    ~~
    \subfigure[Losses of jointly trained model]{\includegraphics[width=0.45\linewidth]{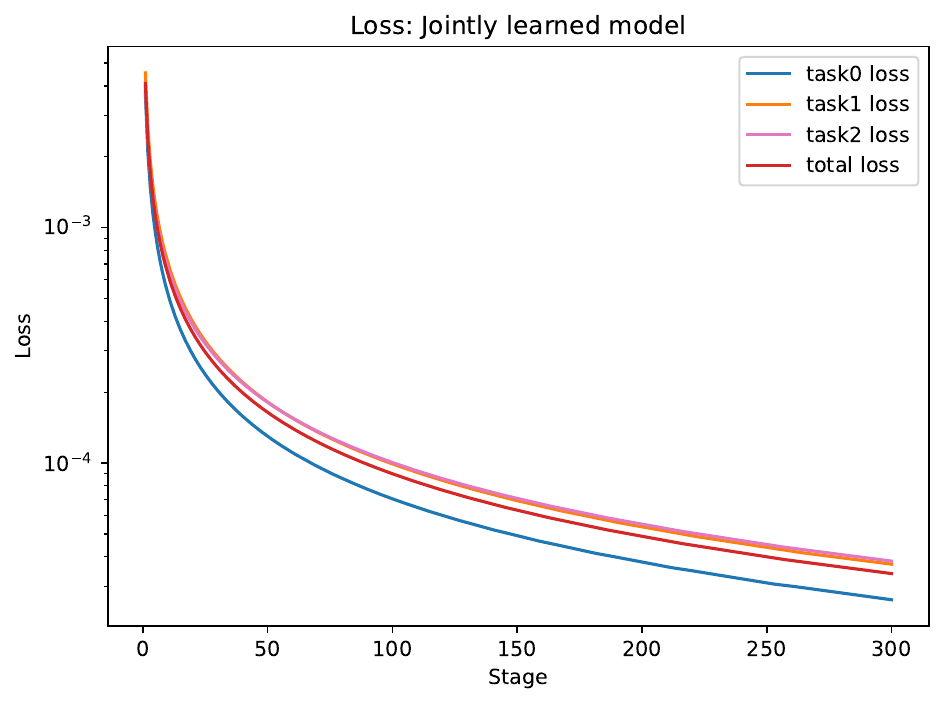}}
    \vspace{-5pt}
    \caption{Loss convergence results for cyclic task ordering. The joint training loss is divided by the number of tasks in order to match the scale.}
    \vspace{-5pt}
    \label{fig:thm3132_supplementary}
\end{figure}

\subsubsection{Random Task Ordering}

In \cref{sec:random_jointly_separable}, we theoretically showed that loss convergence, as well as the implicit bias result, holds almost surely under the random task ordering. 
Indeed, we observe a similar tendency of directional convergence and loss decrease even under the random task ordering. The result is shown in~\cref{fig:thm41}.

\begin{figure}[htb!]
    \centering
    \subfigure[Data points and trajectories]{\includegraphics[width=0.45\linewidth]{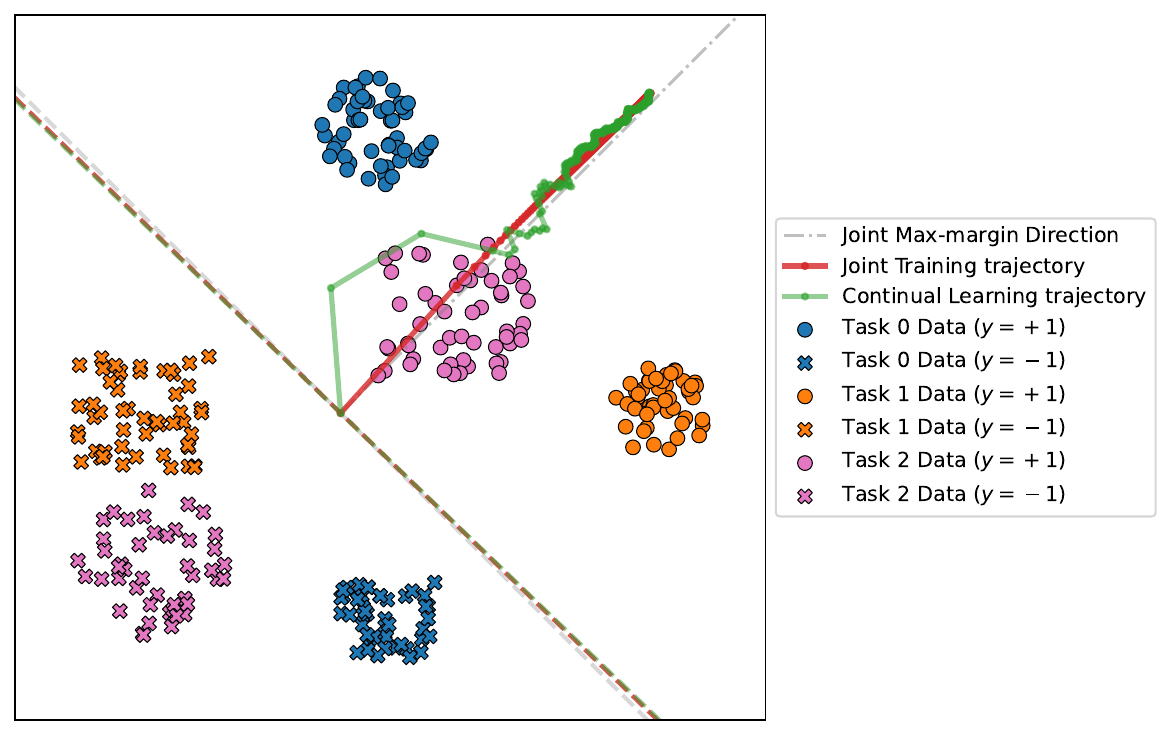}}
    ~~
    \subfigure[Sine angles (the smaller the more aligned)]{\includegraphics[width=0.4\linewidth]{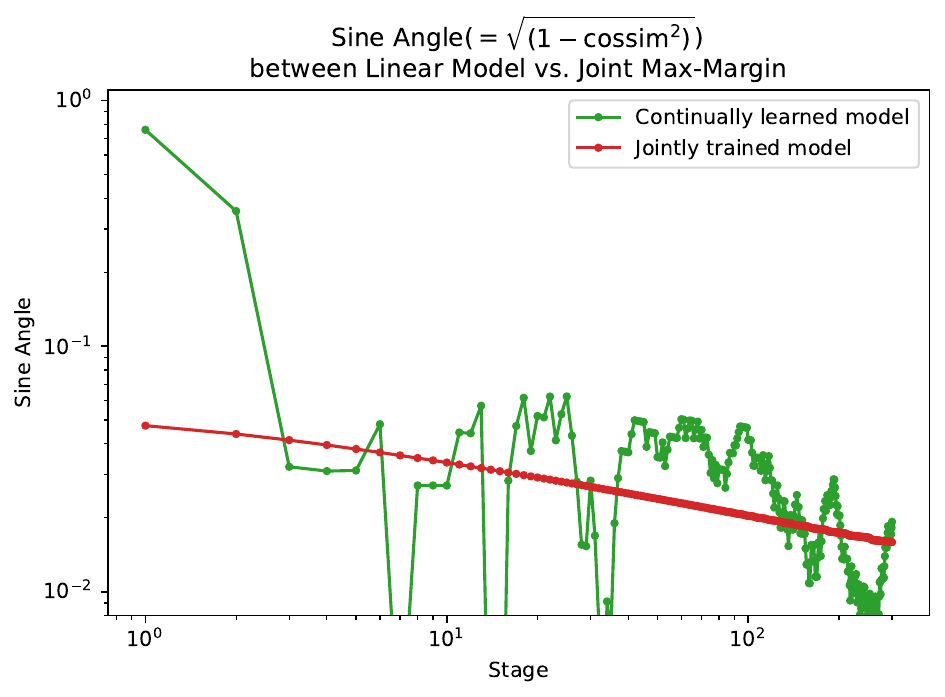}}
    \vspace{-5pt}
    \subfigure[Losses of continually learned model]{\includegraphics[width=0.4\linewidth]{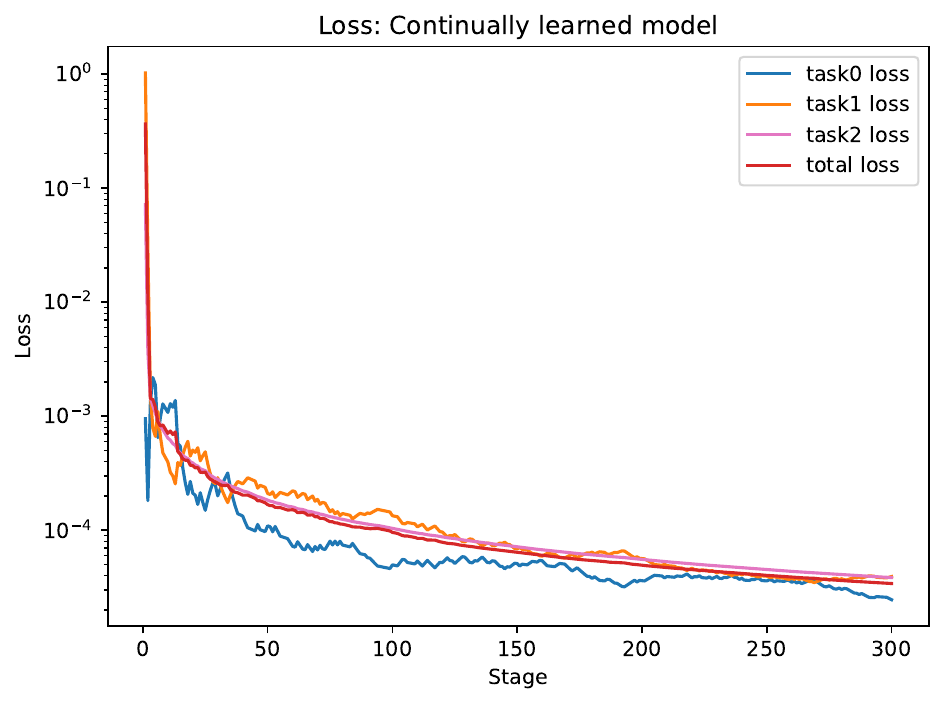}}
    ~~
    \subfigure[Losses of jointly trained model]{\includegraphics[width=0.4\linewidth]{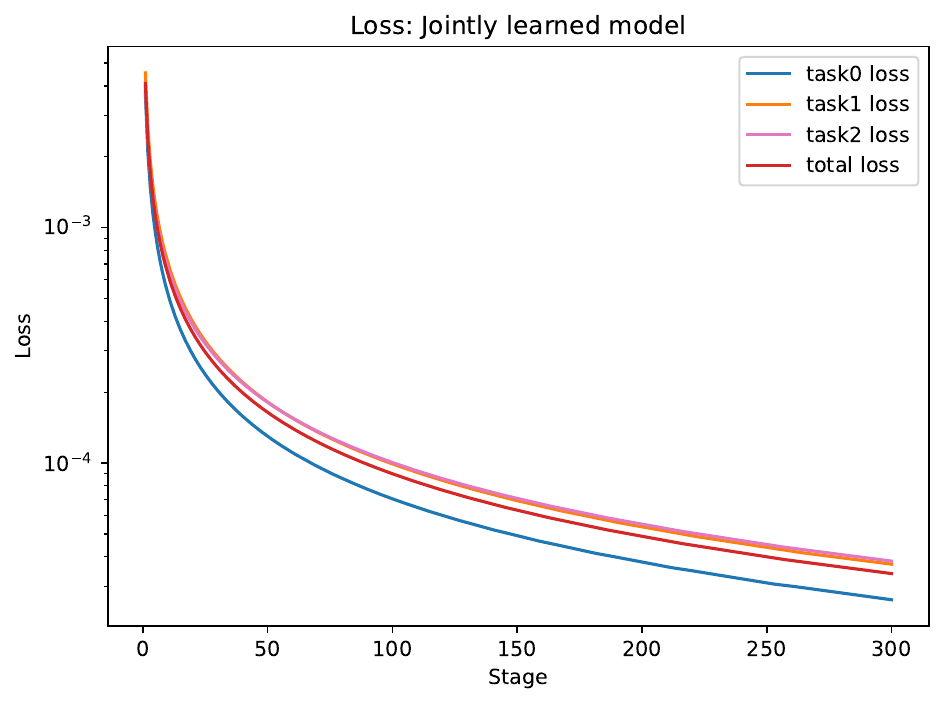}}
    \vspace{-5pt}
    \caption{Experiments on 2D synthetic data under random task ordering.}
    \vspace{-5pt}
    \label{fig:thm41}
\end{figure}

\subsubsection{Beyond Theoretical Setup: Towards Continual Learning on Online Data}
\label{subsubsec:toward_online}
Most theoretical analysis in this work exploits a structural assumption on the data points: there is a pre-defined set of offline datasets, which is divided into chunks and accessible one by one at each stage. 
Thus, exactly the same batch of data is guaranteed to be reused (surely or with high probability). 
Can we go beyond this repetition and apply our theoretical intuition to more general setups?

Here, we demonstrate that the results of our theoretical findings are not really limited to the task repetition setup. 
Instead, our insight about jointly separable continual linear classification applies to several general setups. 
In this section, we showcase an analogous behavior of sequential GD when the total dataset is no longer fixed throughout the continual learning process. 
We consider the setup where there are $M$ different (jointly separable) data \emph{distributions}, rather than datasets; every time we encounter a task, we have access to a totally new sample of data points drawn from the task's distribution. 
For simplicity of visualization, we still stick to the bounded support cases.

An implementational difference from the previous sections is that we resample the data points from a predefined data distribution at every stage. 
Another minor detail is that we no longer fix the three support vectors as mentioned in \cref{subsubsec:experiment_detail_figthm3132_detail}: thus, at every stage, we never reuse the same data point(s) from the previous stage, almost surely.
We test whether a similar trend happens even when we add the resampling process, under the same data distribution described in \cref{subsubsec:experiment_detail_figthm3132_detail}. The results are shown in \cref{fig:thm3132_extension,fig:thm41_extension} for cyclic task ordering and random task ordering cases, respectively.

\begin{figure}[ht]
    \centering
    \subfigure[Data points and trajectories]{\includegraphics[width=0.45\linewidth]{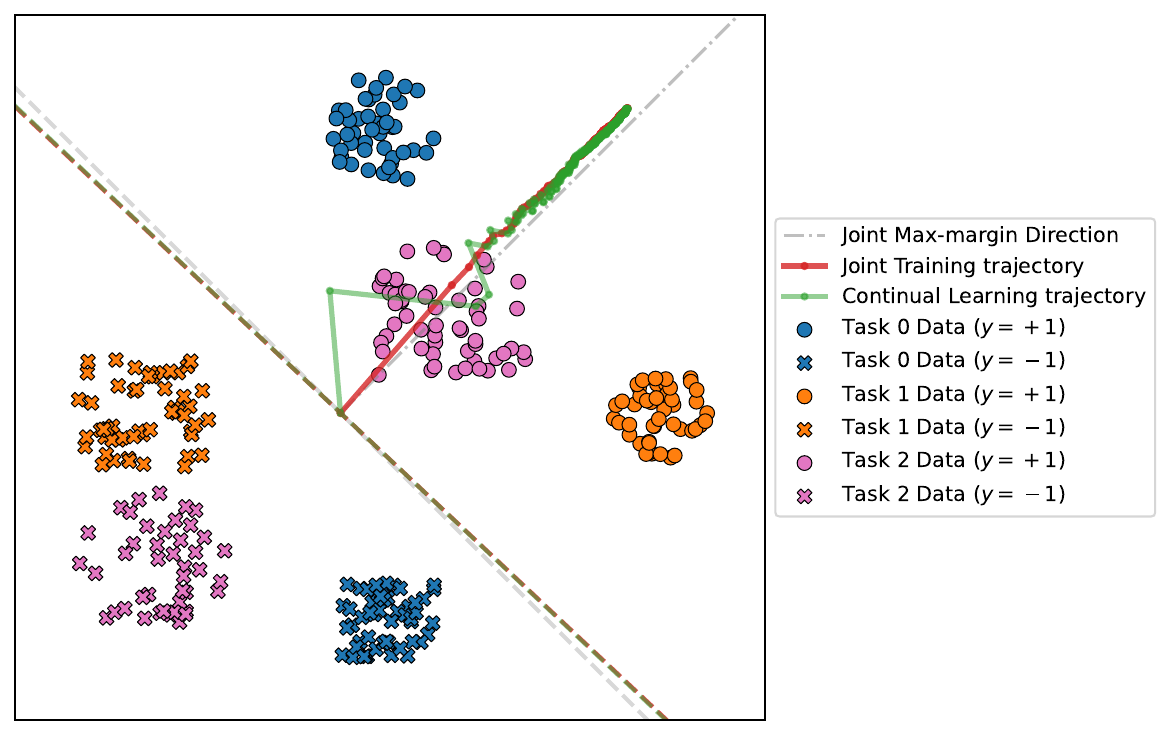}}
    ~~
    \subfigure[Sine angles (the smaller the more aligned)]{\includegraphics[width=0.4\linewidth]{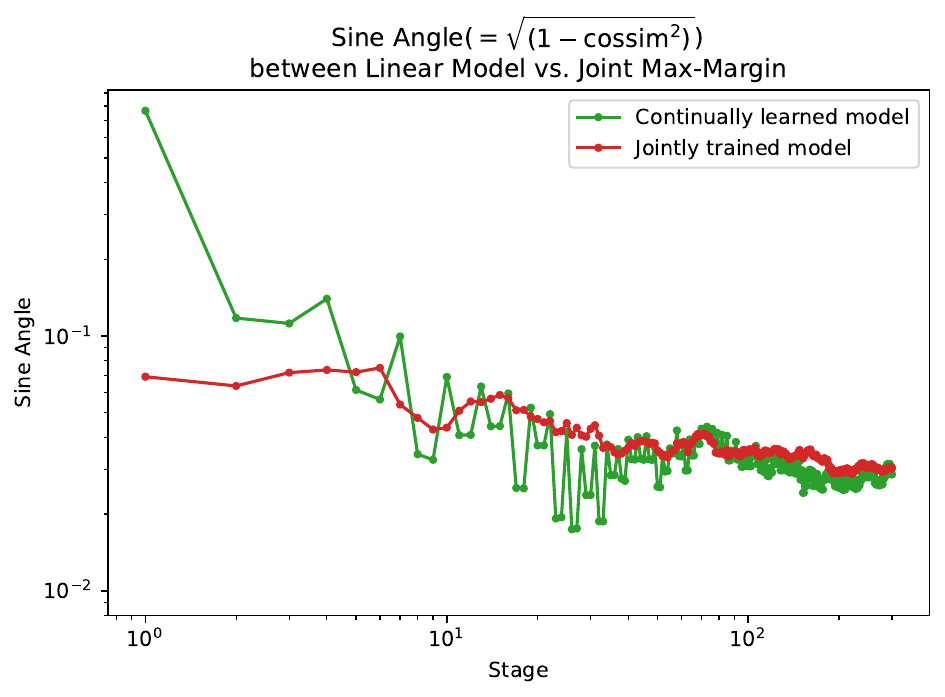}}
    \subfigure[Losses of continually learned model]{\includegraphics[width=0.4\linewidth]{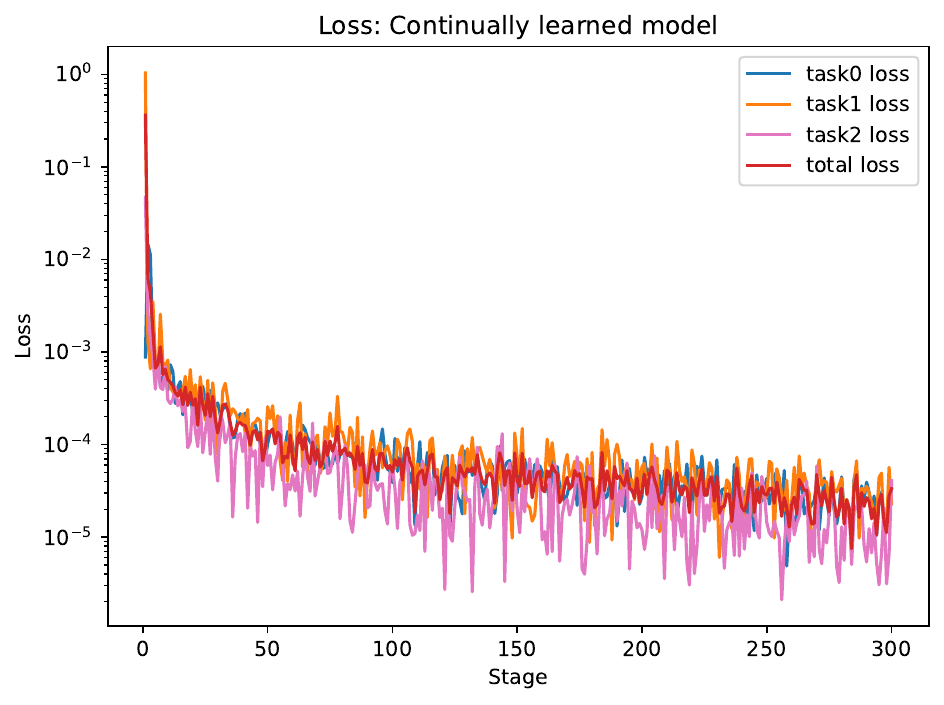}}
    ~~
    \subfigure[Losses of jointly trained model]{\includegraphics[width=0.4\linewidth]{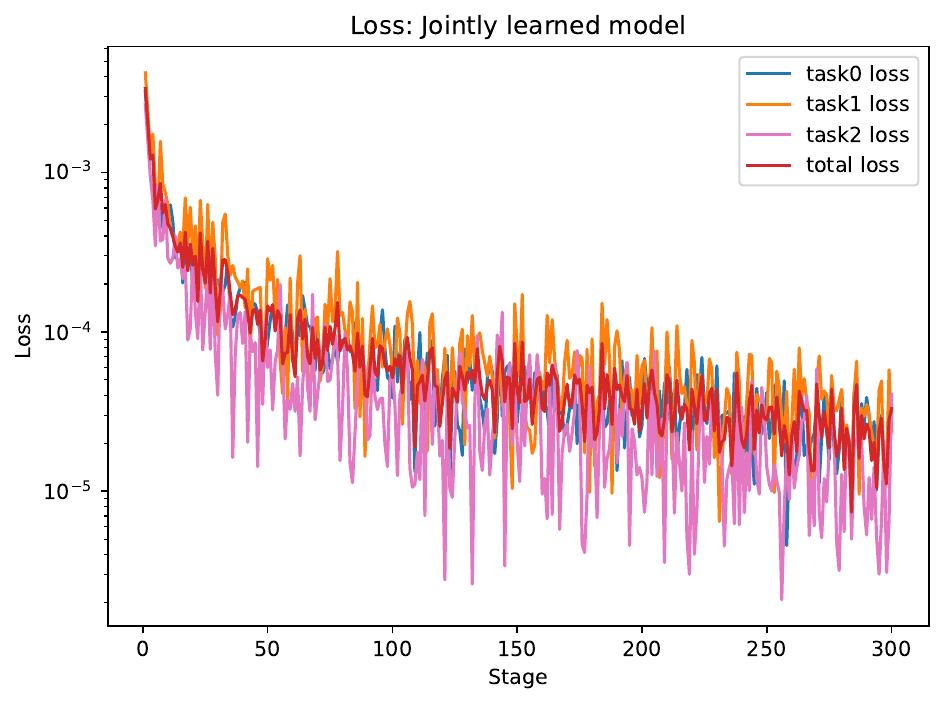}}
    \vspace{-5pt}
    \caption{2D synthetic experiments: Cyclic task ordering, jointly separable online dataset (which is drawn in an online/streaming fashion from a task's predefined data distribution).}
    \label{fig:thm3132_extension}
\end{figure}
\begin{figure}[ht]
    \centering
    \subfigure[Data points and trajectories]{\includegraphics[width=0.45\linewidth]{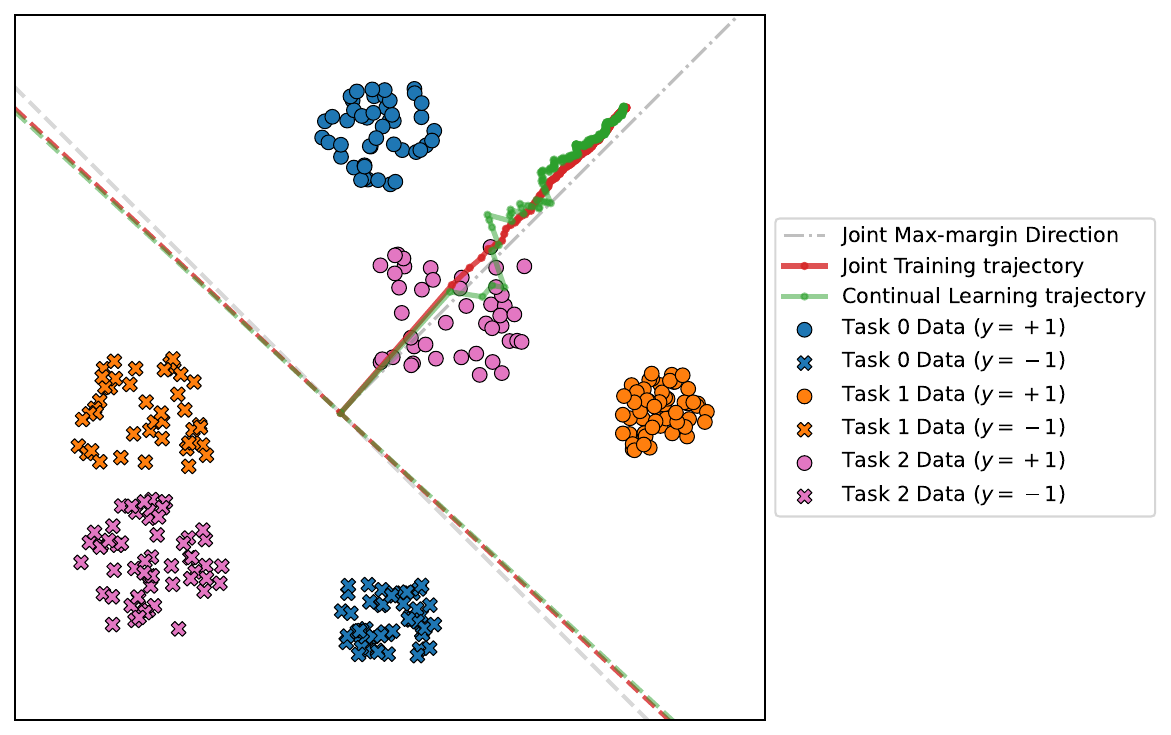}}
    ~~
    \subfigure[Sine angles (the smaller the more aligned)]{\includegraphics[width=0.4\linewidth]{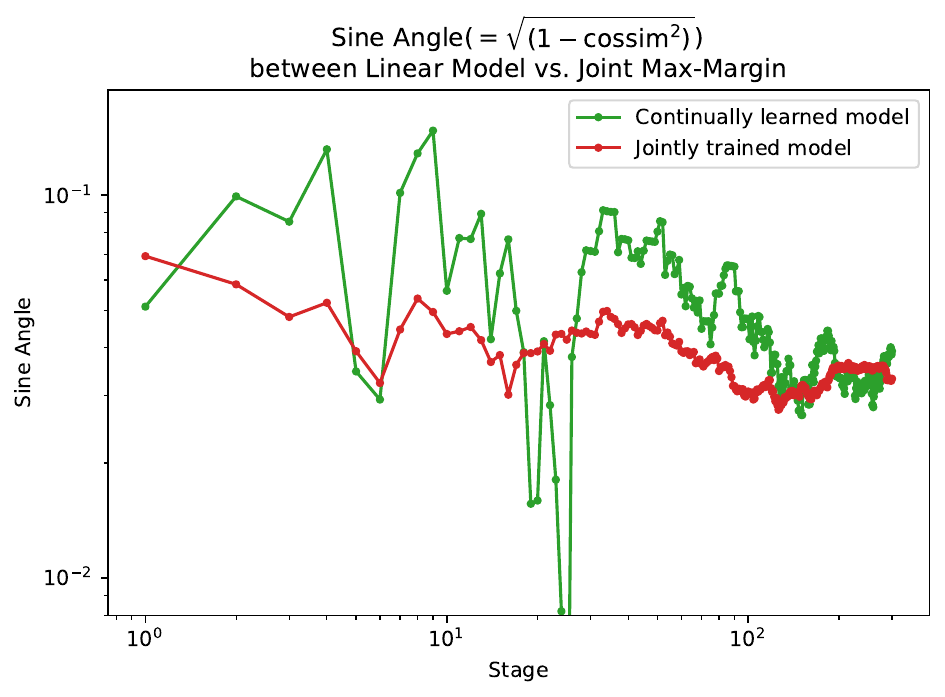}}
    \subfigure[Losses of continually learned model]{\includegraphics[width=0.4\linewidth]{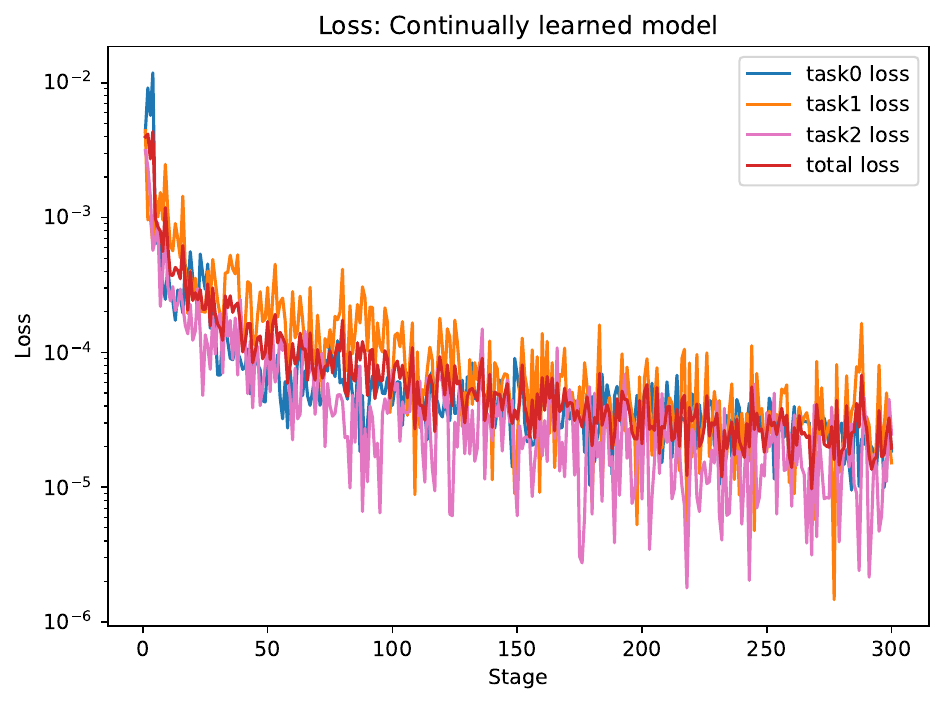}}
    ~~
    \subfigure[Losses of jointly trained model]{\includegraphics[width=0.4\linewidth]{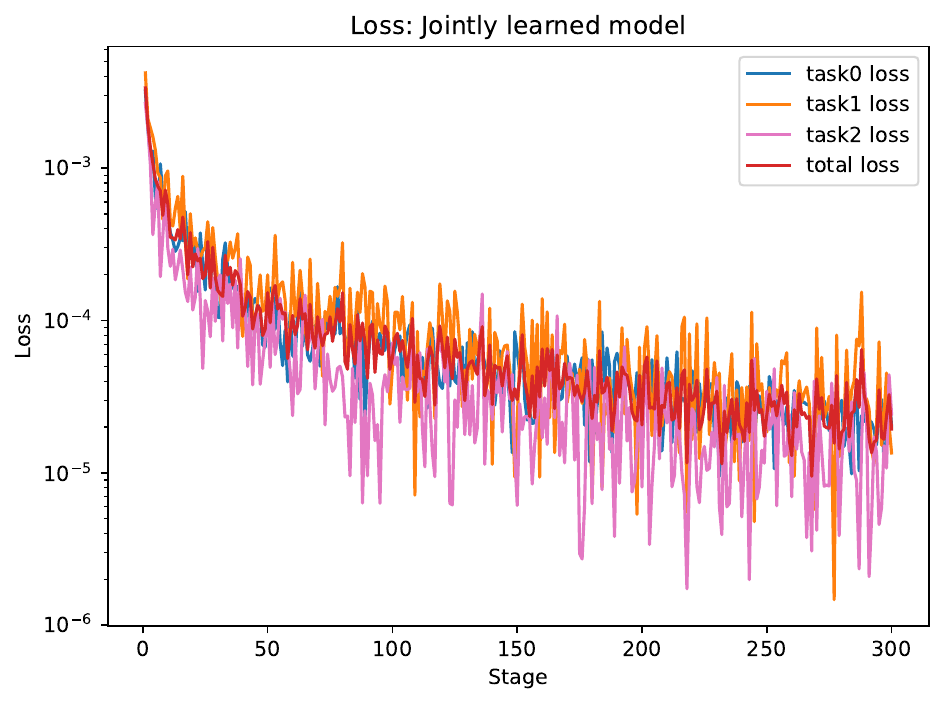}}
    \vspace{-5pt}
    \caption{2D synthetic experiments: Random task ordering, jointly separable online dataset (which is drawn in an online/streaming fashion from a task's predefined data distribution).}
    \label{fig:thm41_extension}
\end{figure}
\subsection{Toy Example for Increasing Loss in A Cycle} \label{subsec:toy_example_loss_increase}

Here, we give a toy example that shows temporarily increasing joint training loss during a cycle, even with a small learning rate.

Let the datasets $\gD_i$ ($i=1,...,5$) be as follows. We choose all labels as $+1$ without loss of generality, hence we omitted them.
\begin{gather*}
    D_1 = \{(1, -2)\},\,\, D_2 = \{(1, 2)\},\,\, D_3 = \{(1.1, 2.1)\},\\D_4 = \{(1.1, 2.2)\},\,\, D_5 = \{(1.1, 2.3)\}.
\end{gather*}
In this case, the max-margin direction is $(1,0)$, while most of the task has their individual max-margin direction around $(1,2)$. 
We set $K=10$, $\eta=10^{-6}$ so that $\eta$ satisfies the learning rate condition.

\begin{figure}[ht]
    \centering
    \subfigure[$J=7$]{\label{fig:loss_upperbound(a)}\includegraphics[width=0.45\linewidth]{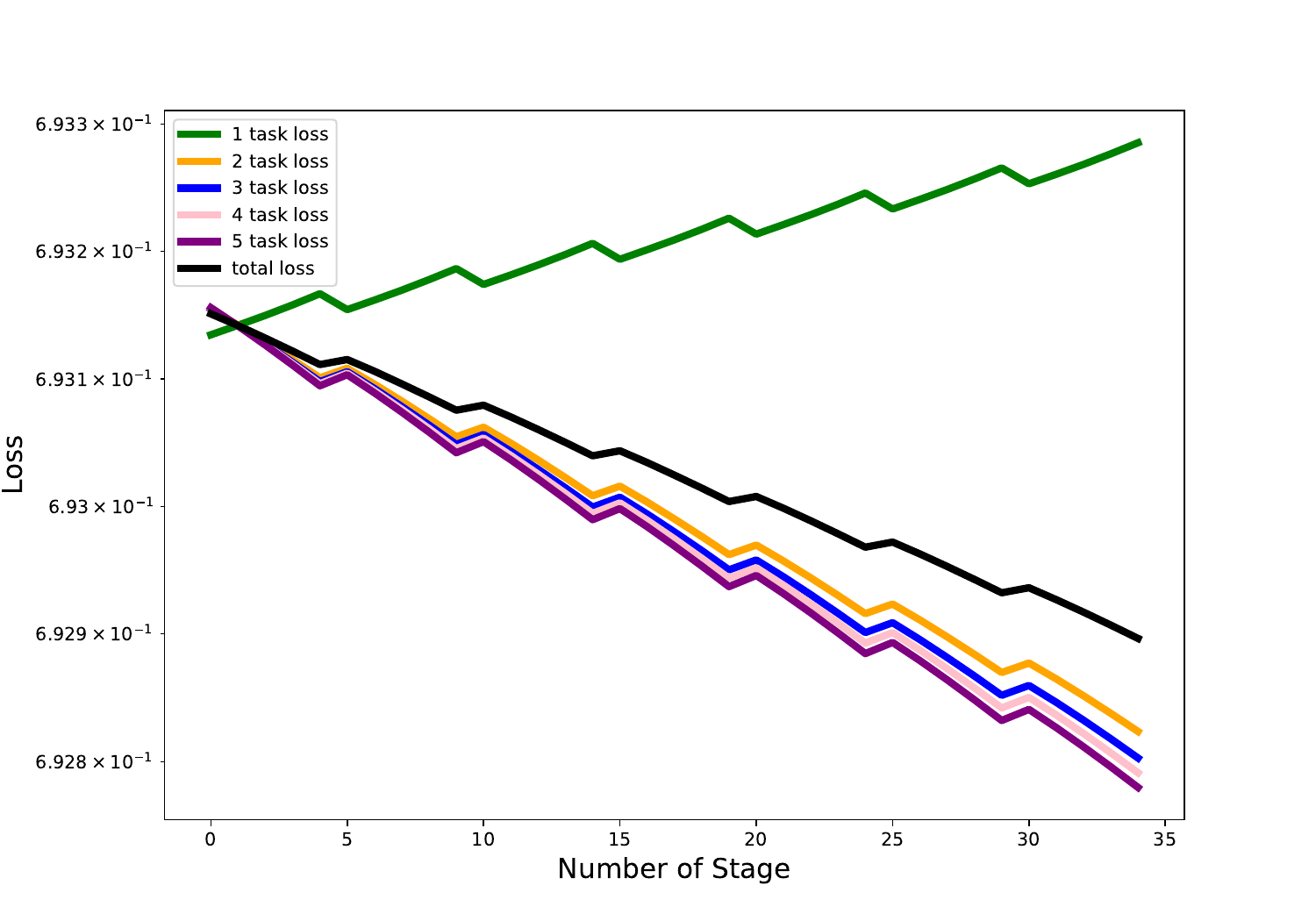}}e
    \subfigure[$J=50000$]{\label{fig:loss_upperbound(b)}\includegraphics[width=0.45\linewidth]{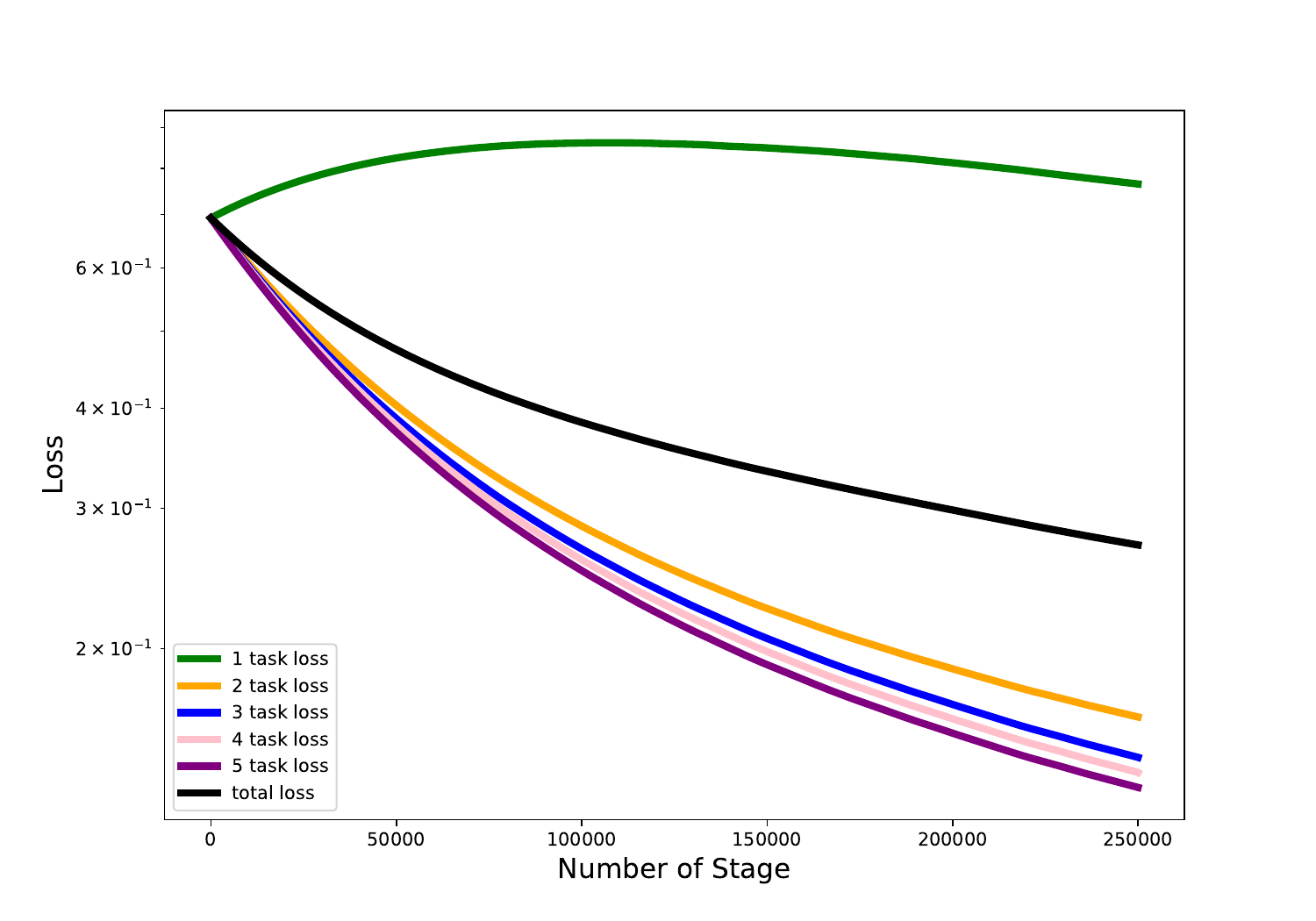}}
    \vspace{-10pt}
    \caption{We take average on total loss(black) for better visualization. The current task switches every 10 iterations. One cycle consists of 5 stages. \cref{fig:loss_upperbound(a)} shows the case where some task's loss increases within a cycle. However, it eventually decreases as \cref{fig:loss_upperbound(b)} shows.}
    \vspace{-10pt}
    \label{fig:loss_upperbound}
\end{figure}

When task 1 is being trained, joint training loss increases while it decreases when other tasks are being trained. This is because most of the tasks have their own max-margin direction around $(1,2)$, dominating the joint training loss.

\subsection{Experiments with Neural Networks: Beyond Linear Models}
\label{subsec:relu_2d}

We explore the possibility of extending our theoretical insight to \emph{nonlinear} models, in particular, wide two-layer ReLU networks.

For a \emph{linear} classifier with a single linear layer, recall that we already verified that the sequentially trained model (in cyclic/random task ordering) directionally converges to the max-margin direction.
However, it is more difficult to analyze and visualize the dynamics of the multi-layer neural net's parameter values.
Moreover, it might be nonsense to discuss the relationship (e.g., alignment, directional convergence) between the max-margin direction and the parameter matrices of a neural net, because the parameter matrices themselves no longer have a semantic meaning in the data space.

Instead of inspecting the parameter values, we move our attention to the \emph{decision boundary} of the model.
Observe that the decision boundary of a linear binary classifier is a hyperplane (i.e., $d-1$ dimensional subspace) of the data space (of $d$ dimensions), whose orthogonal complement is the span of the classifier's weight vector.
Thus, the alignment between the weight vector and the max-margin direction (i.e., the implicit bias guarantee) is semantically equivalent to the alignment between the classifier's decision boundary and a hyperplane determined by the max-margin solution as a normal vector; this hyperplane can be approximated well by jointly training a single-layer linear classifier.
Thus, we can still verify the similar idea of implicit bias even for a neural network by observing, not only that a continually learned model (with sequential GD, under task repetition) eventually classifies all the data points correctly, but also that the decision boundary of the continually trained model getting similar to that of a jointly trained model (both starting from an identical initialization).
Although we cannot exactly characterize to which set of points a two-layer ReLU net's decision boundary should converge only with our theorems, it gives an effective and efficient way to confirm our findings beyond a simple linear model.

To intuitively visualize the decision boundaries, we again use the 2D synthetic datasets. Most of the experimental setting is the same as in~\cref{subsubsec:experiment_detail_figthm3132_detail}, except for the following three differences:
\begin{enumerate}[leftmargin=20pt]
    \item The classifier's architecture is a two-layer neural network $f_{\vtheta}:\R^2 \rightarrow \R$ consisting of 2-dimensional input, 500 hidden ReLU neurons, and scalar output:
    \begin{gather*}
        f_{\vtheta} (\vx) = \vw_2^\top \operatorname{ReLU}(\mW_1 \vx + \vb_1) + b_2,
    \end{gather*}
    where $\vtheta = (\mW_1, \vb_1, \vw_2, b_2)\in\R^{500\times2}\times\R^{500}\times\R^{500}\times\R $ and  $\operatorname{ReLU}(\vv)_i = \max\{\vv_i, 0\}.$
    \item To make the total dataset non-separable by a linear classifier with a positive margin but still classifiable by a neural net, we translate all data points with positive labels ($+1$) by a vector $(-1.2, -1.2)$. In this case, the decision boundary should not be straight but bent in a curly L-shape to effectively distinguish two classes.
    \item To prevent the sequential GD from behaving similarly to a mini-batch SGD with small-scale and lazy updates, we increase $K$ to $3,\!000$ to guarantee that (1) the jointly trained model can correctly classify all data points within only one stage (i.e., with initial $K$ updates), and (2) the continually learned model gets sufficiently trained on a specific task at each stage. As a result, the jointly trained model takes $MJK=900,\!000$ iterations. ($M=3$, $J=100$)
\end{enumerate}
As we did for a linear classifier, we classify the input data as $y=+1$ if the model output is positive and as $-1$ otherwise (thus, the decision boundary is a level set $\{\vx\in \R^2: f_{\vtheta}(\vx)=0\}$). We again use the usual logistic loss $\frac{1}{N}\sum_{i=1}^N\ell(y_i f_{\vtheta}(\vx_i))$.

\begin{figure}[htb!]
    \centering
    \subfigure[At the end of the first stage.]{\label{fig:relunet_experiment_t=1}\includegraphics[width=0.45\linewidth]{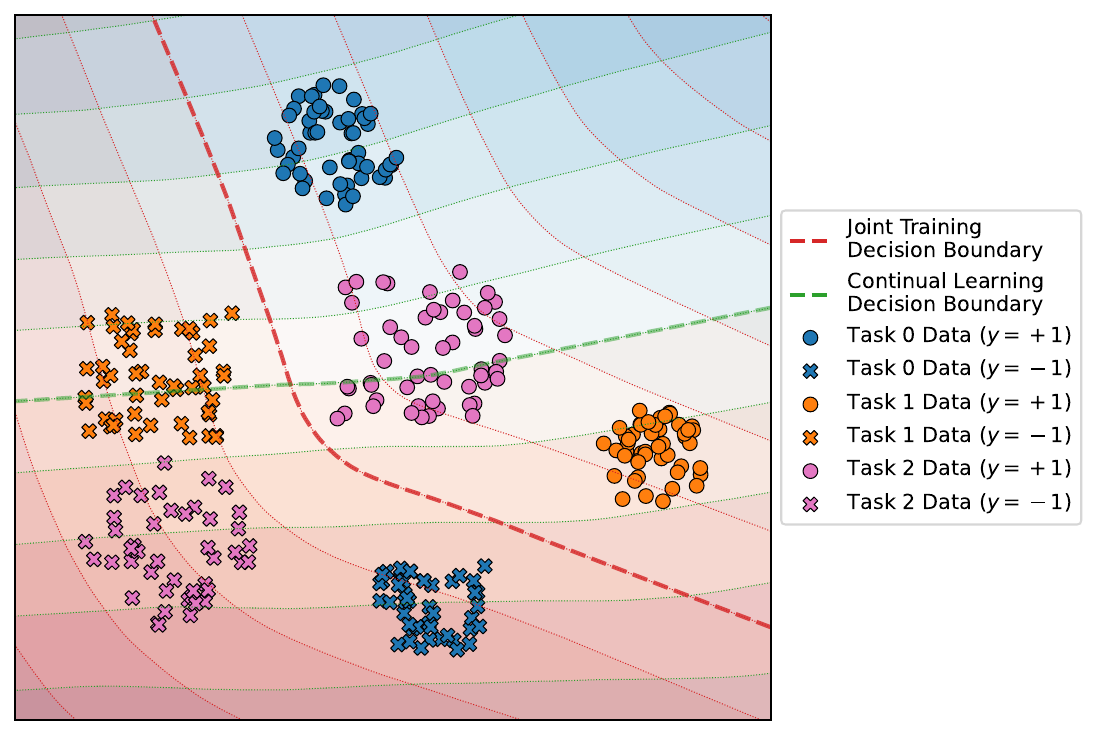}}
    ~~
    \subfigure[After running 300 stages.]{\label{fig:relunet_experiment_t=300}\includegraphics[width=0.45\linewidth]{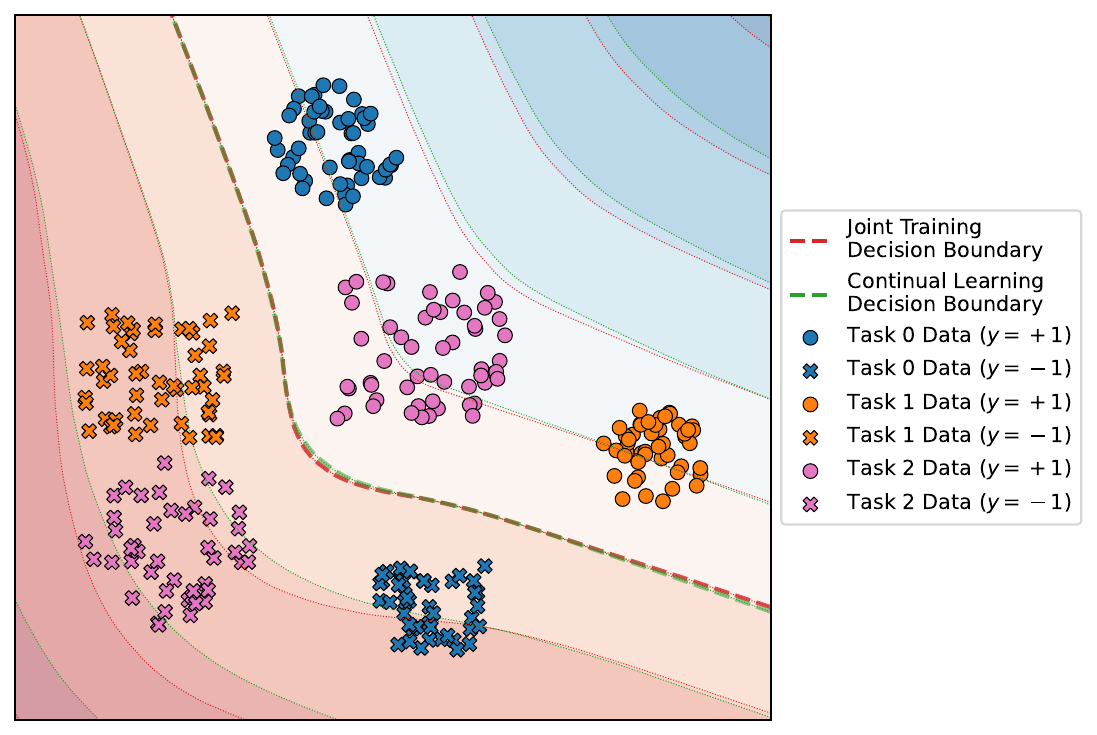}}
    \subfigure[Losses of continually trained model.]{\label{fig:relunet_experiment_loss}\includegraphics[width=0.45\linewidth]{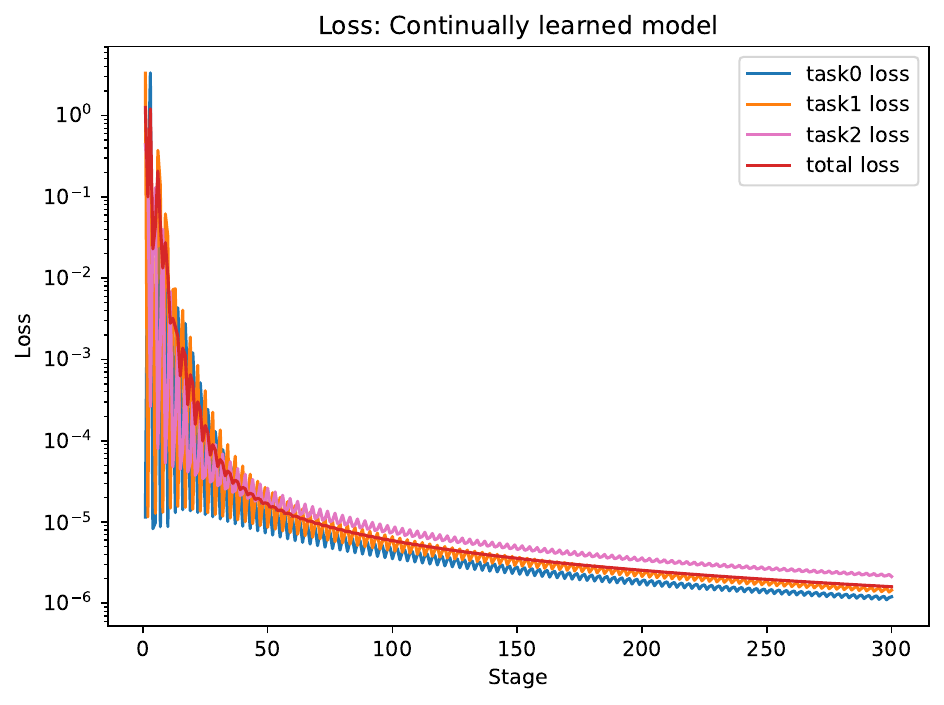}}
    ~~
    \subfigure[Losses of jointly trained model.]{\label{fig:relunet_experiment_loss_joint}\includegraphics[width=0.45\linewidth]{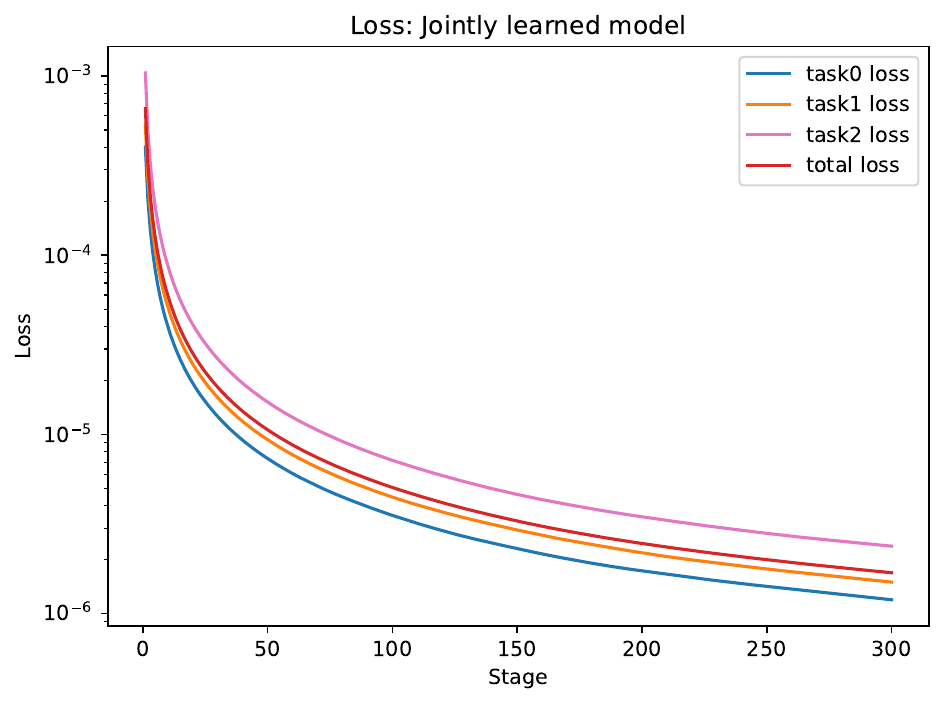}}
    \caption{Two-layer ReLU network experiment under cyclic task ordering. 
    \textbf{(Top.)} Each subfigure displays the decision boundaries (and other auxiliary level sets) of a jointly trained model (dashed red line) and a continually trained model (dashed green line).
    \textbf{(Bottom.)} \cref{fig:relunet_experiment_loss} demonstrates the large amounts of forgetting at the initial few cycles and convergence of loss and (cycle-averaged) forgetting to near zero. On the other hand, \cref{fig:relunet_experiment_loss_joint} shows that the training loss of the jointly trained model is already small (e.g., less than $10^{-3}$) at the initial stages.}
    \label{fig:relunet_experiment}
\end{figure}

The result of the experiment for cyclic task order is visualized in~\cref{fig:relunet_experiment}, exhibiting decision boundaries of a jointly trained model and a continually trained model (with sequential GD). 
As we expected, the updates are aggressive enough so that even a single stage (i.e., initial $K=3,\!000$ iterations) is sufficient to perfectly classify the total dataset with the jointly trained model, and the same for the dataset of the task 0 with the continually trained model (\cref{fig:relunet_experiment_t=1}).
After some number of stages (\cref{fig:relunet_experiment_t=300}), both models not only correctly classify every data point, but also have an almost identical decision boundary (note that the other level sets are not necessarily the same), implying that a similar phenomenon like implicit bias is happening here.
We also observe almost the same tendencies under random task ordering and even for non-repeating dataset cases (\cref{subsubsec:toward_online}). We omit their detailed results from the paper, but one can find them in our supplementary materials.

\subsection{Experiment on a Real-World Dataset}
\label{subsec:cifar10_lin}

In this section, we present a result of the training linear model with CIFAR-10 \citep{krizhevsky2009learning}.

We choose two classes from the CIFAR-10 dataset and design 3 tasks that have 512 data points from the two classes (`airplane' and `automobile').
Our \cref{thm:nonseparable_paramconvergence} on linearly non-separable data like CIFAR-10 shows that sequential GD iterates should not diverge and instead converge to the global minimum $\vw^*$ under the properly chosen learning rate.
To estimate the distance between sequential GD iterates and the global minimum, we first train a linear model using joint task data and obtain $\vw_{\rm Joint}$ as a proxy of $\vw^*$; we do this because offline training is guaranteed to converge to the global minimum.
Then, we train sequential GD and measure the distance between iterates and the jointly trained solution $\vw_{\rm Joint}$ at the end of every stage of sequential GD.

\begin{figure}[htb!]
    \centering
    \subfigure[$\norm{\vw^{(t)}_K - \vw_{\rm Joint}}$]{\label{fig:K50J1350JT200000_parameters}\includegraphics[width=0.45\linewidth]{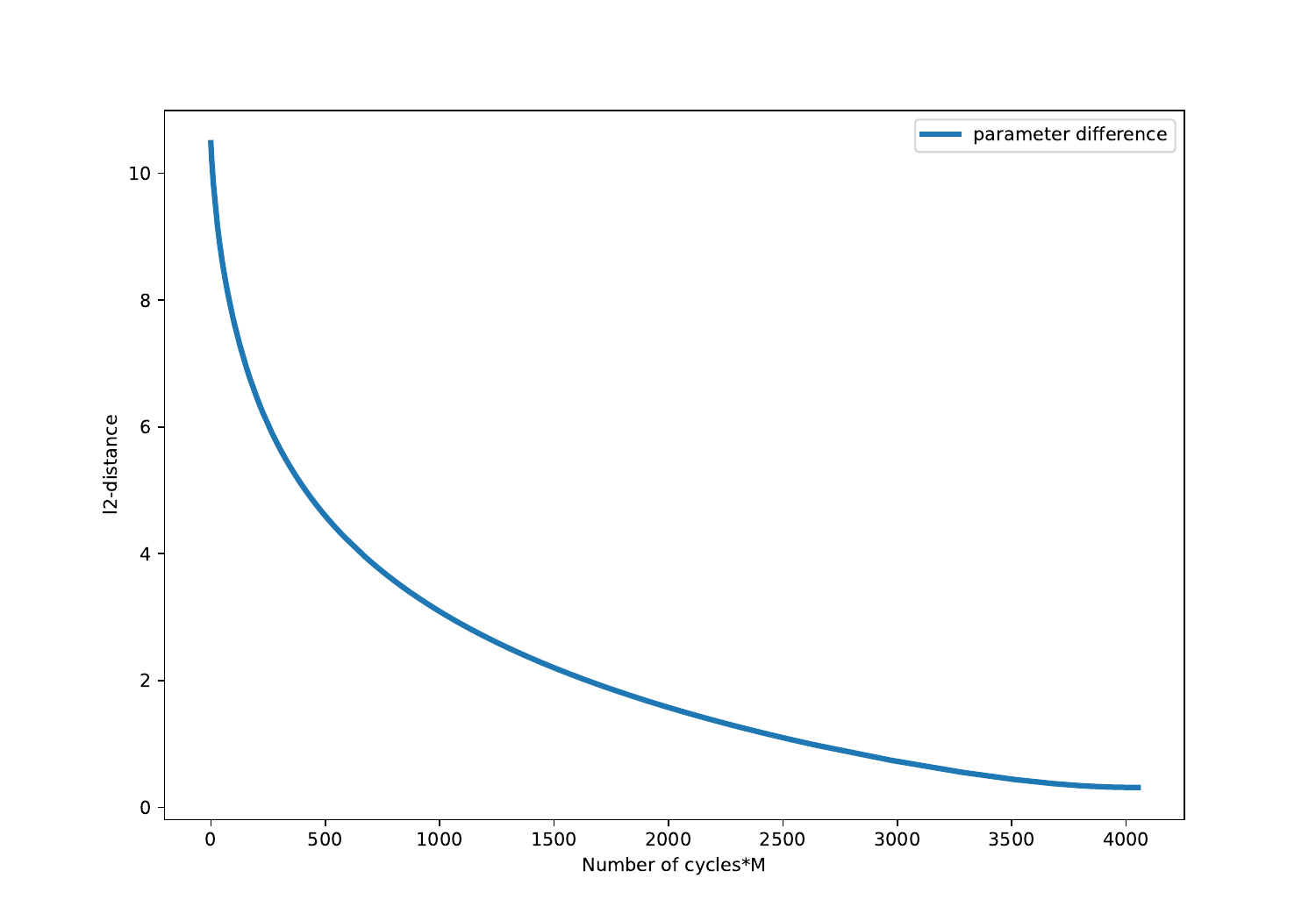}}
    \subfigure[Loss of jointly trained model]{\label{fig:K50J2000JT200000_joint_loss}\includegraphics[width=0.45\linewidth]{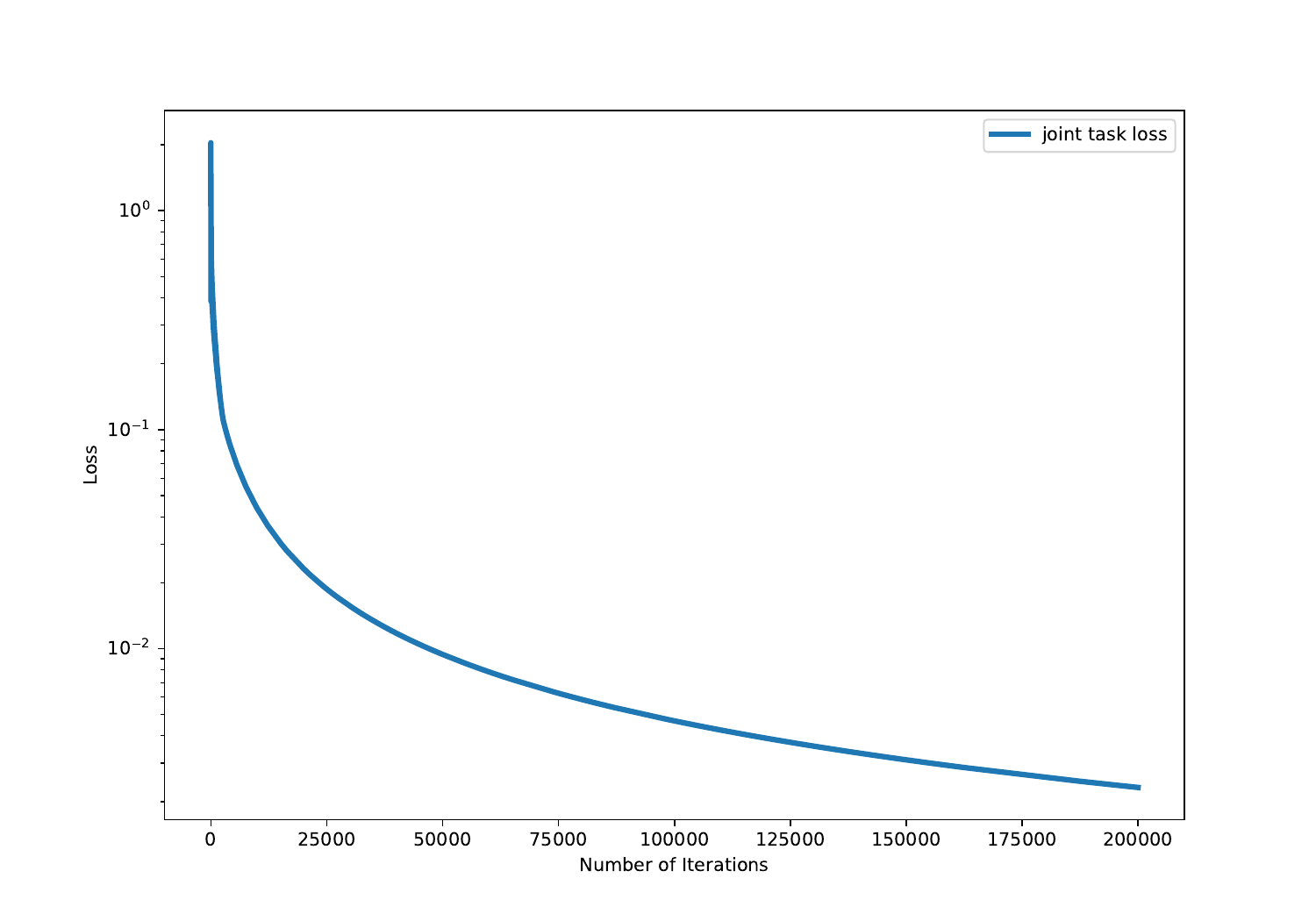}}
    \caption{\textbf{CIFAR-10 Experiments with a linear model.}~~We jointly train a model for 200000 iterations to achieve the global minimum. We then train each task with cyclic ordering. We set the number of GD for each stage as 50 ($K=50$), and run 1350 cycles ($J=1350$). \cref{fig:K50J1350JT200000_parameters} shows that sequential GD iterate converges close to $\vw_{\rm Joint}$ as the training goes on. However, it does not fully converge to $\vw_{\rm Joint}$, as $\vw_{\rm Joint}$ is not equal to $\vw^*$. \cref{fig:K50J2000JT200000_joint_loss} reveals that the loss of the jointly trained model was decreasing after 200000 iterations.}
    \label{fig:K50J2000JT200000_cifar10}
\end{figure}

As a result, we observe that the distance between sequential GD iterates and $\vw_{\rm Joint}$ converges close to 0, even when we adopt a learning rate $\eta=0.01$, which is not as small as our theorem requires.
Yet, we couldn't show convergence of distance to exactly 0 since the jointly trained model did not converge all the way to $\vw^*$.
\newpage
\section{Proof Sketches}\label{sec:proof_sketch}
In this appendix, we provide proof sketches of the main theorems of the asymptotic and non-asymptotic convergence under the cyclic task ordering.
The proof sketch for the random task ordering is omitted, as the asymptotic loss convergence is straightforward, and the directional convergence can be established in much the same way as for the cyclic task ordering.
In addition, we include proof sketches for \cref{lem:cyclic_direction_convergence_lemma2} whose proof is particularly intricate and challenging to follow.

\subsection{\texorpdfstring{Asymptotic Loss Convergence (Proof Sketch of \cref{thm:cyclic_loss_convergence_asymptotic})}{Asymptotic Loss Convergence (Proof Sketch of Theorem 3.1)}}
\label{subsec:proofsketch_cyclic_loss_convergence_asymptotic}

Since $\gL$ is a $\sigma_{\max}^2\beta$-smooth function, we get
\begin{align*}
    &\gL (\vw^{(Mj+M)}_0) - \gL (\vw^{(Mj)}_0) - {\frac{\sigma_{\max}^2\beta}{2}} \norm{\vw^{(Mj+M)}_0 -\vw^{(Mj)}_0}^2\\
    &\le \nabla\gL (\vw^{(Mj)}_0)^\top (\vw^{(Mj+M)}_0 -\vw^{(Mj)}_0)\\
    &\le -\eta K \norm{\nabla\gL (\vw^{(Mj)}_0)}^2 + \norm{\nabla\gL (\vw^{(Mj)}_0) } \norm{\vw^{(Mj+M)}_0 -\vw^{(Mj)}_0 + \eta K \nabla\gL (\vw^{(Mj)}_0)},
\end{align*}
for each $j$-th cycle ($j\ge0$).
Since the term $\norm{\vw^{(Mj+M)}_0 -\vw^{(Mj)}_0 + \eta K \nabla\gL (\vw^{(Mj)}_0)}$ is bounded above by a factor (in $j$) of $\norm{\nabla\gL (\vw^{(Mj)}_0)}$ (\cref{lem:separable_gradient_sum_lemma}), we have a form of descent lemma,
\begin{align}
     \gL (\vw^{(Mj+M)}_0) - \gL (\vw^{(Mj)}_0)\le -\eta K \left( 1 - \eta K \beta' \right) \norm{\nabla \gL (\vw^{(Mj)}_0)}^2,
\end{align}
for some $\beta'\in \bigopen{0, \frac1{\eta K}}$.
Therefore,
\begin{align*}
    \sum_{j=0}^{\infty}  \norm{\nabla \gL (\vw^{(Mj)}_0)}^2 \le \frac{\gL (\vw^{(0)}_0) }{\eta K (1-\eta K \beta')} < \infty .
\end{align*}
Again by \cref{lem:separable_gradient_sum_lemma}, the term $\norm{\nabla \gL (\vw^{(Mj+m)}_k)}^2$ is bounded above by a constant factor (in $j$) of  $\norm{\nabla\gL (\vw^{(Mj)}_0)}$, which leads to the following convergence of an infinite sum of non-negative terms:
\begin{align*}
    \sum_{j=0}^{\infty} \sum_{m=0}^{M-1} \sum_{k=0}^{K-1}& \norm{\nabla \gL (\vw^{(Mj+m)}_k)}^2 < \infty.
\end{align*}
This implies $\forall k\in[0\!:\!K-1]:\lim\limits_{t\to \infty}\norm{\nabla \gL (\vw^{(t)}_k)}^2\!=\!0$.
Finally, \cref{lem:totalloss_lowerbound_eachloss} leads to $\lim\limits_{t\to \infty} \ell' (\vx_i^\top \vw^{(t)}_k) = 0$ for all $k\in [0:K\!-\!1]$. Equivalently, $\vx_i^\top \vw^{(t)}_k \to \infty$.

\subsection{\texorpdfstring{Directional Convergence (Proof Sketch of \cref{thm:random_direction_convergence})}{Directional Convergence (Proof Sketch of Theorem 4.2)}}
\label{subsec:proofsketch_random_direction_convergence_asymptotic}

First, for each $k\in [0:K\!-\!1]$, define $\vrho^{(t)}_k$ and $\vr^{(t)}_k$ as
\begin{align*}
    \vw^{(t)}_k &= \ln\bigopen{t}\hat{\vw} + \vrho^{(t)}_k\\
    &= \ln\bigopen{t}\hat{\vw} + \ln\bigopen{\frac{K}{M}}\hat{\vw} + \tilde{\vw} + \frac{M}{K}\cdot \vm_{t,k} + \vr^{(t)}_k,
\end{align*}
where $\hat{\vw}$ is the joint $\ell_2$ max-margin solution (defined in \cref{eq:joint_max_margin}), $\tilde{\vw}$ is the solution of
    \begin{align*}
        \forall i\in S : {\eta}\exp{(-\vx_i^{\top} \tilde{\vw})} = \alpha_i,
        \quad \bar{P}(\tilde{\vw} - \vw^{(0)}_0) = 0,
    \end{align*}
and $\vm_{t,k}$ is a vector defined in \cref{lem:cyclic_direction_convergence_lemma1} in detail. Then by definition, the following holds.
\begin{align}
    \vr^{(t)}_k &=\vw^{(t)}_k - \frac{M}{K}\left(  \frac{K}{M} \ln\bigopen{\frac{K}{M}t} \hat{\vw} + \vm_{t,k}\right) - \tilde{\vw}\label{eq:proofsketch_directionalconv}\\
    &= \vw^{(t)}_k - \frac{M}{K}\left( K\sum_{u=1}^{t-1} \frac{1}{u} \sum_{s\in S^{(u)}} \alpha_s \vx_s + \frac{k}{t} \sum_{s\in S^{(t)}} \alpha_s \vx_s \right) - \ln K \hat{\vw} - \tilde{\vw} + \check{\vw}.\label{eq:proofsketch_directionalconv2} 
\end{align}
To show the boundedness of $\vrho^{(t)}_k$, it is enough to prove the boundedness of $\vr^{(t)}_k$ as $\tilde{\vw}$ is a constant vector and $\vm_{t,k}$ converges to zero. From the fact
\begin{align*}
    \norm{\vr^{(t)}_{k+1}}^2 - \norm{\vr^{(t)}_{k}}^2 =  2(\vr^{(t)}_{k+1} - \vr^{(t)}_{k})^\top \vr^{(t)}_{k} + \norm{\vr^{(t)}_{k+1} - \vr^{(t)}_{k}}^2,
\end{align*}
we get
\begin{align}
    \norm{\vr^{(t)}_{0}}^2 - \norm{\vr^{(t_1)}_{0}}^2 = \sum_{u=t_1}^{t-1} \sum_{k=0}^{K-1} 2(\vr^{(u)}_{k+1} - \vr^{(u)}_{k})^\top \vr^{(u)}_{k} + \sum_{u=t_1}^{t-1} \sum_{k=0}^{K-1} \norm{\vr^{(u)}_{k+1} - \vr^{(u)}_{k}}^2 .\label{eq:proofsketch_direction_r_bound}
\end{align}
According to \cref{lem:cyclic_direction_convergence_lemma2}, the first term of the right-hand side of \cref{eq:proofsketch_direction_r_bound} is bounded. By \cref{eq:proofsketch_directionalconv},
\begin{align*}
    \sum_{t=t_1}^{T} \sum_{k=0}^{K-1} \norm{\vr^{(t)}_{k+1} - \vr^{(t)}_{k}}^2 &= \sum_{t=t_1}^{T} \sum_{k=0}^{K-1} \norm{\vw^{(t)}_{k+1} - \vw^{(t)}_{k} - \frac{M}{K}\left(\vm_{t,k+1}-\vm_{t,k}\right)}^2\\
    &\le \sum_{t=t_1}^{T} \sum_{k=0}^{K-1} \norm{\vw^{(t)}_{k+1} - \vw^{(t)}_{k}}^2  +  \sum_{t=t_1}^{T} \sum_{k=0}^{K-1}\norm{\frac{M}{K}\left(\vm_{t,k+1}-\vm_{t,k}\right)}^2 \\
    &+ 2 \sqrt{\sum_{t=t_1}^{T} \sum_{k=0}^{K-1} \norm{\vw^{(t)}_{k} - \vw^{(t)}_{k+1}}^2 \sum_{t=t_1}^{T} \sum_{k=0}^{K-1}  \norm{\frac{M}{K}\left(\vm_{t,k+1}-\vm_{t,k}\right)}^2}.
\end{align*}
We use the Cauchy-Schwarz inequality for the inequality. $\sum_{t=t_1}^{T} \sum_{k=0}^{K-1} \norm{\vw^{(t)}_{k} - \vw^{(t)}_{k+1}}^2$ and $\sum_{t=t_1}^{T} \sum_{k=0}^{K-1}  \norm{\frac{M}{K}\left(\vm_{t,k+1}-\vm_{t,k}\right)}^2$ are bounded as a consequence of \cref{thm:cyclic_loss_convergence_asymptotic} and \cref{lem:cyclic_direction_convergence_lemma1}, respectively.
Therefore, the second term of the right-hand side of \cref{eq:proofsketch_direction_r_bound} is bounded. It concludes $\norm{\vr^{(t)}_{k}}$ is bounded.
\subsubsection{\texorpdfstring{Proof Sketch of \cref{lem:cyclic_direction_convergence_lemma2}}{Proof Sketch of Lemma B.5}} \label{subsubsec:proofsketch_cyclic_direction_convergence_lemma2}

From \cref{thm:cyclic_loss_convergence_asymptotic}, $\forall i\in I:\vx_i^\top \vw^{(t)}_k \to \infty$. Thus for a sufficiently large $t$, \cref{assum:tight_exponential_tail} gives 
\begin{align*}
    -\ell'(\vx_i^\top \vw^{(t)}_k) \approx e^{-\vx_i^\top \vw^{(t)}_k}.
\end{align*}
Also, since $\vm_{t,k}$ converges to zero, for a sufficiently large $t$,
\begin{align*}
    \vw^{(t)}_k=\ln(\frac{K}{M}t)\hat{\vw} + \tilde{\vw} + \frac{M}{K} \vm_{t,k} + \vr^{(t)}_k\approx \ln(\frac{K}{M}t)\hat{\vw} + \tilde{\vw} + \vr^{(t)}_k.
\end{align*}
By \cref{eq:proofsketch_directionalconv2}, for all $k\in[0:K\!-\!1]$, we get
\begin{align*}
    \vr^{(t)}_{k+1} - \vr^{(t)}_{k} &= \vw^{(t)}_{k+1} - \vw^{(t)}_{k} - \frac{M}{Kt} \sum_{s\in S^{(t)}} \alpha_s \vx_s \\
    &= -\eta \sum_{s\in I^{(t)}} \ell'(\vx_s^\top \vw^{(t)}_k)\vx_s - \frac{M}{Kt} \sum_{s\in S^{(t)}} \alpha_s \vx_s\\
    &= -\eta \sum_{s\in I^{(t)}\setminus S^{(t)}} \ell'(\vx_s^\top \vw^{(t)}_k)\vx_s - \sum_{s\in S^{(t)}} \left[ \eta \ell'(\vx_s^\top \vw^{(t)}_k) + \frac{M}{Kt}\alpha_s \right]\vx_s.
\end{align*}
Hence,
\begin{align}
    \left(\vr^{(t)}_{k+1} - \vr^{(t)}_{k}\right)^\top &\vr^{(t)}_{k} = -\eta \sum_{s\in I^{(t)}\setminus S^{(t)}} \ell'(\vx_s^\top \vw^{(t)}_k)\vx_s^\top \vr^{(t)}_{k} - \sum_{s\in S^{(t)}} \left[ \eta \ell'(\vx_s^\top \vw^{(t)}_k) + \frac{M}{Kt}\alpha_s \right]\vx_s^\top \vr^{(t)}_{k}. \label{eq:proofsketch_directionlemma2}
\end{align}
The first term of the right-hand side of \cref{eq:proofsketch_directionlemma2} can be upper bounded as below:
\begin{align}
    &-\eta \sum_{s\in I^{(t)}\setminus S^{(t)}} \ell'(\vx_s^\top \vw^{(t)}_k)\vx_s^\top \vr^{(t)}_{k} \approx \eta \sum_{ \substack{s\in I^{(t) }\setminus S^{(t)} \\ \vx_s^\top \vr^{(t)}_{k} >0}} \exp\left( -\ln(\frac{K}{M}t)\vx_s^\top\hat{\vw} - \vx_s^\top\tilde{\vw} - \vx_s^\top\vr^{(t)}_k \right)\vx_s^\top \vr^{(t)}_{k}\nonumber\\
    &\le \eta \sum_{ \substack{s\in I^{(t) }\setminus S^{(t)} \\ \vx_s^\top \vr^{(t)}_{k} >0}} \exp\left( -\ln(\frac{K}{M}t)\vx_s^\top\hat{\vw} - \vx_s^\top\tilde{\vw} \right) =\mathcal{O}(t^{-\theta}),
\end{align}
where in the last line we use the fact $\forall x\ge 0: x\exp(-x)\le 1$, and $\forall s\in I^{(t) }\setminus S^{(t)} :x_s^\top \hat{\vw} \ge \theta$.

Now we examine the second term of the right-hand side of \cref{eq:proofsketch_directionlemma2}, 
\begin{align*}
    - \sum_{s\in S^{(t)}} \left[ \eta \ell'(\vx_s^\top \vw^{(t)}_k) + \frac{M}{Kt}\alpha_s \right]\vx_s^\top \vr^{(t)}_{k} \approx \sum_{s\in S^{(t)}} \frac{M}{Kt}\alpha_s &\left[\exp\left( - \vx_s^\top\vr^{(t)}_k\right) - 1 \right] \vx_s^\top \vr^{(t)}_{k}.
\end{align*}
Set $\tilde{\mu} = \min\{\mu_+, \mu_-, 0.25\}$, where $\mu_+, \mu_-$ is defined from \cref{assum:tight_exponential_tail}. We analyze each $s \in S^{(t)}$ by dividing into cases.
\begin{enumerate}[leftmargin=15pt]
    \item if $0\le \vx_s^\top \vr^{(t)}_k \le t^{-0.5 \tilde{\mu}}$:
    \begin{align*}
        \frac{M}{Kt}\alpha_s \left[\exp\left( - \vx_s^\top\vr^{(t)}_k\right) - 1 \right] \vx_s^\top \vr^{(t)}_{k}= \mathcal{O} (t^{-1-0.5\tilde{\mu}}).
    \end{align*}
    \item if $- t^{-0.5 \tilde{\mu}} \le \vx_s^\top \vr^{(t)}_k  \le 0$:
    \begin{align*}
        \frac{M}{Kt}\alpha_s \left[\exp\left( - \vx_s^\top\vr^{(t)}_k\right) - 1 \right] \vx_s^\top \vr^{(t)}_{k} &=\frac{M}{Kt}\alpha_s \left[1 -  \exp\left(- \vx_s^\top\vr^{(t)}_k\right) \right] \cdot \abs{\vx_s^\top \vr^{(t)}_{k}}\\
        &\le \frac{M}{Kt}\alpha_s \abs{\vx_s^\top \vr^{(t)}_{k}} =\mathcal{O} (t^{-1-0.5\tilde{\mu}}).
    \end{align*}
    \item if $t^{-0.5 \tilde{\mu}} < \vx_s^\top \vr^{(t)}_k$:
    \begin{align*}
        \frac{M}{Kt}\alpha_s \left[\exp\left( - \vx_s^\top\vr^{(t)}_k\right) - 1 \right] \vx_s^\top \vr^{(t)}_{k} &\le
        \frac{M}{Kt}\alpha_s \left[ \exp\left(- t^{-0.5\tilde{\mu}}\right) - 1 \right] \vx_s^\top \vr^{(t)}_{k}\\
        &\le \frac{M}{Kt}\alpha_s \left[- t^{-0.5\tilde{\mu}} + t^{-\tilde{\mu}} \right] \vx_s^\top \vr^{(t)}_{k},
    \end{align*}
    where in the last line we use the fact $\forall x\le 1: \exp x \le 1+x+x^2$.
    Since $- t^{-0.5\tilde{\mu}}$ decreases to zero more slowly than the other term, the last term is negative for a sufficiently large $t$.
    \item if $\vx_s^\top \vr^{(t)}_k < - t^{-0.5 \tilde{\mu}}$:\\
    Since $\vx_s^\top \vr^{(t)}_k<0$, $\exp\left(- \vx_s^\top\vr^{(t)}_k\right)>1$ holds. Therefore \begin{align*}
        \frac{M}{Kt}\alpha_s \left[\exp\left( - \vx_s^\top\vr^{(t)}_k\right) - 1 \right] \vx_s^\top \vr^{(t)}_{k}\le 0.
    \end{align*}
\end{enumerate}
To sum up, there exist $C_1, C_2 >0, \tilde{t} \ge \max\{t_+, t_-\}$ such that for all $t\ge \tilde{t}$,
\begin{align*}
    (\vr^{(t)}_{k+1} - \vr^{(t)}_k)^\top \vr^{(t)}_k \le C_1 t^{-\theta} + C_2 t^{-1 -0.5 \tilde{\mu}}, \forall k\in[0:K\!-\!1].
\end{align*}

\subsection{\texorpdfstring{Non-Asymptotic Loss Convergence Analysis (Proof Sketch of \cref{thm:cyclic_loss_convergence})}{Non-Asymptotic Loss Convergence Analysis (Proof Sketch of Theorem 3.3)}}
\label{subsec:proofsketch_cyclic_loss_convergence}

The analysis in \cref{subsec:proofsketch_cyclic_loss_convergence_asymptotic} can be extended similarly to arbitrary cycle number. That is,
\begin{align*}
    \forall j<J: \gL(\vw^{(Mj+M)}_0) \le \gL(\vw^{(Mj)}_0) - \eta K \left( 1 - \eta K\beta' \right) \norm{ \nabla \gL(\vw^{(Mj)}_0)}^2.
\end{align*}
Thus by applying \cref{lem:smoothness_distance_bound2}, we obtain
{\small
\begin{align}
    &2\sum_{j=0}^{J-1} \eta K \left( \gL(\vw^{(Mj+M)}_0) - \gL(\vz)\right) - \frac{2\eta MK\sigma^4_{\max}\beta}{\phi^2 (1-\eta MK \sigma^2_{\max}\beta)^2}\sum_{j=0}^{J-1} {\frac{\eta K}{1 - \eta K\beta'}} \left( \gL(\vw^{(Mj)}_0) - \gL(\vw^{(Mj+M)}_0)\right) \nonumber\\
    &\le \norm{\vw^{(0)}_0 - \vz}^2 - \norm{\vw^{(MJ)}_0 - \vz}^2, \label{eq:sketch_loss_convergence_nonasymptotic_result1}
\end{align}
}
for any $\vz \in \mathbb{R}^d$. Thus
{\small \begin{align}
    &2\eta K J \left(\gL(\vw^{(MJ)}_0)- \gL(\vz)\right)+ \frac{8\sigma^2_{\max}}{\phi^2}\eta K \left(\gL(\vw^{(MJ)}_0) - \gL(\vw^{(0)}_0) \right) \nonumber\\
    &\le 2\sum_{j=0}^{J-1} \eta K \left( \gL(\vw^{(Mj+M)}_0) - \gL(\vz)\right)+ \frac{8\sigma^2_{\max}}{\phi^2}\eta K \left(\gL(\vw^{(MJ)}_0) - \gL(\vw^{(0)}_0) \right) \nonumber\\
    &= 2\sum_{j=0}^{J-1} \eta K \left( \gL(\vw^{(Mj+M)}_0) - \gL(\vz)\right) - \frac{8\sigma^2_{\max}}{\phi^2}\eta K\sum_{j=0}^{J-1} \left( \gL(\vw^{(Mj)}_0) - \gL(\vw^{(Mj+M)}_0)\right) \nonumber\\
    &2\sum_{j=0}^{J-1} \eta K \left( \gL(\vw^{(Mj+M)}_0) - \gL(\vz)\right) - \frac{2\eta MK\sigma^4_{\max}\beta}{\phi^2 (1-\eta MK \sigma^2_{\max}\beta)^2}\sum_{j=0}^{J-1} {\frac{\eta K}{1 - \eta K\beta'}} \left( \gL(\vw^{(Mj)}_0) - \gL(\vw^{(Mj+M)}_0)\right) \nonumber\\
    &\le \norm{\vw^{(0)}_0 - \vz}^2 - \norm{\vw^{(MJ)}_0 - \vz}^2,
\end{align}}%
where in the first and second inequalities we use the fact that $\gL (\vw^{(Mj)}_0)$ is decreasing and $\eta < \min\{\frac1{2MK\beta\sigma^2_{\max}}, \frac{1}{2K\beta'} \}$. Re-arranging gives
\begin{align*}
    \gL(\vw^{(MJ)}_0) \le &\gL(\vz) + \frac{4\sigma^2_{\max}}{\phi^2} \frac{\gL(\vw^{(0)}_0)}{J} + \frac{\norm{\vw^{(0)}_0 -\vz}^2}{2\eta K J}.
\end{align*}
Let $\vz := \hat{\vw}\ln (MJ)$. Using $\gL(\hat{\vw}\ln (MJ)) \le \abs{I} \ell(\ln (MJ))$, we can obtain $\gO(\frac{\ln^2 J}{J})$ upper bound for the training loss. 
\newpage
\section{\texorpdfstring{Proofs for \cref{sec:cyclic_jointly_separable}: Cyclic Task Ordering, Jointly Separable}{Proofs for Section 3: Cyclic Task Ordering, Jointly Separable}}

For simplicity, we set $y_i=1$ for all $i\in I=[1:N]$ in the proofs, without loss of generality. If not, we can convert every data into $(\widetilde{\vx}_i, \widetilde{y}_i) = (y_i \vx_i, +1)$ by ``absorbing'' the original label into the input.

\subsection{\texorpdfstring{Asymptotic Loss Convergence Analysis (Proof of \cref{thm:cyclic_loss_convergence_asymptotic})}{Asymptotic Loss Convergence Analysis (Proof of Theorem 3.1)}}
\label{subsec:proof_cyclic_loss_convergence_asymptotic}

Let us restate the theorem here for the sake of readability, whose proof sketch is provided in \cref{subsec:proofsketch_cyclic_loss_convergence_asymptotic}.

\cycliclossconvergenceasymptotic*

Let us recap some notation regarding the dataset and loss function.
The training loss of task $m$, $\gL_m (\vw) = \sum_{i\in I_m} \ell (y_i\vx_i^\top \vw)$, is $\beta_m$-smooth for $\beta_m := \beta \lambda_{\max}\!\left(\mX_m\mX_m^\top\right)$ where $\mX_m \in \R^{d\times |I_m|}$ is a data matrix of task $m$ consisting of columns $\{\vx_i : i\in I_m\}$.
Let $\beta_{\rm max}=\max_{m\in [0:M-1]} \beta_m$;
by definition, $\beta_{\rm max} \le \beta\sigma_{\rm max}^2\le \sum_{m\in[0:M-1]} \beta_m$ holds.

In this proof, we rely on the key property of linearly separable data.
\begin{restatable}{lemma}{lemtotallosslowerboundeachloss} 
\label{lem:totalloss_lowerbound_eachloss}
    Suppose that \cref{assum:seperable,assum:loss_shape} holds. For any $\vw \in \mathbb{R}^d$,
    \begin{align*}
        \norm{\nabla \gL(\vw)} \ge \phi \sqrt{\sum_{i\in I}  \left[ \ell' (\vx_i^\top \vw)\right]^2}.
    \end{align*}
\end{restatable}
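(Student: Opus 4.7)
The plan is to lower bound $\|\nabla \gL(\vw)\|$ by projecting onto a well-chosen unit vector and exploiting the margin condition. After using the WLOG reduction $y_i=1$, the gradient becomes $\nabla \gL(\vw) = \sum_{i \in I} \ell'(\vx_i^\top \vw)\,\vx_i$. For any unit vector $\vu$, one has $\|\nabla \gL(\vw)\| \ge |\nabla \gL(\vw)^\top \vu|$, so the task reduces to finding a $\vu$ that makes this inner product sufficiently large.

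The natural candidate is $\vu = -\hat{\vw}/\|\hat{\vw}\|$, because the joint max-margin solution $\hat\vw$ (defined in~\eqref{eq:joint_max_margin}) satisfies $\vx_i^\top\hat\vw/\|\hat\vw\| \ge \phi > 0$ for every $i\in I$ by the definition of the margin~\eqref{eq:maximum_margin}. Combined with the fact that $\ell$ is monotonically decreasing (\cref{assum:loss_shape}), so that $-\ell'(\vx_i^\top\vw) \ge 0$ for all $i$, both factors in
\[
\nabla \gL(\vw)^\top\!\left(-\tfrac{\hat\vw}{\|\hat\vw\|}\right) \;=\; \sum_{i\in I} \bigl(-\ell'(\vx_i^\top\vw)\bigr)\,\tfrac{\vx_i^\top\hat\vw}{\|\hat\vw\|}
\]
are non-negative. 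Bounding $\vx_i^\top\hat\vw/\|\hat\vw\|$ below by $\phi$ termwise yields
\[
\|\nabla \gL(\vw)\| \;\ge\; \phi\sum_{i\in I} |\ell'(\vx_i^\top\vw)|.
\]

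Finally, the $\ell_1$-vs-$\ell_2$ norm inequality on the nonnegative vector with entries $|\ell'(\vx_i^\top\vw)|$ gives $\sum_i |\ell'(\vx_i^\top\vw)| \ge \sqrt{\sum_i [\ell'(\vx_i^\top\vw)]^2}$, which finishes the claim. In fact this shows a slightly stronger $\ell_1$ version of the bound; only the $\ell_2$ form is needed downstream. There is no real technical obstacle here; the only ``choice'' in the proof is picking $\vu = -\hat{\vw}/\|\hat{\vw}\|$, and the rest is a direct consequence of separability (\cref{assum:seperable}) and monotonicity of $\ell$.
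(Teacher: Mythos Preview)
Your proof is correct and relies on the same key idea as the paper: lower bounding $\|\nabla\gL(\vw)\|$ via the inner product with the max-margin direction $\hat{\vw}/\|\hat{\vw}\|$. The paper organizes this slightly differently: it writes $-\nabla\gL(\vw)=\mX\vv'$ with $\vv'_i=-\ell'(\vx_i^\top\vw)\ge0$, bounds $\|\mX\vv'\|\ge\|\vv'\|\cdot\min_{\vv\in\R^N_{\ge0},\,\|\vv\|=1}\|\mX\vv\|$, and then shows this restricted minimum is at least $\phi$ by the same projection onto $\hat\vw/\|\hat\vw\|$. Your route is more direct (you skip the ``minimum over the nonnegative orthant'' detour) and, as you observe, actually yields the stronger $\ell_1$ inequality $\|\nabla\gL(\vw)\|\ge\phi\sum_i|\ell'(\vx_i^\top\vw)|$ before passing to $\ell_2$. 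The paper's formulation has the small side benefit of isolating the quantity $\min_{\vv\ge0,\|\vv\|=1}\|\mX\vv\|\ge\phi$, which it reuses elsewhere (e.g., to note $\phi\le\sigma_{\max}$), but for the lemma itself both arguments are equivalent.
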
 
\begin{proof}
    See \cref{subsubsec:proof_totalloss_lowerbound_eachloss}.
\end{proof}

Also, we use the following lemma, which holds in cyclic continual learning with $M$ tasks.
\begin{restatable}{lemma}{separablegradientsumlemma} 
\label{lem:separable_gradient_sum_lemma}
    Suppose that \cref{assum:seperable,assum:loss_shape} holds. 
    Let any $t \in \mathbb{N}, m\in [0:M-1], k\in[0:K-1]$.
    Then, if we choose the step size as $\eta < \frac{1}{(mK+k)\sigma_{\rm max}^2 \beta}$, we have
    {\small\begin{align*}
        \norm{\vw^{(t+m)}_{k}-\vw^{(t)}_{0} + \eta\left( K\sum_{i=0}^{m-1} \nabla \gL^{(t+i)} (\vw^{(t)}_{0}) + k\nabla \gL^{(t+m)} (\vw^{(t)}_{0}) \right)} \phantom{, }\\
        \le {\frac{\eta^2(mK+k) K \sigma_{\max}^3 \beta}{\phi \{1-\eta(mK+k) \sigma_{\max}^2 \beta\}}} \norm{\nabla \gL(\vw_0^{(t)})},\\
        \norm{\vw^{(t+m)}_{k} - \vw^{(t)}_{0}} \le {\frac{\eta  K \sigma_{\max}}{\phi \{1-\eta(mK+k) \sigma_{\max}^2 \beta\}}} \norm{\nabla \gL(\vw_0^{(t)})},\\
        \norm{\nabla \gL(\vw^{(t+m)}_{k}) - \nabla \gL(\vw^{(t)} _{0}) } \le {\frac{\eta K \sigma_{\max}^3 \beta}{\phi \{1-\eta(mK+k) \sigma_{\max}^2 \beta\}}} \norm{\nabla \gL(\vw_0^{(t)})}.
    \end{align*}}
\end{restatable}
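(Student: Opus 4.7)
~~The plan is to reindex the double sequence $\{\vw^{(t+i)}_k\}$ as a single sequence, bound the partial sum of ``frozen'' gradients via a disjointness argument, and then close a discrete Gr\"onwall-type induction. Concatenate the iterates into $\vu_0,\ldots,\vu_N$ with $N:=mK+k$, $\vu_0=\vw^{(t)}_0$, $\vu_j = \vw^{(t+\lfloor j/K\rfloor)}_{j\bmod K}$, and $\vu_{j+1}=\vu_j-\eta\nabla\gL^{(\tau_j)}(\vu_j)$ where $\tau_j:=t+\lfloor j/K\rfloor$. Writing $\vg_j:=\nabla\gL^{(\tau_j)}(\vw_0^{(t)})$ and defining the linearization error
\begin{equation*}
    E_j := \vu_j - \vu_0 + \eta\sum_{i=0}^{j-1}\vg_i,
\end{equation*}
we have $E_0=\vzero$ and $E_{j+1}-E_j=-\eta\bigl(\nabla\gL^{(\tau_j)}(\vu_j)-\vg_j\bigr)$. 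Since each $\gL^{(\tau_j)}$ is $\sigma_{\max}^2\beta$-smooth, this yields $\|E_{j+1}-E_j\|\le \eta\sigma_{\max}^2\beta\,\|\vu_j-\vu_0\|$, and telescoping gives $\|E_{j+1}\|\le \eta\sigma_{\max}^2\beta\sum_{i=0}^{j}\|\vu_i-\vu_0\|$.

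The crucial step is a uniform bound on the partial gradient sum in which the factor of $K$ (and \emph{not} $mK+k$) appears. For $j=m'K+k'$ with $m'\le m\le M-1$, we may rewrite
\begin{equation*}
    \sum_{i=0}^{j-1}\vg_i = K\sum_{i=0}^{m'-1}\nabla\gL^{(t+i)}(\vw_0^{(t)}) + k'\,\nabla\gL^{(t+m')}(\vw_0^{(t)}) = \mX\vv,
\end{equation*}
where the entries of $\vv$ are $K\ell'(\vx_p^\top\vw_0^{(t)})$ for $p\in I_{t+i}$ ($i<m'$), $k'\ell'(\vx_p^\top\vw_0^{(t)})$ for $p\in I_{t+m'}$, and $0$ otherwise. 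Because $m<M$, these index sets are pairwise disjoint, so every entry of $\vv$ has magnitude at most $K\abs{\ell'(\vx_p^\top\vw_0^{(t)})}$, and
\begin{equation*}
    \Bigl\|\sum_{i=0}^{j-1}\vg_i\Bigr\|^2 \le \sigma_{\max}^2 K^2 \sum_{p\in I}\bigl[\ell'(\vx_p^\top\vw_0^{(t)})\bigr]^2 \le \frac{K^2\sigma_{\max}^2}{\phi^2}\,\|\nabla\gL(\vw_0^{(t)})\|^2,
\end{equation*}
where the last step invokes \cref{lem:totalloss_lowerbound_eachloss}. Setting $D := \frac{\eta K\sigma_{\max}}{\phi}\|\nabla\gL(\vw_0^{(t)})\|$, the triangle inequality then gives $\|\vu_j-\vu_0\|\le \|E_j\|+D$ for all $j\le N$.

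With $c:=\eta\sigma_{\max}^2\beta$, I would prove by induction on $j$ that $\|\vu_j-\vu_0\|\le D/(1-jc)$ for $0\le j\le N$; the step-size assumption guarantees $Nc<1$. The inductive step plugs the hypothesis into the telescoped smoothness bound to get $\|E_{j+1}\|\le c\sum_{i=0}^{j}\tfrac{D}{1-ic}\le \tfrac{(j+1)cD}{1-jc}$, whence $\|\vu_{j+1}-\vu_0\|\le \tfrac{(1+c)D}{1-jc}\le \tfrac{D}{1-(j+1)c}$; the last inequality is elementary algebra equivalent to $c(j+1)\cdot c\ge 0$. Instantiating at $j=N$ yields the second (distance) bound; the first bound is $\|E_N\|\le NcD/(1-Nc)$ recovered along the way; and the third bound follows by applying $\sigma_{\max}^2\beta$-smoothness of the full loss $\gL$ to $\|\nabla\gL(\vu_N)-\nabla\gL(\vu_0)\|$ and invoking the distance bound. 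The main obstacle is the $K$-factor gradient-sum estimate: it critically uses $m<M$ so that the task index sets $I_{t},\ldots,I_{t+m}$ are disjoint, which lets the combined coefficient vector be controlled in $\ell^2$ by $K$ times the aggregate $\ell^2$ norm of $\ell'$ values, rather than by a bound that scales with the total number of iterations $mK+k$.
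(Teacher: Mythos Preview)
Your proposal is correct and follows essentially the same approach as the paper's proof: both expand the iterates, bound the linearization error via per-task smoothness, control the ``frozen'' partial gradient sum using $\|\mX\vv\|\le\sigma_{\max}\|\vv\|$ together with \cref{lem:totalloss_lowerbound_eachloss} (your disjointness observation is exactly what makes the coefficient $K$ rather than $mK+k$), and then close a recursion of the form $\delta_j\le D+c\sum_{i<j}\delta_i$. The only difference is that the paper packages the last step as a cited discrete Gr\"onwall lemma (from \citet{nacson2019stochasticgradientdescentseparable}) yielding $\delta_j\le D/(1-jc)$, whereas you spell out that same induction directly; your intermediate bound $\|E_N\|\le NcD/(1-(N-1)c)$ is in fact slightly tighter than the stated $NcD/(1-Nc)$, so the claimed inequality follows.
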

\begin{proof}
    See \cref{subsubsec:proof_separable_gradient_sum_lemma}.
\end{proof}

Since $\gL$ is a $\sigma_{\max}^2\beta$-smooth function, we get
\begin{align*}
    &\gL \bigopen{\vw^{(Mj+M)}_0} - \gL \bigopen{\vw^{(Mj)}_0} - {\frac{\sigma_{\max}^2\beta}{2}} \norm{\vw^{(Mj+M)}_0 -\vw^{(Mj)}_0}^2\\
    &\le \nabla\gL (\vw^{(Mj)}_0)^\top (\vw^{(Mj+M)}_0 -\vw^{(Mj)}_0)\\
    &= \nabla\gL \bigopen{\vw^{(Mj)}_0}^\top \bigopen{\vw^{(Mj+M)}_0 -\vw^{(Mj)}_0 + \eta K \nabla\gL \bigopen{\vw^{(Mj)}_0} - \eta K \nabla\gL \bigopen{\vw^{(Mj)}_0} }\\
    &\le -\eta K \norm{\nabla\gL \bigopen{\vw^{(Mj)}_0}}^2 + \norm{\nabla\gL \bigopen{\vw^{(Mj)}_0} } \norm{\vw^{(Mj+M)}_0 -\vw^{(Mj)}_0 + \eta K \nabla\gL \bigopen{\vw^{(Mj)}_0}}.
\end{align*}
By \cref{lem:separable_gradient_sum_lemma},
\begin{align*}
    &\gL (\vw^{(Mj+M)}_0) - \gL (\vw^{(Mj)}_0) - {\frac{\sigma_{\max}^2\beta}{2}}\cdot {\frac{(\eta \sigma_{\max} K)^2}{\phi^2 (1-\eta MK \sigma_{\max}^2 \beta )^2}} \norm{\nabla \gL (\vw^{(Mj)}_0)}^2\\
    &\le -\eta K \norm{\nabla\gL (\vw^{(Mj)}_0)}^2 + {\frac{\eta^2 MK^2 \sigma_{\max}^3 \beta}{\phi (1-\eta MK \sigma_{\max}^2 \beta )}} \norm{\nabla \gL (\vw^{(Mj)}_0)}^2.
\end{align*}
Given that $\eta \le {\frac{1}{2MK\sigma_{\max}^2 \beta}}$,
\begin{align}
     &\gL (\vw^{(Mj+M)}_0) - \gL (\vw^{(Mj)}_0) \nonumber\\
     &\le -\eta K \bigset{1 - \eta K \left( {\frac{M \sigma_{\max}^3 \beta}{\phi (1-\eta MK \sigma_{\max}^2 \beta )}} + {\frac{\sigma_{\max}^4 \beta}{2\phi^2 (1-\eta MK \sigma_{\max}^2 \beta )^2}}\right)} \norm{\nabla \gL (\vw^{(Mj)}_0)}^2 \nonumber\\
     &\le -\eta K \left( 1 - \eta K {\frac{2(M\phi + \sigma_{\max}) \sigma_{\max}^3 \beta}{\phi^2}}\right) \norm{\nabla \gL (\vw^{(Mj)}_0)}^2 \nonumber\\
     &= -\eta K \left( 1 - \eta K \beta' \right) \norm{\nabla \gL (\vw^{(Mj)}_0)}^2,
     \label{eq:loss_convergence:decreasing_loss}
\end{align}
where we set $\beta' := {\frac{2(M\phi + \sigma_{\max}) \sigma_{\max}^3 \beta}{\phi^2}}$. 
Given that $\eta < \frac{1}{K\beta'}$, $\gL (\vw^{(Mj+M)}_0) < \gL (\vw^{(Mj)}_0)$ holds.
Note that $\frac{1}{K\beta'} <\frac{1}{2MK\sigma_{\max}^2 \beta}$ because, in the proof of \cref{lem:totalloss_lowerbound_eachloss}, we can show that
\begin{align*}
    \phi \le \min_{\vv\in\R^d_{\ge0}:\norm{\vv}=1} \norm{\mX\vv}  \le \max_{\vv\in\R^d_{\ge0}:\norm{\vv}=1} \norm{\mX\vv} \le \sigma_{\rm max}.
\end{align*}
Also, by \cref{eq:loss_convergence:decreasing_loss},
\begin{align*}
    \sum_{j=0}^{\infty}  \norm{\nabla \gL (\vw^{(Mj)}_0)}^2 \le \frac{\gL (\vw^{(0)}_0) - \lim_{t\to \infty} \gL (\vw^{(Mj)}_0)}{\eta K (1-\eta K \beta')} \le \frac{\gL (\vw^{(0)}_0) }{\eta K (1-\eta K \beta')} < \infty.
\end{align*}
Coupled with \cref{lem:separable_gradient_sum_lemma},
\begin{align*}
    \sum_{j=0}^{\infty} \sum_{m=0}^{M-1} \sum_{k=0}^{K-1}& \norm{\nabla \gL (\vw^{(Mj+m)}_k)}^2\\
    &\le \sum_{j=0}^{\infty} \sum_{m=0}^{M-1} \sum_{k=0}^{K-1} \left( \norm{\nabla \gL (\vw^{(Mj)}_0)} + \norm{\nabla \gL (\vw^{(Mj+m)}_k) - \nabla \gL (\vw^{(Mj)}_0)}  \right)^2\\
    &\le \sum_{j=0}^{\infty} \sum_{m=0}^{M-1} \sum_{k=0}^{K-1} \left( 1+  \frac{\eta K \sigma_{\max}^3 \beta}{\phi \{1-\eta(mK+k) \sigma_{\max}^2 \beta\}} \right)^2 \norm{\nabla \gL (\vw^{(Mj)}_0)}^2\\
    &\le \left( 1+  \frac{\eta K \sigma_{\max}^3 \beta}{\phi \{1-\eta MK \sigma_{\max}^2 \beta\}} \right)^2 MK \sum_{j=0}^{\infty}  \norm{\nabla \gL (\vw^{(Mj)}_0)}^2 < \infty.
\end{align*}
The boundedness of infinite sum of nonzero elements means $\lim_{t\to \infty}\norm{\nabla \gL (\vw^{(t)}_k)}^2 = 0, \forall k\in[0:K-1]$. This leads to $\lim_{t\to \infty} \ell' (\vx_i^\top \vw^{(t)}_k) = 0, \forall i\in I, k\in[0:K-1]$ by \cref{lem:totalloss_lowerbound_eachloss}.  Since $\ell'(u) \to 0$ only when $u\to\infty$, we obtain $\vx_i^\top \vw^{(t)}_k \to \infty, \forall i\in I, k\in[0:K-1]$ and $\lim_{t\to \infty} \gL(\vw_k^{(t)}) = 0, \forall k\in [0:K-1]$. 
Finally, we obtain that $\sum_{j=0}^{\infty} \sum_{k=0}^{K-1} \norm{\vw^{(t)}_{k+1} - \vw^{(t)}_k}^2 <\infty$ followed by
\begin{align*}
    \norm{\nabla \gL(\vw^{(t)}_k)} &\ge \phi \sqrt{\sum_{i\in I}  \left[ \ell' (\vx_i^\top \vw^{(t)}_k)\right]^2} \ge \phi \sqrt{\sum_{i\in I^{(t)}} \left[ \ell' (\vx_i^\top \vw^{(t)}_k)\right]^2}\\
    &\ge {\frac{\phi}{\beta_{m_t}}} \norm{\sum_{i\in I^{(t)}} \ell' (\vx_i^\top \vw^{(t)}_k)\vx_i} = {\frac{\phi}{\beta_{m_t}\eta}} \norm{\vw^{(t)}_{k+1} - \vw^{(t)}_k},
\end{align*}
where we use \cref{lem:totalloss_lowerbound_eachloss} in the first inequality, while we use the fact $\forall \lambda_s \in \sR: \norm{\sum_{s\in I} \lambda_s \vx_s}_2 \le \sigma_{\max} \sqrt{\sum_{s\in I} \lambda_s^2}$ in the third inequality. The last equality is true by the definition of gradient descent.

\subsubsection{\texorpdfstring{Proof of \cref{lem:totalloss_lowerbound_eachloss}}{Proof of Lemma E.1}} \label{subsubsec:proof_totalloss_lowerbound_eachloss}

Let us recall the statement of the lemma for the sake of readability.

\lemtotallosslowerboundeachloss*

Let any $\vw \in \sR^d$.
Observe that if we let $\vv'\in \R^{N}_{\ge 0}$ be a vector whose $i$-th entry is $-\ell'(\vx_i^\top \vw)$, then $-\nabla \gL (\vw) = \mX \vv'$. Because of this,
\begin{align*}
    \norm{\nabla \gL(\vw)} &= \norm{\mX \vv'}\\
    &\ge \norm{\vv'}  \cdot \min_{\vv\in \sR^N_{\ge 0}:\norm{\vv}=1} \norm{\mX\vv}\\
    &= \sqrt{\sum_{i\in I}  \left[ \ell'(\vx^\top_i\vw) \right]^2} \cdot \min_{\vv\in \sR^N_{\ge 0}:\norm{\vv}=1} \norm{\mX\vv}.
\end{align*}
Let $\hat{\vv} := \arg \min_{\vv\in \sR^N_{\ge 0}:\norm{\vv}=1} \norm{\mX\vv}$. Then for max-margin direction $\hat{\vw}$, the following holds.
\begin{align*}
    \norm{\mX\hat{\vv}} \ge {\frac{\hat{\vw}^\top \mX\hat{\vv}}{\norm{\hat{\vw}}} }\ge \phi\norm{\hat{\vv}} = \phi.
\end{align*}
We used Cauchy-Schwarz for the first inequality and the definition of $\hat{\vw}$ for the second one.
This concludes the proof of the lemma.

\subsubsection{\texorpdfstring{Proof of \cref{lem:separable_gradient_sum_lemma}}{Proof of Lemma E.2}} \label{subsubsec:proof_separable_gradient_sum_lemma}

We restate the lemma here for readability.

\separablegradientsumlemma*

To start the proof, observe that
\begin{align*}
    \vw^{(t+m)}_{k} = \vw^{(t)}_{0} - \eta \bigopen{\sum_{i=0}^{m-1}\sum_{h=0}^{K-1} \nabla \gL^{(t+i)} (\vw^{(t+i)}_{h}) + \sum_{h'=0}^{k-1}\nabla \gL^{(t+m)} (\vw^{(t+m)}_{h'})}.
\end{align*}
for all $t \ge0,\ m\in [0:M],\ k\in[0:K]$.
Using this, we can deduce the following with $\beta_{\rm max}$-smoothness and triangle inequality.
{\small\begin{align}
    &\norm{\vw^{(t+m)}_{k}-\vw^{(t)}_{0} + \eta\left( K\sum_{i=0}^{m-1} \nabla \gL^{(t+i)} (\vw^{(t)}_{0}) + k\nabla \gL^{(t+m)} (\vw^{(t)}_{0}) \right)} \nonumber\\
    &=\norm{\eta\sum_{i=0}^{m-1} \sum_{h=0}^{K-1} \left(\nabla \gL^{(t+i)} (\vw^{(t)}_{0}) - \nabla \gL^{(t+i)} (\vw^{(t+i)}_{h})\right) + \eta \sum_{h'=0}^{k-1} \left(\nabla \gL^{(t+m)} (\vw^{(t)}_{0}) - \nabla \gL^{(t+m)} (\vw^{(t+m)}_{h'})\right)} \nonumber\\
    &\le \eta\sum_{i=0}^{m-1} \sum_{h=0}^{K-1} \norm{ \nabla \gL^{(t+i)} (\vw^{(t)}_{0}) - \nabla \gL^{(t+i)} (\vw^{(t+i)}_{h}) } + \eta \sum_{h'=0}^{k-1} \norm{\nabla \gL^{(t+m)} (\vw^{(t)}_{0}) - \nabla \gL^{(t+m)} (\vw^{(t+m)}_{h'})} \nonumber\\
    &\le \eta \beta_{\rm max} \bigopen{\sum_{i=0}^{m-1} \sum_{h=0}^{K-1} \norm{\vw^{(t)}_{0} - \vw^{(t+i)}_{h} } + \sum_{h'=0}^{k-1} \norm{\vw^{(t)}_{0} - \vw^{(t+m)}_{h'}}}. 
    \label{eq:proof_separable_gradient_sum_lemma_step1}
\end{align}}

Moreover, using the fact $\forall \lambda_s \in \sR: \norm{\sum_{s\in I} \lambda_s \vx_s}_2 \le \sigma_{\max} \sqrt{\sum_{s\in I} \lambda_s^2}$,
{\small\begin{align*}
    &\norm{\vw^{(t+m)}_{k} - \vw^{(t)}_{0}}\\
    &\le \norm{-\eta\left( K\sum_{i=0}^{m-1} \nabla \gL^{(t+i)} (\vw^{(t)}_{0}) + k\nabla \gL^{(t+m)} (\vw^{(t)}_{0}) \right)}\\
    &\phantom{\le } + \norm{\vw^{(t+m)}_{k}-\vw^{(t)}_{0} + \eta\left( K\sum_{i=0}^{m-1} \nabla \gL^{(t+i)} (\vw^{(t)}_{0}) + k\nabla \gL^{(t+m)} (\vw^{(t)}_{0}) \right)}\\
    &\le \eta \norm{K \sum_{i=0}^{m-1} \sum_{s \in I^{(t+i)}} \ell'(\vx_s^\top \vw^{(t)}_{0})\vx_s + k \sum_{s \in I^{(t+m)}} \ell'(\vx_s^\top \vw^{(t)}_{0}) \vx_s}\\
    &\phantom{\le} + \norm{\vw^{(t+m)}_{k}-\vw^{(t)}_{0} + \eta\left( K\sum_{i=0}^{m-1} \nabla \gL^{(t+i)} (\vw^{(t)}_{0}) + k\nabla \gL^{(t+m)} (\vw^{(t)}_{0}) \right)}\\
    &\le \eta \sigma_{\max} \sqrt{\sum_{i=0}^{m-1} \sum_{s \in I^{(t+i)}} \left( K\ell'(\vx_s^\top \vw^{(t)}_{0})\right)^2 + \sum_{s \in I^{(t+m)}} \left( k \ell'(\vx_s^\top \vw^{(t)}_{0}) \right)^2} \\
    &\phantom{\le} + \norm{\vw^{(t+m)}_{k}-\vw^{(t)}_{0} + \eta\left( K\sum_{i=0}^{m-1} \nabla \gL^{(t+i)} (\vw^{(t)}_{0}) + k\nabla \gL^{(t+m)} (\vw^{(t)}_{0}) \right)}\\
    &\le \eta K \sigma_{\max} \sqrt{ \sum_{s \in I} \left( \ell'(\vx_s^\top \vw^{(t)}_{0}) \right)^2} + \norm{\vw^{(t+m)}_{k}-\vw^{(t)}_{0} + \eta\left( K\sum_{i=0}^{m-1} \nabla \gL^{(t+i)} (\vw^{(t)}_{0}) + k\nabla \gL^{(t+m)} (\vw^{(t)}_{0}) \right)}.
\end{align*}}

Thus, by \cref{eq:proof_separable_gradient_sum_lemma_step1} and \cref{lem:totalloss_lowerbound_eachloss}, we obtain
\begin{align}
    \begin{aligned}
    &\norm{\vw^{(t+m)}_{k} - \vw^{(t)}_{0}} \\
    &\le \frac{\eta K \sigma_{\max}} {\phi} \norm{\nabla \gL(\vw^{(t)}_{0})} + \eta \sigma_{\max}^2 \beta \left( \sum_{i=0}^{m-1} \sum_{j=0}^{K-1} \norm{\vw^{(t+i)}_{j} - \vw^{(t)}_{0}} + \sum_{j=0}^{k-1}  \norm{\vw^{(t+m)}_{j} - \vw^{(t)}_{0}} \right).
    \end{aligned}
    \label{eq:proof_separable_gradient_sum_lemma_step2}
\end{align}

Here, we use the following technical lemma.

\begin{lemma}[\citealp{nacson2019stochasticgradientdescentseparable}]\label{lem:from_nacson}
    Let $\epsilon$ and $\theta$ be positive constants. Suppose $\delta_k \le \theta + \epsilon \sum_{u=0}^{k-1} \delta_u$ holds for all non-negative integers $k<\frac{1}{\epsilon}$. Then
    \begin{align*}
        \delta_k \le \frac{\theta}{1-k\epsilon}
        \qquad \text{and} \qquad
        \sum_{u=0}^{k-1} \delta_u \le \frac{k \theta}{1-k\epsilon}.
    \end{align*}
    
\end{lemma}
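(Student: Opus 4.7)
The plan is to prove both inequalities simultaneously by finite induction on $k$, over the admissible range $0 \le k < 1/\epsilon$. Since the statement is a standard discrete Gronwall-type inequality, the natural attack is direct induction that feeds the assumed recursion $\delta_k \le \theta + \epsilon \sum_{u=0}^{k-1} \delta_u$ into itself at each step.

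For the base case $k=0$, the empty-sum convention gives $\delta_0 \le \theta = \theta/(1-0\cdot\epsilon)$, and $\sum_{u=0}^{-1} \delta_u = 0$ is vacuously at most $0 \cdot \theta / (1-0\cdot\epsilon)$. For the inductive step, I would assume both bounds hold at index $k$ (with $k+1 < 1/\epsilon$) and first establish the partial-sum bound at $k+1$ by splitting $\sum_{u=0}^{k} \delta_u = \sum_{u=0}^{k-1} \delta_u + \delta_k$, invoking the two inductive hypotheses to obtain the upper bounds $k\theta/(1-k\epsilon)$ and $\theta/(1-k\epsilon)$ respectively, combining them into $(k+1)\theta/(1-k\epsilon)$, and then using the monotonicity $1/(1-k\epsilon) \le 1/(1-(k+1)\epsilon)$ to reach the target $(k+1)\theta/(1-(k+1)\epsilon)$. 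Substituting this back into the recursion at index $k+1$ yields $\delta_{k+1} \le \theta + \epsilon(k+1)\theta/(1-k\epsilon) = \theta(1+\epsilon)/(1-k\epsilon)$ after a one-line simplification.

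The concluding algebraic step will verify $\theta(1+\epsilon)/(1-k\epsilon) \le \theta/(1-(k+1)\epsilon)$, which cross-multiplies (both denominators are positive by the range condition) to $(1+\epsilon)(1-(k+1)\epsilon) \le 1-k\epsilon$; expanding the left-hand side and cancelling common terms leaves $-(k+1)\epsilon^2 \le 0$, which holds for any $\epsilon > 0$. I do not expect any serious obstacle in this argument: it is a mechanical induction paired with an elementary algebraic identity. The only care required is tracking which piece of the inductive hypothesis is used where, and maintaining the hypothesis $k+1 < 1/\epsilon$ so that every denominator of the form $1-j\epsilon$ stays strictly positive throughout the induction.
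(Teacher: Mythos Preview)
Your induction argument is correct and complete. The paper does not give its own proof of this lemma; it simply cites it from \citet{nacson2019stochasticgradientdescentseparable} and applies it directly, so there is no ``paper's proof'' to compare against---your elementary discrete Gronwall induction is exactly the kind of argument one would expect for this statement.
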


By applying the lemma to (\ref{eq:proof_separable_gradient_sum_lemma_step2}), we obtain
\begin{align*}
    \norm{\vw^{(t+m)}_{k} - \vw^{(t)}_{0}} \le \frac{\eta K \sigma_{\max}}{\phi \{1-\eta(mK+k)\sigma_{\max}^2 \beta \}} \norm{\nabla \gL(\vw^{(t)}_{0})}
\end{align*}
and
\begin{align*}
    &\norm{\vw^{(t+m)}_{k}-\vw^{(t)}_{0} + \eta\left( K\sum_{i=0}^{m-1} \nabla \gL^{(t+i)} (\vw^{(t)}_{0}) + k\nabla \gL^{(t+m)} (\vw^{(t)}_{0}) \right)}\\
    &\le \eta \sigma_{\max}^2 \beta \left( \sum_{i=0}^{m-1} \sum_{j=0}^{K-1} \norm{\vw^{(t+i)}_{j} - \vw^{(t)}_{0}} + \sum_{j=0}^{k-1}  \norm{\vw^{(t+m)}_{j} - \vw^{(t)}_{0}} \right)\\
    &\le \frac{\eta^2 (mK+k)K\sigma_{\max}^3 \beta}{\phi \{1-\eta(mK+k)\sigma_{\max}^2 \beta \}} \norm{\nabla \gL(\vw^{(t)}_{0})}.
\end{align*}

Finally, by smoothness,
\begin{align*}
    \norm{\nabla \gL(\vw^{(t+m)}_{k}) - \nabla \gL(\vw^{(t)}_{0})} &\le \sigma^2_{\max} \beta \norm{\vw^{(t+m)}_{k} - \vw^{(t)}_{0}}\\
    &\le \frac{\eta K \sigma_{\max}^3 \beta}{\phi \{1-\eta(mK+k)\sigma_{\max}^2 \beta \}} \norm{\nabla \gL(\vw^{(t)}_{0})},
\end{align*}
which concludes the proof of the lemma.
\subsection{\texorpdfstring{Directional Convergence Analysis (Proof of \cref{thm:cyclic_direction_convergence})}{Directional Convergence Analysis (Proof of Theorem 3.2)}}
\label{subsec:proof_cyclic_direction_convergence}
In this section, we prove our implicit bias result: \cref{thm:cyclic_direction_convergence}.
Moreover, we further discuss the convergence of the residual vector $\vrho^{(t)}_k$, beyond its boundedness, under some additional assumption on the dataset (see \cref{subsubsec:convergence_rho_cyclic}).

\cyclicdirectionconvergence*

Note that we use \cref{assum:non_degenerate_data}, the unique existence of SVM dual variables $\boldsymbol{\alpha}$ that satisfies
\begin{align*}
    &\hat{\vw} = \sum_{s\in S} \alpha_s \vx_s,\\
    \forall s\in S&:\alpha_s>0, \forall s\notin S:\alpha_s=0.
\end{align*}
This assumption holds for almost all data~\citep{soudry2018implicit}.

When the tasks are given in a cyclic order, the following lemma holds. Note that the lemma does not depend on the algorithm.

\begin{lemma} \label{lem:cyclic_direction_convergence_lemma1}
    When tasks are given cyclic, there exists $\check{\vw}, \vm_{t,k} \in \sR^d$ the following holds for all $t\in \mathbb{N}$, $k\in[0:K-1]$.
    \begin{align*}
        K\sum_{u=1}^{t-1} \frac{1}{u} \sum_{s\in S^{(u)}} \alpha_s \vx_s + \frac{k}{t} \sum_{s\in S^{(t)}} \alpha_s \vx_s = \frac{K}{M} \ln\left(\frac{t}{M}\right) \hat{\vw} + \frac{K}{M} \check{\vw} + \vm_{t,k},
    \end{align*}
    \begin{align*}
        \vm_{t,K} := \vm_{t+1,0},
    \end{align*}
    such that $\norm{\vm_{t,k}} = o(t^{-0.5+\epsilon})$, and $\norm{\vm_{t,k+1} - \vm_{t,k}} = \gO(t^{-1})$ for all $k\in[0:K-1], \epsilon>0$, and $\check{\vw}$ only depends on the order of tasks and constant with respect to $t$.
\end{lemma}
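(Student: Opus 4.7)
The plan is to directly evaluate the left-hand side by grouping tasks within each cycle and to apply the classical asymptotic expansion of the partial harmonic sum along an arithmetic progression in order to isolate the $\ln(t/M)\hat{\vw}$ term plus a constant, with a uniformly $\gO(1/t)$ remainder. Note that this lemma is purely a statement about the scalar-vector sequences and has nothing to do with the GD dynamics, so the proof is entirely combinatorial/analytic.

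First, define $\vs_m := \sum_{s \in S_m} \alpha_s \vx_s$ for each $m \in [0:M-1]$; by \cref{assum:non_degenerate_data}, $\sum_{m=0}^{M-1} \vs_m = \hat{\vw}$. Because the task order is cyclic, $S^{(u)} = S_{u \bmod M}$, and hence the left-hand side rewrites as
\begin{align*}
    K \sum_{m=0}^{M-1} \vs_m \!\!\sum_{\substack{1 \le u \le t-1 \\ u \equiv m \,(\bmod\, M)}}\!\! \frac{1}{u} \;+\; \frac{k}{t}\, \vs_{t \bmod M}.
\end{align*}
Next, expand each inner AP-harmonic partial sum. Using the digamma identity $\sum_{j=0}^{N-1} \tfrac{1}{j+a} = \psi(N+a) - \psi(a)$ together with $\psi(x) = \ln x + \gO(1/x)$ (or Euler--Maclaurin), one obtains, for each fixed $m$,
\begin{align*}
    \sum_{\substack{1 \le u \le t-1 \\ u \equiv m \,(\bmod\, M)}} \frac{1}{u} \;=\; \frac{1}{M} \ln\!\left(\frac{t}{M}\right) + a_m + \gO(1/t),
\end{align*}
where $a_m$ depends only on $m$ and $M$ (concretely $a_0 = \gamma/M$ and $a_m = -\psi(m/M)/M$ for $m \ge 1$). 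Summing over $m$, the leading logarithmic terms combine into $\frac{K}{M} \ln(t/M)\hat{\vw}$. Setting $\check{\vw} := M\sum_{m=0}^{M-1} a_m \vs_m$ (which depends only on the task ordering, not on $t$ or $k$) accounts for the constant contribution, and we \emph{define} $\vm_{t,k}$ to be the difference between the left-hand side and $\frac{K}{M}\ln(t/M)\hat{\vw} + \frac{K}{M}\check{\vw}$.

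Finally, verify the three properties. The bound $\norm{\vm_{t,k}} = \gO(1/t) = o(t^{-0.5+\epsilon})$ follows immediately from the uniform $\gO(1/t)$ remainder in the AP-harmonic expansion together with the $\tfrac{k}{t}\vs_{t \bmod M}$ term. For $k \in [0:K-2]$, a direct subtraction gives $\vm_{t,k+1} - \vm_{t,k} = \tfrac{1}{t}\vs_{t \bmod M}$, which is $\gO(1/t)$. For the boundary $k = K-1$, the convention $\vm_{t,K} := \vm_{t+1,0}$ applies, and a direct calculation yields
\begin{align*}
    \vm_{t+1,0} - \vm_{t,K-1} \;=\; \frac{1}{t}\vs_{t \bmod M} \;-\; \frac{K}{M}\ln\!\left(1 + \frac{1}{t}\right)\hat{\vw} \;=\; \gO(1/t),
\end{align*}
using $\ln(1+1/t) = 1/t + \gO(1/t^2)$.

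The main obstacle is not the overall structure but the careful bookkeeping in Step~2: one must extract the $\ln(t/M)$ leading term with exactly the \emph{same} coefficient $1/M$ for every residue class $m$, so that summing against $\vs_m$ produces $\hat{\vw}$ rather than a drifting term, and one must show that the residue is uniformly $\gO(1/t)$ rather than merely $o(1)$. Two minor $\gO(1/t)$ errors have to be tracked simultaneously and absorbed into the remainder: one from the floor function in the range of summation (e.g., $\lfloor (t-1-m)/M \rfloor = t/M + \gO(1)$, giving $\ln N_m(t) = \ln(t/M) + \gO(1/t)$), and one from the asymptotic expansion of $\psi(N_m(t) + m/M)$. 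Once this bookkeeping is in place, the claimed bounds are immediate.
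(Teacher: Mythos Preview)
Your proposal is correct and takes essentially the same approach as the paper: both group the sum by residue class modulo $M$, expand the resulting arithmetic-progression harmonic sums into $\tfrac{1}{M}\ln(t/M)+\text{const}+\gO(1/t)$, define $\check{\vw}$ from the constants and $\vm_{t,k}$ as the remainder, and then verify the increment bounds by direct subtraction. The only minor technical difference is that you invoke the digamma expansion $\psi(x)=\ln x+\gO(1/x)$ to handle each residue class in one stroke, whereas the paper does the more elementary decomposition $\tfrac{1}{v+M(u-1)}=\tfrac{1}{Mu}+\tfrac{1-v/M}{Mu^2+(v-M)u}$ and bounds the correction series by comparison with $\sum 1/u^2$; both yield the same $\gO(1/t)$ remainder and the same constant $\check{\vw}$ up to an equivalent closed form.
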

\begin{proof}
    See \cref{subsubsec:proof_cyclic_direction_convergence_lemma1}.
\end{proof}

We set $\vm_{t,k}$ and $\check{\vw}$ along \cref{lem:cyclic_direction_convergence_lemma1}, and define $\vrho^{(t)}_k$ and $\vr^{(t)}_k$ as
\begin{align}
    \begin{aligned}
        \vw^{(t)}_k &= \ln\bigopen{t}\hat{\vw} + \vrho^{(t)}_k\\
    &= \ln\bigopen{t}\hat{\vw} + \ln\bigopen{\frac{K}{M}}\hat{\vw} + \tilde{\vw} + \frac{M}{K} \vm_{t,k} + \vr^{(t)}_k,
    \end{aligned}
    \label{eq:direction_convergence_decomposition_of_rho}
\end{align}
\begin{align*}
    &\vrho^{(t)}_K = \vrho^{(t+1)}_0,\quad  \vr^{(t)}_K = \vr^{(t+1)}_0,
\end{align*}
where $\tilde{\vw}$ is the solution of
    \begin{align*}
        \forall i\in S : {\eta}\exp{(-\vx_i^{\top} \tilde{\vw})} = \alpha_i,\quad
        \quad \bar{P}(\tilde{\vw} - \vw^{(0)}_0) = 0,
    \end{align*}

which is unique under \cref{assum:non_degenerate_data}. Then by the definition,
\begin{align*}
    \vr^{(t)}_k &=\vw^{(t)}_k - \frac{M}{K}\left(  \frac{K}{M} \ln\bigopen{\frac{K}{M}t} \hat{\vw} + \vm_{t,k}\right) - \tilde{\vw}\\
    &= \vw^{(t)}_k - \frac{M}{K}\left( K\sum_{u=1}^{t-1} \frac{1}{u} \sum_{s\in S^{(u)}} \alpha_s \vx_s + \frac{k}{t} \sum_{s\in S^{(t)}} \alpha_s \vx_s \right) - \ln K \hat{\vw} - \tilde{\vw} + \check{\vw}.
\end{align*}

Under these definitions, we can get the primary lemma of $\vr^{(t)}_k$.

\begin{lemma} \label{lem:cyclic_direction_convergence_lemma2}
    Under \cref{assum:seperable}, \ref{assum:loss_shape}, \ref{assum:tight_exponential_tail}, and \cref{assum:non_degenerate_data}, if learning rate is $\eta < \min\{ \frac{1}{2MK\beta\sigma^2_{\max}}, \frac{\phi^2}{4K\beta\sigma^3_{\max}(M\phi+\sigma_{\max})} \}$, then
    \begin{enumerate}
        \item \label{lem:cyclic_direction_convergence_lemma2_1}
        $\exists \tilde{t}, C_1, C_2 >0$ such that $\forall t>\tilde{t}$,
        \begin{align*}
            (\vr^{(t)}_{k+1} - \vr^{(t)}_k)^\top \vr^{(t)}_k \le C_1 t^{-\theta} + C_2 t^{-1 -0.5 \tilde{\mu}}, \forall k\in[0:K-1].
        \end{align*}
        \item \label{lem:cyclic_direction_convergence_lemma2_2}
        Moreover, for all $\epsilon_1>0$, $\exists \tilde{t}^*, C_3>0$ such that if $\norm{P\vr^{(t)}_k} \ge \epsilon_1$ and $S^{(t)} \ne \emptyset$,
        \begin{align*}
            (\vr^{(t)}_{k+1} - \vr^{(t)}_k)^\top \vr^{(t)}_k \le -C_3 t^{-1}, \forall t>\tilde{t}^*, k\in[0:K-1].
        \end{align*}
    \end{enumerate}
\end{lemma}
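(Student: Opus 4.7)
The plan is to expand $\vr^{(t)}_{k+1}-\vr^{(t)}_k$ using the GD update, rewrite the result via the decomposition \cref{eq:direction_convergence_decomposition_of_rho}, and take an inner product with $\vr^{(t)}_k$. From \cref{lem:cyclic_direction_convergence_lemma1} one reads off $\vm_{t,k+1}-\vm_{t,k} = \tfrac{1}{t}\sum_{s\in S^{(t)}}\alpha_s\vx_s$ (for $k<K-1$; for $k=K-1$ the same holds after identifying $\vm_{t,K}$ with $\vm_{t+1,0}$), so the increment cleanly splits into a non-support contribution $-\eta\sum_{s\in I^{(t)}\setminus S^{(t)}}\ell'(\vx_s^{\top}\vw^{(t)}_k)\vx_s$ and a \emph{corrected} support contribution $-\sum_{s\in S^{(t)}}\bigl[\eta\ell'(\vx_s^{\top}\vw^{(t)}_k)+\tfrac{M}{Kt}\alpha_s\bigr]\vx_s$. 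Since \cref{thm:cyclic_loss_convergence_asymptotic} yields $\vx_i^{\top}\vw^{(t)}_k\to\infty$, I can invoke \cref{assum:tight_exponential_tail} to replace $-\ell'(\vx_s^{\top}\vw^{(t)}_k)$ by $(1+o(1))\exp(-\vx_s^{\top}\vw^{(t)}_k)$; plugging in $\vw^{(t)}_k=\ln(Kt/M)\hat{\vw}+\tilde{\vw}+(M/K)\vm_{t,k}+\vr^{(t)}_k$ together with the defining identity $\eta\exp(-\vx_s^{\top}\tilde{\vw})=\alpha_s$ on $s\in S$ (well-defined by \cref{assum:non_degenerate_data}) makes the leading $t^{-1}$ cancellation on support vectors explicit and reduces each support summand to $\tfrac{M}{Kt}\alpha_s\bigl[e^{-y_s}-1+\text{correction}\bigr]\,y_s$ with $y_s:=\vx_s^{\top}\vr^{(t)}_k$.

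The non-support piece is easy: on $\{s\notin S^{(t)}:y_s\le 0\}$ the inner product is non-positive, and on $\{s\notin S^{(t)}:y_s>0\}$ I apply $xe^{-x}\le 1$ combined with the second-margin bound $\vx_s^{\top}\hat{\vw}\ge\theta>1$ to obtain a $\gO(t^{-\theta})$ upper bound, which delivers the $C_1t^{-\theta}$ term of part 1.

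The main obstacle is the support-vector piece. The factor $(e^{-y}-1)y$ is always non-positive, but converting this to a \emph{quantitative} estimate against the tail remainder from \cref{assum:tight_exponential_tail} and the $\vm_{t,k}$-perturbation from \cref{lem:cyclic_direction_convergence_lemma1} requires careful bookkeeping. I would split into four cases by sign and size of $y_s$ measured against $t^{-\tilde{\mu}/2}$, with $\tilde{\mu}:=\min\{\mu_+,\mu_-,1/4\}$: for $|y_s|\le t^{-\tilde{\mu}/2}$ a direct quadratic Taylor bound gives $\gO(t^{-1-\tilde{\mu}/2})$; for $y_s>t^{-\tilde{\mu}/2}$, the bound $e^{-y}\le 1-y+y^2$ on $y\le 1$ produces a leading negative term that dominates the positive error; for $y_s<-t^{-\tilde{\mu}/2}$, both factors of $(e^{-y}-1)y$ combine to a manifestly non-positive summand. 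The subtle step is to verify that the tail remainder and the $\vm_{t,k}$-correction (of order $o(t^{-1/2+\epsilon})$ by \cref{lem:cyclic_direction_convergence_lemma1}) are uniformly negligible relative to $t^{-1-\tilde{\mu}/2}$, which is precisely what fixes the choice $\tilde{\mu}\le 1/4$. Summing over $s\in S^{(t)}$ and combining with the non-support bound yields part 1.

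For part 2, under the extra hypotheses I must identify some $s^{\star}\in S^{(t)}$ with $|y_{s^{\star}}|\ge\epsilon_1'$: under the natural reading that $P$ denotes the projection onto $\mathrm{span}\{\vx_s:s\in S^{(t)}\}$, this follows immediately from $\|P\vr^{(t)}_k\|\ge\epsilon_1$ and a linear-algebra estimate depending only on the support-vector geometry of the current task (if instead $P$ is the \emph{global} support-vector projection, an additional cyclic-propagation argument is needed to transfer the lower bound into the current stage, and this would be the technical core of part 2). Once such $s^{\star}$ is fixed, $(e^{-y}-1)y$ is bounded above by a strictly negative constant $-c(\epsilon_1')$ on $|y|\ge\epsilon_1'$, so the $s^{\star}$-summand contributes at most $-c'/t$; the remaining support summands remain $\le 0$ up to the same negligible corrections, and the non-support piece is $\gO(t^{-\theta})$ with $\theta>1$. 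All terms other than the $s^{\star}$-contribution are absorbed for $t$ large enough, yielding the $-C_3 t^{-1}$ bound.
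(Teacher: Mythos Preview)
Your proposal is correct and tracks the paper's proof essentially step-for-step: the same decomposition of $\vr^{(t)}_{k+1}-\vr^{(t)}_k$ into non-support and corrected-support pieces, the same $\gO(t^{-\theta})$ bound on the non-support part via $xe^{-x}\le 1$, and the same four-case analysis of the support summands against the threshold $t^{-\tilde\mu/2}$ with $\tilde\mu=\min\{\mu_+,\mu_-,1/4\}$.

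The one place you diverge from the paper is your handling of part~2. In the paper $P$ is indeed the \emph{global} projection onto $\mathrm{span}\{\vx_s:s\in S\}$, but no cyclic-propagation argument is used. Instead the paper argues directly: since $\vx_s=P\vx_s$ for $s\in S^{(t)}\subseteq S$, one has
\[
\max_{s\in S^{(t)}}\bigl|\vx_s^\top\vr^{(t)}_k\bigr|^2
\;\ge\;\frac{1}{|S^{(t)}|}\,\bigl\|X_{S^{(t)}}^\top P\vr^{(t)}_k\bigr\|^2
\;\ge\;\frac{\sigma_{\min}^2\!\bigl(X_{S^{(t)}}\bigr)}{|S^{(t)}|}\,\bigl\|P\vr^{(t)}_k\bigr\|^2,
\]
with $\sigma_{\min}(X_{S^{(t)}})>0$ coming from \cref{assum:non_degenerate_data}. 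This immediately produces some $s^\star\in S^{(t)}$ with $|\vx_{s^\star}^\top\vr^{(t)}_k|$ bounded below in terms of $\epsilon_1$, after which your $(e^{-y}-1)y\le -c(\epsilon')$ argument finishes the job exactly as you describe. So the ``technical core'' you anticipated for the global-$P$ case collapses to a one-line singular-value estimate.
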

\begin{proof}
    See \cref{subsubsec:proof_cyclic_direction_convergence_lemma2}. The proof sketch is provided in \cref{subsubsec:proofsketch_cyclic_direction_convergence_lemma2}.
\end{proof}

By the definition of $\vrho^{(t)}_k$, it is enough to prove $\norm{\vr^{(t)}_k}$ is bounded above in $t$. We use the fact
\begin{align*}
    \norm{\vr^{(t)}_{k+1}}^2 - \norm{\vr^{(t)}_{k}}^2 =  2(\vr^{(t)}_{k+1} - \vr^{(t)}_{k})^\top \vr^{(t)}_{k} + \norm{\vr^{(t)}_{k+1} - \vr^{(t)}_{k}}^2
\end{align*}
to show the boundedness of $\norm{\vr^{(t)}_k}$. For all $k\in[0:K-2]$, let $\va^{(t)}_{k} := \frac{M}{K}(\vm_{t,k+1} - \vm_{t,k})$. And let $\va^{(t)}_{K-1} := \ln(1+\frac{1}{t})\hat{\vw} + \frac{M}{K}(\vm_{t+1,0} - \vm_{t,K-1})$. Since $\vw^{(t)}_{k} = \ln\bigopen{\frac{K}{M}t}\hat{\vw} + \tilde{\vw} + \frac{M}{K} \vm_{t,k} + \vr^{(t)}_k$, $\norm{\vr^{(t)}_{k+1} - \vr^{(t)}_{k}}^2 = \norm{\vw^{(t)}_{k+1} - \vw^{(t)}_{k} - \va^{(t)}_{k}}^2$. Also, by \cref{lem:cyclic_direction_convergence_lemma1}, $\norm{\va^{(t)}_{k}} =\gO(t^{-1})$. Thus, $\exists t_1$ such that $\forall t \ge t_1, \forall k\in[0:K-1] : \norm{\va^{(t)}_{k}} \le t^{-1}$.

Now we can get the following for all $T \ge t_1$.
\begin{align}
    &\sum_{t=t_1}^{T} \sum_{k=0}^{K-1} \norm{\vr^{(t)}_{k+1} - \vr^{(t)}_{k}}^2 = \sum_{t=t_1}^{T} \sum_{k=0}^{K-1} \norm{\vw^{(t)}_{k+1} - \vw^{(t)}_{k} - \va^{(t)}_{k}}^2 \nonumber\\
    &=\sum_{t=t_1}^{T} \sum_{k=0}^{K-1} \norm{\vw^{(t)}_{k+1} - \vw^{(t)}_{k}}^2 + \sum_{t=t_1}^{T} \sum_{k=0}^{K-1} 2(\vw^{(t)}_{k} - \vw^{(t)}_{k+1})^\top \va^{(t)}_{k} +  \sum_{t=t_1}^{T} \sum_{k=0}^{K-1}\norm{\va^{(t)}_{k}}^2 \nonumber\\
    &\le \sum_{t=t_1}^{T} \sum_{k=0}^{K-1} \norm{\vw^{(t)}_{k+1} - \vw^{(t)}_{k}}^2 + 2 \sqrt{\sum_{t=t_1}^{T} \sum_{k=0}^{K-1} \norm{\vw^{(t)}_{k} - \vw^{(t)}_{k+1}}^2 \sum_{t=t_1}^{T} \sum_{k=0}^{K-1}  \norm{\va^{(t)}_{k}}^2}  +  \sum_{t=t_1}^{T} \sum_{k=0}^{K-1}\norm{\va^{(t)}_{k}}^2 \nonumber\\
    &\le \sum_{t=t_1}^{T} \sum_{k=0}^{K-1} \norm{\vw^{(t)}_{k+1} - \vw^{(t)}_{k}}^2 + 2 \sqrt{\sum_{t=t_1}^{T} \sum_{k=0}^{K-1} \norm{\vw^{(t)}_{k} - \vw^{(t)}_{k+1}}^2 \sum_{t=t_1}^{T} \sum_{k=0}^{K-1}  t^{-2}}  +  \sum_{t=t_1}^{T} \sum_{k=0}^{K-1} t^{-2} \nonumber\\
    & <\infty. \label{eq:direction_convergence_residual_sum}
\end{align}
We use Cauchy-Schwarz inequality for the first inequality and the factor that $\sum_{t=t_1}^{T} t^{-2} < \infty $ and $\sum_{t=t_1}^{T} \sum_{k=0}^{K-1} \norm{\vw^{(t)}_{k} - \vw^{(t)}_{k+1}}^2 < \infty $ by \cref{thm:cyclic_loss_convergence_asymptotic}.

Combined with \cref{lem:cyclic_direction_convergence_lemma2} and the fact that $\forall c>1: \sum_{t=1}^{\infty} t^{-c} < \infty$, we get
\begin{align*}
    \norm{\vr^{(t)}_{0}}^2 - \norm{\vr^{(t_1)}_{0}}^2 &= \sum_{u=t_1}^{t-1} \sum_{k=0}^{K-1} \left( \norm{\vr^{(u)}_{k+1}}^2 - \norm{\vr^{(u)}_{k}}^2 \right)\\
    &= \sum_{u=t_1}^{t-1} \sum_{k=0}^{K-1} \left( 2(\vr^{(u)}_{k+1} - \vr^{(u)}_{k})^\top \vr^{(u)}_{k} + \norm{\vr^{(u)}_{k+1} - \vr^{(u)}_{k}}^2 \right) <\infty,
\end{align*}
hence $\norm{\vr^{(t)}_{k}}$ is bounded.

\subsubsection{\texorpdfstring{Proof of \cref{lem:cyclic_direction_convergence_lemma1}}{Proof of Lemma E.4}}
\label{subsubsec:proof_cyclic_direction_convergence_lemma1}

\begin{align*}
    &K\sum_{u=1}^{t-1} \frac{1}{u} \sum_{s\in S^{(u)}} \alpha_s \vx_s + \frac{k}{t} \sum_{s\in S^{(t)}} \alpha_s \vx_s \\
    &=K\sum_{u=1}^{\lfloor \frac{t-1}{M} \rfloor M} \frac{1}{u} \sum_{s\in S^{(u)}} \alpha_s \vx_s + \underbrace{K\sum_{u= \lfloor \frac{t-1}{M} \rfloor M+1}^{t-1} \frac{1}{u} \sum_{s\in S^{(u)}} \alpha_s \vx_s + \frac{k}{t} \sum_{s\in S^{(t)}} \alpha_s \vx_s}_{=: \vm'_{t,k}} \\
    &= K\sum_{u=1}^{\lfloor \frac{t-1}{M} \rfloor M} \frac{1}{u} \sum_{s\in S^{(u)}} \alpha_s \vx_s + \vm'_{t,k}\\
    &=K\sum_{u=1}^{\lfloor \frac{t-1}{M} \rfloor} \left[ \sum_{v=1}^M \frac{1}{v+M(u-1)} \left(\sum_{s\in S^{(v)}} \alpha_s \vx_s \right) \right] + \vm'_{t,k}\\
    &=K\sum_{v=1}^M \left[ \sum_{u=1}^{\lfloor \frac{t-1}{M} \rfloor} \frac{1}{v+M(u-1)} \left(\sum_{s\in S^{(v)}} \alpha_s \vx_s \right) \right] + \vm'_{t,k}.
\end{align*}
Note that $\vm'_{t,k}$ and $\vm'_{t,k+1}-\vm'_{t,k}$ are both $\gO(t^{-1})$ for all $k\in[0:K-1]$. For every $v$,
\begin{align*}
    \sum_{u=1}^{\lfloor \frac{t-1}{M} \rfloor} &\frac{1}{v+M(u-1)} \left(\sum_{s\in S^{(v)}} \alpha_s \vx_s \right)\\
    &= \sum_{u=1}^{\lfloor \frac{t-1}{M} \rfloor}  \left[\frac{1}{Mu} + \frac{1-\frac{v}{M}}{Mu^2 + (v-M)u}\right]\left(\sum_{s\in S^{(v)}} \alpha_s \vx_s \right)\\
    &=\left[ \frac{1}{M} \left( \ln \left({\lfloor \frac{t-1}{M} \rfloor}\right) + \gamma +\gO(t^{-1}) \right) + \sum_{u=1}^{\lfloor \frac{t-1}{M} \rfloor} \frac{1-\frac{v}{M}}{Mu^2 + (v-M)u} \right] \left(\sum_{s\in S^{(v)}} \alpha_s \vx_s \right)\\
    &=\left[ \frac{1}{M} \left( \ln\left(\frac{t-1}{M}\right) + \gamma +\gO(t^{-1}) \right) + \sum_{u=1}^{\lfloor \frac{t-1}{M} \rfloor} \frac{1-\frac{v}{M}}{Mu^2 + (v-M)u} \right] \left(\sum_{s\in S^{(v)}} \alpha_s \vx_s \right)\\
    &=\left[ \frac{1}{M} \left( \ln\left(\frac{t}{M}\right) + \gamma +\gO(t^{-1}) \right) + \sum_{u=1}^{\lfloor \frac{t-1}{M} \rfloor} \frac{1-\frac{v}{M}}{Mu^2 + (v-M)u} \right] \left(\sum_{s\in S^{(v)}} \alpha_s \vx_s \right),
\end{align*}
where, in the last three equalities, we use the fact
\begin{align*}
    \sum_{u=1}^t \frac{1}{u} = \ln t + \gamma +\gO(t^{-1}),\\
    \ln \left(t \right) - \ln \left({\lfloor t \rfloor}\right) =\gO(t^{-1}),\\
    \ln \left(t \right) - \ln \left(t-1\right)=\gO(t^{-1}),
\end{align*}
where $\gamma$ is the Euler-Mascheroni constant. Since $1\le v \le M$, the inequality $\frac{1-\frac{v}{M}}{Mu^2 + (v-M)u} \le \frac{1-\frac{v}{M}}{vu^2}$ holds. Therefore, the series $\sum_{u} \frac{1-\frac{v}{M}}{Mu^2 + (v-M)u}$ converges with a rate $\gO(t^{-1})$.
\begin{align*}
    \sum_{u=1}^{\lfloor \frac{t-1}{M} \rfloor} \frac{1-\frac{v}{M}}{Mu^2 + (v-M)u} &= \sum_{u=1}^{\infty} \frac{1-\frac{v}{M}}{Mu^2 + (v-M)u} - \sum_{u=\lfloor \frac{t-1}{M} \rfloor + 1}^{\infty} \frac{1-\frac{v}{M}}{Mu^2 + (v-M)u}\\
    &=\sum_{u=1}^{\infty} \frac{1-\frac{v}{M}}{Mu^2 + (v-M)u} +\gO(t^{-1}).
\end{align*}
Hence,
\begin{align*}
    &K\sum_{v=1}^M \left[ \sum_{u=1}^{\lfloor \frac{t-1}{M} \rfloor} \frac{1}{v+M(u-1)} \left(\sum_{s\in S^{(v)}} \alpha_s \vx_s \right) \right]\\
    &=\frac{K}{M}\left( \ln\frac{t}{M} +\gamma \right) \left(\sum_{s\in S} \alpha_s \vx_s \right)+ K \sum_{v=1}^M \sum_{u=1}^{\infty} \frac{1-\frac{v}{M}}{Mu^2 + (v-M)u} \left(\sum_{s\in S^{(v)}} \alpha_s \vx_s \right) + \vm''_{t}\\
    &=\frac{K}{M}\left( \ln\frac{t}{M} +\gamma \right) \hat{\vw}+ K \sum_{v=1}^M \sum_{u=1}^{\infty} \frac{1-\frac{v}{M}}{Mu^2 + (v-M)u} \left(\sum_{s\in S^{(v)}} \alpha_s \vx_s \right) + \vm''_{t}\\
    &=\frac{K}{M} \ln(\frac{t}{M}) \hat{\vw}+\frac{K}{M}\check{\vw} + \vm''_{t},
\end{align*}
where $\check{\vw}:=\gamma \hat{\vw} + M\sum_{v=1}^M \sum_{u=1}^{\infty} \frac{1-\frac{v}{M}}{Mu^2 + (v-M)u} \left(\sum_{s\in S^{(v)}} \alpha_s \vx_s \right)$, and $\norm{\vm''_{t}} =\gO(t^{-1})$.

Finally, for all $k\in[0:K-1]$, let
\begin{align*}
    \vm_{t,k}:= K &\sum_{u=1}^{t-1} \frac{1}{u} \sum_{s\in S^{(u)}} \alpha_s \vx_s + \frac{k}{t} \sum_{s\in S^{(t)}} \alpha_s \vx_s - \frac{K}{M} \ln(\frac{t}{M})  \hat{\vw} - \frac{K}{M}\check{\vw}
\end{align*}
and
\begin{align*}
    \vm_{t,K} := \vm_{t+1,0}.
\end{align*}
Then $\norm{\vm_{t,k}} = \norm{\vm'_{t,k} +\vm''_{t}} =\gO(t^{-1})$, and
\begin{align*}
    \norm{\vm_{t,k+1}-\vm_{t,k}} = \norm{\frac{1}{t} \sum_{s\in S^{(t)}} \alpha_s \vx_s} =\gO(t^{-1}),  \quad (k=0, ..., K-2)\\
    \norm{\vm_{t+1,0} - \vm_{t,K-1}} = \norm{\frac{1}{t} \sum_{s\in S^{(t)}} \alpha_s \vx_s - \frac{K}{M} \ln (1+ t^{-1})\hat{\vw}} =\gO(t^{-1}).
\end{align*}

\subsubsection{\texorpdfstring{Proof of \cref{lem:cyclic_direction_convergence_lemma2}}{Proof of Lemma E.5}} \label{subsubsec:proof_cyclic_direction_convergence_lemma2}
We use \cref{assum:tight_exponential_tail} here. That is, there exist positive constants $\mu_+, \mu_-$, and $\bar{u}$ such that $\forall u > \bar{u}:$
\begin{align*}
    (1-\exp(-\mu_- u))e^{-u} \le -\ell'(u) \le (1+\exp(-\mu_+ u))e^{-u}.
\end{align*}
By definition,
\begin{align*}
    \forall k\in[0:K-1]:\vr^{(t)}_k= \vw^{(t)}_k - \frac{M}{K}\left( K\sum_{u=1}^{t-1} \frac{1}{u} \sum_{s\in S^{(u)}} \alpha_s \vx_s + \frac{k}{t} \sum_{s\in S^{(t)}} \alpha_s \vx_s \right) - \ln K \hat{\vw} - \tilde{\vw} + \check{\vw},
\end{align*}
\begin{align*}
    \vr^{(t)}_K = \vr^{(t+1)}_0.
\end{align*}
Then for all $k\in[0:K-1]$, we get
\begin{align*}
    \vr^{(t)}_{k+1} - \vr^{(t)}_{k} &= \vw^{(t)}_{k+1} - \vw^{(t)}_{k} - \frac{M}{Kt} \sum_{s\in S^{(t)}} \alpha_s \vx_s \\
    &= -\eta \sum_{s\in I^{(t)}} \ell'(\vx_s^\top \vw^{(t)}_k)\vx_s - \frac{M}{Kt} \sum_{s\in S^{(t)}} \alpha_s \vx_s\\
    &= -\eta \sum_{s\in I^{(t)}\setminus S^{(t)}} \ell'(\vx_s^\top \vw^{(t)}_k)\vx_s - \sum_{s\in S^{(t)}} \left[ \eta \ell'(\vx_s^\top \vw^{(t)}_k) + \frac{M}{Kt}\alpha_s \right]\vx_s.
\end{align*}
Hence,

\begin{align}
    &\left(\vr^{(t)}_{k+1} - \vr^{(t)}_{k}\right)^\top \vr^{(t)}_{k} \nonumber\\
    &= -\eta \sum_{s\in I^{(t)}\setminus S^{(t)}} \ell'(\vx_s^\top \vw^{(t)}_k)\vx_s^\top \vr^{(t)}_{k} - \sum_{s\in S^{(t)}} \left[ \eta \ell'(\vx_s^\top \vw^{(t)}_k) + \frac{M}{Kt}\alpha_s \right]\vx_s^\top \vr^{(t)}_{k} \nonumber\\
    &=-\eta \sum_{s\in I^{(t)}\setminus S^{(t)}} \ell'\left(\ln\bigopen{\frac{K}{M}t}\vx_s^\top\hat{\vw} + \frac{M}{K} \vx_s^\top \vm_{t,k} + \vx_s^\top\tilde{\vw} + \vx_s^\top\vr^{(t)}_k\right)\vx_s^\top \vr^{(t)}_{k} \label{eq:proof_cyclic_direction_convergence_lemma2_term1}\\
    &\phantom{=} - \sum_{s\in S^{(t)}} \left[ \eta \ell'\left(\ln\bigopen{\frac{K}{M}t} + \frac{M}{K} \vx_s^\top \vm_{t,k} + \vx_s^\top\tilde{\vw} + \vx_s^\top\vr^{(t)}_k\right) + \frac{M}{Kt}\alpha_s \right]\vx_s^\top \vr^{(t)}_{k} .\label{eq:proof_cyclic_direction_convergence_lemma2_term2}
\end{align}

The behavior of each term can be analyzed when stage $t$ is large. To achieve this, we first characterize five stages.
\begin{align*}
    t_5&:=\min\{t' \mid \forall t\ge t',\forall k\in[0:K-1],\forall s\in I : \vx_s^\top \vw^{(t)}_k \ge \bar{u} \}\\
    t_6&:=\min\{t' \mid \forall t\ge t',\forall k\in[0:K-1],\forall s\in I : \vx_s^\top \vw^{(t)}_k \ge 0 \}\\
    t_7&:=\min\{t' \mid \forall t\ge t',\forall k\in[0:K-1],\forall s\in I : \exp\left(-\frac{M}{K}\vx_s^\top \vm_{t,k}\right) \le 2 \}\\
    t_8&:=\min\{t' \mid \forall t\ge t',\forall k\in[0:K-1],\forall s\in I : \exp\left(-\frac{M}{K}\vx_s^\top \vm_{t,k}\right) \ge \frac{1}{2} \}\\
    t_9&:=\min\{t' \mid \forall t\ge t',\forall k\in[0:K-1],\forall s\in I : \exp\left(-\mu_-\vx_s^\top \vw^{(t)}_k \right) \le \frac{1}{2} \}
\end{align*}
Such $t_5 \sim t_9$ exist since $\forall s\in I, \forall k\in[0:K-1]: \lim_{t\to \infty}\vx_s^\top \vw^{(t)}_k = \infty$ by \cref{thm:cyclic_loss_convergence_asymptotic}, and $\forall k\in[0:K-1]:\lim_{t\to\infty}\norm{\vm_{t,k}}=0$ by \cref{lem:cyclic_direction_convergence_lemma1}.

Then for all $t\ge\max \{t_5, t_6, t_7, t_8, t_9\}$, the first term (\ref{eq:proof_cyclic_direction_convergence_lemma2_term1}) can be upper bounded as below:
\begingroup
\allowdisplaybreaks
\begin{align}
    &-\eta \sum_{s\in I^{(t)}\setminus S^{(t)}} \ell'(\vx_s^\top \vw^{(t)}_k)\vx_s^\top \vr^{(t)}_{k} \le -\eta \sum_{ \substack{s\in I^{(t) }\setminus S^{(t)} \\ \vx_s^\top \vr^{(t)}_{k} >0}} \ell'(\vx_s^\top \vw^{(t)}_k)\vx_s^\top \vr^{(t)}_{k} \nonumber\\
    &\le \eta \sum_{ \substack{s\in I^{(t) }\setminus S^{(t)} \\ \vx_s^\top \vr^{(t)}_{k} >0}} \left(1+\exp(-\mu_+ \vx_s^\top \vw^{(t)}_k)\right)\exp(-\vx_s^\top \vw^{(t)}_k)\vx_s^\top \vr^{(t)}_{k} && \explain{$t\ge t_5$} \nonumber\\
    &\le \eta \sum_{ \substack{s\in I^{(t) }\setminus S^{(t)} \\ \vx_s^\top \vr^{(t)}_{k} >0}} 2 \exp\left( -\ln\bigopen{\frac{K}{M}t}\vx_s^\top\hat{\vw} - \frac{M}{K} \vx_s^\top \vm_{t,k} - \vx_s^\top\tilde{\vw} - \vx_s^\top\vr^{(t)}_k \right)\vx_s^\top \vr^{(t)}_{k} && \explain{$t\ge t_6$} \nonumber\\
    &\le \sum_{ \substack{s\in I^{(t) }\setminus S^{(t)} \\ \vx_s^\top \vr^{(t)}_{k} >0}} 2 \alpha_s \exp\left( -\ln\bigopen{\frac{K}{M}t}\vx_s^\top\hat{\vw} - \frac{M}{K} \vx_s^\top \vm_{t,k} - \vx_s^\top\vr^{(t)}_k \right)\vx_s^\top \vr^{(t)}_{k} \label{eq:cyclic_direction_convergence_lemma2_mid1}\\
    &\le\sum_{ \substack{s\in I^{(t) }\setminus S^{(t)} \\ \vx_s^\top \vr^{(t)}_{k} >0}} 2 \alpha_s \exp\left( -\ln\bigopen{\frac{K}{M}t}\vx_s^\top\hat{\vw} - \frac{M}{K} \vx_s^\top \vm_{t,k}\right) \label{eq:cyclic_direction_convergence_lemma2_mid2}\\
    &\le\sum_{ \substack{s\in I^{(t) }\setminus S^{(t)} \\ \vx_s^\top \vr^{(t)}_{k} >0}} 4 \alpha_s \exp\left( -\ln\bigopen{\frac{K}{M}t}\vx_s^\top\hat{\vw}\right) && \explain{$t\ge t_7$}\\
    &\le 4 N (\max_s \alpha_s) \left( \frac{Kt}{M}\right)^{-\theta} ,\label{eq:cyclic_direction_convergence_lemma2_mid3}
\end{align}
\endgroup
where in (\ref{eq:cyclic_direction_convergence_lemma2_mid1}) we use the definition of $\tilde{\vw}$, in (\ref{eq:cyclic_direction_convergence_lemma2_mid2}) we use the fact $\forall x\ge 0: x\exp(-x)\le 1$, and in (\ref{eq:cyclic_direction_convergence_lemma2_mid3}) we use $\forall s\in I^{(t) }\setminus S^{(t)} :x_s^\top \hat{\vw} \ge \theta$.
Now we examine the second term (\ref{eq:proof_cyclic_direction_convergence_lemma2_term2}). Given $t\ge t_5$, it can be divided into two cases.
\begin{align*}
    -\ell'(\vx_s^\top \vw^{(t)}_k)\vx_s^\top \vr^{(t)}_{k} \le
\begin{cases}
\left(1+\exp(-\mu_+ \vx_s^\top \vw^{(t)}_k)\right)\exp(-\vx_s^\top \vw^{(t)}_k)\vx_s^\top \vr^{(t)}_{k} & \mbox{if } \vx_s^\top \vr^{(t)}_{k}>0 \\
\left(1-\exp(-\mu_- \vx_s^\top \vw^{(t)}_k)\right)\exp(-\vx_s^\top \vw^{(t)}_k)\vx_s^\top \vr^{(t)}_{k} & \mbox{if } \vx_s^\top \vr^{(t)}_{k}\le0
\end{cases}
\end{align*}
For each $s\in S$, define $A^{(t)}_{s,k}$ as
\begin{align*}
    A^{(t)}_{s,k}:=
\begin{cases}
1+\exp(-\mu_+ \vx_s^\top \vw^{(t)}_k) & \mbox{if } \vx_s^\top \vr^{(t)}_{k}>0 \\
1-\exp(-\mu_- \vx_s^\top \vw^{(t)}_k) & \mbox{if } \vx_s^\top \vr^{(t)}_{k}\le0
\end{cases}
\end{align*}
Then, we can use
\begin{align*}
    -\ell'(\vx_s^\top \vw^{(t)}_k)\vx_s^\top \vr^{(t)}_{k} \le A^{(t)}_{s,k} \exp(-\vx_s^\top \vw^{(t)}_k)\vx_s^\top \vr^{(t)}_{k}
\end{align*}
in any $s\in S, k\in[0:K-1]$. Therefore the second term (\ref{eq:proof_cyclic_direction_convergence_lemma2_term2}) is bounded
\begin{align*}
    &- \sum_{s\in S^{(t)}} \left[ \eta \ell'\left(\ln\left(\frac{K}{M}t\right) + \frac{M}{K} \vx_s^\top \vm_{t,k} + \vx_s^\top\tilde{\vw} + \vx_s^\top\vr^{(t)}_k\right) + \frac{M}{Kt}\alpha_s \right]\vx_s^\top \vr^{(t)}_{k}\\
    &\le \sum_{s\in S^{(t)}} \left[\eta A^{(t)}_{s,k} \exp\left(-\ln\left(\frac{K}{M}t\right) - \frac{M}{K} \vx_s^\top \vm_{t,k} - \vx_s^\top\tilde{\vw} - \vx_s^\top\vr^{(t)}_k\right) - \frac{M}{Kt}\alpha_s  \right] \vx_s^\top \vr^{(t)}_{k}\\
    &= \sum_{s\in S^{(t)}} \left[A^{(t)}_{s,k} \frac{M\alpha_s}{Kt} \exp\left(- \frac{M}{K} \vx_s^\top \vm_{t,k} - \vx_s^\top\vr^{(t)}_k\right) - \frac{M}{Kt}\alpha_s  \right] \vx_s^\top \vr^{(t)}_{k}\\
    &= \sum_{s\in S^{(t)}} \frac{M}{Kt}\alpha_s \left[A^{(t)}_{s,k} \exp\left(- \frac{M}{K} \vx_s^\top \vm_{t,k} - \vx_s^\top\vr^{(t)}_k\right) - 1 \right] \vx_s^\top \vr^{(t)}_{k}.
\end{align*}
Now we analyze each $s \in S^{(t)}$ by dividing into cases. Note that $\abs{\frac{M}{K}\vx_s^\top \vm_{t,k}} = o(t^{-0.5+\epsilon})$ for all $\epsilon>0$. Therefore if we set $\tilde{\mu} = \min\{\mu_+, \mu_-, 0.25\}$, then $\abs{\frac{M}{K}\vx_s^\top \vm_{t,k}} = o(t^{-\tilde{\mu}})$.
\begin{enumerate}[leftmargin=15pt]
    \item if $0\le \vx_s^\top \vr^{(t)}_k \le C_7 t^{-0.5 \tilde{\mu}}$:
    \begin{align*}
        &\frac{M}{Kt}\alpha_s \left[A^{(t)}_{s,k} \exp\left(- \frac{M}{K} \vx_s^\top \vm_{t,k} - \vx_s^\top\vr^{(t)}_k\right) - 1 \right] \vx_s^\top \vr^{(t)}_{k}\\
        &\le \frac{M}{Kt}\alpha_s\left[2 \exp\left(- \frac{M}{K} \vx_s^\top \vm_{t,k} - \vx_s^\top\vr^{(t)}_k\right) - 1 \right] \vx_s^\top \vr^{(t)}_{k} &&\explain{$t\ge t_6$}\\
        &\le \frac{M}{Kt}\alpha_s\left[4 \exp\left(-\vx_s^\top\vr^{(t)}_k\right) - 1 \right] \vx_s^\top \vr^{(t)}_{k} &&\explain{$t\ge t_7$}\\
        &\le \left(\max_s \alpha_s\right) \frac{4MC_7}{K} t^{-1-0.5\tilde{\mu}}.
    \end{align*}
    The last inequality holds by the case condition $0\le \vx_s^\top \vr^{(t)}_k \le C_7 t^{-0.5 \tilde{\mu}}$.
    \item if $- C_7 t^{-0.5 \tilde{\mu}} \le \vx_s^\top \vr^{(t)}_k  \le 0$:
    \begin{align*}
        \frac{M}{Kt}\alpha_s &\left[A^{(t)}_{s,k} \exp\left(- \frac{M}{K} \vx_s^\top \vm_{t,k} - \vx_s^\top\vr^{(t)}_k\right) - 1 \right] \vx_s^\top \vr^{(t)}_{k}\\
        &=\frac{M}{Kt}\alpha_s \left[1 - A^{(t)}_{s,k} \exp\left(- \frac{M}{K} \vx_s^\top \vm_{t,k} - \vx_s^\top\vr^{(t)}_k\right) \right] \abs{\vx_s^\top \vr^{(t)}_{k}}\\
        &\le \frac{M}{Kt}\alpha_s \abs{\vx_s^\top \vr^{(t)}_{k}} \le \frac{M}{Kt}\alpha_s \cdot C_7 t^{-0.5 \tilde{\mu}} \\
        &\le \left(\max_s \alpha_s\right) \frac{MC_7}{K} t^{-1-0.5\tilde{\mu}}.
    \end{align*}
    \item if $C_7 t^{-0.5 \tilde{\mu}} < \vx_s^\top \vr^{(t)}_k$:

    Here, we first examine $A^{(t)}_{s,k}$.
    \begin{align*}
        &A^{(t)}_{s,k} = 1+\exp(-\mu_+ \vx_s^\top \vw^{(t)}_k)\\
        &=1+\exp\left(-\mu_+ \left(\ln\left(\frac{K}{M}t\right) + \frac{M}{K} \vx_s^\top \vm_{t,k} + \vx_s^\top\tilde{\vw} + \vx_s^\top\vr^{(t)}_k\right)\right)\\
        &\le 1+\exp\left(-\mu_+ \left(\ln\left(\frac{K}{M}t\right) + \frac{M}{K} \vx_s^\top \vm_{t,k} + \vx_s^\top\tilde{\vw} \right)\right)\\
        &\le 1+2^{\mu_+}\exp\left(-\mu_+ \left(\ln\left(\frac{K}{M}t\right) + \vx_s^\top\tilde{\vw} \right)\right) && \explain{$t\ge t_7$}\\
        &\le 1+C_8 t^{-\mu_+}.
    \end{align*}
    Therefore,
    \begin{align}
        \frac{M}{Kt}\alpha_s &\left[A^{(t)}_{s,k} \exp\left(- \frac{M}{K} \vx_s^\top \vm_{t,k} - \vx_s^\top\vr^{(t)}_k\right) - 1 \right] \vx_s^\top \vr^{(t)}_{k} \nonumber\\
        &\le \frac{M}{Kt}\alpha_s \left[ \left(1+C_8 t^{-\mu_+}\right) \exp\left(- \frac{M}{K} \vx_s^\top \vm_{t,k} - \vx_s^\top\vr^{(t)}_k\right) - 1 \right] \vx_s^\top \vr^{(t)}_{k} \nonumber\\
        &\le \frac{M}{Kt}\alpha_s \left[ \left(1+C_8 t^{-\mu_+}\right) \exp\left(- \frac{M}{K} \vx_s^\top \vm_{t,k} - C_7 t^{-0.5\tilde{\mu}}\right) - 1 \right] \vx_s^\top \vr^{(t)}_{k}.\label{eq:proof_cyclic_direction_convergence_lemma2_case3}
    \end{align}
    Since $t\ge t_7$, $-\frac{M}{K}\vx_s^\top \vm_{t,k} \le 1$. Now by the fact $\forall x\le 1: \exp x \le 1+x+x^2$,
    \begin{align*}
        \exp\left(- \frac{M}{K} \vx_s^\top \vm_{t,k} \right) \le 1 - \frac{M}{K} \vx_s^\top \vm_{t,k} + \left(\frac{M}{K} \vx_s^\top \vm_{t,k} \right)^2,
    \end{align*}
    \begin{align*}
        \exp\left(- C_7 t^{-0.5\tilde{\mu}} \right) \le 1 - C_7 t^{-0.5\tilde{\mu}} + C_7^2 t^{-\tilde{\mu}} .
    \end{align*}
    Then we get
    \begin{align*}
        &\left(1+C_8 t^{-\mu_+}\right) \exp\left(- \frac{M}{K} \vx_s^\top \vm_{t,k} - C_7 t^{-0.5\tilde{\mu}}\right)\\
        &\le \left(1 - \frac{M}{K} \vx_s^\top \vm_{t,k} + \left(\frac{M}{K} \vx_s^\top \vm_{t,k} \right)^2\right) \left( 1 - C_7 t^{-0.5\tilde{\mu}}\right) + o(t^{-\mu_+})\\
        &\le 1 - \frac{M}{K} \vx_s^\top \vm_{t,k} + \left(\frac{M}{K} \vx_s^\top \vm_{t,k} \right)^2 - C_7 t^{-0.5\tilde{\mu}} + o(t^{-\mu_+})\\
        &\le 1 - C_7 t^{-0.5\tilde{\mu}} + o(t^{-\tilde{\mu}})
    \end{align*}
    where in the last two inequality, we use $\abs{\frac{M}{K}\vx_s^\top \vm_{t,k}} = o(t^{-\tilde{\mu}})$.

    Finally, \cref{eq:proof_cyclic_direction_convergence_lemma2_case3} is bounded
    \begin{align*}
        &\frac{M}{Kt}\alpha_s \left[ \left(1+C_8 t^{-\mu_+}\right) \exp\left(- \frac{M}{K} \vx_s^\top \vm_{t,k} - C_7 t^{-0.5\tilde{\mu}}\right) - 1 \right] \vx_s^\top \vr^{(t)}_{k}\\
        &\le \frac{M}{Kt}\alpha_s \left[- C_7 t^{-0.5\tilde{\mu}} + o(t^{-\tilde{\mu}}) \right] \vx_s^\top \vr^{(t)}_{k}.
    \end{align*}
    Since $- C_7 t^{-0.5\tilde{\mu}}$ decrease to zero slower than the other term, $\exists t_+ \ge \max\{t_5, t_6, t_7, t_8, t_9\}$ such that for all $t\ge t_+$, the last term is negative.
    \item if $\vx_s^\top \vr^{(t)}_k < -C_7 t^{-0.5 \tilde{\mu}}$:

    Since $\vx_s^\top \vr^{(t)}_k<0$, it is enough to show that $A^{(t)}_{s,k} \exp\left(- \frac{M}{K} \vx_s^\top \vm_{t,k} - \vx_s^\top\vr^{(t)}_k\right)>1$ for sufficiently large $t$. Note that $A^{(t)}_{s,k} = 1-\exp(-\mu_- \vx_s^\top \vw^{(t)}_k) >0$ since $t\ge t_6$. If $\exp\left(-\vx_s^\top \vr^{(t)}_k\right) \ge 4$,
    \begin{align*}
        A^{(t)}_{s,k} &\exp\left(- \frac{M}{K} \vx_s^\top \vm_{t,k} - \vx_s^\top\vr^{(t)}_k\right)\\
        &\ge 4 (1-\exp(-\mu_- \vx_s^\top \vw^{(t)}_k)) \exp\left(- \frac{M}{K} \vx_s^\top \vm_{t,k}\right) \ge 1.
    \end{align*}
    The last inequality holds by $t\ge\max\{t_8, t_9\}$. Now, if $\exp\left(-\vx_s^\top \vr^{(t)}_k\right) < 4$,
    \begin{align*}
        A^{(t)}_{s,k} &= 1- \exp\left(-\mu_- \left(\ln\left(\frac{K}{M}t\right) + \frac{M}{K} \vx_s^\top \vm_{t,k} + \vx_s^\top\tilde{\vw} + \vx_s^\top\vr^{(t)}_k\right)\right)\\
        &\ge 1- \left(\frac{4Kt}{M}\right)^{-\mu_-} \exp\left(-\mu_- \left( \frac{M}{K} \vx_s^\top \vm_{t,k} + \vx_s^\top\tilde{\vw} \right)\right)\\
        &\ge 1- \left(\frac{8Kt}{M}\right)^{-\mu_-} \exp\left(-\mu_- \vx_s^\top\tilde{\vw} \right) \ge 1- C_9 t^{-\mu_-}.&&\explain{$t\ge t_7$}
    \end{align*}
    Also, by the fact $\forall x: \exp x \ge 1+x$,
    \begin{align*}
        \exp\left(- \frac{M}{K} \vx_s^\top \vm_{t,k} - \vx_s^\top\vr^{(t)}_k\right) \ge \left(1- \frac{M}{K} \vx_s^\top \vm_{t,k}\right) \left(1- \vx_s^\top\vr^{(t)}_k\right).
    \end{align*}
    Combined with the former inequality,
    \begin{align*}
        A^{(t)}_{s,k} &\exp\left(- \frac{M}{K} \vx_s^\top \vm_{t,k} - \vx_s^\top\vr^{(t)}_k\right)\\
        &\ge \left(1- C_9 t^{-\mu_-}\right) \left(1- \frac{M}{K} \vx_s^\top \vm_{t,k}\right) \left(1- \vx_s^\top\vr^{(t)}_k\right)\\
        &\ge \left(1- C_9 t^{-\mu_-}\right) \left(1 + o(t^{-\tilde{\mu}})\right) \left(1 + C_7t^{-0.5\tilde{\mu}}\right)\\
        &= 1+ C_7t^{-0.5\tilde{\mu}} - o(t^{-\tilde{\mu}}) .
    \end{align*}
    Since $C_7t^{-0.5\tilde{\mu}}$ decrease to zero slower than the other term, $\exists t_- \ge \max\{t_5, t_6, t_7, t_8, t_9\}$ such that for all $t\ge t_-$, the last equation is larger than 1.
\end{enumerate}
To sum up, there exist $C_1, C_2 >0, \tilde{t} \ge \max\{t_+, t_-\}$ such that for all $t\ge \tilde{t}$,
\begin{align*}
    (\vr^{(t)}_{k+1} - \vr^{(t)}_k)^\top \vr^{(t)}_k \le C_1 t^{-\theta} + C_2 t^{-1 -0.5 \tilde{\mu}}, \forall k\in[0:K-1].
\end{align*}
Now we consider special cases to finish the lemma. For any $\epsilon_2 >0$, the following analysis holds.
\begin{enumerate}[leftmargin=15pt]
    \item If $\vx_s^\top \vr^{(t)}_k \ge \epsilon_2 >0$:
    
    Since $\lim_{t\to\infty}\vm_{t,k} = 0$, there exist $t_1^* \ge \max\{t_+, t_-\}$ such that $\forall t\ge t_1^*, \forall s\in S, \forall k\in[0:K-1]: \abs{\frac{M}{K}\vx_s^\top \vm_{t,k}} < 0.5\epsilon_2$. Also since $\lim_{t\to \infty} \vx_s^\top \vw^{(t)}_k \to \infty$, there exist $t_+^* \ge t_1^*$ such that $\forall t\ge t_+^*, \forall s\in S, \forall k\in[0:K-1]: \exp\left(-\mu_+\vx_s^\top \vw^{(t)}_k \right) \le \exp(0.25 \epsilon_2)-1$.  Therefore for $t\ge t_+^*$,
    \begin{align*}
        \frac{M}{Kt}\alpha_s &\left[A^{(t)}_{s,k} \exp\left(- \frac{M}{K} \vx_s^\top \vm_{t,k} - \vx_s^\top\vr^{(t)}_k\right) - 1 \right] \vx_s^\top \vr^{(t)}_{k}\\
        &\le \frac{M}{Kt}\alpha_s \left[\left(1+\exp(-\mu_+ \vx_s^\top \vw^{(t)}_k)\right)\exp(-0.5\epsilon_2) -1\right] \vx_s^\top \vr^{(t)}_{k} &&\explain{$t\ge t_1^*$}\\
        &\le \frac{M}{Kt}\alpha_s \left(\exp(-0.25\epsilon_2)-1\right) \vx_s^\top \vr^{(t)}_{k} &&\explain{$t\ge t_+^*$}\\
        &\le \min_s \alpha_s \frac{M}{K} \left(\exp(-0.25\epsilon_2)-1\right) \epsilon_2 \frac{1}{t} = -C_+''t^{-1}.
    \end{align*}
    \item If $\vx_s^\top \vr^{(t)}_k \le -\epsilon_2 <0$:
    
    Again, since $\lim_{t\to \infty} \vx_s^\top \vw^{(t)}_k \to \infty$, there exist $t_-^* \ge t_1^*$ such that $\forall t\ge t_-^*, \forall s\in S, \forall k\in[0:K-1]: 1- \exp\left(-\mu_-\vx_s^\top \vw^{(t)}_k \right) \ge \exp(-0.25 \epsilon_2)$.  Therefore for $t\ge t_-^*$,
    \begin{align*}
        \frac{M}{Kt}\alpha_s &\left[A^{(t)}_{s,k} \exp\left(- \frac{M}{K} \vx_s^\top \vm_{t,k} - \vx_s^\top\vr^{(t)}_k\right) - 1 \right] \vx_s^\top \vr^{(t)}_{k}\\
        &\le \frac{M}{Kt}\alpha_s \left[\left(1-\exp(-\mu_- \vx_s^\top \vw^{(t)}_k)\right)\exp(0.5\epsilon_2) -1\right] \vx_s^\top \vr^{(t)}_{k} &&\explain{$t\ge t_1^*$}\\
        &\le \frac{M}{Kt}\alpha_s \left(\exp(0.25\epsilon_2)-1\right) \vx_s^\top \vr^{(t)}_{k} &&\explain{$t\ge t_-^*$}\\
        &\le -\min_s \alpha_s \frac{M}{K} \left(\exp(0.25\epsilon_2)-1\right) \epsilon_2 \frac{1}{t} = -C_-''t^{-1}.
    \end{align*}
\end{enumerate}
In conclusion, for any $\epsilon_1 >0$, if $\norm{P\vr^{(t)}_k} \ge \epsilon_1$ and $S^{(t)}\neq \emptyset$, then
\begin{align*}
    \max_{s\in S^{(t)}} \abs{\vx_s^\top \vr^{(t)}_k}^2 &= \max_{s\in S^{(t)}} \abs{\left(P^\top\vx_s\right)^\top \vr^{(t)}_k}^2\ge \frac{1}{\abs{S^{(t)}}} \sum_{s\in S^{(t)}} \abs{\vx_s^\top P \vr^{(t)}_k}^2\\
    & = \frac{1}{\abs{S^{(t)}}} \norm{X_{ S^{(t)}}^\top P \vr^{(t)}_k}^2 \ge \frac{1}{\abs{S^{(t)}}} \sigma^2_{\min}(X_{ S^{(t)}}) \norm{P\vr^{(t)}_k}^2  \ge \frac{1}{\abs{S^{(t)}}} \sigma^2_{\min}(X_{ S^{(t)}}) \epsilon_1^2
\end{align*}
where $X_{S^{(t)}} \in \sR^{d\times \abs{S^{(t)}}}$ is a matrix which has $\{x_s\mid s \in S^{(t)}\}$ as its columns. By \cref{assum:non_degenerate_data}, $\sigma_{\min}(X_{ S^{(t)}})$ is non-zero. Therefore, for all $\epsilon_1>0$, $\exists \tilde{t}^*, C_3>0$ such that if $\norm{P\vr^{(t)}_k} \ge \epsilon_1$ and $S^{(t)} \ne \emptyset$,
        \begin{align*}
            (\vr^{(t)}_{k+1} - \vr^{(t)}_k)^\top \vr^{(t)}_k \le -C_3 t^{-1},\quad \forall t>\tilde{t}^*,\ \  k\in[0:K-1].
        \end{align*}

\subsubsection{\texorpdfstring{Convergence of $\vrho^{(t)}_k$}{Convergence of rho(t)k}}
\label{subsubsec:convergence_rho_cyclic}

\cref{thm:cyclic_direction_convergence} only shows boundedness of $\vrho^{(t)}_k$. Yet, if an additional mild assumption on data is given, it can be guaranteed that $\vrho^{(t)}_k$ converges to the particular vector.

\begin{assumption}
    \label{assum:non_degenerate_data2} Support vectors span dataset. That is, $\rank\{\vx_i : i\in S\} = \rank\{\vx_i : i\in I\}$.
\end{assumption}

\begin{proposition}\label{prop:cyclic_direction_convergence_rho_converge}
    Under the same setting as \cref{thm:cyclic_direction_convergence} with an additional \cref{assum:non_degenerate_data2}, the ``residual'' converges to $
    \lim_{t\to \infty} \vrho^{(t)}_k = \Tilde{\vw}, \forall k\in[0:K-1]$. Here, $\Tilde{\vw}$ is the unique solution of the following system of equations
    \begin{align*}
        \forall i\in S : {\eta}\exp{(-\vx_i^{\top} \tilde{\vw})} = \alpha_i, \quad
        (I-P)(\tilde{\vw} - \vw^{(0)}_0) = 0,
    \end{align*}
    where $P \in \R^{d\times d}$ is the orthogonal projection matrix to the space spanned by the joint support vectors indexed by $S$.
\end{proposition}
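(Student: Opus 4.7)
The plan is to upgrade the boundedness of $\vr^{(t)}_k$ established in the proof of \cref{thm:cyclic_direction_convergence} to genuine convergence $\vr^{(t)}_k \to \vzero$. Once that is done, substituting into the decomposition $\vrho^{(t)}_k = \tilde{\vw} + \ln(K/M)\hat{\vw} + (M/K)\vm_{t,k} + \vr^{(t)}_k$ derived in \cref{subsec:proof_cyclic_direction_convergence} and using $\vm_{t,k} \to \vzero$ from \cref{lem:cyclic_direction_convergence_lemma1} identifies the limit of $\vrho^{(t)}_k$ uniquely (up to absorbing the constant $\ln(K/M)\hat{\vw}$ into $\tilde{\vw}$ via the normalization convention).

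First I would pin down the ``frozen'' component of $\vr^{(t)}_k$. Under \cref{assum:non_degenerate_data2} the support-vector span coincides with the full data span, so $\bar{P} := I - P$ annihilates every GD increment as well as $\hat{\vw}$ and every $\vm_{t,k}$. Combined with the defining relation $\bar{P}(\tilde{\vw} - \vw^{(0)}_0) = \vzero$, this yields $\bar{P}\vr^{(t)}_k = \vzero$ identically, so $\vr^{(t)}_k = P\vr^{(t)}_k$ always lies in the support-vector subspace.

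Next, I set $a_J := \|\vr^{(MJ)}_0\|^2$ and expand $\|\vr^{(t)}_{k+1}\|^2 - \|\vr^{(t)}_k\|^2 = 2(\vr^{(t)}_{k+1}-\vr^{(t)}_k)^\top \vr^{(t)}_k + \|\vr^{(t)}_{k+1}-\vr^{(t)}_k\|^2$ before telescoping across one cycle. Part~\ref{lem:cyclic_direction_convergence_lemma2_1} of \cref{lem:cyclic_direction_convergence_lemma2} bounds the inner-product term above by a sequence summable in $J$ (the $J^{-\theta}$ and $J^{-1-0.5\tilde{\mu}}$ contributions), and the squared-increment sum is summable by \cref{eq:direction_convergence_residual_sum}. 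A Robbins--Siegmund-type argument then forces $a_J$ to converge to some $a_\infty \geq 0$.

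Finally, I rule out $a_\infty > 0$ via Part~\ref{lem:cyclic_direction_convergence_lemma2_2} of \cref{lem:cyclic_direction_convergence_lemma2}. Since $\hat{\vw} = \sum_{s \in S}\alpha_s \vx_s \neq \vzero$, at least one task $m^*$ satisfies $S_{m^*} \neq \emptyset$, and it is visited in every cycle. If $a_\infty > 0$, the $\ell^2$-summability of increments forces $\|\vr^{(MJ+m)}_k - \vr^{(MJ)}_0\| \to 0$ uniformly in $(m,k)$, so $\|P\vr^{(MJ+m^*)}_k\| \geq \sqrt{a_\infty}/2$ holds for all $k$ and all sufficiently large $J$. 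Applying Part~\ref{lem:cyclic_direction_convergence_lemma2_2} at each such $k$ contributes a drift $-KC_3/(MJ+m^*) = \Theta(-1/J)$ per cycle, whose sum over $J$ diverges to $-\infty$, contradicting $a_J \geq 0$. Hence $a_\infty = 0$, and the same tail-of-increments argument lifts $\vr^{(MJ)}_0 \to \vzero$ to $\vr^{(t)}_k \to \vzero$ for every $k$. The main technical obstacle is this last paragraph: one must verify that the negative drift persists \emph{uniformly} across all $k$ within each late cycle, not at just a single stage, which requires coupling the $\ell^2$-summability of the increments with the hypothesis $a_\infty > 0$.
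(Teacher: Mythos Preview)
Your proposal is correct and follows essentially the same route as the paper: both arguments kill the $\bar P$-component of $\vr^{(t)}_k$ via \cref{assum:non_degenerate_data2}, use the summable upper bound from \cref{lem:cyclic_direction_convergence_lemma2}(\ref{lem:cyclic_direction_convergence_lemma2_1}) together with \cref{eq:direction_convergence_residual_sum}, and then derive a contradiction from the $-C_3 t^{-1}$ drift in \cref{lem:cyclic_direction_convergence_lemma2}(\ref{lem:cyclic_direction_convergence_lemma2_2}) at a stage with $S^{(t)}\neq\emptyset$ (whose harmonic sum diverges). The only organizational difference is that you first establish convergence of $a_J=\|\vr^{(MJ)}_0\|^2$ via a Robbins--Siegmund argument and then rule out $a_\infty>0$, whereas the paper instead introduces the sets $\gT_k=\{t:\|P\vr^{(t)}_k\|<\epsilon_0+\epsilon_1\}$, shows each $\gT_k$ is infinite by the same harmonic-divergence contradiction, and then bounds $\|P\vr^{(t)}_0\|$ on the complementary intervals; your packaging is slightly cleaner but the content is the same.
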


We set $\bar{P} = I - P$ for the convenience of proof.
\begin{proof}
    
By the definition of $\vrho^{(t)}_k = \tilde{\vw} + \frac{M}{K} \vm_{t,k} + \vr^{(t)}_k$, it is enough to prove $\lim_{t\to \infty} \vr^{(t)}_k = 0$.

First of all, since $\vw^{(t)}_{k} = \ln\bigopen{\frac{K}{M}t}\hat{\vw} + \tilde{\vw} + \frac{M}{K} \vm_{t,k} + \vr^{(t)}_k$,
\begin{align*}
    \bar{P} \vr^{(t)}_k &= \bar{P} \vw^{(t)}_k - \ln\bigopen{\frac{K}{M}t} \bar{P}\hat{\vw} - \bar{P}\tilde{\vw} - \frac{M}{K}\bar{P}\vm_{t,k}\\
    &=\bar{P} \vw^{(0)}_0 - \ln\bigopen{\frac{K}{M}t} \bar{P}\hat{\vw} - \bar{P}\tilde{\vw} - \frac{M}{K}\bar{P}\vm_{t,k} \\
    &=\bar{P} \vw^{(0)}_0 - \bar{P}\tilde{\vw} = 0.
\end{align*}
The first line holds under the \cref{assum:non_degenerate_data2} since $\nabla \gL(w)$ is a linear combination of the columns of $X$. that is, $\forall l<t:\bar{P} \nabla \gL^{(l)}(w) = 0$. The remaining lines are true by the definition.

Second, we get to show $P \vr^{(t)}_k \to 0$. By \cref{eq:direction_convergence_residual_sum}, $\lim_{T\to\infty} \sum_{t=t_1}^{T} \sum_{k=0}^{K-1} \norm{\vr^{(t)}_{k+1} - \vr^{(t)}_{k}}^2 =C_4$. That means $\forall k\in[0:K-1]: \lim_{T\to\infty} \norm{\vr^{(T)}_{k+1} - \vr^{(T)}_{k}} = 0$. Therefore, for any $\epsilon_0$, there exists $t_2>0$ such that $\norm{\vr^{(t)}_{k+1} - \vr^{(t)}_{k}} < {\frac{\epsilon_0}{K}}$ for all $t\ge t_2, k\in[0:K-1]$. As a result,
\begin{align*}
    \norm{P \vr^{(t)}_{0}} + {\frac{k}{K}} {\epsilon_0} \ge \norm{P \vr^{(t)}_{k}} \ge \norm{P \vr^{(t)}_{0}} - {\frac{k}{K}} {\epsilon_0}.
\end{align*}

For $t\ge\max\{t_1, t_2, \tilde{t}^*\}$, if $\norm{P\vr^{(t)}_{0}} \ge \epsilon_1+\epsilon_0$ and $S^{(t)} \ne \emptyset$, then $\forall k\in[0:K-1]: \norm{P\vr^{(t)}_{k}} \ge \epsilon_1$. By \cref{lem:cyclic_direction_convergence_lemma2} (\ref{lem:cyclic_direction_convergence_lemma2_2}),
\begin{align*}
    \forall m\in[0:M-1]: \sum_{u=t}^{t+m} \sum_{v=0}^{K-1} (\vr^{(u)}_{v+1} - \vr^{(u)}_v)^\top \vr^{(u)}_v \le -K C_3 t^{-1} + Km\left( C_1 t^{-\theta} + C_2 t^{-1 -0.5 \tilde{\mu}} \right).
\end{align*}

Since $t^{-1}$ decrease to zero slower than $t^{-\theta}$ and $t^{-1 -0.5 \tilde{\mu}}$, there exists $t_3 > \max\{t_1, t_2, \tilde{t}^*\}$, $C_4>0$ such that $-K C_3 t^{-1} + Km\left( C_1 t^{-\theta} + C_2 t^{-1 -0.5 \tilde{\mu}} \right) \le -C_5 t^{-1}$. To sum up, for any $\epsilon_0, \epsilon_2 > 0$, there exists $t_3 > \max\{t_1, t_2, \tilde{t}^*\}$ such that if $\norm{P\vr^{(t)}_0} \ge \epsilon_0 + \epsilon_1$ and $S^{((t))} \ne \emptyset$, then
\begin{align*}
    \forall m\in[0:M-1]: \sum_{u=t}^{t+m} \sum_{v=0}^{K-1} (\vr^{(u)}_{v+1} - \vr^{(u)}_v)^\top \vr^{(u)}_v \le -C_5 t^{-1},
\end{align*}

Now, define two sets for each $k\in[0:K-1]$
\begin{align*}
    \gT_k := \{t>t_3: \norm{P\vr^{(t)}_k} < \epsilon_0 + \epsilon_1\} \\
    \bar{\gT_k} := \{t>t_3: \norm{P\vr^{(t)}_k} \ge \epsilon_0 + \epsilon_1\} 
\end{align*}

We will finish our proof by showing $\bar{\gT_k}$ is finite.

First, every $\gT_k$ is neither empty nor finite. If there exists some $k'$ that $\gT_k'$ is empty or finite, then $\exists t_{\max}\in \bar{\gT_k'}$. Then
\begin{align*}
    \norm{P\vr^{(t)}_0}^2 &- \norm{P\vr^{(t_{\max})}_0}^2 = \norm{\vr^{(t)}_0}^2 - \norm{\vr^{(t_{\max})}_0}^2 \\
    &= \sum_{u=t_{\max}}^{t-1} \sum_{k=0}^{K-1} \left[\norm{\vr^{(u)}_{k+1}}^2 - \norm{\vr^{(u)}_k}^2\right] \\
    &= \sum_{u=t_{\max}}^{t-1} \sum_{k=0}^{K-1} \left[\norm{\vr^{(u)}_{k+1} -\vr^{(u)}_k}^2\right] + 2\sum_{u=t_{\max}}^{t-1} \sum_{k=0}^{K-1} (\vr^{(u)}_{k+1} - \vr^{(u)}_k)^\top \vr^{(u)}_k \\
    &\le C_4 + 2\sum_{u=t_{\max}}^{t-1} \left( \sum_{k\ne k'} (\vr^{(u)}_{k+1} - \vr^{(u)}_k)^\top \vr^{(u)}_k + (\vr^{(u)}_{k'+1} - \vr^{(u)}_{k'})^\top \vr^{(u)}_{k'}\right)\\
    &\le C_4 + C_6 + 2\sum_{\begin{smallmatrix}
t_{\max} \le u \le t-1 \\
S^{(u)} \ne \emptyset
\end{smallmatrix}} (\vr^{(u)}_{k'+1} - \vr^{(u)}_{k'})^\top \vr^{(u)}_{k'}\\
    &\le C_4 + C_6 - 2 C_3 \sum_{\begin{smallmatrix}
    t_{\max} \le u \le t-1 \\
    S^{(u)} \ne \emptyset
    \end{smallmatrix}} u^{-1}.
\end{align*}
The first inequality is true because of \cref{eq:direction_convergence_residual_sum}, while the other inequalities hold due to \cref{lem:cyclic_direction_convergence_lemma2}. As $t$ goes to infinity, the upper bound goes to negative infinity. However, it contradicts the fact that $\norm{\vr^{(t)}_0}$ is bounded.

Before we move on to the final step, note that $\lim_{T\to\infty} \sum_{t=t_1}^{T} \sum_{k=0}^{K-1} \norm{\vr^{(t)}_{k+1} - \vr^{(t)}_{k}}^2 =C_4$ implies
\begin{align*}
    \sum_{u=t_1}^{t} \sum_{k=0}^{K-1} \norm{\vr^{(u)}_{k+1} - \vr^{(u)}_{k}}^2 = C_4 - h(t)
\end{align*}
where $h(t)$ is a positive function monotonic decreasing to zero.

Now, assume that there exists some $k'$ that $\bar{\gT_k}$ is infinite. WLOG, we set $k'=0$. Since $\gT_0$ is infinite, for any $t\in \bar{\gT_0}$ there exists $t', t'' \in \gT_0$ such that $t\in [t'+1:t''-1] \subset \bar{\gT_0}$. We divide it into two cases:
For all $t\in [t'+1:t''-1]$,

\begin{enumerate}[leftmargin=15pt]
    \item if $\abs{[t'+1:t]} < M$, then $\norm{P\vr^{(t)}_0}^2 \le \norm{P\vr^{(t')}_0}^2 + M\epsilon_0 \le (M+1)\epsilon_0 + \epsilon_1$.
    \item if $\abs{[t'+1:t]} \ge M$, let $t^* = \min\{u \in [t'+1:t] : S^{(u)}\ne \emptyset\}$. Then
    \begingroup
    \allowdisplaybreaks
    \begin{align*}
        \norm{P\vr^{(t)}_0}^2 &= \norm{P\vr^{(t^*)}_0}^2 + \sum_{u=t^*}^{t-1} \sum_{k=0}^{K-1} \left[\norm{\vr^{(u)}_{k+1}}^2 - \norm{\vr^{(u)}_k}^2\right]\\
        &=\norm{P\vr^{(t^*)}_0}^2 + \sum_{u=t^*}^{t-1} \sum_{k=0}^{K-1} \left[\norm{\vr^{(u)}_{k+1} - \vr^{(u)}_k}^2 + 2 (\vr^{(u)}_{k+1} - \vr^{(u)}_k)^\top \vr^{(u)}_k\right]\\
        &=\norm{P\vr^{(t^*)}_0}^2 + h(t) - h(t^*) + 2\sum_{u=t^*}^{t-1} \sum_{k=0}^{K-1} \left[ (\vr^{(u)}_{k+1} - \vr^{(u)}_k)^\top \vr^{(u)}_k \right]\\
        &\le (M\epsilon_0 + \epsilon_0 + \epsilon_1)^2 + h(t) - 2 C_5 \sum_{u=0}^{\lfloor \frac{t-1-t^*}{M} \rfloor }{\frac{1}{Mu+t^*}}\\
        &\le (M\epsilon_0 + \epsilon_0 + \epsilon_1)^2 + h(t).
    \end{align*}
    \endgroup
    Since $h(t)$ is monotonic decreasing function, for any $\epsilon_2>0$, there exists $t_4$ such that $\forall t\ge t_4: h(t)<\epsilon_2$.
\end{enumerate}
Therefore, $\forall t \ge \max\{t_3, t_4\}: \norm{P\vr^{(t)}_0}^2 \le (M\epsilon_0 + \epsilon_0 + \epsilon_1)^2 + \epsilon_2$. Since it holds for any $\epsilon_0, \epsilon_1, \epsilon_2$, it contradicts with the assumption that $\bar{\gT_0}$ is infinite.
\end{proof}

\subsection{Asymptotic Loss Convergence Rate After Many Cycles}
\label{subsec:proof_cyclic_loss_convergence_from_implicit_bias}

Here we discuss and prove an asymptotic convergence rate of sequential GD, in terms of the joint training loss, after a large enough number of cycles $J$.
This convergence rate is derived from the implicit bias result in~\cref{thm:cyclic_direction_convergence}, so it relies on a different set of assumptions from those used in the proof of~\cref{thm:cyclic_loss_convergence}.
Namely, in this section, we rely on the the shape of the loss function $\ell(\cdot)$ described in~\cref{assum:loss_shape,assum:tight_exponential_tail}, thereby having $\gO(1/J)$ rate without any logarithmic factor in $J$ in it, while the convexity of $\ell$ is \emph{not} a necessity. 
On the other hand, in \cref{thm:cyclic_loss_convergence} having $\gO(\ln^2(J)/J)$ convergence rate, the loss function $\ell$ is (non-strongly) convex (\cref{assum:loss_convexity}) and has a shape described in \cref{assum:loss_shape}, but it does \emph{not} have to satisfy the tight exponential tail assumption (\cref{assum:tight_exponential_tail}). 

We start the proof of the asymptotic rate by obtaining an upper bound of $\ell(u)$ based on \cref{assum:loss_shape,assum:tight_exponential_tail}.
Recall that, for any $v>\bar{u}$,
\begin{align*}
    -\ell'(v)\le (1+\exp(-\mu_+ v))e^{-v}.
\end{align*}
Taking the integration over $v\in [u, \infty)$ for $u>\bar{u}$, we have
\begin{align}
\begin{aligned}
    \ell(u)&\le \bigopen{1+\frac{1}{1+\mu_+}\exp(-\mu_+ u)}e^{-u} \\
    &\le C_{\mu_+} e^{-u},\qquad (\text{where }\ C_{\mu_+}=1+\tfrac{1}{1+\mu_+}e^{-\mu_+ \bar{u}}<2)
\end{aligned}
\label{eq:asymptotic_rate_ell_upper_bound}
\end{align}
since $\ell(u)\to0$ as $u\to\infty$.

We move our attention to \cref{eq:direction_convergence_decomposition_of_rho}, appeared in the proof of \cref{thm:cyclic_direction_convergence}:
\begin{align}
\begin{aligned}
    \vw^{(t)}_k &= \ln\bigopen{\frac{Kt}{M}}\hat{\vw} + \underbrace{\tilde{\vw} + \frac{M}{K} \vm_{t,k} + \vr^{(t)}_k}_{=:\widetilde{\vrho}^{(t)}_k} \\
    &= \ln\bigopen{\frac{Kt}{M}}\hat{\vw} + \widetilde{\vrho}^{(t)}_k.
\end{aligned}
\label{eq:asymptotic_rate_implicit_bias_equation}
\end{align}
Recall that \cref{thm:cyclic_direction_convergence} proves the equation above for all sufficiently large $t$, as well as the boundedness of $\widetilde{\vrho}^{(t)}_k$ as $t$ tends to infinity.

In addition, from \cref{thm:cyclic_loss_convergence_asymptotic}.2, we can deduce that the inner products $\vx_i^\top \vw^{(t)}_k > \bar{u}$ for all sufficiently large $t$. 
Now, let $T_\star\in \left[\frac{M}{K},\infty\right)$ be a time step such that $\vx_i^\top \vw^{(t)}_k > \bar{u}$ and \cref{eq:asymptotic_rate_implicit_bias_equation} hold for all $t\ge T_\star$.
By definition of the joint training loss, for all $t\ge T_\star$,
\begin{align*}
    &\gL\bigopen{\vw^{(t)}_k} = \sum_{i\in I} \ell\bigopen{\vx_i^\top \vw^{(t)}_k} \\
    &\overset{\eqref{eq:asymptotic_rate_ell_upper_bound}}{\le} C_{\mu_+}\sum_{i\in I} \exp\bigopen{-\vx_i^\top \vw^{(t)}_k} \\
    &\overset{\eqref{eq:asymptotic_rate_implicit_bias_equation}}{=} C_{\mu_+}\sum_{i\in I} \exp\bigopen{-\vx_i^\top {\vrho}^{(t)}_k} \exp\bigopen{-\ln\bigopen{\frac{Kt}{M}}\vx_i^\top \hat{\vw}} \\
    &= C_{\mu_+}\sum_{i\in I} \exp\bigopen{-\vx_i^\top {\vrho}^{(t)}_k}\bigopen{\frac{M}{Kt}}^{\vx_i^\top \hat{\vw}} \\
    &= C_{\mu_+}\sum_{i\in S} \exp\bigopen{-\vx_i^\top \widetilde{\vrho}^{(t)}_k}\bigopen{\frac{M}{Kt}}^{\vx_i^\top \hat{\vw}} + C_{\mu_+}\sum_{i\in I\setminus S} \exp\bigopen{-\vx_i^\top \widetilde{\vrho}^{(t)}_k}\bigopen{\frac{M}{Kt}}^{\vx_i^\top \hat{\vw}} \\
    &\le \underbrace{C_{\mu_+}\bigopen{\sum_{i\in S} \exp\bigopen{-\vx_i^\top \widetilde{\vrho}^{(t)}_k}} \cdot \frac{M}{Kt}}_{\text{This is the leading term in $t$.}} + C_{\mu_+}\bigopen{\sum_{i\in I\setminus S} \exp\bigopen{-\vx_i^\top \widetilde{\vrho}^{(t)}_k}}\cdot\bigopen{\frac{M}{Kt}}^\theta.
\end{align*}
The last inequality above holds because $\vx_i^\top \hat{\vw} = 1$ if $\vx_i$ is a support vector (i.e., $i\in S$) but $\vx_i^\top \hat{\vw} \ge \theta > 1$ otherwise.
Since the magnitude of $\vrho^{(t)}_k$ is bounded above by a constant independent to $t$, we can summarize the bound above as $\gL(w^{(t)}_k) = \gO(1/t)$; if we plug in $t=MJ$ where $J$ is the number of cycles, we yield $\gL(\vw^{(MJ)}_0) = \gO(1/J)$.

A noteworthy implication of the asymptotic loss convergence bound above is about the vanishing rate of cycle-averaged forgetting (\cref{def:cyclc_averaged_forgetting}) under the same setting of cyclic task occurrence.
That is, we can obtain an asymptotic vanishing rate of the (absolute value of) cycle-averaged forgetting in terms of the cycle count $J$ as $\gO(1/J^2)$ without any logarithmic factor in its leading term.
The proof is effectively identical to that of our \cref{thm:cyclic_forgetting_general}.
In essence, we use the loss convergence upper bound $L(J)$ in the middle of the proof: see \cref{eq:cyclic_forgetting_proof_loss_upper_bound}.
Thus, letting $L(J)=\gO(1/J)$ and following the same logic as the rest of the proof leads to the desired result: $\abs{\gF_{\rm cyc}(J)} = \gO(1/J^2)$.
Although this bound seems better than that in \cref{thm:cyclic_forgetting_general} in terms of the vanishing speed, it inherits the same problem that the asymptotic $\gO(1/J)$ loss convergence bound has: it holds only when $J$ is sufficiently large, thereby may fail to capture the dynamics in early (observable) cycles.
\subsection{\texorpdfstring{Non-Asymptotic Loss Convergence Analysis (Proof of \cref{thm:cyclic_loss_convergence})}{Non-Asymptotic Loss Convergence Analysis (Proof of Theorem 3.3)}}
\label{subsec:proof_cyclic_loss_convergence}

In this section, we show non-asymptotic loss convergence, as stated below:

\cycliclossconvergencenonasymptotic*

We first present three main lemmas to prove \cref{thm:cyclic_loss_convergence}.
The first one is an extension of \cref{lem:separable_gradient_sum_lemma}. 
When $M$ tasks are given in a cyclic order, the following lemma holds.

\begin{lemma}
\label{lem:separable_gradient_sum_lemma2}
    Let any $t \in \mathbb{N}$, $l\in[0:K-1]$, $m\in [0:M-1]$ and $k \in [0:K-1]$ such that $l\ge k$ if $m=0$ but $l\le k$ if $m=M$. Then, for any $(t, l, m, k)$ satisfying these conditions,
    \begin{align*}
        \norm{\vw^{(t+m)}_{l}-\vw^{(t)}_{k} + \eta\left( (K-k+1) \nabla \gL^{(t)} (\vw^{(t)}_{k}) K\sum_{i=1}^{m-1} \nabla \gL^{(t+i)} (\vw^{(t)}_{k}) + l\nabla \gL^{(t+m)} (\vw^{(t)}_{k}) \right)}\\
        \le \frac{\eta^2(mK+l-k) K \sigma_{\max}^3 \beta}{\phi \{1-\eta(mK+l-k) \sigma_{\max}^2 \beta\}}\norm{\nabla \gL(\vw_k^{(t)})},\\
        \norm{\vw^{(t+m)}_{l} - \vw^{(t)}_{k}} \le \frac{\eta  K \sigma_{\max}}{\phi \{1-\eta(mK+l-k) \sigma_{\max}^2 \beta\}} \norm{\nabla \gL(\vw_k^{(t)})},\\
        \norm{\nabla \gL(\vw^{(t+m)}_{l}) - \nabla \gL(\vw^{(t)} _{k}) }\le \frac{\eta K \sigma_{\max}^3 \beta}{\phi \{1-\eta(mK+l-k) \sigma_{\max}^2 \beta\}} \norm{\nabla \gL(\vw_k^{(l)})}.
    \end{align*}
\end{lemma}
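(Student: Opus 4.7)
The plan is to follow the proof template of \cref{lem:separable_gradient_sum_lemma} almost verbatim, shifting the anchor point from $\vw^{(t)}_0$ to the mid-stage iterate $\vw^{(t)}_k$. The argument uses three standard steps: (i) telescope the GD updates between $\vw^{(t)}_k$ and $\vw^{(t+m)}_l$, (ii) replace each intermediate gradient by its value at the anchor and bound the discrepancy via $\beta$-smoothness, and (iii) solve the resulting linear recursion in intermediate displacements via \cref{lem:from_nacson}.

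Concretely, using $\vw^{(s)}_{j+1}=\vw^{(s)}_{j}-\eta\nabla\gL^{(s)}(\vw^{(s)}_j)$ together with the stage-transition $\vw^{(s+1)}_0=\vw^{(s)}_K$, I would write
\begin{align*}
\vw^{(t+m)}_l - \vw^{(t)}_k = -\eta\Biggl(\sum_{j=k}^{K-1}\nabla\gL^{(t)}(\vw^{(t)}_j) + \sum_{i=1}^{m-1}\sum_{j=0}^{K-1}\nabla\gL^{(t+i)}(\vw^{(t+i)}_j) + \sum_{j=0}^{l-1}\nabla\gL^{(t+m)}(\vw^{(t+m)}_j)\Biggr).
\end{align*}
Adding and subtracting the ``anchored'' counterparts $\nabla\gL^{(\cdot)}(\vw^{(t)}_k)$ of each gradient and invoking $\beta_m$-smoothness of each $\gL_m$ (with $\beta_{\max}\le\beta\sigma_{\max}^2$) gives a bound of the form
\begin{align*}
\bigl\|\vw^{(t+m)}_l - \vw^{(t)}_k + \eta\, G_{t,k,m,l}\bigr\| \le \eta\beta_{\max}\sum_{(\text{prior iterates})}\norm{\vw^{(\cdot)}_{(\cdot)}-\vw^{(t)}_k},
\end{align*}
where $G_{t,k,m,l}$ denotes the anchored gradient sum promised in the lemma statement. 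Next, bounding the anchored gradient sum via $\bigl\|\sum_s\lambda_s\vx_s\bigr\|\le\sigma_{\max}\sqrt{\sum_s\lambda_s^2}$ and applying \cref{lem:totalloss_lowerbound_eachloss} to convert $\sqrt{\sum_i[\ell'(\vx_i^\top\vw^{(t)}_k)]^2}\le\phi^{-1}\norm{\nabla\gL(\vw^{(t)}_k)}$, the triangle inequality yields the linear recursion
\begin{align*}
\norm{\vw^{(t+m)}_l - \vw^{(t)}_k} \le \tfrac{\eta K\sigma_{\max}}{\phi}\norm{\nabla\gL(\vw^{(t)}_k)} + \eta\sigma_{\max}^2\beta\sum_{(\text{prior iterates})}\norm{\vw^{(\cdot)}_{(\cdot)}-\vw^{(t)}_k}.
\end{align*}

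Since the ``prior iterates'' list has length at most $mK+l-k$, applying \cref{lem:from_nacson} with $\theta=\tfrac{\eta K\sigma_{\max}}{\phi}\norm{\nabla\gL(\vw^{(t)}_k)}$ and $\epsilon=\eta\sigma_{\max}^2\beta$ produces the displacement bound (the second claim of the lemma) under the implicit step-size restriction $\eta(mK+l-k)\sigma_{\max}^2\beta<1$. Substituting this displacement bound back into the first displayed inequality then yields the residual bound (first claim), and applying $\sigma_{\max}^2\beta$-smoothness of $\gL$ to the displacement bound yields the gradient-difference bound (third claim).

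The main obstacle is purely bookkeeping: one must track three index regions (partial initial stage $t$ from $k$ to $K$, full middle stages $t+1,\dots,t+m-1$, and partial final stage $t+m$ from $0$ to $l$), account correctly for the total step count $mK+l-k$, and respect the edge conditions $m=0$ with $l\ge k$ and $m=M$ with $l\le k$ that together ensure $0\le mK+l-k\le MK$, placing the argument safely in the valid range of \cref{lem:from_nacson}. Beyond this indexing, the derivation is a mechanical mirror of the proof of \cref{lem:separable_gradient_sum_lemma}.
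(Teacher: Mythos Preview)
Your proposal is correct and takes essentially the same approach as the paper: the paper itself omits the proof of this lemma, stating only that ``there are only a few changes from the proof of \cref{subsubsec:proof_separable_gradient_sum_lemma},'' and your outline faithfully carries out exactly those changes (shifting the anchor from $\vw^{(t)}_0$ to $\vw^{(t)}_k$ and tracking the step count $mK+l-k$ through the recursion of \cref{lem:from_nacson}).
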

\begin{proof}
    We omitted the proof since there are only a few changes from the proof of \cref{subsubsec:proof_separable_gradient_sum_lemma}.
\end{proof}

The second and third lemmas represent two similar versions with respect to the common Gradient Descent setting, and the Continual Learning setting.
\begin{restatable}{lemma}{lemsmoothnessdistanceboundone}
\label{lem:smoothness_distance_bound1}
    Let $\gL$ be a convex function. \underline{Suppose} that there exists a $\beta \ge 0$ satisfying that, if $\eta\in\bigopen{0, \frac1\beta}$, the iterates $(\vw_0, \ldots, \vw_t)$ defined with an update rule $\vw_{j+1} := \vw_{j} - \eta \nabla \gL(\vw_j)$ satisfy
    \begin{align*}
        \gL(\vw_{j+1}) \le \gL(\vw_{j}) - \eta \left( 1 - \eta \beta \right) \norm{ \nabla \gL(\vw_{j})}^2
    \end{align*}
    for all $j\in[0: t-1]$. \textbf{Then,} for any $\vz \in \mathbb{R}^d$,
    \begin{align*}
        2\sum_{j=0}^{t-1} \eta \left( \gL(\vw_{j}) - \gL(\vz)\right) - \sum_{j=0}^{t-1} {\frac{\eta}{1 - \eta \beta}} \left( \gL(\vw_{j}) - \gL(\vw_{j+1})\right) \le \norm{\vw_0 - \vz}^2 - \norm{\vw_t - \vz}^2.
    \end{align*}
\end{restatable}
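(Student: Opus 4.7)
The plan is to run the classical ``distance-to-reference'' telescoping argument for gradient descent, using the hypothesized descent inequality as a drop-in replacement for the usual $\beta$-smoothness bound on $\eta^2\|\nabla\gL(\vw_j)\|^2$. Concretely, I would first expand, for every $j\in[0:t-1]$, the one-step change in squared distance to $\vz$ via the identity
\begin{align*}
    \|\vw_{j+1}-\vz\|^2 - \|\vw_j - \vz\|^2
    &= 2(\vw_{j+1}-\vw_j)^\top(\vw_j-\vz) + \|\vw_{j+1}-\vw_j\|^2 \\
    &= -2\eta\,\nabla\gL(\vw_j)^\top(\vw_j-\vz) + \eta^2\|\nabla\gL(\vw_j)\|^2,
\end{align*}
using the update rule $\vw_{j+1}=\vw_j-\eta\nabla\gL(\vw_j)$.

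Next, I would control each of the two terms on the right-hand side separately. For the inner-product term, convexity of $\gL$ gives $\nabla\gL(\vw_j)^\top(\vw_j-\vz)\ge \gL(\vw_j)-\gL(\vz)$, which produces the $-2\eta(\gL(\vw_j)-\gL(\vz))$ contribution appearing in the target inequality. For the squared-gradient term, I would invoke the hypothesized descent inequality and simply rearrange it as
\begin{align*}
    \eta^2\|\nabla\gL(\vw_j)\|^2 \le \frac{\eta}{1-\eta\beta}\bigl(\gL(\vw_j)-\gL(\vw_{j+1})\bigr),
\end{align*}
which is valid because $\eta<1/\beta$ ensures $1-\eta\beta>0$. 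This is precisely what produces the second sum in the claim, with the correct prefactor $\eta/(1-\eta\beta)$.

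Combining these two bounds yields a per-step inequality of exactly the telescoping form
\begin{align*}
    \|\vw_{j+1}-\vz\|^2 - \|\vw_j-\vz\|^2 \le -2\eta\bigl(\gL(\vw_j)-\gL(\vz)\bigr) + \frac{\eta}{1-\eta\beta}\bigl(\gL(\vw_j)-\gL(\vw_{j+1})\bigr).
\end{align*}
Summing over $j=0,1,\ldots,t-1$ and rearranging gives the stated bound. There is no genuine obstacle here; the only subtle point is that the hypothesized descent lemma in the statement has $1-\eta\beta$ (rather than the sharper $1-\eta\beta/2$ from vanilla smoothness), which is exactly why the conclusion carries the factor $\eta/(1-\eta\beta)$ in front of the second sum, so I must be careful to keep that constant consistent and not accidentally improve it. Convexity is used only to lower-bound the inner product, and smoothness is not invoked directly, so the hypotheses in the statement are used exactly where they appear.
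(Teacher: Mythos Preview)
Your proposal is correct and follows essentially the same approach as the paper: expand $\|\vw_{j+1}-\vz\|^2$, bound the inner product via convexity, bound $\eta^2\|\nabla\gL(\vw_j)\|^2$ by rearranging the hypothesized descent inequality, and telescope over $j$. The paper's proof is identical in structure and length.
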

\begin{proof}
    See \cref{subsubsec:proof_smoothness_distance_bound1}.
\end{proof}

\begin{restatable}{lemma}{lemsmoothnessdistanceboundtwo} 
\label{lem:smoothness_distance_bound2}
    Consider the jointly separable continual linear classification problem with cyclic task ordering.
    \underline{Assume} that $\ell(\cdot)$ is convex and $\beta$-smooth, there by the total training loss $\gL$ is a convex \& $\sigma^2_{\max}\beta$-smooth function. \underline{Assume} that there exists $\beta' \ge 0$ satisfying that: if $\eta \le \min \{ \frac{1}{2MK\sigma_{\max}^2 \beta}, \frac{1}{2K\beta'}\}$, the iterates $\left(\vw^{(0)}_0, \ldots, \vw^{(0)}_{K-1}, \vw^{(1)}_0 , \ldots, \vw^{(MJ+M)}_{K-1}\right)$
    defined with the update rules $\vw^{(p)}_{q+1} := \vw^{(p)}_{q} - \eta \nabla \gL^{(p)}(\vw^{(p)}_{q})$ and $\vw^{(p+1)}_{0} := \vw^{(p)}_{K}$ 
    satisfy
    \begin{align*}
        \gL(\vw^{(Mj+M+m)}_k) \le \gL(\vw^{(Mj+m)}_k) - \eta K \left( 1 - \eta K\beta' \right) \norm{ \nabla \gL(\vw^{(Mj+m)}_k)}^2
    \end{align*}
    for each $j\in[0:J-1]$, $m\in[0:M-1]$, and $k\in[0:K-1]$. \textbf{Then,} for any $\vz \in \mathbb{R}^d$,
    \begin{align*}
        &2\sum_{j=0}^{J-1} \eta K \left( \gL(\vw^{(Mj+M+m)}_k) - \gL(\vz)\right) \\
        &\phantom{\le}- \frac{2\eta MK\sigma^4_{\max}\beta}{\phi^2 (1-\eta MK \sigma^2_{\max}\beta)^2}\sum_{j=0}^{J-1} {\frac{\eta K}{1 - \eta K\beta'}} \left( \gL(\vw^{(Mj+m)}_k) - \gL(\vw^{(Mj+M+m)}_k)\right) \\
        &\le \norm{\vw^{(m)}_k - \vz}^2 - \norm{\vw^{(MJ+m)}_k - \vz}^2.
    \end{align*}
\end{restatable}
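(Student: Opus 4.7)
The lemma is the cyclic-continual-learning counterpart of \cref{lem:smoothness_distance_bound1}, so the plan is to establish a per-cycle distance inequality and telescope over $j=0,\ldots,J-1$. For brevity write $\vu_j:=\vw^{(Mj+m)}_k$, and let $\vx_{j,0}=\vu_j,\vx_{j,1},\ldots,\vx_{j,MK}=\vu_{j+1}$ denote the iterates inside cycle $j$, each updated by $\vx_{j,l+1}=\vx_{j,l}-\eta\nabla g_{j,l}(\vx_{j,l})$, where $g_{j,l}$ is the per-task loss active at step $l$. Each $g_{j,l}$ is convex and $\beta_{m_l}$-smooth with $\beta_{m_l}\le\sigma_{\max}^2\beta$; since every task is visited exactly $K$ times per cycle, $\sum_l g_{j,l}(\vy)=K\gL(\vy)$ for every $\vy$ and $\sum_l\beta_{m_l}\le MK\sigma_{\max}^2\beta$.

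First I would derive the per-cycle inequality. Expanding the distance,
\begin{align*}
\|\vu_{j+1}-\vz\|^2-\|\vu_j-\vz\|^2 = 2\langle \vu_{j+1}-\vu_j,\vu_j-\vz\rangle + \|\vu_{j+1}-\vu_j\|^2,
\end{align*}
and rewriting $\langle \vu_{j+1}-\vu_j,\vu_j-\vz\rangle=\eta\sum_l\langle\nabla g_{j,l}(\vx_{j,l}),\vz-\vu_j\rangle$. For each $l$, split $\vz-\vu_j=(\vz-\vx_{j,l})+(\vx_{j,l}-\vu_j)$: convexity of $g_{j,l}$ at $\vx_{j,l}$ gives $\langle\nabla g_{j,l}(\vx_{j,l}),\vz-\vx_{j,l}\rangle\le g_{j,l}(\vz)-g_{j,l}(\vx_{j,l})$, and rearranging the $\beta_{m_l}$-smoothness descent lemma gives $\langle\nabla g_{j,l}(\vx_{j,l}),\vx_{j,l}-\vu_j\rangle\le g_{j,l}(\vx_{j,l})-g_{j,l}(\vu_j)+(\beta_{m_l}/2)\|\vx_{j,l}-\vu_j\|^2$. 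Summing over $l$ collapses the per-iterate task losses into $K(\gL(\vz)-\gL(\vu_j))$, and combining with $\gL(\vu_j)\ge\gL(\vu_{j+1})$ from the assumed descent inequality produces
\begin{align*}
\|\vu_{j+1}-\vz\|^2-\|\vu_j-\vz\|^2 \le -2\eta K(\gL(\vu_{j+1})-\gL(\vz)) + \eta\sum_l \beta_{m_l}\|\vx_{j,l}-\vu_j\|^2 + \|\vu_{j+1}-\vu_j\|^2.
\end{align*}

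Next I would close the loop using \cref{lem:separable_gradient_sum_lemma2}, which bounds both $\|\vx_{j,l}-\vu_j\|$ and $\|\vu_{j+1}-\vu_j\|$ uniformly by $\frac{\eta K\sigma_{\max}}{\phi(1-\eta MK\sigma_{\max}^2\beta)}\|\nabla\gL(\vu_j)\|$. With $\sum_l\beta_{m_l}\le MK\sigma_{\max}^2\beta$ and the step-size condition $\eta\le 1/(2MK\sigma_{\max}^2\beta)$ to absorb lower-order contributions, the cycling error is dominated by a multiple of $\frac{\eta^2 MK^2\sigma_{\max}^4\beta}{\phi^2(1-\eta MK\sigma_{\max}^2\beta)^2}\|\nabla\gL(\vu_j)\|^2$; the assumed descent bound $\|\nabla\gL(\vu_j)\|^2\le(\gL(\vu_j)-\gL(\vu_{j+1}))/[\eta K(1-\eta K\beta')]$ then converts this into exactly the coefficient $\frac{2\eta MK\sigma_{\max}^4\beta}{\phi^2(1-\eta MK\sigma_{\max}^2\beta)^2}\cdot\frac{\eta K}{1-\eta K\beta'}$ appearing in the claim. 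Telescoping the resulting per-cycle inequality over $j\in[0:J-1]$ collapses the LHS to $\|\vu_0-\vz\|^2-\|\vu_J-\vz\|^2$ and rearranging yields the statement.

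The main obstacle, and the part I would spend the most care on, is avoiding any cross term proportional to $\|\vu_j-\vz\|$: this quantity has no a priori bound, since $\vu_j$ in fact diverges in magnitude under separable data (cf.\ \cref{thm:cyclic_direction_convergence}). Viewing the full cycle as one ``noisy'' gradient step at $\vu_j$ and invoking Cauchy--Schwarz would produce exactly such a term. The remedy above instead decomposes the cycle into its $MK$ convex sub-steps and applies convexity and smoothness of each per-task loss at its \emph{own} evaluation point $\vx_{j,l}$; this lets the entire cycling error collapse into a multiple of $\|\nabla\gL(\vu_j)\|^2$, which the descent assumption then converts into a per-cycle loss drop, enabling a clean telescoping.
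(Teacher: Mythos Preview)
Your plan captures the right architecture—avoid any $\|\vu_j-\vz\|$ cross term by applying convexity/smoothness of each $g_{j,l}$ at its own evaluation point $\vx_{j,l}$, then close up via \cref{lem:separable_gradient_sum_lemma2} and the assumed cycle-descent—and it would yield a valid per-cycle inequality that telescopes. The gap is quantitative: the residual you produce is
\[
\eta\sum_l\beta_{m_l}\|\vx_{j,l}-\vu_j\|^2+\|\vu_{j+1}-\vu_j\|^2
\;\le\;\bigl(1+\eta MK\sigma_{\max}^2\beta\bigr)\,\frac{\eta^2K^2\sigma_{\max}^2}{\phi^2(1-\eta MK\sigma_{\max}^2\beta)^2}\,\|\nabla\gL(\vu_j)\|^2,
\]
whereas the lemma's coefficient corresponds to the prefactor $2\eta MK\sigma_{\max}^2\beta$ in place of your $(1+\eta MK\sigma_{\max}^2\beta)$. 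Since $\eta MK\sigma_{\max}^2\beta\le 1/2$ under the step-size assumption, your prefactor is $\ge1$ while the lemma's is $\le1$; so your bound is strictly weaker and cannot reproduce ``exactly the coefficient appearing in the claim.'' The culprit is the bare $\|\vu_{j+1}-\vu_j\|^2$ term, which is one order in $\eta$ lower than the rest and does not get ``absorbed'' by the step-size condition. (Your inequality would still suffice for the downstream \cref{thm:cyclic_loss_convergence}, where this constant is further coarsened to $\tfrac{8\sigma_{\max}^2}{\phi^2}\eta K$ anyway.)

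The paper obtains the sharper constant by a second pass through the same expansion. After writing the identity for general $\vz$ and upper-bounding $\sum_l\langle\nabla g_{j,l}(\vx_{j,l}),\vz-\vx_{j,l}\rangle$ by convexity, it rewrites the \emph{same} identity with $\vz\gets\vu_{j+1}$ and instead \emph{lower}-bounds $\sum_l\langle\nabla g_{j,l}(\vx_{j,l}),\vu_{j+1}-\vx_{j,l}\rangle$ by $\sigma_{\max}^2\beta$-smoothness. Subtracting the two displays makes both the cross term $\sum_l\langle\nabla g_{j,l}(\vx_{j,l}),\vx_{j,l}-\vu_j\rangle$ and the $\|\vu_{j+1}-\vu_j\|^2$ term cancel exactly; what survives is
\[
-\|\vu_j-\vu_{j+1}\|^2+\eta\sigma_{\max}^2\beta\sum_{l}\|\vu_{j+1}-\vx_{j,l}\|^2.
\]
Expanding $\|\vu_{j+1}-\vx_{j,l}\|^2\le 2\|\vu_j-\vu_{j+1}\|^2+2\|\vu_j-\vx_{j,l}\|^2$ and using $2\eta MK\sigma_{\max}^2\beta\le1$ absorbs the first piece into the negative term, leaving only $2\eta\sigma_{\max}^2\beta\sum_l\|\vu_j-\vx_{j,l}\|^2$. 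That is precisely what yields the factor $2\eta MK\sigma_{\max}^2\beta$ instead of your $(1+\eta MK\sigma_{\max}^2\beta)$. If you want the lemma exactly as stated, you need this ``expand twice and subtract'' step rather than bounding $\langle\nabla g_{j,l}(\vx_{j,l}),\vx_{j,l}-\vu_j\rangle$ directly.
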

\begin{proof}
    See \cref{subsubsec:proof_smoothness_distance_bound2}.
\end{proof}

Note that \cref{lem:smoothness_distance_bound2} holds only when jointly separable tasks are given cyclic, while \cref{lem:smoothness_distance_bound1} always holds.

We follow the process of \cref{subsec:proof_cyclic_loss_convergence_asymptotic} to show that it satisfies the condition in \cref{lem:smoothness_distance_bound2}. Since $\gL$ is a $\sigma_{\max}^2\beta$-smooth function, For all $j\in[0:J-1], m\in[0:M-1], k\in[0:K-1]$,  we get
\begin{align*}
    &\gL (\vw^{(Mj+M+m)}_k) - \gL (\vw^{(Mj+m)}_k) - \frac{\sigma_{\max}^2\beta}{2} \norm{\vw^{(Mj+M+m)}_k -\vw^{(Mj+m)}_k}^2\\
    &\le \nabla\gL (\vw^{(Mj+m)}_k)^\top (\vw^{(Mj+M+m)}_k -\vw^{(Mj+m)}_k)\\
    &= \nabla\gL (\vw^{(Mj+m)}_k)^\top (\vw^{(Mj+M+m)}_k -\vw^{(Mj+m)}_k - \eta K \nabla\gL (\vw^{(Mj+m)}_k) + \eta K \nabla\gL (\vw^{(Mj+m)}_k) )\\
    &\le -\eta K \norm{\nabla\gL (\vw^{(Mj+m)}_k)}^2 + \norm{\nabla\gL (\vw^{(Mj+m)}_k) } \norm{\vw^{(Mj+M+m)}_k -\vw^{(Mj+m)}_k + \eta K \nabla\gL (\vw^{(Mj+m)}_k)}.
\end{align*}

By \cref{lem:separable_gradient_sum_lemma2},
\begin{align*}
    &\gL (\vw^{(Mj+M+m)}_k) - \gL (\vw^{(Mj+m)}_k) - \frac{\sigma_{\max}^2\beta}{2}\cdot \frac{(\eta \sigma_{\max} K)^2}{\phi^2 (1-\eta MK \sigma_{\max}^2 \beta )^2} \norm{\nabla \gL (\vw^{(Mj+m)}_k)}^2\\
    &\le -\eta K \norm{\nabla\gL (\vw^{(Mj+m)}_k)}^2 + \frac{\eta^2 MK^2 \sigma_{\max}^3 \beta}{\phi (1-\eta MK \sigma_{\max}^2 \beta )} \norm{\nabla \gL (\vw^{(Mj+m)}_k)}^2.
\end{align*}

Given that $\eta \le \frac{\phi^2}{4K\beta\sigma^3_{\max}(M\phi+\sigma_{\max})}< \frac{1}{2MK\sigma_{\max}^2 \beta}$,
\begin{align}
     &\gL (\vw^{(Mj+M+m)}_k) - \gL (\vw^{(Mj+m)}_k) \nonumber\\
     &\le -\eta K \{1 - \eta K \left( \frac{M \sigma_{\max}^3 \beta}{\phi (1-\eta MK \sigma_{\max}^2 \beta )} + \frac{\sigma_{\max}^4 \beta}{2\phi^2 (1-\eta MK \sigma_{\max}^2 \beta )^2}\right) \} \norm{\nabla \gL (\vw^{(Mj+m)}_k)}^2 \nonumber\\
     &\le -\eta K \left( 1 - \eta K \frac{2(M\phi + \sigma_{\max}) \sigma_{\max}^3 \beta} {\phi^2}\right) \norm{\nabla \gL (\vw^{(Mj+m)}_k)}^2 \nonumber\\
     &= -\eta K \left( 1 - \eta K \beta' \right) \norm{\nabla \gL (\vw^{(Mj+m)}_k)}^2,
     \label{eq:loss_convergence:decreasing_loss2}
\end{align}
where we set $\beta' := \frac{2(M\phi + \sigma_{\max}) \sigma_{\max}^3 \beta}{\phi^2}$.\\
Since \cref{eq:loss_convergence:decreasing_loss2} holds for all $j\in[0:J-1], m\in[0:M-1], k\in[0:K-1]$ and $\eta < 1/2K\beta'$ is given, by \cref{lem:smoothness_distance_bound2}, we get
\begin{align}
    2&\sum_{j=0}^{J-1} \eta K \left( \gL(\vw^{(Mj+M+m)}_k) - \gL(\vz)\right)  \nonumber\\
    &- \frac{2\eta MK\sigma^4_{\max}\beta}{\phi^2 (1-\eta MK \sigma^2_{\max}\beta)^2}\sum_{j=0}^{J-1} {\frac{\eta K}{1 - \eta K\beta'}} \left( \gL(\vw^{(Mj+m)}_k) - \gL(\vw^{(Mj+M+m)}_k)\right)  \nonumber\\
    &\le \norm{\vw^{(m)}_k - \vz}^2 - \norm{\vw^{(MJ+m)}_k - \vz}^2. \label{eq:loss_convergence_nonasymptotic_result1}
\end{align}
Given that $\eta <\frac{1}{2K\beta'}< \frac{1}{2MK\beta \sigma^2_{\max}}$ and $\gL (\vw^{(Mj+m)}_k)$ is decreasing, 
\begin{align}
    \frac{2\eta MK\sigma^4_{\max}\beta}{\phi^2 (1-\eta MK \sigma^2_{\max}\beta)^2} &\cdot {\frac{\eta K}{1 - \eta K\beta'}} \left( \gL(\vw^{(Mj+m)}_k) - \gL(\vw^{(Mj+M+m)}_k)\right) \nonumber\\
    &\le \frac{8\sigma^2_{\max}}{\phi^2}\eta K\left( \gL(\vw^{(Mj+m)}_k) - \gL(\vw^{(Mj+M+m)}_k)\right).\label{eq:loss_convergence_nonasymptotic_result2}
\end{align}
Also,
\begin{align}
    &\frac{8\sigma^2_{\max}}{\phi^2}\eta K \gL(\vw^{(MJ+m)}_k)+ 2\eta K J \gL(\vw^{(MJ+m)}_k)- \frac{8\sigma^2_{\max}}{\phi^2}\eta K \gL(\vw^{(m)}_k)  \nonumber\\
    &\le \frac{8\sigma^2_{\max}}{\phi^2}\eta K \gL(\vw^{(MJ+m)}_k) + 2\eta K\sum_{j=1}^J \gL(\vw^{(Mj+m)}_k) - \frac{8\sigma^2_{\max}}{\phi^2}\eta K \gL(\vw^{(m)}_k)  \nonumber\\
    &= 2\eta K \sum_{j=0}^{J-1} \gL(\vw^{(Mj+M+m)}_k) - \frac{8\sigma^2_{\max}}{\phi^2}\eta K \sum_{j=0}^{J-1} \left( \gL(\vw^{(Mj+m)}_k) - \gL(\vw^{(Mj+M+m)}_k)\right). \label{eq:loss_convergence_nonasymptotic_result3}
\end{align}

Combining \cref{eq:loss_convergence_nonasymptotic_result1,eq:loss_convergence_nonasymptotic_result2,eq:loss_convergence_nonasymptotic_result3}, we obtain
\begin{align}
    &2\eta K J \left(\gL(\vw^{(MJ+m)}_k)- \gL(\vz)\right)+ \frac{8\sigma^2_{\max}}{\phi^2}\eta K \left(\gL(\vw^{(MJ+m)}_k) - \gL(\vw^{(m)}_k) \right) \nonumber\\
    &\le 2\sum_{j=0}^{J-1} \eta K \left( \gL(\vw^{(Mj+M+m)}_k) - \gL(\vz)\right) - \frac{8\sigma^2_{\max}}{\phi^2}\eta K\sum_{j=0}^{J-1} \left( \gL(\vw^{(Mj+m)}_k) - \gL(\vw^{(Mj+M+m)}_k)\right) \nonumber\\
    &\le \norm{\vw^{(m)}_k - \vz}^2 - \norm{\vw^{(MJ+m)}_k - \vz}^2.
    \label{eq:loss_convergence_result1}
\end{align}

Now we examine the loss change in a cycle. For any $j\in[0:M-1], l\in[0:K-1]$,
\begin{align*}
    \gL_j (\vw^{(j)}_{l+1}) \le \gL_j (\vw^{(j)}_{l}) - \eta(1 - \frac{\eta \sigma_{\max}^2 \beta}{2}) \norm{\nabla \gL_j (\vw^{(j)}_{l})}^2.
\end{align*}
Since $\eta <\frac1{2MK\beta\sigma^2_{\max}}$, $\gL_j (\vw^{(j)}_{l})$ decreases. Therefore for any $p\in[0:M-1], q\in[0:K-1]$,
\begin{align*}
    2\eta q&(\gL_p (\vw^{(p)}_{q+1}) - \gL_p(\vz)) \le 2 \sum_{l=1}^q \eta \left( \gL_p(\vw^{(p)}_l) - \gL_p(\vz)\right) \\
    &= 2 \sum_{l=0}^{q-1} \eta \left( \gL_p(\vw^{(p)}_l) - \gL_p(\vz)\right) + 2 \sum_{l=0}^{q-1} \eta \left( \gL_p(\vw^{(p)}_{l+1}) - \gL_p(\vw^{(p)}_l) \right) \\
    & \le 2 \sum_{l=0}^{q-1} \eta \left( \gL_p(\vw^{(p)}_l) - \gL_p(\vz)\right) - \sum_{l=0}^{t-1} \frac{\eta}{1-\eta\beta} \left( \gL_p(\vw^{(p)}_{l}) - \gL_p(\vw^{(p)}_{l+1}) \right)\\
    & \le \norm{\vw^{(p)}_0 - \vz}^2 - \norm{\vw^{(p)}_q - \vz}^2,
\end{align*}
where in the third line, we use $\frac1{1-\eta \beta} < 2$, and in the last line, we use \cref{lem:smoothness_distance_bound1}.

By summing up, we obtain
\begin{align}
    \sum_{p=0}^{m-1} 2\eta K \left( \gL_p (\vw^{(p)}_{K}) - \gL_p (\vz) \right) +
    2\eta k \left( \gL_m (\vw^{(m)}_{k}) - \gL_m (\vz) \right) \le \norm{\vw^{(0)}_{0} - \vz}^2 - \norm{\vw^{(m)}_{k} - \vz}^2.
    \label{eq:loss_convergence_result2}
\end{align}
At last, $\gL(\vw^{(m)}_k)$ is bounded by $\gL(\vw^{(0)}_0)$ as follows:
\begin{align}
    \gL(\vw^{(m)}_k)-\gL(\vw^{(0)}_0) &\le \nabla \gL(\vw^{(0)}_0)^\top \left(\vw^{(m)}_k - \vw^{(0)}_0\right) + \frac{\sigma_{\max}^2 \beta}{2} \norm{\vw^{(m)}_k - \vw^{(0)}_0}^2 \nonumber\\
    &\le \left( \norm{\nabla \gL(\vw^{(0)}_0)} + \frac{\sigma_{\max}^2 \beta}{2} \norm{\vw^{(m)}_k - \vw^{(0)}_0} \right)   \norm{\vw^{(m)}_k - \vw^{(0)}_0} \nonumber\\
    &\le \left( 1 + {\frac{\eta K \sigma_{\max}^3 \beta}{\phi (1-\eta MK \sigma_{\max}^2 \beta )}} \right) {\frac{\eta K \sigma_{\max}}{\phi (1-\eta MK \sigma_{\max}^2 \beta )}}  \norm{\nabla \gL(\vw^{(0)}_0)}^2  \nonumber\\
    &=D_0 \norm{\nabla \gL(\vw^{(0)}_0)}^2,
    \label{eq:loss_convergence_result3}
\end{align}
where we use the smoothness in the first inequality, we use Cauchy-Schwarz in the second inequality, and in the last line, we use the fact $\norm{\vw^{(m)}_k - \vw^{(0)}_0} \le {\frac{\eta K \sigma_{\max}}{\phi (1-\eta MK \sigma_{\max}^2 \beta )}} \norm{\nabla \gL(\vw^{(0)}_0)}$ held by \cref{lem:separable_gradient_sum_lemma2}. We set $D_0 := \left( 1 + {\frac{\eta K \sigma_{\max}^3 \beta}{\phi (1-\eta MK \sigma_{\max}^2 \beta )}} \right) {\frac{\eta K \sigma_{\max}}{\phi (1-\eta MK \sigma_{\max}^2 \beta )}} $.

Combining \cref{eq:loss_convergence_result1,eq:loss_convergence_result2,eq:loss_convergence_result3}, we obtain
\begin{align*}
    \gL(\vw^{(MJ+m)}_k) &\le \gL(\vz) + \frac{1}{J}\sum_{p=0}^{m-1} \gL_p(\vz) + \frac{1}{J}\cdot \frac{q}{K}\gL_m(\vz) \\
    &\phantom{\le\ }+ \frac{4\sigma^2_{\max}}{\phi^2} \frac{\gL(w^{(0)}_0) + D_0 \norm{\nabla \gL(\vw^{(0)}_0)}^2}{J} + \frac{\norm{\vw^{(0)}_0 -\vz}^2}{2\eta K J}.
\end{align*}

Now let us choose $\vz := \hat{\vw}\ln MJ$. Using the fact $\gL_j(\hat{\vw}\ln MJ) \le \abs{S_j} \ell(\ln MJ) + \left( \abs{I_j} - \abs{S_j}\right)\ell(\theta \ln MJ)$ (since $\ell$ is monotonically decreasing), we can finish the proof with
\begin{align*}
    D_1 &:=\frac{4\sigma^2_{\max}}{\phi^2} \left(\gL(\vw^{(0)}_0) + \left( 1 + {\frac{\eta K \sigma_{\max}^3 \beta}{\phi (1-\eta MK \sigma_{\max}^2 \beta )}} \right) {\frac{\eta K \sigma_{\max}}{\phi (1-\eta MK \sigma_{\max}^2 \beta )}} \norm{\nabla \gL(\vw^{(0)}_0)}^2\right).
\end{align*}

\subsubsection{\texorpdfstring{Proof of \cref{lem:smoothness_distance_bound1}}{Proof of Lemma E.8}}
\label{subsubsec:proof_smoothness_distance_bound1}

Here is the restatement of the lemma.

\lemsmoothnessdistanceboundone*

This is a well-known property of gradient descent iterates applied to a smooth convex objective function.
However, we contain the proof for completeness.

For any $j\ge0$ and $\vz\in \sR^d$,
\begin{align*}
    \norm{\vw_{j+1} - \vz}^2 &= \norm{\vw_{j} -\vz}^2 + 2\eta \langle \nabla \gL(\vw_{j}), \vz-\vw_{j} \rangle + \eta^2 \norm{\nabla \gL(\vw_{j})}^2\\
    &\le \norm{\vw_{j}}^2 + 2\eta (\gL(\vz) - \gL(\vw_{j})) + \eta^2 \norm{\nabla \gL(\vw_{j})}^2\\
    &\le \norm{\vw_{j}}^2 + 2\eta (\gL(\vz) - \gL(\vw_{j})) + \frac{\eta}{1 - \eta \beta} \left( \gL(\vw_{j}) - \gL(\vw_{j+1})\right),
\end{align*}
where the first line comes from convexity and the second line comes from the condition of our lemma.
Summing up the inequality over all $j\in [0:t-1]$, we eventually get the desired result:
\begin{align*}
    2\sum_{j=0}^{t-1} \eta \left( \gL(\vw_{j}) - \gL(\vz)\right) - \sum_{j=0}^{t-1} {\frac{\eta}{1 - \eta \beta}} \left( \gL(\vw_{j}) - \gL(\vw_{j+1})\right) \le \norm{\vw_0 - \vz}^2 - \norm{\vw_t - \vz}^2.
\end{align*}

\subsubsection{\texorpdfstring{Proof of \cref{lem:smoothness_distance_bound2}}{Proof of Lemma E.9}}
\label{subsubsec:proof_smoothness_distance_bound2}

Here is the restatement of the lemma.

\lemsmoothnessdistanceboundtwo*

Without loss of generality, let us assume $m=0,k=0$.

For any $j$ and $\vz\in \sR^d$,
\begin{align}
    &\norm{\vw^{(Mj+M)}_0 - \vz}^2 = \norm{\vw^{(Mj)}_0 -\eta \sum_{p=0}^{M-1} \sum_{q=0}^{K-1} \nabla \gL_p (\vw^{(Mj+p)}_q) - \vz}^2 \nonumber\\
    &=\norm{\vw^{(Mj)}_0 -\vz}^2 + 2\eta \sum_{p=0}^{M-1} \sum_{q=0}^{K-1}  \inner{\nabla \gL_p (\vw^{(Mj+p)}_q), \vz-\vw^{(Mj)}_0} + \eta^2 \norm{\sum_{p=0}^{M-1} \sum_{q=0}^{K-1} \nabla \gL_p (\vw^{(Mj+p)}_q)}^2 \nonumber\\
    &=\norm{\vw^{(Mj)}_0 -\vz}^2 + 2\eta \sum_{p=0}^{M-1} \sum_{q=0}^{K-1} \inner{\nabla \gL_p (\vw^{(Mj+p)}_q), \vz-\vw^{(Mj+p)}_q } \nonumber\\
    &+ 2\eta \sum_{p=0}^{M-1} \sum_{q=0}^{K-1} \inner{ \nabla \gL_p (\vw^{(Mj+p)}_q), \vw^{(Mj+p)}_q-\vw^{(Mj)}_0 } + \eta^2 \norm{\sum_{p=0}^{M-1} \sum_{q=0}^{K-1} \nabla \gL_p (\vw^{(Mj+p)}_q)}^2\label{eq:proof_smoothness_distance_bound2_result1}.
\end{align}
By convexity,
\begin{align}
    \sum_{p=0}^{M-1} \sum_{q=0}^{K-1} \inner{\nabla \gL_p (\vw^{(Mj+p)}_q), \vz-\vw^{(Mj+p)}_q} \le MK \gL(\vz) - \sum_{p=0}^{M-1} \sum_{q=0}^{K-1} \gL_p (\vw^{(Mj+p)}_q). \label{eq:proof_smoothness_distance_bound2_result2}
\end{align}
Combined with \cref{eq:proof_smoothness_distance_bound2_result1,eq:proof_smoothness_distance_bound2_result2}, we get
\begin{align}
    &\norm{\vw^{(Mj+M)}_0 - \vz}^2 - \norm{\vw^{(Mj)}_0 - \vz}^2 \le 2\eta K \gL(\vz) - 2\eta \sum_{p=0}^{M-1} \sum_{q=0}^{K-1} \gL_p (\vw^{(Mj+p)}_q)\nonumber\\
    &+ 2\eta \sum_{p=0}^{M-1} \sum_{q=0}^{K-1} \langle \nabla \gL_p (\vw^{(Mj+p)}_q), \vw^{(Mj+p)}_q-\vw^{(Mj)}_0 \rangle + \eta^2 \norm{\sum_{p=0}^{M-1} \sum_{q=0}^{K-1} \nabla \gL_p (\vw^{(Mj+p)}_q)}^2.\label{eq:proof_smoothness_distance_bound2_result3}
\end{align}

By $\sigma^2_{\max}\beta$-smoothness,
\begin{align*}
    \sum_{p=0}^{M-1} &\sum_{q=0}^{K-1} \langle \nabla \gL_p (\vw^{(Mj+p)}_q), \vz-\vw^{(Mj+p)}_q \rangle \\
    &\ge K \gL(\vz) - \sum_{p=0}^{M-1} \sum_{q=0}^{K-1} \gL_p (\vw^{(Mj+p)}_q) - \frac{\sigma^2_{\max}\beta}{2} \sum_{p=0}^{M-1} \sum_{q=0}^{K-1} \norm{\vz - \vw^{(Mj+p)}_q}^2.
\end{align*}
Apply smoothness on \cref{eq:proof_smoothness_distance_bound2_result1}, and let $\vz := \vw^{(Mj+M)}_0$ then we get
\begin{align}
    0 &\ge \norm{\vw^{(Mj)}_0 -\vw^{(Mj+M)}_0}^2 + 2\eta K\gL(\vw^{(Mj+M)}_0)\nonumber\\
    &- 2\eta \sum_{p=0}^{M-1} \sum_{q=0}^{K-1} \gL_p (\vw^{(Mj+p)}_q) - \eta\sigma^2_{\max}\beta \sum_{p=0}^{M-1} \sum_{q=0}^{K-1} \norm{\vw^{(Mj+M)}_0 - \vw^{(Mj+p)}_q}^2\nonumber\\
     &+ 2\eta \sum_{p=0}^{M-1} \sum_{q=0}^{K-1} \inner{\nabla \gL_p (\vw^{(Mj+p)}_q), \vw^{(Mj+p)}_q-\vw^{(Mj)}_0} + \eta^2 \norm{\sum_{p=0}^{M-1} \sum_{q=0}^{K-1} \nabla \gL_p (\vw^{(Mj+p)}_q)}^2. \label{eq:proof_smoothness_distance_bound2_result4}
\end{align}
Combined with \cref{eq:proof_smoothness_distance_bound2_result3,eq:proof_smoothness_distance_bound2_result4},
\begin{align*}
    &\norm{\vw^{(Mj+M)}_0 - \vz}^2 - \norm{\vw^{(Mj)}_0 - \vz}^2 \le 2\eta K \gL(\vz) - 2\eta K \gL(\vw^{(Mj+M)}_0) \\
    &- \norm{\vw^{(Mj)}_0 -\vw^{(Mj+M)}_0}^2 + \eta\sigma^2_{\max}\beta \sum_{p=0}^{M-1} \sum_{q=0}^{K-1} \norm{\vw^{(Mj+M)}_0 - \vw^{(Mj+p)}_q}^2\\
    &\le 2\eta K \gL(\vz) - 2\eta K \gL(\vw^{(Mj+M)}_0) - \norm{\vw^{(Mj)}_0 -\vw^{(Mj+M)}_0}^2\\
    &+2\eta MK \sigma^2_{\max}\beta \norm{\vw^{(Mj)}_0 -\vw^{(Mj+M)}_0}^2 + 2\eta \sigma^2_{\max}\beta \sum_{p=0}^{M-1} \sum_{q=0}^{K-1} \norm{\vw^{(Mj)}_0 - \vw^{(Mj+p)}_q}^2\\
    &\le 2\eta K \gL(\vz) - 2\eta K \gL(\vw^{(Mj+M)}_0) + 2\eta \sigma^2_{\max}\beta \sum_{p=0}^{M-1} \sum_{q=0}^{K-1} \norm{\vw^{(Mj)}_0 - \vw^{(Mj+p)}_q}^2,
\end{align*}
where in the last line we use $\eta \le \frac{1}{2MK\sigma_{\max}^2 \beta}$. Finally, by \cref{lem:separable_gradient_sum_lemma2},
\begin{align*}
    &\norm{\vw^{(Mj+M)}_0 - \vz}^2 - \norm{\vw^{(Mj)}_0 - \vz}^2 \\
    &\le 2\eta K \left( \gL(\vz) - \gL(\vw^{(Mj+M)}_0) \right) + \frac{2\eta^3 MK^3 \sigma^4_{\max}\beta}{\phi^2 (1-\eta MK \sigma^2_{\max}\beta)^2} \norm{\nabla \gL(\vw^{(Mj)}_0)}^2\\
    &\le 2\eta K \left( \gL(\vz) - \gL(\vw^{(Mj+M)}_0) \right) 
    - \frac{2\eta MK\sigma^4_{\max}\beta}{\phi^2 (1-\eta MK \sigma^2_{\max}\beta)^2}{\frac{\eta K}{1 - \eta K\beta'}} \left( \gL(\vw^{(Mj)}_0) - \gL(\vw^{(Mj+M)}_0)\right).
\end{align*}
By adding all $j\in \{0, \cdots, J-1\}$, we can finish the proof.

\subsection{\texorpdfstring{Forgetting Analysis (Proof of \cref{thm:cyclic_forgetting_general})}{Forgetting Analysis (Proof of Theorem 3.4)}}
\label{subsec:proof_cyclic_forgetting_general}

We prove \cref{thm:cyclic_forgetting_general} here, which is restated below for readability.

\cyclicforgettinggeneral*

By \cref{thm:cyclic_loss_convergence}, the joint training loss at cycle $J$ is bounded as
\begin{align}
    \gL(\vw^{(MJ+m)}_k) \le L(J) \label{eq:cyclic_forgetting_proof_loss_upper_bound}
\end{align}
where
\begin{align*}
    L(J) &:= \frac{1}{J} \left( \left( \abs{S} + {\frac{\abs{I}-\abs{S}}{(MJ)^{\theta-1}}}\right) \left (1+\frac{1}{MJ} \right ) + {\frac{\|\vw^{(0)}_0 - \hat{\vw} \ln MJ\|^2}{2\eta K}} +D_1\right) = \gO\left ( \frac{\ln^2 J}{J}\right ),\\
    D_1 &: =\frac{4\sigma^2_{\max}}{\phi^2} \left(\gL(w^{(0)}_0) + D_0 \norm{\nabla \gL(\vw^{(0)}_0)}^2\right).
\end{align*}
Therefore, the following holds:
\begin{align*}
    \forall s\in I, \forall m\in [0:M-1], \forall k\in [0:K-1] : \quad \vx_s^\top \vw^{(MJ+m)}_k \ge \ell^{-1} \left(L(t)\right).
\end{align*}
Now, we analyze the change of each task in one cycle. For the upper bound,
\begin{align}
    \gL_m&(\vw^{(MJ+M)}_0) - \gL_m(\vw^{(MJ+m)}_K)  \nonumber\\
    &\le -\eta \sum_{p=m+1}^{M-1} \sum_{q=0}^{K-1} \nabla \gL_m (\vw^{(MJ+M)}_0)^\top \nabla \gL_p (\vw^{(MJ+p)}_q) \label{eq:forgetting_upperbound_mid1} \\
    &\le -\eta \sum_{p=m+1}^{M-1} \sum_{q=0}^{K-1} \sum_{\substack{(i,j) \in I_m \times I_p\\\vx_i^\top \vx_j <0} } \ell'(\vx_i^\top \vw^{(MJ+M)}_0) \ell'(\vx_j^\top \vw^{(MJ+p)}_q) \vx_i^\top \vx_j \nonumber\\
    &\le -\eta \sum_{p=m+1}^{M-1} \sum_{q=0}^{K-1} \left[\ell'\left(\ell^{-1} \left(L(J)\right)\right)\right]^2 \sum_{\substack{(i,j) \in I_m \times I_p\\\vx_i^\top \vx_j <0} } \vx_i^\top \vx_j \label{eq:forgetting_upperbound_mid2}\\
    &\le -\eta K \{L(J)\}^2\sum_{p=m+1}^{M-1} \sum_{\substack{(i,j) \in I_m \times I_p\\\vx_i^\top \vx_j <0} } \vx_i^\top \vx_j, \label{eq:forgetting_upperbound_mid3}
\end{align}
where in \eqref{eq:forgetting_upperbound_mid1} we use convexity, in \eqref{eq:forgetting_upperbound_mid2} we use the condition that $\ell'$ is a negative function monotonically increasing to zero. \cref{eq:forgetting_upperbound_mid3} holds by the fact $\forall x: \ell'(x) = \logistic'(x) \ge -\exp (-x)$ and $\forall x: \ell^{-1}(x) =  \logistic^{-1}(x) \ge -\log (x)$. Likewise, we can get a lower bound as follows:
\begin{align*}
    \gL_m&(\vw^{(MJ+M)}_0) - \gL_m(\vw^{(MJ+m)}_K) \\
    &\ge -\eta \sum_{p=m+1}^{M-1} \sum_{q=0}^{K-1} \nabla \gL_m (\vw^{(MJ+m)}_k)^\top \nabla \gL_p (\vw^{(MJ+p)}_q)\\
    &\ge -\eta \sum_{p=m+1}^{M-1} \sum_{q=0}^{K-1} \sum_{\substack{(i,j) \in I_m \times I_p\\\vx_i^\top \vx_j >0} } \ell'(\vx_i^\top \vw^{(MJ+m)}_k) \ell'(\vx_j^\top \vw^{(MJ+p)}_q) \vx_i^\top \vx_j\\
    &\ge -\eta \sum_{p=m+1}^{M-1} \sum_{q=0}^{K-1} \left[\ell'\left(\ell^{-1} \left(L(J)\right)\right)\right]^2 \sum_{\substack{(i,j) \in I_m \times I_p\\\vx_i^\top \vx_j >0} } \vx_i^\top \vx_j\\
    &\ge -\eta K\{L(J)\}^2\sum_{p=m+1}^{M-1} \sum_{\substack{(i,j) \in I_m \times I_p\\\vx_i^\top \vx_j >0} } \vx_i^\top \vx_j.
\end{align*}
Define
\begin{align*}
    A^+_{p,q} := \sum_{\substack{(i,j) \in I_p \times I_q\\ \vx_i^\top \vx_j >0}} \vx_i^\top \vx_j\ge 0\quad\text{and}\quad
    A^-_{p,q} := \sum_{\substack{(i,j) \in I_p \times I_q\\ \vx_i^\top \vx_j <0}}\abs{\vx_i^\top \vx_j}\ge 0.
\end{align*}
Since
\begin{align*}
    \sum_{m=0}^{M-1} \sum_{p=m+1}^{M-1} \sum_{\substack{(i,j) \in I_m \times I_p\\\vx_i^\top \vx_j >0} } \vx_i^\top \vx_j = \sum_{p\neq q} A^+_{p,q},\\
    -\sum_{m=0}^{M-1} \sum_{p=m+1}^{M-1} \sum_{\substack{(i,j) \in I_m \times I_p\\\vx_i^\top \vx_j <0} } \vx_i^\top \vx_j = \sum_{p\neq q} A^-_{p,q},
\end{align*}
we can conclude
\begin{align*}
    -\eta K \{L(J)\}^2 \cdot \frac{\sum_{p\neq q} A^+_{p,q}}{M} \le \frac1{M} \sum_{m=0}^{M-1} \gF^{(MJ+m)}(MJ+M) \le \eta K \{L(J)\}^2 \cdot \frac{\sum_{p\neq q} A^-_{p,q}}{M}.
\end{align*}
\newpage
\section{\texorpdfstring{Proofs for \cref{sec:random_jointly_separable}: Random Task Ordering, Jointly Separable}{Proofs for Section 4: Random Task Ordering, Jointly Separable}}


\subsection{\texorpdfstring{Asymptotic Loss Convergence Analysis (Proof of \cref{thm:random_loss_convergence_asymptotic})}{Asymptotic Loss Convergence Analysis (Proof of Theorem 4.1)}}
\label{subsec:proof_random_loss_convergence_asymptotic}

Let us restate the theorem here for the sake of readability.

\randomlossconvergenceasymptotic*

Since $\gL$ is a $\sigma_{\max}^2\beta$-smooth function, we get
\begin{align*}
    &\mathbb{E} \left[\gL (\vw^{(t)}_{k+1})\right]  - \mathbb{E} \left[ \gL (\vw^{(t)}_{k}) \right]\\
    &\le \mathbb{E} \left[ \nabla\gL (\vw^{(t)}_{k})^\top (\vw^{(t)}_{k+1} -\vw^{(t)}_{k}) \right] +{\frac{\sigma_{\max}^2\beta}{2}} \mathbb{E} \left[ \norm{\vw^{(t)}_{k+1} -\vw^{(t)}_{k}}^2 \right]\\
    &= \mathbb{E} \left[ \mathbb{E} \left[ \nabla\gL (\vw^{(t)}_{k})^\top (\vw^{(t)}_{k+1} -\vw^{(t)}_{k}) \mid \vw^{(t)}_{k} \right] \right] +{\frac{\sigma_{\max}^2\beta}{2}} \mathbb{E} \left[ \norm{\vw^{(t)}_{k+1} -\vw^{(t)}_{k}}^2 \right]\\
    &= \mathbb{E} \left[ \mathbb{E} \left[ \nabla\gL (\vw^{(t)}_{k})^\top (\vw^{(t)}_{k+1} -\vw^{(t)}_{k}) \mid \vw^{(t)}_{k} \right] \right] +{\frac{\sigma_{\max}^2\beta}{2}} \eta^2 \mathbb{E} \left[ \norm{\sum_{s\in I}  z^{(t)}_s \ell'(\vx_s^\top \vw^{(t)}_k)\vx_s}^2 \right]\\
    &= -\frac{\eta}{M} \mathbb{E} \left[ \norm{\nabla \gL(\vw^{(t)}_k)}^2  \right] +{\frac{\sigma_{\max}^2\beta}{2}} \eta^2 \mathbb{E} \left[ \norm{\sum_{s\in I}  z^{(t)}_s \ell'(\vx_s^\top \vw^{(t)}_k)\vx_s}^2 \right]\\
    &\le -\frac{\eta}{M} \mathbb{E} \left[ \norm{\nabla \gL(\vw^{(t)}_k)}^2  \right] +{\frac{\sigma_{\max}^4\beta}{2}} \eta^2 \mathbb{E} \left[ \sum_{s\in I}  \left[z^{(t)}_s \ell'(\vx_s^\top \vw^{(t)}_k)\right]^2 \right]\\
    &= -\frac{\eta}{M} \mathbb{E} \left[ \norm{\nabla \gL(\vw^{(t)}_k)}^2  \right] +{\frac{\sigma_{\max}^4\beta}{2}} \eta^2 \sum_{s\in I}  \mathbb{E} \left[ (z^{(t)}_s)^2 \right] \mathbb{E} \left[  \ell'(\vx_s^\top \vw^{(t)}_k)^2 \right]\\
    &= -\frac{\eta}{M} \mathbb{E} \left[ \norm{\nabla \gL(\vw^{(t)}_k)}^2  \right] +{\frac{\sigma_{\max}^4\beta}{2M}} \eta^2 \mathbb{E} \left[ \sum_{s\in I}   \ell'(\vx_s^\top \vw^{(t)}_k)^2 \right],
\end{align*}
where $z^{(t)}_s$ is a variable which is 1 when $\vx_s$ is in the task on stage $t$, or 0 otherwise. The second inequality comes from the fact $\forall \lambda_s \in \sR: \norm{\sum_{s\in I} \lambda_s \vx_s}_2 \le \sigma_{\max} \sqrt{\sum_{s\in I} \lambda_s^2}$.

By applying \cref{lem:totalloss_lowerbound_eachloss}, which only depends on the total dataset (but not on the task ordering), we obtain
\begin{align}
    \mathbb{E} \left[\gL (\vw^{(t)}_{k+1})\right]  - \mathbb{E} \left[ \gL (\vw^{(t)}_{k}) \right] &\le -\frac{\eta}{M} \left(1- \eta\frac{\sigma_{\max}^4\beta}{2\phi^2}\right) \mathbb{E} \left[ \norm{\nabla \gL(\vw^{(t)}_k)}^2  \right] \nonumber\\
    &= -\frac{\eta}{M} (1- \eta \beta'') \mathbb{E} \left[ \norm{\nabla \gL(\vw^{(t)}_k)}^2  \right],
    \label{eq:random_loss_convergence:decreasing_loss}
\end{align}
where $\beta'' := \frac{\sigma_{\max}^4\beta}{2\phi^2}$.  Given that $\eta < \frac{1}{\beta''}$,
\begin{align*}
    \sum_{t=0}^{\infty} \sum_{k=0}^{K-1}   \mathbb{E} \left[ \norm{\nabla \gL(\vw^{(t)}_k)}^2  \right] \le \frac{\gL (\vw^{(0)}_0) - \lim_{t\to \infty} \mathbb{E} \left[ \gL (\vw^{(t)}_0)\right]}{\frac{\eta}{M} (1- \eta \beta'')} \le \frac{M\gL (\vw^{(0)}_0) }{\eta (1-\eta \beta'')} < \infty. 
\end{align*}
Thus, by Fubini-Tonelli's theorem, we have $\E\bigclosed{\sum_{t=0}^{\infty} \sum_{k=0}^{K-1} \norm{\nabla \gL(\vw^{(t)}_k)}^2}<\infty$.
Since a non-negative random variable with a finite mean is finite almost surely (because of Markov inequality), 
we have that
$\sum_{t=0}^{\infty} \sum_{k=0}^{K-1} \norm{\nabla \gL(\vw^{(t)}_k)}^2$ is bounded with probability 1. The boundedness of infinite sum of nonzero elements implies $\forall k\in[0:K-1]: \lim_{t\to 0}\norm{\nabla \gL (\vw^{(t)}_k)}^2 = 0$. Combined with \cref{lem:totalloss_lowerbound_eachloss}, we obtain $\lim_{t\to 0} \ell' (x_i^\top \vw^{(t)}_k) = 0, \forall i\in I, k\in[0:K-1]$. Since $\ell'(u) \to 0$ only when $u\to\infty$, $x_i^\top \vw^{(t)}_k \to \infty, \forall i\in I, k\in[0:K-1]$. And $\lim_{t\to \infty} \gL(\vw_k^{(t)}) = 0, \forall k\in [0:K-1]$. 
Finally, followed by
\begin{align*}
    &\norm{\nabla \gL(\vw^{(t)}_k)} \ge \phi \sqrt{\sum_{i\in I}  \left[ \ell' (\vx_i^\top \vw^{(t)}_k)\right]^2} \ge \phi \sqrt{\sum_{i\in I(t)} \left[ \ell' (\vx_i^\top \vw^{(t)}_k)\right]^2}\\
    &\ge {\frac{\phi}{\sigma_{\max}}} \norm{\sum_{i\in I(t)} \ell' (\vx_i^\top \vw^{(t)}_k)x_i} = {\frac{\phi}{\sigma_{\max}}} \eta^{-1} \norm{\vw^{(t)}_{k+1} - \vw^{(t)}_k}.
\end{align*}
We obtain that $\sum_{t=0}^{\infty} \sum_{k=0}^{K-1} \norm{\vw^{(t)}_{k+1} - \vw^{(t)}_k}^2 <\infty$ with probability 1.
\subsection{\texorpdfstring{Directional Convergence Analysis (Proof of \cref{thm:random_direction_convergence})}{Directional Convergence Analysis (Proof of Theorem 4.2)}}
\label{subsec:proof_random_direction_convergence_asymptotic}
In this section, we prove \cref{thm:random_direction_convergence} and further discuss the convergence of $\vrho^{(t)}_k$ beyond boundedness.

\randomdirectionconvergence*

We only need to prove that the two following lemmas still hold in random order.

\begin{restatable}{lemma}{lemrandomdirectionconvergencelemmaone}
    \label{lem:random_direction_convergence_lemma1}
    When tasks are given randomly, there exists $\check{\vw}, \vm_{t,k} \in \sR^d$ the following almost surely holds for all $t\in \mathbb{N}$, $k\in[0:K-1]$:
    \begin{gather}
        K\sum_{u=1}^{t-1} \frac{1}{u} \sum_{s\in S^{(u)}} \alpha_s \vx_s + \frac{k}{t} \sum_{s\in S^{(t)}} \alpha_s \vx_s = \frac{K}{M} \log(\frac{t}{M}) \hat{\vw} + \frac{K}{M} \check{\vw} + \vm_{t,k}, \label{eq:lem_randomdirectionconvergencelemma1}\\
        \vm_{t,K} := \vm_{t+1,0}, \nonumber
    \end{gather}
    such that $\norm{\vm_{t,k}} = o(t^{-0.5+\epsilon})$, and $\norm{\vm_{t,k+1} - \vm_{t,k}} = \gO(t^{-1})$ for all $k\in[0:K-1], \epsilon>0$, and $\check{\vw}$ only depends on the order of tasks and constant with respect to $t$.
\end{restatable}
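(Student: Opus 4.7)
The plan is to parallel the cyclic proof of \cref{lem:cyclic_direction_convergence_lemma1}, but exploit the i.i.d.\ randomness of the task stream in place of the deterministic cyclic structure. Let $\vY_u := \sum_{s\in S^{(u)}}\alpha_s \vx_s$. Because $m_u\overset{\mathrm{iid}}{\sim}\operatorname{Unif}([0\!:\!M\!-\!1])$ and $\hat{\vw}=\sum_{s\in S}\alpha_s\vx_s$ by \cref{assum:non_degenerate_data}, the $\vY_u$ are i.i.d.\ bounded random vectors with mean $\frac1M\hat{\vw}$. Writing $\vZ_u:=\vY_u-\frac1M\hat{\vw}$, the left-hand side of \eqref{eq:lem_randomdirectionconvergencelemma1} decomposes cleanly as
\begin{align*}
K\sum_{u=1}^{t-1}\frac{\vY_u}{u}+\frac{k}{t}\vY_t \;=\; \frac{K}{M}\hat{\vw}\sum_{u=1}^{t-1}\frac{1}{u} \;+\; K\sum_{u=1}^{t-1}\frac{\vZ_u}{u} \;+\; \frac{k}{t}\vY_t.
\end{align*}

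The deterministic harmonic piece, via $\sum_{u=1}^{t-1}\frac{1}{u}=\ln(t/M)+(\ln M+\gamma)+O(1/t)$, contributes exactly $\frac{K}{M}\ln(t/M)\hat{\vw}+\frac{K}{M}(\ln M+\gamma)\hat{\vw}+O(1/t)$, matching the target leading term. For the random piece, observe that $\vS_n:=\sum_{u=1}^{n}\vZ_u/u$ is a square-integrable martingale with $\sum_{u}\E\|\vZ_u/u\|^2<\infty$; hence by Kolmogorov's one-series theorem (or $L^2$ martingale convergence), $\vS_n$ converges almost surely to a random vector $\vS_\infty$. This motivates defining
\begin{align*}
\check{\vw}:=(\ln M+\gamma)\hat{\vw}+M\vS_\infty,\qquad \vm_{t,k}:=-K\!\!\sum_{u\ge t}\frac{\vZ_u}{u}+O(1/t)+\frac{k}{t}\vY_t,
\end{align*}
so that \eqref{eq:lem_randomdirectionconvergencelemma1} holds as an identity almost surely. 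Note that $\check{\vw}$ is random (a function of the realized sample path) but does not depend on $t$, exactly as required.

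The crux, and the step I expect to be hardest, is the almost-sure tail estimate $\bigl\|K\sum_{u\ge t}\vZ_u/u\bigr\|=o(t^{-1/2+\epsilon})$ for every $\epsilon>0$. Since each $\vZ_u/u$ is independent and bounded by $C_0/u$, hence subgaussian with proxy variance $O(1/u^2)$, the tail $\sum_{u\ge t}\vZ_u/u$ is subgaussian with proxy variance $\sum_{u\ge t}1/u^2=O(1/t)$. A vector Hoeffding / Azuma bound then gives
\begin{align*}
\Pr\bigl(\|\textstyle\sum_{u\ge t}\vZ_u/u\|>t^{-1/2+\epsilon}\bigr)\le C\exp(-c\,t^{2\epsilon}),
\end{align*}
which is summable in $t$ for any fixed $\epsilon>0$. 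Borel--Cantelli yields the rate a.s.\ for each fixed $\epsilon$; taking a countable sequence $\epsilon_n\downarrow 0$ and intersecting the resulting full-probability events gives the statement for all $\epsilon>0$ simultaneously. (If one dislikes subgaussian arguments, uniform higher-moment bounds via Burkholder--Davis--Gundy yield the same conclusion.)

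Finally, the increment bound is immediate. For $0\le k\le K-2$, $\vm_{t,k+1}-\vm_{t,k}=\tfrac{1}{t}\vY_t$, which is $O(1/t)$ since $\vY_t$ is uniformly bounded. For the stage boundary, setting $\vm_{t,K}:=\vm_{t+1,0}$ and using $\ln(1+1/t)=O(1/t)$ together with $\vY_t=O(1)$ yields $\|\vm_{t+1,0}-\vm_{t,K-1}\|=O(1/t)$, exactly as in the cyclic proof. Thus all the claimed properties hold on a single probability-one event, which is what the subsequent theorem (\cref{thm:random_direction_convergence}) needs.
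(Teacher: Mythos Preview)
Your proposal is correct and follows essentially the same approach as the paper: decompose the left-hand side into the deterministic harmonic part (yielding the $\ln(t/M)$ and $(\ln M+\gamma)$ terms) plus a centered random series $\sum_u \vZ_u/u$, establish almost-sure convergence of that series via Kolmogorov's criterion, and obtain the $o(t^{-1/2+\epsilon})$ tail rate through a Hoeffding bound combined with Borel--Cantelli. The only cosmetic difference is that the paper works coordinatewise with indicator variables $z_s^{(u)}=\1\{\vx_s\in I^{(u)}\}$ for each support vector $s$, whereas you aggregate them into the single vector $\vY_u$; the definitions of $\check{\vw}$, $\vm_{t,k}$, and the increment analysis are otherwise identical.
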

\begin{proof}
    See \cref{subsubsec:proof_random_direction_convergence_lemma1}.
\end{proof}

Using \cref{lem:random_direction_convergence_lemma1}, we set $\vm_{t,k}$ and $\check{\vw}$ and define $\vrho^{(t)}_k$ and $\vr^{(t)}_k$ as we did in cyclic order. 
That is,
\begin{align*}
    \vw^{(t)}_k &= \ln\bigopen{t}\hat{\vw} + \vrho^{(t)}_k\\
    &= \ln\bigopen{t}\hat{\vw} + \ln\bigopen{\frac{K}{M}}\hat{\vw} + \tilde{\vw} + \frac{M}{K} \vm_{t,k} + \vr^{(t)}_k,
\end{align*}
and
\begin{align*}
    \vrho^{(t)}_K = \vrho^{(t+1)}_0,  \quad \vr^{(t)}_K = \vr^{(t+1)}_0,
\end{align*}
where $\tilde{\vw}$ is the solution of
\begin{align*}
    \forall i\in S : {\eta}\exp{(-\vx_i^{\top} \tilde{\vw})} = \alpha_i, \quad \bar{P}(\tilde{\vw} - \vw^{(0)}_0) = 0.
\end{align*}

which is unique under \cref{assum:non_degenerate_data}. Then by the definition,
\begin{align*}
    \vr^{(t)}_k &=\vw^{(t)}_k - \frac{M}{K}\left(  \frac{K}{M} \log\bigopen{\frac{K}{M}t} \hat{\vw} + \vm_{t,k}\right) - \tilde{\vw}\\
    &= \vw^{(t)}_k - \frac{M}{K}\left( K\sum_{u=1}^{t-1} \frac{1}{u} \sum_{s\in S^{(u)}} \alpha_s \vx_s + \frac{k}{t} \sum_{s\in S^{(t)}} \alpha_s \vx_s \right) - \log K \hat{\vw} - \tilde{\vw} + \check{\vw}.
\end{align*}

Then we can get the second primary lemma of $\vr^{(t)}_k$.

\begin{lemma} \label{lem:random_direction_convergence_lemma2}
    Under \cref{assum:seperable}, \ref{assum:non_degenerate_data}, \ref{assum:loss_shape}, and \ref{assum:tight_exponential_tail}, if learning rate is $\eta < \frac{2\phi^2}{\beta\sigma^4_{\max}}$, then
    \begin{enumerate}
        \item \label{lem:random_direction_convergence_lemma2_1}
        $\exists \tilde{t}, C_1, C_2 >0$ such that $\forall t>\tilde{t}$,
        \begin{align*}
            (\vr^{(t)}_{k+1} - \vr^{(t)}_k)^\top \vr^{(t)}_k \le C_1 t^{-\theta} + C_2 t^{-1 -0.5 \tilde{\mu}}, \forall k\in[0:K-1].
        \end{align*}
        \item \label{lem:random_direction_convergence_lemma2_2}
        Moreover, for all $\epsilon_1>0$, $\exists \tilde{t}^*, C_3>0$ such that if $\norm{P\vr^{(t)}_k} \ge \epsilon_1$ and $S^{(t)} \ne \emptyset$,
        \begin{align*}
            (\vr^{(t)}_{k+1} - \vr^{(t)}_k)^\top \vr^{(t)}_k \le -C_3 t^{-1}, \forall t>\tilde{t}^*, k\in[0:K-1].
        \end{align*}
    \end{enumerate}
\end{lemma}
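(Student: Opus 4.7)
The plan is to mirror the proof of Lemma E.5 (given in Appendix E.2.3) essentially verbatim, with the key observation that the one-step decomposition of $\vr^{(t)}_{k+1}-\vr^{(t)}_k$ is independent of whether the task ordering is cyclic or random: once we have the representation from Lemma E.12, namely
\[
    \vr^{(t)}_k=\vw^{(t)}_k-\ln\bigopen{\tfrac{K}{M}t}\hat{\vw}-\tilde{\vw}-\tfrac{M}{K}\vm_{t,k},
\]
the GD rule $\vw^{(t)}_{k+1}-\vw^{(t)}_k=-\eta\sum_{s\in I^{(t)}}\ell'(\vx_s^\top\vw^{(t)}_k)\vx_s$ together with the telescoping identity $\vr^{(t)}_{k+1}-\vr^{(t)}_k=\vw^{(t)}_{k+1}-\vw^{(t)}_k-\tfrac{M}{Kt}\sum_{s\in S^{(t)}}\alpha_s\vx_s$ gives
\[
    \vr^{(t)}_{k+1}-\vr^{(t)}_k = -\eta\!\!\sum_{s\in I^{(t)}\setminus S^{(t)}}\!\!\ell'(\vx_s^\top\vw^{(t)}_k)\vx_s - \sum_{s\in S^{(t)}}\bigopen{\eta\ell'(\vx_s^\top\vw^{(t)}_k)+\tfrac{M}{Kt}\alpha_s}\vx_s,
\]
which is the exact identity used to start the cyclic proof.

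Next, I will feed this decomposition into the same four-case analysis that proves the cyclic lemma. The two ingredients the cyclic argument relies on are (i) $\vx_s^\top\vw^{(t)}_k\to\infty$ for every $s\in I$ and every $k$, and (ii) $\vm_{t,k}\to0$ with rate $o(t^{-1/2+\epsilon})$. For random ordering, (i) follows from Theorem 4.1 almost surely under the learning rate condition $\eta<\tfrac{2\phi^2}{\beta\sigma_{\max}^4}$, and (ii) follows from the random-order analogue Lemma E.12, also almost surely. Consequently, the threshold times $t_5,\dots,t_9$ used in the cyclic proof to invoke the tight-exponential-tail assumption (Assumption 3.4) and to linearize $\exp(\cdot)$ exist almost surely, and the identical chain of estimates produces the bound $(\vr^{(t)}_{k+1}-\vr^{(t)}_k)^\top\vr^{(t)}_k\le C_1 t^{-\theta}+C_2 t^{-1-0.5\tilde\mu}$ for some constants depending on the sample path.

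For the second (stronger) claim, I will again re-use the cyclic case analysis: when $\|P\vr^{(t)}_k\|\ge\epsilon_1$ and $S^{(t)}\neq\emptyset$, at least one support vector $\vx_s$ with $s\in S^{(t)}$ satisfies $|\vx_s^\top\vr^{(t)}_k|\ge\epsilon_2$ for some $\epsilon_2>0$ (via the non-degeneracy Assumption 3.3 and a lower bound on $\sigma_{\min}(\mX_{S^{(t)}})$), so that the corresponding summand in the support-vector contribution produces a strictly negative term of order $-C_3 t^{-1}$ that dominates the $o(t^{-1})$ leftovers. This is pointwise in $\omega$ once $t$ is large enough, and the threshold $\tilde{t}^*$ can be chosen deterministically because only the sample-path-dependent constants enter the bound.

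The main obstacle is not in the case analysis itself (which is mechanical once Lemma E.12 is in place) but in verifying that the rate $\|\vm_{t,k}\|=o(t^{-1/2+\epsilon})$ truly holds almost surely for random ordering; this is a martingale concentration fact (a law-of-the-iterated-logarithm estimate for the partial sums $\sum_{u=1}^{t-1}\tfrac{1}{u}(\sum_{s\in S^{(u)}}\alpha_s\vx_s-\tfrac{1}{M}\sum_{s\in S}\alpha_s\vx_s)$) that must be imported from the proof of Lemma E.12. Given that ingredient, the rest of the argument is a line-by-line adaptation of Appendix E.2.3 with ``for all $t$'' replaced by ``for all $t$ almost surely.''
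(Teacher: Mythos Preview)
Your proposal is correct and matches the paper's approach exactly: the paper's proof of this lemma is the single sentence ``Only the learning rate is different from the cyclic case. Therefore see \cref{subsubsec:proof_cyclic_direction_convergence_lemma2},'' and you have correctly identified why the cyclic argument carries over verbatim once Theorem~4.1 and Lemma~F.1 supply the almost-sure analogues of (i) $\vx_s^\top\vw^{(t)}_k\to\infty$ and (ii) $\|\vm_{t,k}\|=o(t^{-1/2+\epsilon})$. The only minor caveat is that the thresholds $\tilde t,\tilde t^*$ and constants $C_1,C_2,C_3$ are in general sample-path dependent (not deterministic as your last paragraph suggests), but this is harmless since the lemma is invoked inside an almost-sure argument.
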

\begin{proof}
    Only the learning rate is different from the cyclic case. Therefore see \cref{subsubsec:proof_cyclic_direction_convergence_lemma2}.
\end{proof}
The remaining step is the same as the proof of \cref{thm:cyclic_direction_convergence}. To sum up, we can set $\va^{(t)}_{k}$ as $\norm{\vr^{(t)}_{k+1} - \vr^{(t)}_{k}}^2 = \norm{\vw^{(t)}_{k+1} - \vw^{(t)}_{k} - \va^{(t)}_{k}}^2$. Then by \cref{lem:random_direction_convergence_lemma1}, $\exists t_1$ such that $\forall t \ge t_1, \forall k\in[0:K-1] : \norm{\va^{(t)}_{k}} \le t^{-1}$.

For all $T \ge t_1$.
\begin{align}
    &\sum_{t=t_1}^{T} \sum_{k=0}^{K-1} \norm{\vr^{(t)}_{k+1} - \vr^{(t)}_{k}}^2 = \sum_{t=t_1}^{T} \sum_{k=0}^{K-1} \norm{\vw^{(t)}_{k+1} - \vw^{(t)}_{k} - \va^{(t)}_{k}}^2 \nonumber\\
    &=\sum_{t=t_1}^{T} \sum_{k=0}^{K-1} \norm{\vw^{(t)}_{k+1} - \vw^{(t)}_{k}}^2 + \sum_{t=t_1}^{T} \sum_{k=0}^{K-1} 2(\vw^{(t)}_{k} - \vw^{(t)}_{k+1})^\top \va^{(t)}_{k} +  \sum_{t=t_1}^{T} \sum_{k=0}^{K-1}\norm{\va^{(t)}_{k}}^2 \nonumber\\
    &\le \sum_{t=t_1}^{T} \sum_{k=0}^{K-1} \norm{\vw^{(t)}_{k+1} - \vw^{(t)}_{k}}^2 + 2 \sqrt{\sum_{t=t_1}^{T} \sum_{k=0}^{K-1} \norm{\vw^{(t)}_{k} - \vw^{(t)}_{k+1}}^2 \sum_{t=t_1}^{T} \sum_{k=0}^{K-1}  \norm{\va^{(t)}_{k}}^2}  +  \sum_{t=t_1}^{T} \sum_{k=0}^{K-1}\norm{\va^{(t)}_{k}}^2 \nonumber\\
    &\le \sum_{t=t_1}^{T} \sum_{k=0}^{K-1} \norm{\vw^{(t)}_{k+1} - \vw^{(t)}_{k}}^2 + 2 \sqrt{\sum_{t=t_1}^{T} \sum_{k=0}^{K-1} \norm{\vw^{(t)}_{k} - \vw^{(t)}_{k+1}}^2 \sum_{t=t_1}^{T} \sum_{k=0}^{K-1}  t^{-2}}  +  \sum_{t=t_1}^{T} \sum_{k=0}^{K-1} t^{-2} \nonumber\\
    & <\infty. \label{eq:random_direction_convergence_residual_sum}
\end{align}
We use Cauchy-Schwarz inequality for the first inequality and the fact that $\sum_{t=t_1}^{T} t^{-2} < \infty $ and $\sum_{t=t_1}^{T} \sum_{k=0}^{K-1} \norm{\vw^{(t)}_{k} - \vw^{(t)}_{k+1}}^2 < \infty $ by \cref{thm:random_loss_convergence_asymptotic}.

Combined with \cref{lem:random_direction_convergence_lemma2} and the fact that $\forall c>1: \sum_{t=1}^{\infty} t^{-c} < \infty$, we almost surely get
\begin{align*}
    \norm{\vr^{(t)}_{0}}^2 - \norm{\vr^{(t_1)}_{0}}^2 &= \sum_{u=t_1}^{t-1} \sum_{k=0}^{K-1} \left( \norm{\vr^{(u)}_{k+1}}^2 - \norm{\vr^{(u)}_{k}}^2 \right)\\
    &= \sum_{u=t_1}^{t-1} \sum_{k=0}^{K-1} \left( 2(\vr^{(u)}_{k+1} - \vr^{(u)}_{k})^\top \vr^{(u)}_{k} + \norm{\vr^{(u)}_{k+1} - \vr^{(u)}_{k}}^2 \right) <\infty.
\end{align*}

\subsubsection{\texorpdfstring{Proof of \cref{lem:random_direction_convergence_lemma1}}{Proof of Lemma C.1}} \label{subsubsec:proof_random_direction_convergence_lemma1}

Here we restate the lemma for the sake of readability.
\lemrandomdirectionconvergencelemmaone*

We define an (i.i.d.) random variable(s) $z_i^{(t)} := \1\{\vx_i \in I^{(t)}\}$. Note that $\E [z_i^{(t)}] = \frac{1}{M}$ and $\Var(z_i^{(t)}) = \frac{M-1}{M^2}$ due to uniform sampling of the task index in $[0:M-1]$.
Then, we can write a sum on the right-hand side of \cref{eq:lem_randomdirectionconvergencelemma1} as follows:

\begin{align*}
    K\sum_{u=1}^{t-1} \frac{1}{u} \sum_{s\in S^{(u)}} \alpha_s \vx_s
    &= K\sum_{s\in S}\left(\sum_{u=1}^{t-1} \frac{z_s^{(u)}}{u}\right) \alpha_s \vx_s \\
    &= K\sum_{s\in S}\left(\sum_{u=1}^{t-1} \frac{\E [z_s^{(u)}]}{u} + \sum_{u=1}^{t-1} \frac{z_s^{(u)} - \E [z_s^{(u)}]}{u}\right) \alpha_s \vx_s \\
    &= K\sum_{s\in S}\left(\frac{1}{M}\sum_{u=1}^{t-1} \frac{1}{u} +. \sum_{u=1}^{t-1} \frac{z_s^{(u)} - \E [z_s^{(u)}]}{u}\right) \alpha_s \vx_s.
\end{align*}

Since
\begin{align*}
    \sum_{u=1}^{t-1} \frac{1}{u} = \log t + \gamma + q(t)
\end{align*}
where $\gamma$ is the Euler-Mascheroni constant and $q(t) = \gO (t^{-1})$, we have
\begin{align*}
    K\sum_{s\in S}\left(\frac{1}{M}\sum_{u=1}^{t-1} \frac{1}{u}\right) \alpha_s \vx_s = \frac{K}{M}\left(\log t + \gamma + q(t)\right) \hat{\vw}.
\end{align*}

Now we are going to deal with the sum
\begin{align*}
    \sum_{u=1}^{t-1} \frac{z_s^{(u)} - \E [z_s^{(u)}]}{u}
\end{align*}
in two aspects: (1) it converges with probability 1 as $t\rightarrow \infty$ and (2) the almost-sure vanishing rate of the ``residual'' (a sum from $u=t$ to $\infty$) is $o(t^{-0.5+\epsilon})$ for any $\epsilon>0$.
Let us look at its almost-sure convergence. To this end, we utilize the following useful proposition.

\begin{proposition}[Theorem 5.2.6 of \citet{durrett2019probability}]
   \label{prop:almost_sure_convergence_of_sum}
   Suppose $X_1, X_2, \ldots$ are zero-mean independent random variables. If $\sum_{n=1}^\infty \Var(X_n) < \infty$, then $\sum_{n=1}^\infty X_n$ converges almost surely (i.e., with probability 1).
\end{proposition}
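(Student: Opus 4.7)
The plan is to prove this classical Kolmogorov convergence theorem by showing that the partial sums $S_n := \sum_{k=1}^n X_k$ form an almost-surely Cauchy sequence in $\R$. Since $\R$ is complete, almost-sure Cauchy convergence immediately yields almost-sure convergence of $S_n$ to some finite limit. The workhorse will be Kolmogorov's maximal inequality, which bounds the distribution of the running maximum $\max_{j \le n}|S_{m+j} - S_m|$ in terms of the partial sum of variances beyond index $m$.

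First I would establish Kolmogorov's maximal inequality: for independent zero-mean random variables $Y_1, \ldots, Y_n$ with finite variances and partial sums $T_j = \sum_{i=1}^j Y_i$,
\begin{align*}
    \Pr\bigopen{\max_{1 \le j \le n}\abs{T_j} \ge \lambda} \le \frac{1}{\lambda^2}\sum_{j=1}^n \Var(Y_j).
\end{align*}
The proof would decompose according to the first passage time $\tau := \min\bigset{j : \abs{T_j} \ge \lambda}$ (setting $\tau = \infty$ if no such $j$ exists) and estimate
\begin{align*}
    \sum_{j=1}^n \Var(Y_j) = \E[T_n^2] \ge \sum_{j=1}^n \E\bigclosed{T_n^2 \1\{\tau = j\}} \ge \sum_{j=1}^n \E\bigclosed{T_j^2 \1\{\tau = j\}} \ge \lambda^2 \Pr(\tau \le n),
\end{align*}
where the key middle inequality uses that $\1\{\tau = j\} T_j$ is $\sigma(Y_1, \ldots, Y_j)$-measurable and hence independent of the zero-mean increment $T_n - T_j$, which kills the cross term in $T_n^2 = T_j^2 + 2T_j(T_n - T_j) + (T_n - T_j)^2$.

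Next I would apply this inequality to the shifted sequence $Y_j := X_{m+j}$ for each fixed $m$, obtaining for any $\epsilon > 0$ and $n \ge 1$,
\begin{align*}
    \Pr\bigopen{\max_{1 \le j \le n}\abs{S_{m+j} - S_m} \ge \epsilon} \le \frac{1}{\epsilon^2}\sum_{k=m+1}^{m+n}\Var(X_k).
\end{align*}
Sending $n \to \infty$ and using continuity of probability from below,
\begin{align*}
    \Pr\bigopen{\sup_{k > m}\abs{S_k - S_m} \ge \epsilon} \le \frac{1}{\epsilon^2}\sum_{k=m+1}^{\infty}\Var(X_k) \xrightarrow{m \to \infty} 0
\end{align*}
by the summability hypothesis. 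The event that $\{S_n\}$ fails to be Cauchy equals $\bigcup_{\epsilon \in \mathbb{Q}_{>0}} \bigcap_{m \ge 1}\bigset{\sup_{k > m}\abs{S_k - S_m} \ge \epsilon}$; each inner intersection has probability zero by the bound above, and a countable union of null sets is null, so $S_n$ converges almost surely to a finite random variable.

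The main obstacle will be the proof of Kolmogorov's maximal inequality itself, specifically the rigorous justification of $\E\bigclosed{T_j (T_n - T_j)\1\{\tau = j\}} = 0$. This requires carefully recognizing that $\{\tau = j\} \in \sigma(Y_1, \ldots, Y_j)$ (since the event depends only on the first $j$ partial sums), that $T_j \1\{\tau = j\}$ is therefore measurable with respect to this same $\sigma$-algebra, and that $T_n - T_j$ is independent of $\sigma(Y_1, \ldots, Y_j)$ with mean zero. Once this orthogonality is in hand, the remainder of the argument is a routine passage to the limit in the probability measure.
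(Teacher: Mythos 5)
Your proof is correct; it is the standard argument via Kolmogorov's maximal inequality (first-passage decomposition, orthogonality of increments to kill the cross term, then the Cauchy criterion over a countable dense set of $\epsilon$). The paper does not give its own proof of this proposition — it simply cites Theorem~5.2.6 of \citet{durrett2019probability}, whose proof is precisely the one you have reconstructed.
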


Observe that $X_u := \frac{z_s^{(u)} - \E [z_s^{(u)}]}{u}$ is a zero-mean random variables. Not only are they independent for all $u$, but also the sum of their variances is convergent:
\begin{align*}
    \sum_{u=1}^\infty \Var(X_u) = \frac{M-1}{M^2} \sum_{u=1}^\infty \frac{1}{u^2} < \infty.
\end{align*}
Thus, by \cref{prop:almost_sure_convergence_of_sum}, the sum  $\sum_{u=1}^{\infty} X_u$ converges with probability 1.
Next, we want to show the vanishing rate 
\begin{align*}
    \sum_{u=t}^{\infty} X_u = o(t^{-0.5+\epsilon})
\end{align*}
with probability 1, where we choose any $\epsilon>0$.
Observe that it is equivalent to show, for any $\delta>0$,
\begin{align*}
    \sP\left(t^{0.5-\epsilon}\cdot \abs{\sum_{u=t}^{\infty} X_u} > \delta \text{ for infinitely many }t\right) = 0.
\end{align*}
Here we bring a renowned Borel-Cantelli Lemma.
\begin{proposition}[Borel-Cantelli lemma; Theorem 2.3.1 of \citet{durrett2019probability}]
    \label{prop:borel_cantelli}
    Consider a sequence of events $A_1, A_2, \cdots$. If $\sum_{n=1}^{\infty} \sP(A_n) < \infty$, then
    \begin{align*}
        \sP(\limsup_{n\rightarrow \infty} A_n) := \sP(A_n \text{ happens for infinitely many }n) = 0.
    \end{align*}
\end{proposition}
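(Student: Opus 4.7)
The plan is to prove this as a standard tail-estimate argument on the $\limsup$ event, which is a textbook-level result requiring only countable subadditivity and monotone continuity of the probability measure.

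First, I would rewrite the event ``$A_n$ happens for infinitely many $n$'' set-theoretically: by definition,
\begin{align*}
    \limsup_{n\to\infty} A_n \;=\; \bigcap_{m=1}^\infty \bigcup_{n\geq m} A_n,
\end{align*}
so an outcome lies in $\limsup_n A_n$ iff for every $m$ there exists some $n \geq m$ with $\omega \in A_n$. The sets $B_m := \bigcup_{n\geq m} A_n$ form a decreasing sequence ($B_1 \supseteq B_2 \supseteq \cdots$), so by downward monotone continuity of probability measures, $\sP(\limsup_n A_n) = \lim_{m\to\infty} \sP(B_m)$.

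Next, I would control each $\sP(B_m)$ by countable subadditivity (the union bound): $\sP(B_m) = \sP\bigopen{\bigcup_{n\geq m} A_n} \leq \sum_{n\geq m} \sP(A_n)$. The hypothesis $\sum_{n=1}^\infty \sP(A_n) < \infty$ says the series of nonnegative terms converges, hence its tail must vanish: $\sum_{n\geq m} \sP(A_n) \to 0$ as $m \to \infty$. Combining the two displays gives $\sP(\limsup_n A_n) \leq \lim_{m\to\infty} \sum_{n\geq m} \sP(A_n) = 0$, and since probabilities are nonnegative, equality holds.

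There is essentially no hard step here; the only thing to be careful about is the direction of the monotonicity (we need a decreasing intersection, not an increasing union) and the fact that we do not need independence of the $A_n$'s for this direction of Borel--Cantelli (independence is only required for the converse second Borel--Cantelli lemma). In the paper's downstream use, the plan will then be to apply this with $A_t := \bigset{t^{0.5-\epsilon}\abs{\sum_{u=t}^\infty X_u} > \delta}$ and bound $\sP(A_t)$ via Chebyshev/Kolmogorov's maximal inequality using $\Var(X_u) = \gO(u^{-2})$, making $\sum_t \sP(A_t) < \infty$ and thus yielding the almost-sure $o(t^{-0.5+\epsilon})$ rate via the lemma just proved.
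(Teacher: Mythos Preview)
Your proof of the first Borel--Cantelli lemma is correct and is exactly the standard textbook argument (decreasing tails $B_m$, downward continuity, union bound on $B_m$, vanishing tail of a convergent series). The paper does not supply its own proof of this proposition at all: it simply cites it as Theorem~2.3.1 of \citet{durrett2019probability} and uses it as a black box, so there is nothing to compare against on the proof side.

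One small correction on your remark about the downstream application: in the paper's use of this lemma (inside the proof of \cref{lem:random_direction_convergence_lemma1}), the summability of $\sP(A_t)$ is obtained via Hoeffding's inequality (\cref{prop:hoeffding}) applied to the bounded i.i.d.\ increments $X_u \in [-\tfrac{1}{Mu}, \tfrac{M-1}{Mu}]$, yielding $\sP(A_t) \le 2\exp(-\delta^2 t^{2\epsilon})$, not via Chebyshev or Kolmogorov's maximal inequality. A variance-based bound like Chebyshev would give only polynomial decay $\gO(t^{-2\epsilon})$, which is not summable for small $\epsilon$, so Hoeffding (or some other exponential concentration) is actually needed here.
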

By \cref{prop:borel_cantelli}, it suffices to show
\begin{align*}
    \forall \delta > 0, \quad \sum_{t=1}^{\infty} \sP\left(t^{0.5-\epsilon}\cdot \abs{\sum_{u=t}^{\infty} X_u} > \delta \right) < \infty.
\end{align*}
Let us recall Hoeffding inequality here:
\begin{proposition}[Hoeffding inequality]
    \label{prop:hoeffding}
    Consider a collection of independent random variables $X_1, \cdots, X_n$ satisfying $a_i \le X_i \le b_i$ for each $i=1, \cdots, n$ ($a_i < b_i$). Then,
    \begin{align*}
        \sP\left(\abs{\sum_{i=1}^n X_i} \ge r \right) \le 2 \exp \left(-\frac{2r^2}{\sum_{i=1}^n (b_i - a_i)^2}\right).
    \end{align*}
\end{proposition}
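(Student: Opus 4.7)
The plan is to prove Hoeffding's inequality via the standard Chernoff bound combined with Hoeffding's lemma on the moment generating function (MGF) of a bounded random variable. Since the two-sided bound follows from a union bound applied to the one-sided bound and its reflection (replacing $X_i$ with $-X_i$ leaves $|b_i-a_i|$ invariant), it suffices to prove the one-sided version $\sP\left(\sum_{i=1}^n X_i \ge r\right) \le \exp\left(-2r^2/\sum_i(b_i-a_i)^2\right)$; the factor of 2 in the statement then comes for free. I will implicitly assume the statement is for mean-zero variables, as in the application in the preceding proof (where $X_u=(z_s^{(u)}-\E z_s^{(u)})/u$ is centered by construction); otherwise one replaces $X_i$ by $X_i-\E X_i$ throughout.

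First, I would apply the Chernoff-style exponential Markov inequality: for any $\lambda>0$,
\begin{align*}
    \sP\left(\sum_{i=1}^n X_i \ge r\right) = \sP\left(e^{\lambda \sum_i X_i} \ge e^{\lambda r}\right) \le e^{-\lambda r}\,\E\left[e^{\lambda\sum_i X_i}\right] = e^{-\lambda r}\prod_{i=1}^n \E[e^{\lambda X_i}],
\end{align*}
where the final equality uses independence of the $X_i$'s. This reduces the problem to bounding each individual MGF $\E[e^{\lambda X_i}]$.

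The key step is \emph{Hoeffding's lemma}: for any mean-zero random variable $X$ with $a\le X\le b$ almost surely, $\E[e^{\lambda X}]\le \exp(\lambda^2(b-a)^2/8)$. I would prove this by exploiting convexity of $u\mapsto e^{\lambda u}$: write $X=\tfrac{b-X}{b-a}\cdot a + \tfrac{X-a}{b-a}\cdot b$ and take expectations to get $\E[e^{\lambda X}]\le \tfrac{b}{b-a}e^{\lambda a} - \tfrac{a}{b-a}e^{\lambda b}$ (using $\E X=0$). Setting $h:=\lambda(b-a)$ and $p:=-a/(b-a)\in[0,1]$, this bound rewrites as $e^{\varphi(h)}$ where $\varphi(h)=-ph+\ln(1-p+pe^h)$. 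A direct computation gives $\varphi(0)=\varphi'(0)=0$ and $\varphi''(h)=\tfrac{p(1-p)e^h}{(1-p+pe^h)^2}\le \tfrac14$ (by AM-GM on the denominator), so Taylor expansion yields $\varphi(h)\le h^2/8 = \lambda^2(b-a)^2/8$.

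Plugging Hoeffding's lemma back into the Chernoff bound gives
\begin{align*}
    \sP\left(\sum_{i=1}^n X_i\ge r\right)\le \exp\!\left(-\lambda r + \tfrac{\lambda^2}{8}\sum_{i=1}^n (b_i-a_i)^2\right),
\end{align*}
and optimizing the exponent over $\lambda>0$ by choosing $\lambda^* = 4r/\sum_i(b_i-a_i)^2$ produces the one-sided bound $\exp(-2r^2/\sum_i(b_i-a_i)^2)$. A symmetric argument applied to $-\sum_i X_i$ together with the union bound completes the proof. The main obstacle is Hoeffding's lemma itself, specifically verifying the $\varphi''\le 1/4$ estimate; the rest of the argument is mechanical.
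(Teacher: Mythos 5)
Your proof is correct and is the standard Chernoff--Hoeffding-lemma derivation. The paper itself gives no proof of this proposition: like the Borel--Cantelli lemma just above it, it is stated as a classical textbook fact (the proposition's only role is to justify the exponential tail bound in \cref{eq:hoeffdingresult}), so there is no ``paper approach'' to compare against. Every step of your argument checks out: the Chernoff exponential Markov bound with independence, Hoeffding's lemma via convexity of $u\mapsto e^{\lambda u}$, the substitution $h=\lambda(b-a)$, $p=-a/(b-a)\in[0,1]$ (valid because $\E X=0$ forces $a\le 0\le b$), the computation $\varphi''(h)=\frac{p(1-p)e^h}{(1-p+pe^h)^2}\le \tfrac14$ by AM--GM, the Taylor bound $\varphi(h)\le h^2/8$, the optimal choice $\lambda^\ast = 4r/\sum_i(b_i-a_i)^2$, and the reflection-plus-union-bound argument for the two-sided statement.

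One remark worth making explicit: you are right that the proposition as printed is missing a centering hypothesis --- with no mean-zero assumption (and no $\E\sum X_i$ subtracted inside the absolute value), the bound is false for trivial counterexamples such as deterministic $X_i\equiv 1$. You correctly flag this and point out that the intended application satisfies it: in the proof of \cref{lem:random_direction_convergence_lemma1}, the variables $X_u = \big(z_s^{(u)}-\E[z_s^{(u)}]\big)/u$ are centered by construction and obey $-\tfrac{1}{Mu}\le X_u\le \tfrac{M-1}{Mu}$, so the version you prove is exactly what is needed there, with $(b_u-a_u)^2 = 1/u^2$ giving the $\sum_{u\ge t} u^{-2}$ in the denominator of \cref{eq:hoeffdingresult}.
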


Since the sum $\sum_{u=t}^{\infty} X_u$ converges almost surely, it is a well-defined random variable with probability 1, and

\begin{align}
    \sP\left(\abs{\sum_{u=t}^{\infty} X_u} > \delta \cdot t^{-0.5+\epsilon}\right) &= \sP\left(\abs{\sum_{u=t}^{T} X_u} > \delta \cdot t^{-0.5+\epsilon} \text{ for all but finitely many }T\right)  \nonumber\\
    &=: \sP\left(\liminf_{T\rightarrow \infty} \left\{\abs{\sum_{u=t}^{T} X_u} > \delta \cdot t^{-0.5+\epsilon}\right\}\right)  \nonumber\\
    &\le \liminf_{T\rightarrow \infty} \sP\left(\abs{\sum_{u=t}^{T} X_u} > \delta \cdot t^{-0.5+\epsilon}\right) \label{eq:liminfineq}\\
    &\le \liminf_{T\rightarrow \infty} 2\exp\left(-\frac{2\delta^2 t^{-1+2\epsilon}}{\sum_{u=t}^T \tfrac{1}{u^2}}\right) \label{eq:hoeffdingresult} \\
    &= 2\exp\left(-\frac{2\delta^2 t^{-1+2\epsilon}}{\sum_{u=t}^\infty \tfrac{1}{u^2}}\right)\\
    &\le  2\exp\left(-\delta^2 t^{2\epsilon}\right).\label{eq:sum_reciprocal_square} 
\end{align}
We use the fact ``$\sP (\liminf_n A_n) \le \liminf_n \sP(A_n)$'' in \cref{eq:liminfineq}; we apply Hoeffding inequality (\cref{prop:hoeffding}) and the fact $-\frac{1}{Mu} \le X_u \le \frac{M-1}{Mu}$ in \cref{eq:hoeffdingresult}; and we utilize the fact $\sum_{u=t}^\infty \frac{1}{u^2} \le \frac{2}{t}$ in \cref{eq:sum_reciprocal_square}.
Since $\exp ( -\delta^2 t^{2\epsilon}) = o(t^{-2})$ for any $\epsilon>0$ and large enough $t$, the sum $\sum_{t} \exp ( -\delta^2 t^{2\epsilon})$ converges.
Therefore, we have desired almost-sure convergence guarantees.

From now on, let us proceed with the proof. Using the almost-sure convergence results, let
\begin{align*}
    \check{\vw} &:= (\log M + \gamma) \hat{\vw} + M \sum_{s\in S} \left(\sum_{u=1}^{\infty} X_u\right) \alpha_s \vx_s, \\
    \vm_{t,k} &:= \frac{K}{M} q(t) \hat{\vw} + K\sum_{s\in S} \left(\sum_{u=t}^{\infty} X_u\right) \alpha_s \vx_s + \frac{k}{t}\sum_{s\in S^{(t)}} \alpha_s \vx_s.
\end{align*}
Then with probability 1, the statement of the lemma holds: \cref{eq:lem_randomdirectionconvergencelemma1} holds, where $\check{\vw}$ is a constant vector in terms of $t$, $\norm{\vm_{t,k}} \le o(t^{-0.5+\epsilon})$ for any $\epsilon > 0$, and 
\begin{align*}
    \norm{\vm_{t,k+1}-\vm_{t,k}} &= \gO(t^{-1}), \quad (k=0, ..., K-2) \\
    \norm{\vm_{t+1,0}-\vm_{t,K-1}} &= \gO(t^{-1}).
\end{align*}
This concludes the proof of the lemma.

\subsubsection{\texorpdfstring{Convergence of $\rho^{(t)}_k$}{Convergence of rho(t)k}}
\label{subsubsec:convergence_rho_random}
We can also prove a characterization of the limit of $\vrho^{(t)}_k$, as done in \cref{subsubsec:convergence_rho_cyclic}. However, when tasks are given randomly, we need an additional assumption to guarantee the convergence of $\vrho^{(t)}_k$ to the particular point.
\begin{assumption} \label{assum:task_contains_support_vector}
    Every task has at least one support vector. That is, $\forall m\in[0:M-1]: S_m \neq \emptyset$.
\end{assumption}

\begin{proposition}\label{prop:random_direction_convergence_rho_converge}
    Under the same setting of \cref{thm:random_direction_convergence} with additional Assumptions~\ref{assum:non_degenerate_data2} and \ref{assum:task_contains_support_vector}, the ``residual'' converges to
    $\lim_{t\to \infty} \vrho^{(t)}_k = \tilde{\vw}, \forall k\in[0:K-1]$. Here, $\tilde{\vw}$ is the vector defined in \cref{prop:cyclic_direction_convergence_rho_converge}.
\end{proposition}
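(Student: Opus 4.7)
The plan is to parallel Proposition~\ref{prop:cyclic_direction_convergence_rho_converge} (the cyclic case) with adaptations needed for random task ordering and almost-sure statements. Using the decomposition $\vrho^{(t)}_k - \tilde{\vw} = \tfrac{M}{K}\vm_{t,k} + \vr^{(t)}_k$ from \cref{subsec:proof_random_direction_convergence_asymptotic}, and since \cref{lem:random_direction_convergence_lemma1} already gives $\vm_{t,k}\to 0$ almost surely, the claim reduces to showing $\vr^{(t)}_k \to \vzero$ almost surely. I will work on the probability-one event on which \cref{lem:random_direction_convergence_lemma1}, \cref{thm:random_loss_convergence_asymptotic}, and the bound from \cref{eq:random_direction_convergence_residual_sum} (i.e., $\sum_t\sum_k \|\vr^{(t)}_{k+1}-\vr^{(t)}_k\|^2<\infty$) all hold.

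The argument splits $\vr^{(t)}_k$ along and orthogonal to the data span via the projector $P$ onto $\operatorname{span}\{\vx_s:s\in S\}$ (which equals $\operatorname{span}\{\vx_i:i\in I\}$ by \cref{assum:non_degenerate_data2}). For the orthogonal part, every GD increment is a data-linear combination, so $\bar{P}\vw^{(t)}_k=\bar{P}\vw^{(0)}_0$ for all $t,k$; moreover $\bar{P}\hat{\vw}=\vzero$ (since $\hat{\vw}=\sum_{s\in S}\alpha_s \vx_s$) and $\bar{P}\vm_{t,k}=\vzero$ by the explicit construction of $\vm_{t,k}$ in the proof of \cref{lem:random_direction_convergence_lemma1} (all summands lie in the data span). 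The defining condition $\bar{P}(\tilde{\vw}-\vw^{(0)}_0)=\vzero$ then yields $\bar{P}\vr^{(t)}_k=\vzero$ for all $t,k$, exactly as in the cyclic proof.

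For the parallel part $P\vr^{(t)}_k$, I will invoke part~2 of \cref{lem:random_direction_convergence_lemma2}: whenever $\|P\vr^{(t)}_k\|\ge \epsilon_1$ and $S^{(t)}\ne\emptyset$, one has $(\vr^{(t)}_{k+1}-\vr^{(t)}_k)^\top\vr^{(t)}_k \le -C_3 t^{-1}$. Crucially, \cref{assum:task_contains_support_vector} makes $S^{(t)}=S_{m_t}\ne\emptyset$ surely for every stage $t$, so the negative-drift hypothesis requires only $\|P\vr^{(t)}_k\|$ to be bounded away from zero. Using the almost-sure summability $\sum_{t,k}\|\vr^{(t)}_{k+1}-\vr^{(t)}_k\|^2<\infty$, one has $\|\vr^{(t)}_{k+1}-\vr^{(t)}_k\|\to 0$ almost surely, so $\|P\vr^{(t)}_k\|$ is slowly varying in $k$. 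I will then adapt the contradiction argument from Proposition~\ref{prop:cyclic_direction_convergence_rho_converge}: define $\bar{\gT}_k:=\{t>t_3:\|P\vr^{(t)}_k\|\ge\epsilon_0+\epsilon_1\}$; on this set the identity
\begin{align*}
\|\vr^{(t)}_0\|^2 - \|\vr^{(t_{\max})}_0\|^2 = \sum_{u}\sum_k \|\vr^{(u)}_{k+1}-\vr^{(u)}_k\|^2 + 2\sum_u\sum_k (\vr^{(u)}_{k+1}-\vr^{(u)}_k)^\top \vr^{(u)}_k
\end{align*}
together with $\sum_u 1/u=\infty$ forces $\|\vr^{(t)}_0\|^2\to-\infty$, contradicting boundedness from \cref{thm:random_direction_convergence}. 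A similar excursion argument (tracking $\|P\vr^{(t)}_0\|$ between successive returns to the small set $\gT_0$ using the vanishing increment $\|\vr^{(t)}_{k+1}-\vr^{(t)}_k\|$) yields $P\vr^{(t)}_k\to\vzero$.

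The main obstacle is managing the almost-sure nature of the tools (in particular \cref{lem:random_direction_convergence_lemma1}, whose residual bound $\|\vm_{t,k}\|=o(t^{-0.5+\epsilon})$ only holds on a probability-one event) while running the deterministic-style contradiction argument of Proposition~\ref{prop:cyclic_direction_convergence_rho_converge}. This is resolved by fixing a single probability-one event on which \emph{all} of the following hold simultaneously: the asymptotic conclusions of \cref{thm:random_loss_convergence_asymptotic}; the decomposition in \cref{lem:random_direction_convergence_lemma1}; the drift estimate \cref{lem:random_direction_convergence_lemma2}; and the summability \cref{eq:random_direction_convergence_residual_sum}. On this event the proof proceeds pathwise, exactly as in the cyclic case, with the simplification that \cref{assum:task_contains_support_vector} removes the need for any $M$-stage look-ahead in locating a stage with $S^{(t)}\ne\emptyset$.
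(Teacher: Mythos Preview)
Your proposal is correct and follows essentially the same approach as the paper's proof: the same $P/\bar{P}$ decomposition (with $\bar{P}\vr^{(t)}_k=0$ handled identically), the same pathwise application of \cref{lem:random_direction_convergence_lemma2} on a probability-one event, and the same $\gT_k/\bar{\gT}_k$ excursion-and-contradiction argument, with \cref{assum:task_contains_support_vector} indeed eliminating the $M$-stage look-ahead used in the cyclic case.
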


\begin{proof}
    First, $\bar{P} \vr^{(t)}_k = \bar{P} \vw^{(0)}_0 - \bar{P}\tilde{\vw} = 0$ holds as in cyclic case. See \cref{subsubsec:convergence_rho_cyclic}.

Second, we get to show $P \vr^{(t)}_k \to 0$. By \cref{eq:random_direction_convergence_residual_sum}, $\lim_{T\to\infty} \sum_{t=t_1}^{T} \sum_{k=0}^{K-1} \norm{\vr^{(t)}_{k+1} - \vr^{(t)}_{k}}^2 =C_4$. That means $\forall k\in[0:K-1]: \lim_{T\to\infty} \norm{\vr^{(T)}_{k+1} - \vr^{(T)}_{k}} = 0$. Therefore, for any $\epsilon_0$, there exists $t_2>0$ such that $\norm{\vr^{(t)}_{k+1} - \vr^{(t)}_{k}} < {\frac{\epsilon_0}{K}}$ for all $t\ge t_2, k\in[0:K-1]$. As a result,
\begin{align*}
    \norm{P \vr^{(t)}_{0}} + {\frac{k}{K}} {\epsilon_0} \ge \norm{P \vr^{(t)}_{k}} \ge \norm{P \vr^{(t)}_{0}} - {\frac{k}{K}} {\epsilon_0}.
\end{align*}

For $t\ge\max\{t_1, t_2, \tilde{t}^*\}$, if $\norm{P\vr^{(t)}_{0}} \ge \epsilon_1+\epsilon_0$ and $S^{(t)} \ne \emptyset$, then $\forall k\in[0:K-1]: \norm{P\vr^{(t)}_{k}} \ge \epsilon_1$. By \cref{lem:random_direction_convergence_lemma2}~(\ref{lem:random_direction_convergence_lemma2_2}),
\begin{align*}
    \sum_{u=t-1}^{t} \sum_{v=0}^{K-1} (\vr^{(u)}_{v+1} - \vr^{(u)}_v)^\top \vr^{(u)}_v \le -K C_3 t^{-1} + K\left( C_1 t^{-\theta} + C_2 t^{-1 -0.5 \tilde{\mu}} \right),
\end{align*}

Since $t^{-1}$ decrease to zero slower than $t^{-\theta}$ and $t^{-1 -0.5 \tilde{\mu}}$, there exists $t_3 > \max\{t_1, t_2, \tilde{t}^*\}$, $C_4>0$ such that $-K C_3 t^{-1} + K\left( C_1 t^{-\theta} + C_2 t^{-1 -0.5 \tilde{\mu}} \right) \le -C_5 t^{-1}$. Also $S^{(t)} \neq \emptyset$ is given by \cref{assum:task_contains_support_vector}. To sum up, for any $\epsilon_0, \epsilon_2 > 0$, there exists $t_3 > \max\{t_1, t_2, \tilde{t}^*\}$ such that if $\norm{P\vr^{(t)}_0} \ge \epsilon_0 + \epsilon_1$, then
\begin{align*}
    \sum_{u=t-1}^{t} \sum_{v=0}^{K-1} (\vr^{(u)}_{v+1} - \vr^{(u)}_v)^\top \vr^{(u)}_v \le -C_5 t^{-1},
\end{align*}

Now, define two sets for each $k\in[0:K-1]$
\begin{align*}
    \gT_k := \{t>t_3: \norm{P\vr^{(t)}_k} < \epsilon_0 + \epsilon_1\} \\
    \bar{\gT_k} := \{t>t_3: \norm{P\vr^{(t)}_k} \ge \epsilon_0 + \epsilon_1\} 
\end{align*}

We will finish our proof by showing that $\bar{\gT_k}$ is finite. Here, we use the fact that every $\gT_k$ is infinite. The proof is the same as in the cyclic case. Since $\lim_{T\to\infty} \sum_{t=t_1}^{T} \sum_{k=0}^{K-1} \norm{\vr^{(t)}_{k+1} - \vr^{(t)}_{k}}^2 =C_4$, we get
\begin{align*}
    \sum_{u=t_1}^{t} \sum_{k=0}^{K-1} \norm{\vr^{(u)}_{k+1} - \vr^{(u)}_{k}}^2 = C_4 - h(t),
\end{align*}
where $h(t)$ is a positive function monotonic decreasing to zero.\\
Now, assume that there exists some $k'$ that $\bar{\gT_k}$ is infinite. WLOG, we set $k'=0$. Since $\gT_0$ is infinite, for any $t\in \bar{\gT_0}$ there exists $t', t'' \in \gT_0$ such that $t\in [t'+1, t''-1] \subset \bar{\gT_0}$. We divide it into two cases:\\
For all $t\in [t'+1, t''-1]$,

\begin{enumerate}[leftmargin=15pt]
    \item if $t= t'+1$, then $\norm{P\vr^{(t)}_0}^2 \le \norm{P\vr^{(t')}_0}^2 + \epsilon_0 \le 2\epsilon_0 + \epsilon_1$.
    \item if $t \ge t'+1$, then
    \begin{align*}
        \norm{P\vr^{(t)}_0}^2 &= \norm{P\vr^{(t')}_0}^2 + \sum_{u=t'}^{t-1} \sum_{k=0}^{K-1} \left[\norm{\vr^{(u)}_{k+1}}^2 - \norm{\vr^{(u)}_k}^2\right]\\
        &=\norm{P\vr^{(t')}_0}^2 + \sum_{u=t'}^{t-1} \sum_{k=0}^{K-1} \left[\norm{\vr^{(u)}_{k+1} - \vr^{(u)}_k}^2 + 2 (\vr^{(u)}_{k+1} - \vr^{(u)}_k)^\top \vr^{(u)}_k\right]\\
        &=\norm{P\vr^{(t')}_0}^2 + h(t) - h(t') + 2\sum_{u=t'}^{t-1} \sum_{k=0}^{K-1} \left[ (\vr^{(u)}_{k+1} - \vr^{(u)}_k)^\top \vr^{(u)}_k \right]\\
        &\le (\epsilon_0 + \epsilon_1)^2 + h(t) - 2 C_5 \frac{1}{t'+1} -2C_3 \sum_{u=t'+2}^{t-1} \frac{1}{u}\\
        &\le (\epsilon_0 + \epsilon_1)^2 + h(t).
    \end{align*}
    Since $h(t)$ is monotonic decreasing function, for any $\epsilon_2>0$, there exists $t_4$ such that $\forall t\ge t_4: h(t)<\epsilon_2$.
\end{enumerate}
Therefore, $\forall t \ge \max\{t_3, t_4\}: \norm{P\vr^{(t)}_0}^2 \le (\epsilon_0 + \epsilon_1)^2 + \epsilon_2$. Since it holds for any $\epsilon_0, \epsilon_1, \epsilon_2$, it contradicts with the assumption that $\bar{\gT_0}$ is infinite.
\end{proof}

\newpage

\section{\texorpdfstring{Proofs for \cref{sec:beyond_jointly_separable}: Cyclic Task Ordering, Jointly Non-Separable}{Proofs for Section 5: Cyclic Task Ordering, Jointly Non-Separable}}

\paragraph{Review on Bregman Divergence.}
Before we start the proofs, we briefly overview some basic properties of \emph{Bregman divergence}.

Given a differentiable convex function $f: \gS \rightarrow \R$ defined on a closed convex set $\gS\subset \R^d$, the Bregman divergence between two points $(\vx,\vy) \in \gS\times \operatorname{int}\gS$ with respect to $f$ is defined as 
\begin{align*}
    D_f (\vx, \vy) := f(\vx) - f(\vy) - \inner{\nabla f(\vy), \vx - \vy}.
\end{align*}
Note that $D_f(\vx, \vy) \ge 0$ for any $\vx, \vy \in \gS$ because of the definition of convexity; 
when $f$ is strictly convex, $D_f(\vx,\vy) =0$ if and only if $\vx=\vy$.
Also, if $f$ is $\beta$-smooth, $D_f(\vx, \vy) \le \frac{\beta}{2} \norm{\vx-\vy}^2$ holds.
We often use the following useful identity that links three different points $\vx, \vy \in \gS$ and $\vz \in \operatorname{int}\gS$:
\begin{align}
    \inner{\nabla f(\vz), \vx - \vy} = [f(\vx) - f(\vy)] - \left[D_f(\vx, \vz) - D_f(\vy,\vz)\right].\label{eq:bregman_threepoint_identity}
\end{align}

Here is another useful fact: for a convex $\beta$-smooth function $f$ on $\R^d$, the Bregman divergence is bound below by the squared distance between gradients.
\begin{proposition}
    \label{prop:lowerbound_of_bregman}
    Let $f: \R^d \rightarrow \R$ be a convex, $\beta$-smooth function.
    For any $\vx, \vy \in \R^d$, 
    \begin{align*}
        \norm{\nabla f(\vx) - \nabla f(\vy)}^2 \le 2\beta D_f(\vx, \vy).
    \end{align*}
\end{proposition}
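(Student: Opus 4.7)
The plan is a short two-step argument that is standard for smooth convex functions: introduce an auxiliary function that has $\vy$ as its minimizer, then apply the one-step descent lemma to that function at $\vx$.

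First I would define $g: \R^d \to \R$ by $g(\vz) := f(\vz) - \inner{\nabla f(\vy),\vz}$. Since $g$ differs from $f$ only by a linear term, it inherits both convexity and $\beta$-smoothness. Moreover $\nabla g(\vz) = \nabla f(\vz) - \nabla f(\vy)$, so $\nabla g(\vy) = \vzero$, and by convexity $\vy$ is a global minimizer of $g$.

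Second, I would apply the standard smoothness-based descent inequality $g\bigopen{\vz - \tfrac{1}{\beta}\nabla g(\vz)} \le g(\vz) - \tfrac{1}{2\beta}\norm{\nabla g(\vz)}^2$ at $\vz = \vx$. Because $\vy$ minimizes $g$, the left-hand side is bounded below by $g(\vy)$, giving
\begin{align*}
    g(\vy) - g(\vx) \le -\tfrac{1}{2\beta}\norm{\nabla f(\vx) - \nabla f(\vy)}^2.
\end{align*}
Unpacking the definition of $g$ then yields $g(\vy) - g(\vx) = f(\vy) - f(\vx) - \inner{\nabla f(\vy),\vy - \vx} = -D_f(\vx,\vy)$, and multiplying through by $-1$ produces the desired inequality.

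There is no real obstacle here: this is a classical ``co-coercivity-flavored'' estimate that follows in a few lines from the definitions. The only conceptual step is recognizing that the useful auxiliary object is $g(\vz) = f(\vz) - \inner{\nabla f(\vy),\vz}$ (so that $\vy$ becomes a minimizer); once that is in hand, the smoothness descent lemma and the definition of Bregman divergence close the argument immediately.
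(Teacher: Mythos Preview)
Your proof is correct and essentially identical to the paper's. The paper works directly with the function $\vz \mapsto D_f(\vz,\vy)$ in place of your $g$ (they differ only by an additive constant), applies the same one-step descent at $\vx$, and uses non-negativity of $D_f(\cdot,\vy)$ where you use minimality of $g$ at $\vy$; the computations line up exactly.
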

\begin{proof}
    Observe that $D_f(\cdot, \vy)$ is also a $\beta$-smooth function for any $\vy$. Let $\vz = \vx - \frac{1}{\beta}\nabla_\vx D_f (\vx, \vy) = \vx - \frac{1}{\beta} [\nabla f(\vx) - \nabla(\vy)]\in\R^d$. Then by $\beta$-smoothness and the non-negativity of $D_f(\cdot, \vy)$, we have
    \begin{align*}
        0 &\le D_f\left(\vz , \vy\right) \\
        &\le D_f(\vx, \vy) + \inner{\nabla_\vx D_f(\vx,\vy), \vz - \vx} + \frac{\beta}{2} \norm{\vz - \vx}^2 \\
        &= D_f(\vx, \vy) - \frac{1}{\beta} \inner{\nabla f(\vx) - \nabla(\vy), \nabla f(\vx) - \nabla(\vy)} + \frac{1}{2\beta} \norm{\nabla f(\vx) - \nabla(\vy)}^2 \\
        &= D_f(\vx, \vy) - \frac{1}{2\beta} \norm{\nabla f(\vx) - \nabla f(\vy)}^2.
    \end{align*}
    This proves the proposition.
\end{proof}

\paragraph{Useful Inequalities.} 
There are two other crucial inequalities for the proofs in this appendix. One is a variant of Jensen's inequality applied to a squared norm.
\begin{proposition}\label{prop:jensen}
    For any positive numbers $\lambda_1, \cdots, \lambda_n > 0$, any vectors $\vu_1, \cdots, \vu_n \in \R^d$, and an integer $m \in [0:n]$, 
    \begin{align*}
        \norm{\sum_{i=1}^m \vu_i}^2 \le \left(\sum_{i=1}^n \lambda_i\right)\left(\sum_{i=1}^n \frac{1}{\lambda_i} \norm{\vu_i}^2\right).
    \end{align*}
\end{proposition}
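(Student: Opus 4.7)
The plan is to reduce the statement to a standard weighted Cauchy--Schwarz inequality and then exploit positivity of the $\lambda_i$'s to enlarge the right-hand side. Since the summation on the left stops at $m$ while the summations on the right run all the way to $n \ge m$, the argument naturally splits into two steps: prove the ``tight'' version in which every sum is taken over $i = 1,\dots,m$, and then extend both factors on the right up to $n$.

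For the first step, I would establish
\begin{align*}
\norm{\sum_{i=1}^m \vu_i}^2 \;\le\; \left(\sum_{i=1}^m \lambda_i\right)\left(\sum_{i=1}^m \frac{1}{\lambda_i}\norm{\vu_i}^2\right).
\end{align*}
The cleanest route is to apply the triangle inequality and then the scalar Cauchy--Schwarz inequality to the sequences $(\sqrt{\lambda_i})_{i=1}^m$ and $(\norm{\vu_i}/\sqrt{\lambda_i})_{i=1}^m$: that is, $\norm{\sum_i \vu_i} \le \sum_i \sqrt{\lambda_i}\cdot (\norm{\vu_i}/\sqrt{\lambda_i}) \le (\sum_i \lambda_i)^{1/2} (\sum_i \norm{\vu_i}^2/\lambda_i)^{1/2}$, and squaring yields the desired bound. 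An essentially equivalent alternative is to write $\norm{\sum_i \vu_i} = \sup_{\norm{\vv}=1} \inner{\vv,\sum_i \vu_i}$, split each term as $\sqrt{\lambda_i}\cdot\inner{\vv,\vu_i}/\sqrt{\lambda_i}$, apply scalar Cauchy--Schwarz, and bound $\inner{\vv,\vu_i}^2 \le \norm{\vu_i}^2$.

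For the second step, strict positivity of every $\lambda_i$ makes both factors of the $m$-indexed bound monotone non-decreasing when extra indices $i = m+1, \dots, n$ are appended, since $\lambda_i > 0$ and $\norm{\vu_i}^2/\lambda_i \ge 0$. Hence $\sum_{i=1}^m \lambda_i \le \sum_{i=1}^n \lambda_i$ and $\sum_{i=1}^m \norm{\vu_i}^2/\lambda_i \le \sum_{i=1}^n \norm{\vu_i}^2/\lambda_i$, and multiplying the two enlarged factors gives the stated inequality.

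There is no genuine obstacle; this is a routine weighted Cauchy--Schwarz (a special case of Jensen's inequality applied to the convex function $x \mapsto x^2$ against the weights $\lambda_i / \sum_j \lambda_j$), which is what justifies the proposition's name. The only subtlety worth flagging is that the positivity requirement is imposed on \emph{all} $\lambda_i$, $i = 1,\dots,n$, and not just on the first $m$ of them; this global positivity is precisely what makes the extension step in the second paragraph monotone and hence cost-free.
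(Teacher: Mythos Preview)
Your proof is correct and essentially matches the paper's: the paper applies Jensen's inequality to the convex map $\vv\mapsto\norm{\vv}^2$ with weights $\lambda_i/\Lambda_m$ (where $\Lambda_m=\sum_{i=1}^m\lambda_i$) to get the $m$-indexed bound, then extends both factors to $n$ by positivity, exactly as your second step does. Your first step via triangle inequality plus scalar Cauchy--Schwarz is a trivially equivalent variant of the same Jensen argument, which you yourself note at the end.
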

\begin{proof}
    Let $\Lambda_m = \sum_{i=1}^m \lambda_i$. Then by convexity of the squared norm,
    \begin{align*}
        \norm{\sum_{i=1}^m \vu_i}^2 &= \norm{\sum_{i=1}^m \frac{\lambda_i}{\Lambda_m}\left(\frac{\Lambda_m}{\lambda_i}\vu_i\right)}^2 \\
        &\le \sum_{i=1}^m \frac{\lambda_i}{\Lambda_m} \norm{\frac{\Lambda_m}{\lambda_i}\vu_i}^2 \\
        &= \Lambda_m \sum_{i=1}^m \frac{1}{\lambda_i} \norm{\vu_i}^2 \\
        &\le \Lambda_n \sum_{i=1}^n \frac{1}{\lambda_i} \norm{\vu_i}^2.
    \end{align*}
\end{proof}

Another is about solving a recurrent inequality.
\begin{proposition}
    \label{prop:recineq}
    Consider $0< \mu \le \beta$, $V>0$, $T>1$, $0<c=\Theta(1)$, $0<m=\Theta(1)$, and $\Delta_0 \ge 0$. Suppose the following inequality holds for any positive $\alpha \le \frac{c}{\beta}$ and $t\in [0:T-1]$:
    \begin{align*}
        \Delta_{t+1} \le \frac{1}{1+\alpha \mu} \Delta_t + \alpha^{m+1} V.
    \end{align*}
    If we take
    \begin{align*}
        \alpha = \min\left\{\frac{c}{\beta}, \,\,\frac{c+1}{\mu T} \ln \left(T^m \cdot \max\left\{1, \frac{\Delta_0 \mu^{m+1}}{V}\right\}\right)\right\},
    \end{align*}
    we have 
    \begin{align*}
        \Delta_T = \gO\left( \exp\left(-\frac{c\mu}{(c+1)\beta}T\right) \Delta_0 + \frac{V\cdot{\ln^m (T)}}{\mu^{m+1} T^m} \right).
    \end{align*}
\end{proposition}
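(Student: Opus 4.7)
The plan is to iterate the one-step recurrence, reduce to a decay-plus-residual bound, and then verify that the specific choice of $\alpha$ balances the two pieces at the claimed rate.

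First, unrolling the recurrence yields
\[
    \Delta_T \le (1+\alpha\mu)^{-T}\Delta_0 + \alpha^{m+1}V\sum_{t=0}^{T-1}(1+\alpha\mu)^{-t} \le (1+\alpha\mu)^{-T}\Delta_0 + \frac{(1+c)\alpha^{m}V}{\mu},
\]
where the geometric sum is controlled by its infinite-sum value $(1+\alpha\mu)/(\alpha\mu)$, and the constraint $\alpha\mu \le c\mu/\beta \le c$ bounds the leading factor by $1+c$. The elementary inequality $\ln(1+x) \ge x/(1+x)$ then converts the discrete decay into an exponential, giving $(1+\alpha\mu)^{-T} \le \exp(-\alpha\mu T/(1+\alpha\mu)) \le \exp(-\alpha\mu T/(1+c))$.

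Second, I would split into the two cases determined by which argument of the minimum defines $\alpha$. Abbreviate $L := \ln(T^m\max\{1, \Delta_0\mu^{m+1}/V\})$ and $\tau := c\mu T/((c+1)\beta)$. In \emph{Case A} ($\alpha = c/\beta$, which occurs exactly when $\tau \le L$), the decay piece is $\exp(-\tau)\Delta_0$, already in the claimed form, and the residual $(1+c)c^m V/(\beta^m\mu)$ can be rewritten as $(1+c)^{m+1}L^m V/(\mu^{m+1}T^m)$ after applying the case condition $(\mu T/\beta)^m \le ((c+1)/c)^m L^m$. In \emph{Case B} ($\alpha = (c+1)L/(\mu T) \le c/\beta$, so $L \le \tau$), the identity $\alpha\mu T = (c+1)L$ yields $(1+\alpha\mu)^{-T}\Delta_0 \le e^{-L}\Delta_0 \le V/(\mu^{m+1}T^m)$, while the residual again takes the form $(1+c)^{m+1}L^m V/(\mu^{m+1}T^m)$.

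The final step is to expand $L^m$ via $L^m \le 2^{m-1}((m\ln T)^m + (\max\{0,\ln(\Delta_0\mu^{m+1}/V)\})^m)$ and distribute the pieces across the two target terms: the $(m\ln T)^m$ piece contributes $\gO(V\ln^m(T)/(\mu^{m+1}T^m))$ to match the polynomial-decay part of the target, and the $\ln^m(\Delta_0\mu^{m+1}/V)$ piece must be absorbed into $\exp(-\tau)\Delta_0$ by invoking the case-dependent relation between $\tau$ and $L$. This last absorption—showing that the $\ln^m(\Delta_0\mu^{m+1}/V)$ contribution genuinely fits inside $\gO(\exp(-\tau)\Delta_0 + V\ln^m(T)/(\mu^{m+1}T^m))$ with constants depending only on $c$ and $m$—is the main obstacle. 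In Case A one leverages $\tau \le L$ to obtain $\exp(-\tau)\Delta_0 \ge V/(\mu^{m+1}T^m)$ so that polynomial factors in $\ln(\Delta_0\mu^{m+1}/V)$ can be traded against the exponential slack; in Case B the bound $\exp(-L)\Delta_0 \le V/(\mu^{m+1}T^m)$ neutralizes the large-$z$ contribution directly. Threading these polynomial-versus-exponential trade-offs uniformly across both cases is the delicate bookkeeping at the heart of the proof.
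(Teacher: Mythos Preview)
Your unroll-then-balance approach matches the paper's: the paper linearizes $\tfrac{1}{1+\alpha\mu}\le 1-\tfrac{\alpha\mu}{c+1}$ and then uses $(1-x)^T\le e^{-xT}$, arriving at the same $\exp(-\alpha\mu T/(c+1))\Delta_0 + \gO(\alpha^m V/\mu)$ that your route via $\ln(1+x)\ge x/(1+x)$ produces. The case split on which branch of the $\min$ is active is also the same.

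Where you diverge is in what you call ``the main obstacle.'' The paper does \emph{not} absorb the $\ln^m(\Delta_0\mu^{m+1}/V)$ contribution into either target term. After reaching (in your notation)
\[
\Delta_T \le \exp(-\tau)\Delta_0 + \frac{V}{\mu^{m+1}T^m} + \frac{(c+1)^m V}{\mu^{m+1}T^m}\,L^m,
\]
it simply states ``ignoring some constant factors and polylogarithmic terms'' and declares the result, i.e., the big-$\gO$ in the proposition is taken to hide the factor $\ln^m(\max\{1,\Delta_0\mu^{m+1}/V\})$ alongside $c$- and $m$-dependent constants. Your attempted absorption in fact cannot go through with $c,m$-only constants: in Case~B with $z:=\Delta_0\mu^{m+1}/V$ large enough that $\ln z \gg m\ln T$, the residual carries a $(\ln z)^m$ factor while $\exp(-\tau)\Delta_0\le V/(\mu^{m+1}T^m)$ and the $\ln^m T$ term are both too small to dominate it. So the difficulty you flag is genuine under a strict reading of the $\gO$; the paper resolves it by convention rather than by estimate.
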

\begin{proof}
    Since $\alpha\mu \le \frac{c\mu}{\beta} \le c$, we have $\frac{1}{1+\alpha\mu} \le 1- \frac{\alpha\mu}{c+1}$.
    By unrolling the recurrent inequality, we have
    \begin{align*}
        \Delta_T &\le \left(1-\frac{\alpha\mu}{c+1}\right)^T \Delta_0 + \alpha^{m+1} V \sum_{t=0}^{T-1} \left(1-\frac{\alpha\mu}{c+1}\right)^t \\
        &\le \exp\left(-\frac{\alpha\mu}{c+1}T\right) \Delta_0 + \frac{2 \alpha^m V}{\mu}.
    \end{align*}
    With the choice of $\alpha$, the first exponential term is bounded as
    \begin{align*}
        \exp\left(-\frac{\alpha\mu}{c+1}T\right)\Delta_0 &\le \max \left\{\exp\left(-\frac{c\mu}{(c+1)\beta}T\right)\Delta_0, \,\, \frac{V}{\mu^{m+1} T^m}\right\} \\
        &\le  \exp\left(-\frac{c\mu}{(c+1)\beta}T\right)\Delta_0 + \frac{V}{\mu^{m+1} T^m}.
    \end{align*}
    Also, the second term is bounded as
    \begin{align*}
        \frac{2 \alpha^m V}{\mu} \le \frac{2(c+1)^m V}{\mu^{m+1}T^m} \ln^m \left(T^m \cdot \max\left\{1, \frac{\Delta_0 \mu^{m+1}}{V}\right\}\right).
    \end{align*}
    Combining these two and ignoring some constant factors and polylogarithmic terms, we have the desired bound.
\end{proof}

\subsection{\texorpdfstring{Local Strong Convexity Analysis (Proof of \cref{lem:localstrongconvexity})}{Local Strong Convexity Analysis (Proof of Lemma 5.1)}}
\label{subsec:proof_lemlocalstrongconvexity}

Recall that we consider cyclic continual learning on $M$ jointly strictly non-separable classification tasks. Let us restate the lemma here for the sake of readability.

\lemlocalstrongconvexity*

To prove the boundedness of end-of-cycle iterates and the local strong convexity, we first establish a per-cycle recurrent inequality in terms of squared distance to an arbitrary comparator $\vu\in \R^d$ and the risk values. We put $\vu=\vw_\star$ later.

\begin{restatable}[Backward recurrent inequality]{lemma}{lembackwardrecineq}
    \label{lem:backwardrecineq}
    Consider learning $M$ linear classification tasks cyclically. Suppose that \cref{assum:nonsep_loss} holds. Let $B=\sum_{m=0}^{M-1} \beta_m$. If we take any step size satisfying $\eta \le \frac{1}{2\sqrt{2}KB}$, the iterates of sequential GD satisfies
    \begin{align*}
        &\norm{\vw_0^{((j+1)M)} - \vu}^2 \\
        &\le \norm{\vw_0^{(jM)} - \vu}^2 - 2\eta K \left[\gL\left(\vw_0^{(jM)}\right) - \gL (\vu)\right] + 2\sqrt{2}\eta^2 K^2 B \left(\sum_{m=0}^{M-1}\frac{1}{\beta_m} \norm{\nabla \gL_m(\vu)}^2 \right),
    \end{align*}
    for any vector $\vu \in \R^d$ and for all $j=0, 1, \cdots$. 
\end{restatable}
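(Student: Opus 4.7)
The plan is to treat one cycle of sequential GD as a single aggregated descent step against the comparator $\vu$, accounting for the error that arises from using each stage's gradient at an intermediate iterate rather than at the cycle's starting point $\vw_0^{(jM)}$. Write $\vg_{m,k} := \nabla \gL_m(\vw^{(jM+m)}_k)$ and $S := \sum_{m=0}^{M-1}\sum_{k=0}^{K-1} \vg_{m,k}$, so that $\vw_0^{((j+1)M)} = \vw_0^{(jM)} - \eta S$. Expanding the squared distance gives the identity
\begin{align*}
    \norm{\vw_0^{((j+1)M)} - \vu}^2 = \norm{\vw_0^{(jM)} - \vu}^2 - 2\eta \sum_{m,k} \inner{\vg_{m,k}, \vw_0^{(jM)} - \vu} + \eta^2 \norm{S}^2,
\end{align*}
and the remaining work lies in converting the linear term into the advertised descent $-2\eta K[\gL(\vw_0^{(jM)}) - \gL(\vu)]$ plus a quadratic residual that the step-size assumption will render small.

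To extract the descent term, I split each inner product through the intermediate iterate: $\vw_0^{(jM)} - \vu = (\vw_0^{(jM)} - \vw^{(jM+m)}_k) + (\vw^{(jM+m)}_k - \vu)$. Convexity of $\gL_m$ at $\vw^{(jM+m)}_k$ lower-bounds $\inner{\vg_{m,k}, \vw^{(jM+m)}_k - \vu}$ by $\gL_m(\vw^{(jM+m)}_k) - \gL_m(\vu)$, while $\beta_m$-smoothness of $\gL_m$ lower-bounds $\inner{\vg_{m,k}, \vw_0^{(jM)} - \vw^{(jM+m)}_k}$ by $\gL_m(\vw_0^{(jM)}) - \gL_m(\vw^{(jM+m)}_k) - \tfrac{\beta_m}{2}\norm{\vw^{(jM+m)}_k - \vw_0^{(jM)}}^2$. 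Summing these two lower bounds telescopes the per-task loss differences into $\gL_m(\vw_0^{(jM)}) - \gL_m(\vu)$; aggregating over $(m,k)$ produces
\begin{align*}
    \norm{\vw_0^{((j+1)M)} - \vu}^2 \le \norm{\vw_0^{(jM)} - \vu}^2 - 2\eta K[\gL(\vw_0^{(jM)}) - \gL(\vu)] + R,
\end{align*}
where $R := \eta\sum_{m,k}\beta_m\norm{\vw^{(jM+m)}_k - \vw_0^{(jM)}}^2 + \eta^2\norm{S}^2$ is the cycle's residual.

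It remains to show $R \le 2\sqrt{2}\,\eta^2 K^2 B V_\vu$ with $V_\vu := \sum_m \tfrac{1}{\beta_m}\norm{\nabla\gL_m(\vu)}^2$. Since $\vw^{(jM+m)}_k - \vw_0^{(jM)} = -\eta\sum_{(s,k')\prec(m,k)}\vg_{s,k'}$, applying Jensen's inequality (\cref{prop:jensen}) with weights $\beta_s$ uniformly across the cycle yields $\norm{S}^2 \le KB\cdot U$ and $\norm{\vw^{(jM+m)}_k - \vw_0^{(jM)}}^2 \le \eta^2 KB\cdot U$ for $U := \sum_{m,k}\tfrac{1}{\beta_m}\norm{\vg_{m,k}}^2$, so $R \le \eta^2 KB\cdot U\cdot (1 + \eta KB)$. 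The main technical obstacle is converting $U$ into a clean multiple of $K V_\vu$: using $\beta_m$-smoothness centered at $\vu$, I expand $\tfrac{1}{\beta_m}\norm{\vg_{m,k}}^2 \le \tfrac{2}{\beta_m}\norm{\nabla\gL_m(\vu)}^2 + 2\beta_m\norm{\vw^{(jM+m)}_k - \vu}^2$ and then re-express $\norm{\vw^{(jM+m)}_k - \vu}^2$ through the intra-cycle drift, producing a self-referential inequality for $U$ whose quadratic self-term is absorbable precisely because $2(\eta KB)^2 \le \tfrac{1}{4}$ under the step-size restriction $\eta \le \tfrac{1}{2\sqrt{2}KB}$. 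Combined with $1 + \eta KB \le \tfrac{2\sqrt{2}+1}{2\sqrt{2}}$, the resulting constants assemble into the stated factor $2\sqrt{2}$, completing the proof.
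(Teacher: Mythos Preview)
Your decomposition of the linear term and the resulting residual $R = \eta\sum_{m,k}\beta_m\norm{\vw^{(jM+m)}_k - \vw_0^{(jM)}}^2 + \eta^2\norm{S}^2 \le \eta^2 KB(1+\eta KB)\,U$ with $U=\sum_{m,k}\tfrac{1}{\beta_m}\norm{\vg_{m,k}}^2$ is fine, but the step where you ``close the loop'' on $U$ does not go through. After applying the gradient-Lipschitz bound you get $U \le 2KV_{\vu} + 2\sum_{m,k}\beta_m\norm{\vw^{(jM+m)}_k - \vu}^2$, and the quantity on the right is the distance to $\vu$, not the intra-cycle drift. Writing $\vw^{(jM+m)}_k-\vu=(\vw^{(jM+m)}_k-\vw_0^{(jM)})+(\vw_0^{(jM)}-\vu)$ and bounding the drift piece by $\eta^2 KB\,U$ still leaves a term $4KB\,\norm{\vw_0^{(jM)}-\vu}^2$ that is \emph{not} self-referential in $U$. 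The resulting inequality therefore contains an extra $O(\eta^2 K^3 B^2)\norm{\vw_0^{(jM)}-\vu}^2$ on the right-hand side, which is absent from the lemma's statement and cannot be produced by your constants.

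The fix the paper uses is twofold. First, in the linear term keep the three-point identity as an \emph{equality}: $\langle \vg_{m,k},\vw_0^{(jM)}-\vu\rangle = [\gL_m(\vw_0^{(jM)})-\gL_m(\vu)] - D_{\gL_m}(\vw_0^{(jM)},\vw_k^{(jM+m)}) + D_{\gL_m}(\vu,\vw_k^{(jM+m)})$, so that the sum $-2\eta\sum_{m,k}D_{\gL_m}(\vu,\vw_k^{(jM+m)})$ survives with a negative sign. Second, replace the Lipschitz estimate by the co-coercivity bound $\tfrac{1}{\beta_m}\norm{\vg_{m,k}-\nabla\gL_m(\vu)}^2 \le 2D_{\gL_m}(\vu,\vw_k^{(jM+m)})$ (\cref{prop:lowerbound_of_bregman}). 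Then the drift terms and $\norm{S}^2$ are controlled by $4\eta^2 KB\sum_{m,k}D_{\gL_m}(\vu,\vw_k^{(jM+m)}) + 2\eta^2 K^2 B\,V_{\vu}$, and the Bregman-divergence sum appears in total with coefficient $-2\eta(1-2\eta KB-2\eta^2 K^2 B^2)\le 0$ under $\eta KB\le \tfrac{1}{2\sqrt{2}}$, so it can simply be dropped --- no bootstrap on $U$ and no stray $\norm{\vw_0^{(jM)}-\vu}^2$ term.
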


\begin{proof}
    We defer the proof to \cref{subsubsec:proof_lembackwardrecineq}. We remark that this lemma holds even without assuming the non-separability.
\end{proof}

Observe that the following holds as a special case:
\begin{align}
    \norm{\vw_0^{((j+1)M)} - \vw_\star}^2 \le \norm{\vw_0^{(jM)} - \vw_\star}^2 - 2\eta K \left[\gL\left(\vw_0^{(jM)}\right) - \gL (\vw_\star)\right] + 2\sqrt{2}\eta^2 K^2 B V_\star, \label{eq:backwardrecineq}
\end{align}
where $V_\star = \sum_{m=0}^{M-1} \frac{1}{\beta_m} \norm{\nabla \gL_m (\vw_\star)}^2$.

The next step is to construct a compact ball $\gW$ centered at $\vw_\star$, containing every end-of-cycle iterate of sequential GD.
The crucial step is to apply the non-separability coefficient $b>0$ (\cref{assum:nonsep}).
\begin{restatable}[Boundedness of the end-of-cycle iterates]{lemma}{lemiteratebound}
 \label{lem:iteratebound}
    Consider learning $M$ linear classification tasks cyclically. Suppose that \cref{assum:nonsep,assum:nonsep_loss} holds. Let $B=\sum_{m=0}^{M-1} \beta_m$ and $V_\star = \sum_{m=0}^{M-1} \frac{1}{\beta_m} \norm{\nabla \gL_m (\vw_\star)}^2$. If we take any step size satisfying $\eta \le \frac{1}{2\sqrt{2}KB}$, the end-of-cycle iterates of sequential GD are contained in a compact set which is fixed in terms of the number of the cycle: for all $j=0, 1, \cdots$,
    \begin{align*}
        \vw_0^{(jM)} \in \gW &:= \left\{ \vw\in \R^d : \|\vw-\vw_\star\|^2 
        \!\le \!\left[\frac{1}{Gb}\left(\gL(\vw_\star) + \sqrt{2}\eta K B V_\star\right) \!+\! \norm{\vw_\star}\right]^2 \!\!+ 2\sqrt{2} \eta^2 K^2 B V_\star \right\} \\
        &\subseteq  \left\{ \vw\in \R^d : \|\vw-\vw_\star\|^2 
        \!\le \!\left[\frac{1}{Gb}\left(\gL(\vw_\star) + \frac{V_\star}{2}\right) \!+\! \norm{\vw_\star}\right]^2 \!\!+ \frac{V_\star}{2\sqrt{2}B} \right\}.
    \end{align*}
\end{restatable}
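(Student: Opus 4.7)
My plan is to invoke the per-cycle recurrent inequality of \cref{lem:backwardrecineq} with $\vu=\vw_\star$, and pair it with a coercive lower bound on $\gL$ obtained from the joint strict non-separability assumption. Specialized to $\vw_\star$, the recurrent inequality reads
\begin{align*}
    \|\vw_0^{((j+1)M)}-\vw_\star\|^2 \le \|\vw_0^{(jM)}-\vw_\star\|^2 - 2\eta K [\gL(\vw_0^{(jM)})-\gL(\vw_\star)] + 2\sqrt{2}\eta^2 K^2 B V_\star,
\end{align*}
so the goal is to quantify how far $\vw_0^{(jM)}$ can drift from $\vw_\star$ before the descent term $-2\eta K[\gL-\gL(\vw_\star)]$ dominates the additive per-cycle noise $2\sqrt{2}\eta^2 K^2 B V_\star$ and pulls the iterate back.

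The coercivity of $\gL$ is exactly where the non-separability enters. Combining $\ell(u)\ge G[u]^-$ from \cref{assum:nonsep_loss} with the definition $b=\min_{\|\vv\|=1}\sum_{i\in I}[y_i\vx_i^\top \vv]^-$, I will derive
\begin{align*}
    \gL(\vw) \ge G\sum_{i\in I}[y_i\vx_i^\top\vw]^- = G\|\vw\|\sum_{i\in I}[y_i\vx_i^\top(\vw/\|\vw\|)]^- \ge Gb\|\vw\|
\end{align*}
for $\vw\ne \vzero$ (and trivially at $\vw=\vzero$), and then the reverse triangle inequality gives $\gL(\vw)\ge Gb(\|\vw-\vw_\star\|-\|\vw_\star\|)$. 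This converts distance in parameter space into loss suboptimality.

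Armed with this, I split into two cases at each cycle. If $\|\vw_0^{(jM)}-\vw_\star\|>\tfrac{1}{Gb}(\gL(\vw_\star)+\sqrt{2}\eta KBV_\star)+\|\vw_\star\|$, the coercive bound forces $\gL(\vw_0^{(jM)})-\gL(\vw_\star)\ge \sqrt{2}\eta K B V_\star$, so the recurrent inequality gives non-increase of $\|\vw_0^{(jM)}-\vw_\star\|^2$. In the complementary ``inside'' case, $\|\vw_0^{(jM)}-\vw_\star\|^2\le[\tfrac{1}{Gb}(\gL(\vw_\star)+\sqrt{2}\eta KBV_\star)+\|\vw_\star\|]^2$, and the recurrent inequality lets this grow by at most $2\sqrt{2}\eta^2 K^2 B V_\star$ in a single cycle. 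An induction on $j$ then shows that every end-of-cycle iterate stays in $\gW$, provided $\gW$ is designed to contain both the coercivity ball and its inflation by the per-cycle noise cushion, which is exactly how $\gW$ is written in the statement.

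The second, looser form of $\gW$ is immediate by plugging the step-size restriction $\eta\le \tfrac{1}{2\sqrt{2}KB}$ into the tighter expression, since then $\sqrt{2}\eta KBV_\star \le V_\star/2$ and $2\sqrt{2}\eta^2 K^2 BV_\star \le V_\star/(2\sqrt{2}B)$. The main conceptual obstacle is matching the two sides of the case split: the ``outside'' radius must be exactly the gap at which the coercive loss lower bound cancels the per-cycle noise in the recurrent inequality, and the ``inside'' case must then be closed by inflating that same radius by the squared-distance cushion of one more cycle's worth of noise. Once this matching is set up cleanly, the induction itself is one line, and the local strong-convexity statement $\mu>0$ on $\gW$ follows from $\ell''>0$ (\cref{assum:nonsep_loss}) and the full-rank condition on the data matrix.
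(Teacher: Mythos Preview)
Your proposal is correct and follows essentially the same approach as the paper: invoke the backward recurrent inequality with $\vu=\vw_\star$, use the coercivity bound $\gL(\vw)\ge Gb\|\vw\|$ from non-separability, split into two cases (the paper splits on whether $\gL(\vw_0^{(jM)})-\gL(\vw_\star)\gtrless\sqrt{2}\eta KBV_\star$, which is equivalent to your distance-based split via the coercivity bound), and induct. The only omission is the base case $j=0$, which the paper handles by taking $\vw_0^{(0)}=\vzero$ so that $\|\vw_0^{(0)}-\vw_\star\|^2=\|\vw_\star\|^2\le[\tfrac{1}{Gb}(\cdots)+\|\vw_\star\|]^2$ trivially lands in $\gW$.
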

\begin{proof}
    The proof is done by induction based on \cref{eq:backwardrecineq}. We defer the proof to \cref{subsubsec:proof_lemiteratebound}.
\end{proof}

The last part of the proof is to compute the strong convexity coefficient of $\gL$ on $\gW$. 
Since $\gL$ is twice differentiable, it can be directly done by computing a lower bound of the minimum Hessian eigenvalue on $\gW$: for any $\vw\in\gW$, 
\begin{align*}
    \nabla^2\gL(\vw) = \sum_{i=0}^{N-1} \ell''(\vx_i^\top\vw) \vx_i \vx_i^\top \succeq \left(\min_{\substack{i\in [0:N-1]\\\vw \in \gW}} \ell''(\vx_i^\top\vw) \right) \mX\mX^\top \succeq \mu \mI. 
\end{align*}
This concludes the proof of \cref{lem:localstrongconvexity}.

\subsubsection{\texorpdfstring{Proof of \cref{lem:backwardrecineq}}{Proof of Lemma D.4}}
\label{subsubsec:proof_lembackwardrecineq}

For the sake of readability, we restate the lemma.

\lembackwardrecineq*

We start the proof by bounding the squared distance between two iterates in the same cycle of continual learning. 
For $k\in[0:K]$ and $m\in[0:M-1]$,
\begin{align}
    &\norm{\vw_0^{(jM)}-\vw_k^{(jM+m)}}^2 \nonumber\\
    &=\eta^2 \norm{\sum_{l=0}^{m-1} \sum_{h=0}^{K-1} \nabla \gL_l (\vw_h^{(jM+l)}) + \sum_{h=0}^{k-1} \nabla \gL_m (\vw_h^{(jM+m)})}^2 \nonumber\\
    &\le \eta^2 \left(\sum_{l=0}^{M-1} \sum_{h=0}^{K-1} \beta_l\right) \left(\sum_{l=0}^{M-1} \sum_{h=0}^{K-1} \frac{1}{\beta_l} \norm{\nabla \gL_l (\vw_h^{(jM+l)})}^2\right) \nonumber\\
    &\le 2\eta^2 KB \left(\sum_{l=0}^{M-1} \sum_{h=0}^{K-1} \frac{1}{\beta_l} \left[\norm{\nabla \gL_l (\vw_h^{(jM+l)}) - \nabla \gL_l (\vu)}^2 + \norm{\nabla \gL_l (\vu)}^2\right]\right) \nonumber\\
    &\le 4\eta^2 KB \sum_{l=0}^{M-1} \sum_{h=0}^{K-1} D_{\gL_l}(\vu, \vw_h^{(jM+l)}) + 2\eta^2 K^2B \sum_{l=0}^{M-1} \frac{1}{\beta_l} \norm{\nabla \gL_l (\vu)}^2. \label{eq:sqdistbound_incycle}
\end{align}
We use (modified) Jensen's inequality (e.g., \cref{prop:jensen}) in the first two inequalities above; the last inequality is due to \cref{prop:lowerbound_of_bregman}.

Next, we decompose the $(j+1)$-th squared distance into $j$-th squared distance and more:
\begin{align*}
    \begin{aligned}
        &\norm{\vw_0^{((j+1)M)}-\vu}^2 \\
    &=\norm{\vw_0^{(jM)}-\vu}^2 - 2\eta \sum_{m=0}^{M-1} \sum_{k=0}^{K-1} \inner{\nabla \gL_l (\vw_k^{(jM+m)}), \vw_0^{(jM)}-\vu} + \norm{\vw_0^{(jM)} - \vw_0^{((j+1)M)}}^2.
    \end{aligned}
\end{align*}
Using \cref{eq:bregman_threepoint_identity} and $\beta_m$-smoothnesses,
\begingroup
\allowdisplaybreaks
\begin{align}
    &\norm{\vw_0^{((j+1)M)}-\vu}^2 - \norm{\vw_0^{(jM)}-\vu}^2 \nonumber\\
    &\begin{aligned}
        &= - 2\eta \sum_{m=0}^{M-1} \sum_{k=0}^{K-1} \left[\gL_m(\vw_0^{(jM)}) - \gL_m(\vu) - D_{L_m}(\vw_0^{(jM)}, \vw_k^{(jM+m)}) + D_{L_m}(\vu, \vw_k^{(jM+m)})\right] \\
        &\phantom{= } +\norm{\vw_0^{(jM)} - \vw_0^{((j+1)M)}}^2
    \end{aligned} \nonumber\\
    &\begin{aligned}
        &\le -2\eta K \left[\gL \left(\vw_0^{(jM)}\right) - \gL (\vu)\right] +\eta \sum_{m=0}^{M-1} \sum_{k=0}^{K-1} \beta_m \norm{\vw_0^{(jM)}-\vw_k^{(jM+m)}}^2 \\
        &\phantom{= } - 2\eta \sum_{m=0}^{M-1} \sum_{k=0}^{K-1} D_{L_m}(\vu, \vw_k^{(jM+m)}) + \norm{\vw_0^{(jM)} - \vw_0^{((j+1)M)}}^2
    \end{aligned} \nonumber\\
    &\begin{aligned}
        &\le -2\eta K \left[\gL \left(\vw_0^{(jM)}\right) - \gL (\vu)\right] + 2\eta^2 K^2 B (1 + \eta K B) \sum_{m=0}^{M-1} \frac{1}{\beta_m} \norm{\nabla \gL_m (\vu)}^2 \\
        &\phantom{= } - 2\eta(1 - 2\eta K B - 2\eta^2 K^2 B^2) \sum_{m=0}^{M-1} \sum_{k=0}^{K-1} D_{L_m}(\vu, \vw_k^{(jM+m)})
    \end{aligned} \label{eq:using_sqdistbound_incycle} \\
    &\le -2\eta K \left[\gL \left(\vw_0^{(jM)}\right) - \gL (\vu)\right] + 2\sqrt{2}\eta^2 K^2 B \sum_{m=0}^{M-1} \frac{1}{\beta_m} \norm{\nabla \gL_m (\vu)}^2.\nonumber
\end{align}
\endgroup
In \cref{eq:using_sqdistbound_incycle}, we use the result from \cref{eq:sqdistbound_incycle} multiple times.
The last inequality is due to our choice of step size: $\eta K B \le \frac{1}{2\sqrt{2}} < \frac{\sqrt{3}-1}{2} < \sqrt{2}-1$ ($\because$ $1-2q-2q^2\ge 0$ if $q \in \left[0, \frac{\sqrt{3}-1}{2}\right]$).
This is the end of the proof.

\subsubsection{\texorpdfstring{Proof of \cref{lem:iteratebound}}{Proof of Lemma D.5}}
\label{subsubsec:proof_lemiteratebound}

For the sake of readability, we restate the lemma.

\lemiteratebound*

Also, recall the backward recurrent inequality, which we write here again:
\begin{align}
    \norm{\vw_0^{((j+1)M)} - \vw_\star}^2 \le \norm{\vw_0^{(jM)} - \vw_\star}^2 - 2\eta K \left[\gL\left(\vw_0^{(jM)}\right) - \gL (\vw_\star)\right] + 2\sqrt{2}\eta^2 K^2 B V_\star. \label{eq:backwardrecineq2}
\end{align}

We choose $\vw_0^{(0)}$ as we want: if $\vw_0^{(0)}=\vzero$, since $\norm{\vw_0^{(0)} - \vw_\star}^2 = \norm{\vw_\star}^2$, it is clear that $\vw_0^{(0)} \in \gW$. 
Now assume $\vw_0^{(jM)} \in \gW$ and proceed with induction on $j$: we aim to show $\vw_0^{((j+1)M)} \in \gW$.

The proof is divided into two parts:
\begin{enumerate}[leftmargin=15pt]
    \item If the current total risk is too high, then we can show that the squared distance to $\vw_\star$ will decrease.
    \item The other case means that the current iterate is close enough to $\vw_\star$ (due to the strict non-separability of the full dataset). Thus, the squared distance to $\vw_\star$ at the next cycle will not grow that much.
\end{enumerate}

\paragraph{Part 1: High-Risk Case.} 
Suppose $\gL\left(\vw_0^{(jM)}\right) - \gL(\vw_\star) \ge \sqrt{2} \eta K B V_\star$.
Then \cref{eq:backwardrecineq2} implies $\norm{\vw_0^{((j+1)M)} - \vw_\star}^2 \le \norm{\vw_0^{(jM)} - \vw_\star}^2$. Thus, $\vw_0^{((j+1)M)} \in \gW$.

\paragraph{Part 2: Low-Risk Case.}
We first show that $\gL\left(\vw_0^{(jM)}\right) - \gL(\vw_\star) \le \sqrt{2} \eta K B V_\star$ implies an upper bound on the current squared distance to $\vw_\star$.
Because of \cref{assum:nonsep,assum:nonsep_loss}, for any $\vw\in \R^d$,
\begin{align*}
    \gL(\vw) &= \sum_{i=0}^{N-1} \ell \left(\vx_i^\top \vw\right) \\
    &\ge \sum_{i=0}^{N-1}  G\left[\vx_i^\top \vw\right]^- \\
    &= G\norm{\vw} \cdot \sum_{i=0}^{N-1}  \left[\vx_i^\top \frac{\vw}{\norm{\vw}}\right]^- \\
    &\ge G\norm{\vw} b,
\end{align*}
by the definition of the non-separability $b>0$. 
Thus, we have
\begin{align*}
    \norm{\vw_0^{(jM)} - \vw_\star} &\le \norm{\vw_0^{(jM)}} + \norm{\vw_\star} \\
    &\le \frac{1}{Gb} \gL \left(\vw_0^{(jM)}\right) +\norm{\vw_\star} \\
    &\le \frac{1}{Gb} \left[\gL (\vu) + \sqrt{2}\eta K B V_\star\right] + \norm{\vw_\star}.
\end{align*}
Thus, $\vw_0^{(jM)}$ lies in a strict subset of $\gW$. Also, \cref{eq:backwardrecineq2} implies
\begin{align*}
    \norm{\vw_0^{((j+1)M)} - \vw_\star}^2 &\le \norm{\vw_0^{(jM)} - \vw_\star}^2 + 2\sqrt{2}\eta^2 K^2 B V_\star \\
    &\le \left[\frac{1}{Gb} \left[\gL (\vu) + \sqrt{2}\eta K B V_\star\right] + \norm{\vw_\star}\right]^2 + 2\sqrt{2}\eta^2 K^2 B V_\star.
\end{align*}
Thus, by the definition of $\gW$, $\vw_0^{((j+1)M)} \in \gW$. This concludes the proof of the lemma.
\subsection{\texorpdfstring{Non-Asymptotic Loss Convergence Analysis (Proof of \cref{thm:nonseparable_paramconvergence})}{Non-Asymptotic Loss Convergence Analysis (Proof of Theorem 5.2)}}
\label{subsec:proof_thmnonseparable}

Recall that we write $B=\sum_{m=0}^{M-1} \beta_m$ and $V_\star = \sum_{m=0}^{M-1} \frac{1}{\beta_m} \norm{\nabla \gL_m (\vw_\star)}^2$.
Also, in the previous sub-section, we discovered a local strong convexity (with coefficient $\mu>0$) of the total risk function satisfied on a compact ball $\gW$ containing $\vw_\star$ and every end-of-cycle iterate of the sequential GD.

Let us restate the theorem for the sake of readability.

\thmnonseparable*

\begin{remark}
    In fact, the full expression of our step size choice is:
    \begin{align*}
        \eta = \min \left\{\frac{1}{2\sqrt{2}KB}, \frac{1+2\sqrt{2}}{2\sqrt{2} KJ}\ln \left(J^2 \cdot \max\left\{1, \frac{\|\vw_0^{(0)}-\vw_\star\|^2 \mu^3}{B^2 V_\star}\right\}\right)\right\},
    \end{align*}
    which is omitted in the theorem statement above.
\end{remark}

The theorem states a fast $\tilde{\gO}(J^{-2})$ rate of convergence.
One could try to prove it with the backward recurrent inequality (\cref{eq:backwardrecineq}), but it is difficult due to the $\eta^2$ dependency of the so-called ``noise'' term.
We only succeeded in proving a slower  $\tilde{\gO}(J^{-1})$ rate with the backward recurrent inequality, whose proof is pretty much similar to that in this sub-section.
To take a step further towards a faster rate, we should use a different way of writing the recurrent inequality, with a higher exponent for $\eta$ in the ``noise'' term.
Here is how it goes:

\begin{restatable}[Forward recurrent inequality]{lemma}{lemforwardrecineq}
    \label{lem:forwardrecineq}
    Consider learning $M$ linear classification tasks cyclically. Suppose that \cref{assum:nonsep_loss} holds. If we take any step size satisfying $\eta \le \frac{1}{\sqrt{2}KB}$, the iterates of sequential GD satisfies
    \begin{align*}
        &\norm{\vw_0^{((j+1)M)} - \vu}^2 \\
        &\le \norm{\vw_0^{(jM)} - \vu}^2 - 2\eta K \left[\gL\left(\vw_0^{((j+1)M)}\right) - \gL (\vu)\right] + 2\eta^3 K^3 B^2 \left(\sum_{m=0}^{M-1}\frac{1}{\beta_m} \norm{\nabla \gL_m(\vu)}^2 \right),
    \end{align*}
    for any vector $\vu \in \R^d$ and for all $j=0, 1, \cdots$. 
\end{restatable}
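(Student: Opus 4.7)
The proof mirrors the structure of \cref{lem:backwardrecineq} but exchanges the ``anchor'' point in the three-point Bregman identity from the beginning-of-cycle iterate $\vw_0^{(jM)}$ to the end-of-cycle iterate $\vw_0^{((j+1)M)}$. This swap is what lands $\gL(\vw_0^{((j+1)M)})$ (rather than $\gL(\vw_0^{(jM)})$) on the right-hand side, but it forces the subsequent smoothness-plus-Jensen bound to control the ``forward tail'' distances $\norm{\vw_0^{((j+1)M)}-\vw_k^{(jM+m)}}$ instead of the ``backward head'' distances $\norm{\vw_0^{(jM)}-\vw_k^{(jM+m)}}$ that appear in the backward proof. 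Each such tail is itself a sum of later gradients, so it already carries an extra factor of $\eta$ compared to what the backward version needs, and this is precisely the source of the claimed $\eta^3 K^3 B^2$ noise scaling.

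Concretely, I would first re-express $\norm{\vw_0^{((j+1)M)}-\vu}^2$ by expanding it around $\vw_0^{(jM)}$ and then re-centering the resulting inner product via $\vw_0^{(jM)}-\vu=(\vw_0^{((j+1)M)}-\vu)-(\vw_0^{((j+1)M)}-\vw_0^{(jM)})$, using the update rule $\vw_0^{(jM)}-\vw_0^{((j+1)M)}=\eta\sum_{m,k}\nabla\gL_m(\vw_k^{(jM+m)})$ to rewrite the increment as $\norm{\vw_0^{(jM)}-\vu}^2 -2\eta\sum_{m,k}\inner{\nabla\gL_m(\vw_k^{(jM+m)}),\vw_0^{((j+1)M)}-\vu} - \norm{\vw_0^{((j+1)M)}-\vw_0^{(jM)}}^2$. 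Applying the three-point identity \cref{eq:bregman_threepoint_identity} with $(\vx,\vy,\vz)=(\vw_0^{((j+1)M)},\vu,\vw_k^{(jM+m)})$ then splits each inner product into the desired progress term $\gL_m(\vw_0^{((j+1)M)})-\gL_m(\vu)$ plus $D_{\gL_m}(\vw_0^{((j+1)M)},\vw_k^{(jM+m)}) - D_{\gL_m}(\vu,\vw_k^{(jM+m)})$. For the positive Bregman piece, $\beta_m$-smoothness reduces it to $\tfrac{\beta_m}{2}\norm{\vw_0^{((j+1)M)}-\vw_k^{(jM+m)}}^2$; then \cref{prop:jensen} applied to the tail sum with weights $\beta_l$ over the full $MK$-fold cycle bounds it by $\eta^2 KB\sum_{l,h}\beta_l^{-1}\norm{\nabla\gL_l(\vw_h^{(jM+l)})}^2$, and summing against $\beta_m$ over $(m,k)$ costs one more factor of $KB$. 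A split $\norm{\nabla\gL_l(\vw_h^{(jM+l)})}^2\le 2\norm{\nabla\gL_l(\vw_h^{(jM+l)})-\nabla\gL_l(\vu)}^2 + 2\norm{\nabla\gL_l(\vu)}^2$ together with \cref{prop:lowerbound_of_bregman} converts the first term back into $4\beta_l D_{\gL_l}(\vu,\vw_h^{(jM+l)})$.

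Putting everything together, and discarding the non-positive $-\norm{\vw_0^{((j+1)M)}-\vw_0^{(jM)}}^2$ remainder, the whole increment reduces to $-2\eta K[\gL(\vw_0^{((j+1)M)})-\gL(\vu)] + 2\eta^3 K^3 B^2 V(\vu) - 2\eta(1-2\eta^2 K^2 B^2)\sum_{l,h}D_{\gL_l}(\vu,\vw_h^{(jM+l)})$, where $V(\vu):=\sum_m\beta_m^{-1}\norm{\nabla\gL_m(\vu)}^2$. The main obstacle is this final constant balance: the hypothesis $\eta\le\tfrac{1}{\sqrt{2}KB}$ is precisely what makes $1-2\eta^2 K^2 B^2\ge 0$, so the residual Bregman sum has a non-positive coefficient and can be dropped, producing the claimed inequality. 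Tracking the factors of $2$ in the Jensen step and the gradient-squared split so that this cancellation occurs cleanly is the only delicate calculation; everything else is an anchor-swap of the backward proof.
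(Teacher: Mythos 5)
Your proposal is correct and follows essentially the same route as the paper's proof: both anchor the three-point Bregman identity at the mid-cycle iterates $\vw_k^{(jM+m)}$ (producing $\gL(\vw_0^{((j+1)M)})$ in the progress term), control the forward distances $\norm{\vw_0^{((j+1)M)}-\vw_k^{(jM+m)}}$ via \cref{prop:jensen} and \cref{prop:lowerbound_of_bregman}, and use $\eta KB\le \tfrac1{\sqrt2}$ to make the residual Bregman sum non-positive. The only cosmetic difference is that you expand $\norm{\vw_0^{((j+1)M)}-\vu}^2$ around $\vw_0^{(jM)}$ and re-center, whereas the paper directly expands $\norm{\vw_0^{(jM)}-\vu}^2$ around $\vw_0^{((j+1)M)}$ and drops the non-negative cross term — algebraically the same step.
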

\begin{proof}
    We defer the proof to \cref{subsubsec:proof_lemforwardrecineq}. We remark that this lemma holds even without assuming the non-separability.
\end{proof}

In particular, we have
\begin{equation}
    \norm{\vw_0^{((j+1)M)} - \vw_\star}^2 \le \norm{\vw_0^{(jM)} - \vw_\star}^2 - 2\eta K \left[\gL\left(\vw_0^{((j+1)M)}\right) - \gL (\vw_\star)\right] + 2\eta^3 K^3 B^2 V_\star. \label{eq:forwardrecineq}
\end{equation}

Applying $\mu$-strong convexity, i.e., 
\begin{equation*}
    \gL\left(\vw_0^{((j+1)M)}\right) - \gL (\vw_\star) \ge \frac{\mu}{2} \norm{\vw_0^{((j+1)M)} - \vw_\star}^2,
\end{equation*}
we eventually have a recurrent inequality purely on the squared distance to $\vw_\star$:
\begin{equation}
    \norm{\vw_0^{((j+1)M)} - \vw_\star}^2 \le \frac{1}{1+\eta K\mu}\norm{\vw_0^{(jM)} - \vw_\star}^2 + 2\eta^3 K^3 B^2 V_\star. \label{eq:nonsep_recineq}
\end{equation}

We now conclude the proof by applying \cref{prop:recineq}: plugging $\alpha\gets\eta K$, $\beta \gets B$, $T\gets J$, $c\gets \frac{1}{2\sqrt{2}}$, $V\gets 2 B^2 V_\star$, $m=2$, and $\Delta_j\gets\norm{\vw_0^{(jM)}}$ to the proposition, we have a desired result.

\subsubsection{\texorpdfstring{Proof of \cref{lem:forwardrecineq}}{Proof of Lemma D.6}}
\label{subsubsec:proof_lemforwardrecineq}

For the sake of readability, we restate the lemma, whose proof is very similar to that of \cref{lem:backwardrecineq}.

\lemforwardrecineq*

We start the proof by bounding the squared distance between two iterates in the same cycle of continual learning. 
For $k\in[0:K-1]$ and $m\in[0:M-1]$,
\begin{align}
    &\norm{\vw_0^{((j+1)M)}-\vw_k^{(jM+m)}}^2 \nonumber\\
    &=\eta^2 \norm{\sum_{l=m+1}^{M-1} \sum_{h=0}^{K-1} \nabla \gL_l (\vw_h^{(jM+l)}) + \sum_{h=k}^{K-1} \nabla \gL_m (\vw_h^{(jM+m)})}^2 \nonumber\\
    &\le \eta^2 \left(\sum_{l=0}^{M-1} \sum_{h=0}^{K-1} \beta_l\right) \left(\sum_{l=0}^{M-1} \sum_{h=0}^{K-1} \frac{1}{\beta_l} \norm{\nabla \gL_l (\vw_h^{(jM+l)})}^2\right) \nonumber\\
    &\le 2\eta^2 KB \left(\sum_{l=0}^{M-1} \sum_{h=0}^{K-1} \frac{1}{\beta_l} \left[\norm{\nabla \gL_l (\vw_h^{(jM+l)}) - \nabla \gL_l (\vu)}^2 + \norm{\nabla \gL_l (\vu)}^2\right]\right) \nonumber\\
    &\le 4\eta^2 KB \sum_{l=0}^{M-1} \sum_{h=0}^{K-1} D_{\gL_l}(\vu, \vw_h^{(jM+l)}) + 2\eta^2 K^2B \sum_{l=0}^{M-1} \frac{1}{\beta_l} \norm{\nabla \gL_l (\vu)}^2 \label{eq:sqdistbound_incycle2}
\end{align}
We use (modified) Jensen's inequality (e.g., \cref{prop:jensen}) in the first two inequalities above; the last inequality is due to \cref{prop:lowerbound_of_bregman}.

Next, we decompose the $j$-th squared distance into $(j+1)$-th squared distance and more:
\begin{align*}
    \norm{\vw_0^{(jM)}-\vu}^2 \ge \norm{\vw_0^{((j+1)M)}-\vu}^2 + 2\eta \sum_{m=0}^{M-1} \sum_{k=0}^{K-1} \inner{\nabla \gL_l (\vw_k^{(jM+m)}), \vw_0^{((j+1)M)}-\vu}.
\end{align*}

Using \cref{eq:bregman_threepoint_identity} and $\beta_m$-smoothnesses,
\begin{align}
    &\norm{\vw_0^{((j+1)M)}-\vu}^2 - \norm{\vw_0^{(jM)}-\vu}^2 \nonumber\\
    &\le - 2\eta \sum_{m=0}^{M-1} \sum_{k=0}^{K-1} \left[\gL_m(\vw_0^{((j+1)M)}) - \gL_m(\vu) - D_{\gL_m}(\vw_0^{((j+1)M)}, \vw_k^{(jM+m)}) + D_{\gL_m}(\vu, \vw_k^{(jM+m)})\right] \nonumber\\
    &\begin{aligned}
        &\le -2\eta K \left[\gL \left(\vw_0^{((j+1)M)}\right) - \gL (\vu)\right] +\eta \sum_{m=0}^{M-1} \sum_{k=0}^{K-1} \beta_m \norm{\vw_0^{((j+1)M)}-\vw_k^{(jM+m)}}^2 \\
        &\phantom{= } - 2\eta \sum_{m=0}^{M-1} \sum_{k=0}^{K-1} D_{\gL_m}(\vu, \vw_k^{(jM+m)})
    \end{aligned} \nonumber\\
    &\begin{aligned}
        &\le -2\eta K \left[\gL \left(\vw_0^{((j+1)M)}\right) - \gL (\vu)\right] + 2\eta^3 K^3 B^2 \sum_{m=0}^{M-1} \frac{1}{\beta_m} \norm{\nabla \gL_m (\vu)}^2 \\
        &\phantom{= } - 2\eta(1 - 2\eta^2 K^2 B^2) \sum_{m=0}^{M-1} \sum_{k=0}^{K-1} D_{\gL_m}(\vu, \vw_k^{(jM+m)})
    \end{aligned} \label{eq:using_sqdistbound_incycle2}\\
    &\le -2\eta K \left[\gL \left(\vw_0^{((j+1)M)}\right) - \gL (\vu)\right] + 2\eta^3 K^3 B^2 \sum_{m=0}^{M-1} \frac{1}{\beta_m} \norm{\nabla \gL_m (\vu)}^2. \nonumber
\end{align}
In \cref{eq:using_sqdistbound_incycle2}, we use the result from \cref{eq:sqdistbound_incycle2} multiple times.
The last inequality is due to our choice of step size: $\eta K B \le \frac{1}{\sqrt{2}}$.
This is the end of the proof.

\end{document}